\DeclarePairedDelimiter\floor{\lfloor}{\rfloor}
\DeclareMathOperator*{\argmax}{arg\,max}
\newtheorem{thm}{Theorem}[section]
\newtheorem{cor}[thm]{Corollary}
\newtheorem{prop}[thm]{Proposition}
\newtheorem{lem}[thm]{Lemma}
\newtheorem{defn}[thm]{Definition}
\newtheorem{rems}[thm]{Remark}
\newtheorem{assumption}{Assumption}
\newcommand{\Prob}{\mathbb{P}}
\newcommand{\Lsym}{{L_{\text{SYM}}}}
\DeclareMathOperator*{\argmin}{arg\,min}
\DeclareMathOperator*{\Bin}{\text{Bin}}
\DeclareMathOperator*{\Ncut}{\text{Ncut}}
\DeclareMathOperator*{\Unif}{Unif}
\DeclareMathOperator*{\X}{\mathcal{X}}
\newcommand{\vol}{\text{vol}}
\newcommand{\numclust}{K} 
\newcommand{\kNoise}{{k_{\text{nse}}}} 
\newcommand{\kEuc }{{k_{\text{Euc}}}}
\newcommand{\kLLPD}{{k_{\ell\ell}}} 
\newcommand{\Deg}{{D}}  
\newcommand{\vold}{\mathcal{H}^{d}}
\newcommand{\volD}{\mathcal{H}^{D}}
\newcommand{\epsilonincluster}{\epsilon_{\mathrm{in}}}
\newcommand{\epsilonnoiseknn}{\epsilon_{\mathrm{nse}}}
\newcommand{\epsilonbetweencluster}{\epsilon_{\mathrm{btw}}}
\newcommand{\epsilonsep}{\epsilon_{\mathrm{sep}}}
\newcommand{\diam}{\text{diam}}
\newcommand{\thetathres}{\theta}
\newcommand{\nmin}{n_{\text{min}}}
\newcommand{\distas}[1]{\mathbin{\overset{#1}{\kern\z@\sim}}}%
\newsavebox{\mybox}\newsavebox{\mysim}
\newcommand{\distras}[1]{%
  \savebox{\mybox}{\hbox{\kern3pt$\scriptstyle#1$\kern3pt}}%
  \savebox{\mysim}{\hbox{$\sim$}}%
  \mathbin{\overset{#1}{\kern\z@\resizebox{\wd\mybox}{\ht\mysim}{$\sim$}}}%
}
\let\c@equation\c@thm
\numberwithin{equation}{section}
\def\ps@pprintTitle{%
 \let\@oddhead\@empty
 \let\@evenhead\@empty
 \def\@oddfoot{}%
 \let\@evenfoot\@oddfoot}
\begin{document}

\title{Path-Based Spectral Clustering: Guarantees, Robustness to Outliers, and Fast Algorithms}

\author{\name Anna Little \email littl119@msu.edu   \\
       \addr Department of Computational Mathematics, Science, and Engineering\\
       Michigan State University,
       East Lansing, MI 48824, USA
       \AND
       \name Mauro Maggioni \email mauromaggionijhu@icloud.com \\
       \addr        Department of Applied Mathematics and Statistics, Department of Mathematics \\
       Johns Hopkins University,
       Baltimore, MD 21218, USA
       \AND 
         \name James M. Murphy \email jm.murphy@tufts.edu \\
        \addr  Department of Mathematics\\
       Tufts University, 
       Medford, MA 02139, USA\\
       }

\editor{}

\maketitle

\begin{abstract}

We consider the problem of clustering with the longest-leg path distance (LLPD) metric, which is informative for elongated and irregularly shaped clusters. We prove finite-sample guarantees on the performance of clustering with respect to this metric when random samples are drawn from multiple intrinsically low-dimensional clusters in high-dimensional space, in the presence of a large number of high-dimensional outliers.  By combining these results with spectral clustering with respect to LLPD, we provide conditions under which the Laplacian eigengap statistic correctly determines the number of clusters for a large class of data sets,  and prove guarantees on the labeling accuracy of the proposed algorithm.  Our methods are quite general and provide performance guarantees for spectral clustering with any ultrametric.  We also introduce an efficient, easy to implement approximation algorithm for the LLPD based on a multiscale analysis of adjacency graphs, which allows for the runtime of LLPD spectral clustering to be quasilinear in the number of data points.

\end{abstract}

\begin{keywords}unsupervised learning, spectral clustering, manifold learning, fast algorithms, shortest path distance.

\end{keywords}


\section{Introduction}

Clustering is a fundamental unsupervised problem in machine learning, seeking to detect group structures in data without any references or labeled training data.  Determining clusters can become harder as the dimension of the data increases:  one of the manifestations of the curse of dimension is that points drawn from high-dimensional distributions are far from their nearest neighbors, which can make noise and outliers challenging to address \citep{Hughes1968, Gyorfi2006, Bellman2015}.  However, many clustering problems for real data involve data that exhibit low dimensional structure, which can be exploited to circumvent the curse of dimensionality.  Various assumptions are imposed on the data to model low-dimensional structure, including requiring that the clusters be drawn from affine subspaces \citep{Parsons2004, Chen2009foundations, Chen2009spectral, Vidal2011, Zhang2012, Elhamifar2013, Wang2014, Soltanolkotabi2014} or more generally from low-dimensional mixture models \citep{McLachlan1988, Arias2011, AriasChen2011, Arias2017}.  

When the shape of clusters is unknown or deviates from both linear structures \citep{Vidal2011, Soltanolkotabi2012} or well-separated approximately spherical structures (for which $\numclust$-means performs well \citep{Mixon2017}), {\em{spectral clustering}} \citep{Ng2002, VonLuxburg2007} is a very popular approach, often robust with respect to the geometry of the clusters and of noise and outliers \citep{Arias2011,AriasChen2011}. 
Spectral clustering requires an initial distance or similarity measure, as it operates on a graph constructed between near neighbors measured and weighted based on such distance.
In this article, we propose to analyze low-dimensional clusters when spectral clustering is based on the {\em{longest-leg path distance}} (LLPD) metric, in which the distance between points $x,y$ is the minimum over all paths between $x,y$ of the longest edge in the path.  Distances in this metric exhibit stark phase transitions between within-cluster distances and between-cluster distances.  We are interested in performance guarantees with this metric which will explain this phase transition.  We prove theoretical guarantees on the performance of LLPD as a discriminatory metric, under the assumption that data is drawn randomly from distributions supported near low-dimensional sets, together with a possibly very large number of outliers sampled from a distribution in the high dimensional ambient space.  Moreover, we show that LLPD spectral clustering correctly determines the number of clusters and achieves high classification accuracy for data drawn from certain non-parametric mixture models.  The existing state-of-the-art for spectral clustering struggles in the highly noisy setting, in the case when clusters are highly elongated---which leads to large within-cluster variance for traditional distance metrics---and also in the case when clusters have disparate volumes.  In contrast, our method can tolerate a large amount of noise, even in its natural non-parametric setting, and it is essentially invariant to geometry of the clusters.

In order to efficiently analyze large datasets, a fast algorithm for computing LLPD is required.  Fast nearest neighbor searches have been developed for Euclidean distance on intrinsically low-dimensional sets (and other doubling spaces) using cover trees \citep{Beygelzimer2006}, among other popular algorithms (e.g. $k$-d trees \citep{Bentley1975}), and have been successfully employed in fast clustering algorithms.  These algorithms compute the $O(1)$ nearest neighbors for \emph{all points} in $O(n\log(n))$, where $n$ is the number of points, and are hence crucial to the scalability of many machine learning algorithms.   LLPD seems to require the computation of a minimizer over a large set of paths. We introduce here an algorithm for LLPD, efficient and easy to implement, with the same quasilinear computational complexity as the algorithms above: this makes LLPD nearest neighbor searches scalable to large data sets.  We moreover present a fast eigensolver for the (dense) LLPD graph Laplacian that allows for the computation of the approximate eigenvectors of this operator in essentially linear time. 


\subsection{Summary of Results}

The major contributions of the present work are threefold.  

First, we analyze the {\em{finite sample behavior of LLPD}} for points drawn according to a flexible probabilistic data model, with points drawn from low dimensional structures contaminated by a large number of high dimensional outliers. We derive bounds for maximal within-cluster LLPD and minimal between-cluster LLPD that hold with high probability, and also derive a lower bound for the minimal LLPD to a point's $\kNoise$ nearest neighbor in the LLPD metric. These results rely on a combination of techniques from manifold learning and percolation theory, and may be of independent interest.  

Second, we deploy these finite sample results to prove that, under our data model, the {\em{eigengap statistic for LLPD-based Laplacians correctly determines the number of clusters}}.  While the eigengap heuristic is often used in practice, existing theoretical analyses of spectral clustering fail to provide a rich class of data for which this estimate is provably accurate.  Our results regarding the eigengap are quite general and can be applied to give state-of-the-art performance guarantees for spectral clustering with any ultrametric, not just the LLPD.  Moreover, we prove that \emph{the LLPD-based spectral embedding learned by our method is clustered correctly by $\numclust$-means with high probability}, with misclassification rate improving over the existing state-of-the-art for Euclidean spectral clustering.  

Finally, we present a {\em{fast and easy to implement approximation algorithm for LLPD}}, based on a multiscale decomposition of adjacency graphs.  Let $\kLLPD$ be the number of LLPD nearest neighbors sought.  Our approach generates approximate $\kLLPD$-nearest neighbors in the LLPD at a cost of $O(n(\kEuc C_{\text{NN}}+m(\kEuc \vee\log(n))+\kLLPD)),$ where $n$ is the number of data points, $\kEuc $ is the number of nearest neighbors used to construct an initial adjacency graph on the data, $C_{\text{NN}}$ is the cost of a Euclidean nearest neighbor query, $m$ is related to the approximation scheme, and $\vee$ denotes the maximum.  Under the realistic assumption $\kEuc ,\kLLPD,m\ll\log(n)$, this algorithm is $O(n\log^2(n))$ for data with low intrinsic dimension.  If $\kEuc ,\kLLPD,m=O(1)$ with respect to $n$, this reduces to $O(n\log(n))$.  We quantify the resulting approximation error, which can be uniformly bounded independent of the data.  We moreover \emph{develop a fast eigensolver to compute the $\numclust$ principal eigenfunctions of the dense approximate LLPD Laplacian in $O(n(\kEuc C_{\text{NN}}+m(\kEuc \vee\log(n)\vee \numclust^{2})))$ time}.  If $\kEuc ,\numclust, m=O(1)$ with respect to $n$, this reduces to $O(n\log(n))$.  This allows for the fast computation of the eigenvectors without resorting to constructing a sparse Laplacian.  \emph{The proposed method is demonstrated on a variety of synthetic and real datasets, with performance consistently with our theoretical results}. 


\noindent{\bf{Article outline}}.
In Section \ref{sec:Background}, we present an overview of clustering methods, with an emphasis on those most closely related to the one we propose.  A summary of our data model and main results, together with motivating examples, are in Section \ref{sec:MajorContributions}.  In Section \ref{sec:FiniteSampleAnalysis}, we analyze the LLPD for non-parametric mixture models.  In Section \ref{sec:SpectralClusteringAnalysis}, performance guarantees for spectral clustering with LLPD are derived, including guarantees on when the eigengap is informative and on the accuracy of clustering the spectral embedding obtained from the LLPD graph Laplacian.  Section \ref{sec:Algorithm} proposes an efficient approximation algorithm for LLPD yielding faster nearest neighbor searches and computation of the eigenvectors of the LLPD Laplacian.  Numerical experiments on representative datasets appear in Section \ref{sec:NumericalExperiments}.  We conclude and discuss new research directions in Section \ref{sec:Conclusions}.

\subsection{Notation}

In Table \ref{tab:Notation}, we introduce notation we will use throughout the article.

\begin{table}[!htb]
\begin{center}
\begin{tabular}{r c p{10cm} }
\toprule
$X=\{x_{i}\}_{i=1}^{n}\subset\mathbb{R}^{D}$ & Data points to cluster\\
$d$ & Intrinsic dimension of cluster sets\\
$\numclust$ &  Number of clusters\\
$\{X_{l}\}_{l=1}^{\numclust}$ & Discrete data clusters \\
$\tilde{X}$ & Discrete noise data\\
$X_{N}$ & Denoised data; $X_{N} \subseteq X$\\
$N$ & Number of points remaining after denoising\\
$\nmin$ & Smallest number of points in a cluster\\
$\kEuc $  & Number of nearest neighbors in construction of initial NN-graph\\
$\kLLPD$  & Number of nearest neighbors for LLPD \\
$\kNoise$ & Number of nearest neighbors for LLPD denoising\\
$C_{\text{NN}}$ & Complexity of computing a Euclidean NN\\
$W$ & Weight matrix \\
$\Lsym$ & Symmetric normalized Laplacian\\
$\sigma$ & Scaling parameter in construction of weight matrix\\
$\{(\phi_{i},\lambda_{i})\}_{i=1}^{n}$ & Eigenvectors and eigenvalues of an $n\times n$ $\Lsym$\\
$\epsilonincluster$ & Maximum within cluster LLPD; see (\ref{e:epsilons})\\
$\epsilonnoiseknn$ & Minimum LLPD of noise points to $\kNoise$ nearest neighbor; see (\ref{e:epsilons})\\
$\epsilonbetweencluster$ & Minimum between cluster LLPD; see (\ref{e:epsilons})\\
$\epsilonsep$ & Minimum between cluster LLPD after denoising; see (\ref{equ:assump3})\\
$\delta$ & Minimum Euclidean distance between clusters; see Definition \ref{defn:LDLN}\\
$\thetathres$ & Denoising parameter; see Definition \ref{defn:DenoisedData}\\
$\zeta_n, \zeta_{\thetathres}$ & LDLN data cluster balance parameters; see (\ref{eqn:zetas})  \\
$\zeta_{N}$ & Empirical cluster balance parameter after denoising; see Assumption \ref{assumption:ultrametric}\\
$\rho$ & Arbitrary metric\\
$\rho_{\ell \ell}$ & LLPD metric; see Definition \ref{def:LLPD}\\
$\mathcal{H}^{d}$ & $d$-dimensional Hausdorff measure\\
$B_{\epsilon}(x)$ & $D$-dimensional ball of radius $\epsilon$ centered at $x$\\
$B_1$ & Unit ball, with dimension clear from context\\
$a \vee b$, $a \wedge b$ & Maximum, minimum of $a$ and $b$\\
$a \lesssim b$, $a \gtrsim b$ & $a \leq C b$ , $a \geq C b$ for some absolute constant $C>0$ \\
\bottomrule
\end{tabular}
\end{center}
\caption{Notation used throughout the article.\label{tab:Notation}}
\end{table}


\section{Background}\label{sec:Background}
\subsection{Background on Clustering}

The process of determining groupings within data and assigning labels to data points according to these groupings without supervision is called \emph{clustering} \citep{Hastie2009}.  It is a fundamental problem in machine learning, with many approaches known to perform well in certain circumstances, but not in others.  In order to provide performance guarantees, analytic, geometric, or statistical assumptions are placed on the data.  
Perhaps the most popular clustering scheme is $\numclust$-means \citep{Steinhaus1957, Friedman2001,Hastie2009}, together with its variants \citep{Ostrovsky2006, Arthur2007, Park2009}, which are used in conjunction with feature extraction methods.  This approach partitions the data into a user-specified number $\numclust$ groups, where the partition is chosen to minimize within-cluster dissimilarity: $C^{*}=\argmin_{C=\{C_{k}\}_{k=1}^{\numclust}} \sum_{k=1}^{\numclust}\sum_{x\in C_{k}}\|x-\bar{x}_{k}\|_{2}^{2}.$  Here, $\{C_k\}_{k=1}^\numclust$ is a partition of the points, $C_{k}$ is the set of points in the $k^{th}$ cluster and $\bar{x}_{k}$ denotes the mean of the $k^{th}$ cluster.  Unfortunately, the $\numclust$-means algorithm and its refinements perform poorly for datasets that are not the union of well-separated, spherical clusters, and are very sensitive to outliers.  In general, density-based methods such as density-based spatial clustering of applications with noise (DBSCAN) and variants \citep{Ester1996, Xu1998} or spectral methods  \citep{Shi2000, Ng2002} are required to handle irregularly shaped clusters.


\subsection{Hierarchical Clustering}
Hierarchical clustering algorithms build a family of clusters at distinct hierarchical levels.  Their results are readily presented as a \emph{dendrogram} (see Figure \ref{fig:Dendrogram}).  Hierarchical clustering algorithms can be \emph{agglomerative}, where  individual points start as their own clusters and are iteratively merged, or \emph{divisive}, where the full dataset is iteratively split until some stopping criterion is reached. It is often challenging to infer a global partition of the data from hierarchical algorithms, as it is unclear where to cut the dendrogram.  

\begin{figure}[!htb]
\centering
\begin{subfigure}{.32\textwidth}
\includegraphics[width=\textwidth]{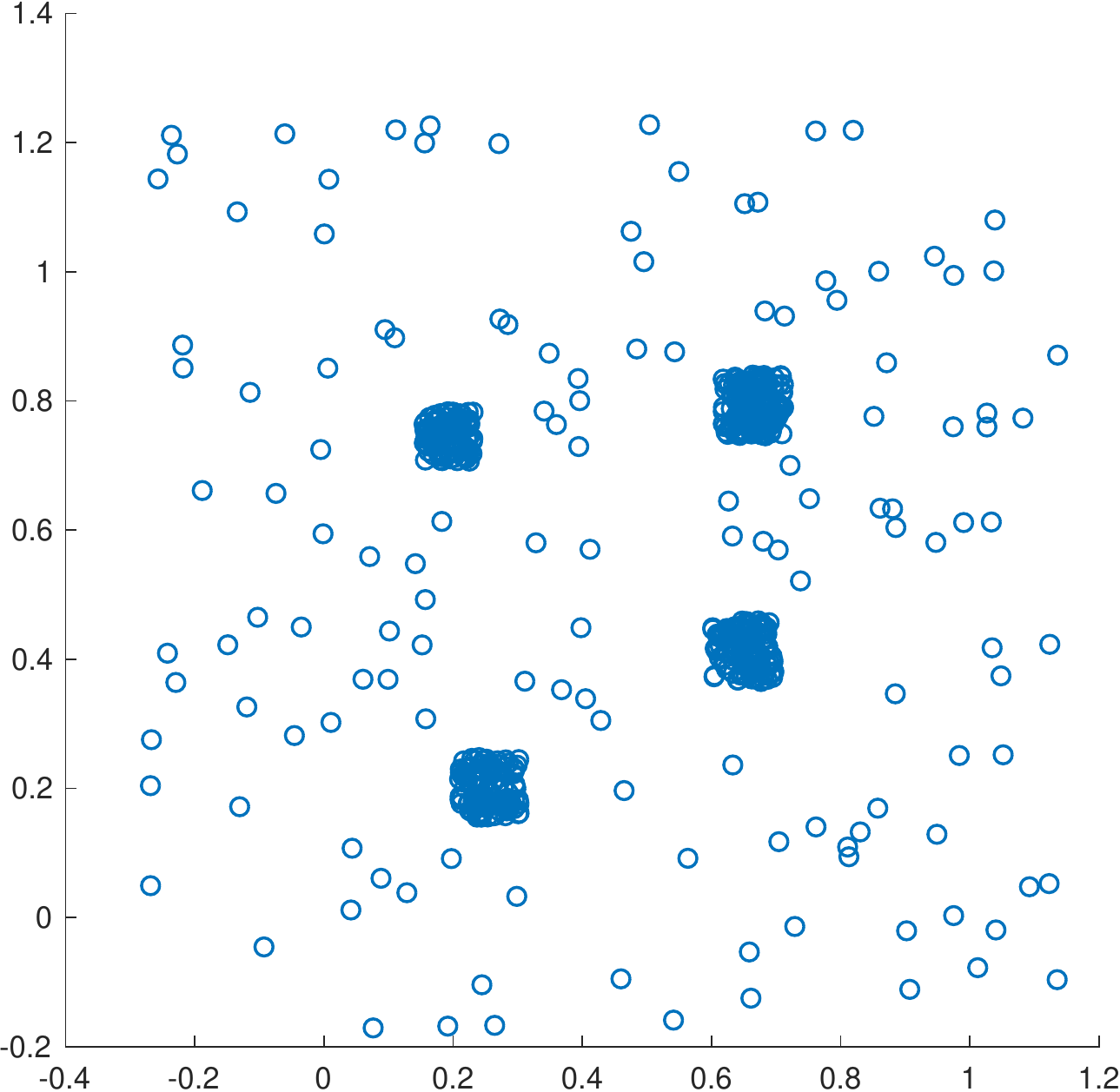}
\subcaption{Data to cluster}
\end{subfigure}
\qquad
\begin{subfigure}{.32\textwidth}
\includegraphics[width=\textwidth]{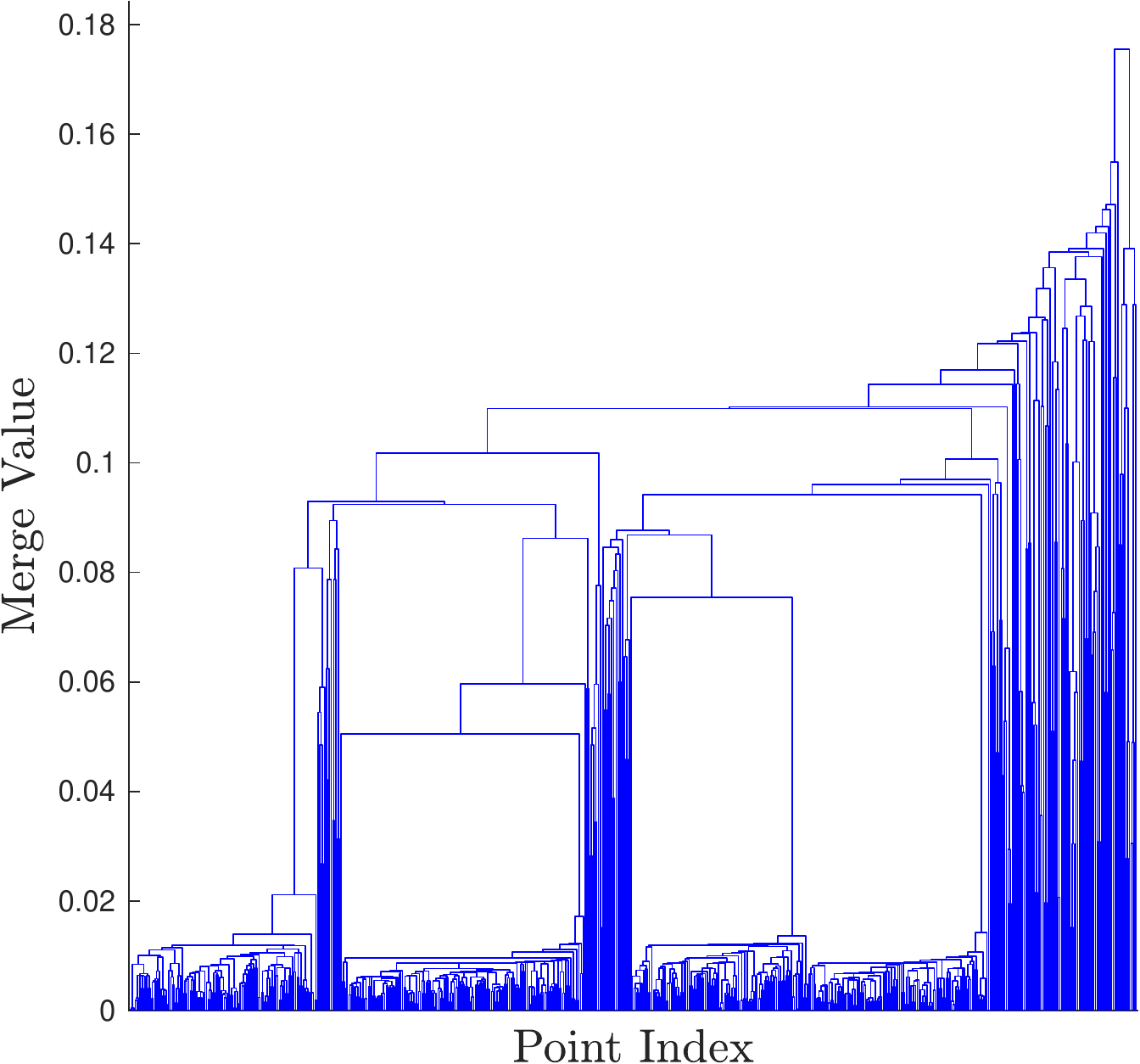}
\subcaption{Corresponding single linkage dendrogram}
\end{subfigure}
\caption{\label{fig:Dendrogram}Four two-dimensional clusters together with noise \citep{Zelnik2004} appear in (a).  In (b) is the corresponding single-linkage dendrogram.  Each point begins as its own cluster, and at each level of the dendrogram, the two nearest clusters are merged.  It is hard to distinguish between the noise and cluster points from the single linkage dendrogram, as it is not obvious where the four clusters are.
}
\end{figure}

For agglomerative methods, it must be determined which clusters ought to be merged at a given iteration.  This is done by a \emph{cluster dissimilarity metric} $\rho_{c}$.  For two clusters $C_{i},C_{j}$, $\rho_{c}(C_{i},C_{j})$ small means the clusters are candidates for merger.  Let $\rho_{X}$ be a metric defined on all the data points in $X$.  Standard $\rho_{c}$, and the corresponding clustering methods, include:
\begin{itemize}

\item[$\cdot$] $\rho_{SL}(C_{i},C_{j})=\min_{x_{i}\in C_{i}, x_{j}\in C_{j}}\rho_{X}(x_{i},x_{j})$: \emph{single linkage clustering}.

\item[$\cdot$] $\rho_{CL}(C_{i},C_{j})= \max_{x_{i}\in C_{i}, x_{j}\in C_{j}}\rho_{X}(x_{i},x_{j})$: \emph{complete linkage clustering}.

\item[$\cdot$] $\rho_{GA}(C_{i},C_{j})= \frac{1}{|C_{i}\|C_{j}|}\sum_{x_{i}\in C_{i}}\sum_{x_{j}\in C_{j}}\rho_{X}(x_{i},x_{j})$: \emph{group average clustering}.
\end{itemize}
In Section \ref{sec:Algorithm} we make theoretical and practical connections between the proposed method and single linkage clustering.  


\subsection{Spectral Clustering}\label{subsec:SpectralClustering}

\emph{Spectral clustering} methods \citep{Shi2000, Meila2001_Learning, Ng2002, VonLuxburg2007} use a spectral decomposition of an \emph{adjacency} or \emph{Laplacian matrix} to define an embedding of the data, and then cluster the embedded data using a standard algorithm, commonly $\numclust$-means.  The basic idea is to construct a weighted graph on the data that represents local relationships.  The graph has low edge weights for points far apart from each other and high edge weights for points close together.  This graph is then partitioned into clusters so that there are large edge weights within each cluster, and small edge weights between each cluster. Spectral clustering in fact relaxes an NP-hard graph partition problem \citep{Chung1997, Shi2000}. 

We now introduce notation related to spectral clustering that will be used throughout this work. Let $f_{\sigma}:\mathbb{R}\rightarrow[0,1]$ denote a kernel function with scale parameter $\sigma$.  Given a metric $\rho:\mathbb{R}^{D}\times\mathbb{R}^{D}\rightarrow [0,\infty)$ and some discrete set $X=\{x_{i}\}_{i=1}^{n}\subset\mathbb{R}^{D}$, let $W_{ij} = f_{\sigma}(\rho(x_{i},x_{j}))$ be the corresponding weight matrix.  Let $d_i = \sum_{j=1}^nW_{ij}$ denote the degree of point $x_i$, and define the diagonal degree matrix $\Deg_{ii} = d_i, \Deg_{ij}=0$ for $i\ne j$.  The graph Laplacian is then defined by $L=\Deg-W,$ which is often normalized to obtain the symmetric Laplacian $\Lsym=I-\Deg^{-\frac{1}{2}}W\Deg^{-\frac{1}{2}}$ or random walk Laplacian $L_{\text{RW}}=I-\Deg^{-1}W.$  Using the eigenvectors of $L$ to define an embedding leads to \textit{unnormalized} spectral clustering, whereas using the eigenvectors of $\Lsym$ or $L_{\text{RW}}$ leads to \textit{normalized} spectral clustering. While both normalized and unnormalized spectral clustering minimize between-cluster similarity, only normalized spectral clustering maximizes within-cluster similarity, and is thus preferred in practice \citep{VonLuxburg2007}.
 
In this article we consider spectral clustering with $\Lsym$ and construct the spectral embedding defined according to the popular algorithm of \citet{Ng2002}. When appropriate, we will use $\Lsym(X,\rho,f_{\sigma})$ to denote the matrix $\Lsym$ computed on the data set $X$ using metric $\rho$ and kernel $f_{\sigma}$. We denote the eigenvalues of $\Lsym$ (which are identical to those of $L_{\text{RW}}$) by $\lambda_1 \leq \ldots \leq \lambda_n$, and the corresponding eigenvectors by $\phi_1, \ldots, \phi_n$. To cluster the data into $\numclust$ groups according to \citet{Ng2002}, one first forms an $n\times \numclust$ matrix $\Phi$ whose columns are given by $\{\phi_i\}_{i=1}^\numclust$; these $\numclust$ eigenvectors are called the $\numclust$ \emph{principal eigenvectors}. The rows of $\Phi$ are then normalized to obtain the matrix $V$, that is $V_{ij} = {\Phi_{ij}}/({{\sum_j \Phi^2_{ij}}})^{1/2}$. Let $\{\mathbf{v}_i\}_{i=1}^n \in \mathbb{R}^{\numclust}$ denote the rows of $V$. Note that if we let $g:\mathbb{R}^D \rightarrow \mathbb{R}^{\numclust}$ denote the spectral embedding, $\mathbf{v}_i = g(x_i)$. Finally, $\numclust$-means is applied to cluster the $\{\mathbf{v}_i\}_{i=1}^n$ into $\numclust$ groups, which defines a partition of our data points $\{x_i\}_{i=1}^n$. One can use $L_{\text{RW}}$ similarly \citep{Shi2000}.  

Choosing $\numclust$ is an important aspect of spectral clustering, and various spectral-based mechanisms have been proposed in the literature \citep{azran2006spectral,azran2006new,Zelnik2004,sanguinetti2005automatic}.
The eigenvalues of $\Lsym$ have often been used to heuristically estimate the number of clusters as the largest empirical eigengap $\hat{\numclust}=\argmax_{i}\lambda_{i+1}-\lambda_{i},$ although there are many data sets for which this heuristic is known to fail \citep{VonLuxburg2007}; this estimate is called the \emph{eigengap} statistic. We remark that sometimes in the literature it is required that not only should $\lambda_{\hat{\numclust}+1}-\lambda_{\hat{\numclust}}$ be maximal, but also that $\lambda_{i}$ should be close to 0 for $i\le \hat{\numclust}$; we shall not make this additional assumption on $\lambda_{i}, i\le \hat{K}$, though we find in practice it is usually satisfied when the eigengap is accurate.  

A description of the spectral clustering algorithm of \cite{Ng2002} in the case that $\numclust$ is not known a priori appears in Algorithm \ref{alg:SC}; the algorithm can be modified in the obvious way if $\numclust$ is known and does not need to be estimated, or when using a sparse Laplacian, for example when $W$ is defined by a sparse nearest neighbors graph. 

\begin{algorithm}[tb]
	\caption{\label{alg:SC}Spectral Clustering with metric $\rho$}
	\textbf{Input:} $\{x_{i}\}_{i=1}^{n}$ (Data) , $\sigma>0$ (Scaling parameter)\\
	\textbf{Output:} $Y$ (Labels)\
	\begin{algorithmic}[1]
		\STATE Compute the weight matrix $W\in\mathbb{R}^{n\times n}$ with $W_{ij}=\exp(-\rho(x_{i},x_{j})^{2}/\sigma^{2})$.
		\STATE Compute the diagonal degree matrix $D\in\mathbb{R}^{n\times n}$ with $D_{ii}=\sum_{j=1}^{n}W_{ij}$.
		\STATE Form the symmetric normalized Laplacian $\Lsym=I-D^{-\frac{1}{2}}WD^{-\frac{1}{2}}$.
		\STATE Compute the eigendecomposition  $\{(\phi_{k},\lambda_{k})\}_{k=1}^{n}$, sorted so that $0=\lambda_{1}\le\lambda_{2}\le\dots\le\lambda_{n}$.
		\STATE Estimate the number of clusters $\numclust$ as $\hat{\numclust}=\argmax_{k}\lambda_{k+1}-\lambda_{k}$.
		\STATE For $1\leq i\leq n$, let $\mathbf{v}_i = (\phi_{1}(x_{i}),\phi_{2}(x_{i}),\dots,\phi_{\hat{\numclust}}(x_{i}))/||(\phi_{1}(x_{i}),\phi_{2}(x_{i}),\dots,\phi_{\hat{\numclust}}(x_{i}))||_2$ define the (row normalized) spectral embedding.
		\STATE Compute labels $Y$ by running $\numclust$-means on the data $\{\mathbf{v}_i \}_{i=1}^{n}$ using $\hat{\numclust}$ as the number of clusters. 
			\end{algorithmic}
\end{algorithm}

In addition to determining $\numclust$, performance guarantees for $\numclust$-means (or other clustering methods) on the spectral embedding is a topic of active research \citep{Schiebinger2015,Arias2017}.  However, spectral clustering typically has poor performance in the presence of noise and highly elongated clusters.


\subsection{Background on LLPD}

Many clustering and machine learning algorithms make use of Euclidean distances to compare points.  While universal and popular, this distance is data-independent, not adapted to the geometry of the data.  Many data-dependent metrics have been developed, for example diffusion distances \citep{Coifman2005, Coifman2006}, which are induced by diffusion processes on a dataset, and path-based distances \citep{Fischer2003,Chang2008}.  We shall consider a path-based distance for undirected graphs.

\begin{defn}
\label{def:LLPD}
For $X=\{x_{i}\}_{i=1}^{n}\subset\mathbb{R}^{D}$, let $G$ be the complete graph on $X$ with edges weighted by Euclidean distance between points.  For $x_{i}, x_{j}\in X$, let $\mathcal{P}(x_{i},x_{j})$ denote the set of all paths connecting $x_{i},x_{j}$ in $G$.  The \emph{longest-leg path distance (LLPD)} is:
\begin{align*}\rho_{\ell \ell}(x_{i},x_{j})=\min_{\{y_{l}\}_{l=1}^{L}\in\mathcal{P}(x_{i},x_{j})} \max_{l=1,2,\dots,L-1}\|y_{l+1}-y_{l}\|_{2}.\end{align*}
\end{defn}

In this article we use LLPD with respect to the Euclidean distance, but our results very easily generalize to other base distances. Our goal is to analyze the effects of transforming an original metric through the \emph{min-max} distance along paths in the definition of LLPD above.  We note that the LLPD is an \emph{ultrametric}, i.e. \begin{align}\label{defn:ultrametric} \forall x,y,z\in X \quad \rho_{\ell\ell}(x,y)\le \max\{\rho_{\ell\ell}(x,z),\rho_{\ell\ell}(y,z)\}\,.\end{align} 
This property is central to the proofs of Sections \ref{sec:FiniteSampleAnalysis} and \ref{sec:SpectralClusteringAnalysis}.  Figure \ref{fig:MoonDemo} illustrates how LLPD successfully differentiates elongated clusters, whereas Euclidean distance does not.

\begin{figure}[!htb]
\centering
\begin{subfigure}{.38\textwidth}
\includegraphics[width=\textwidth]{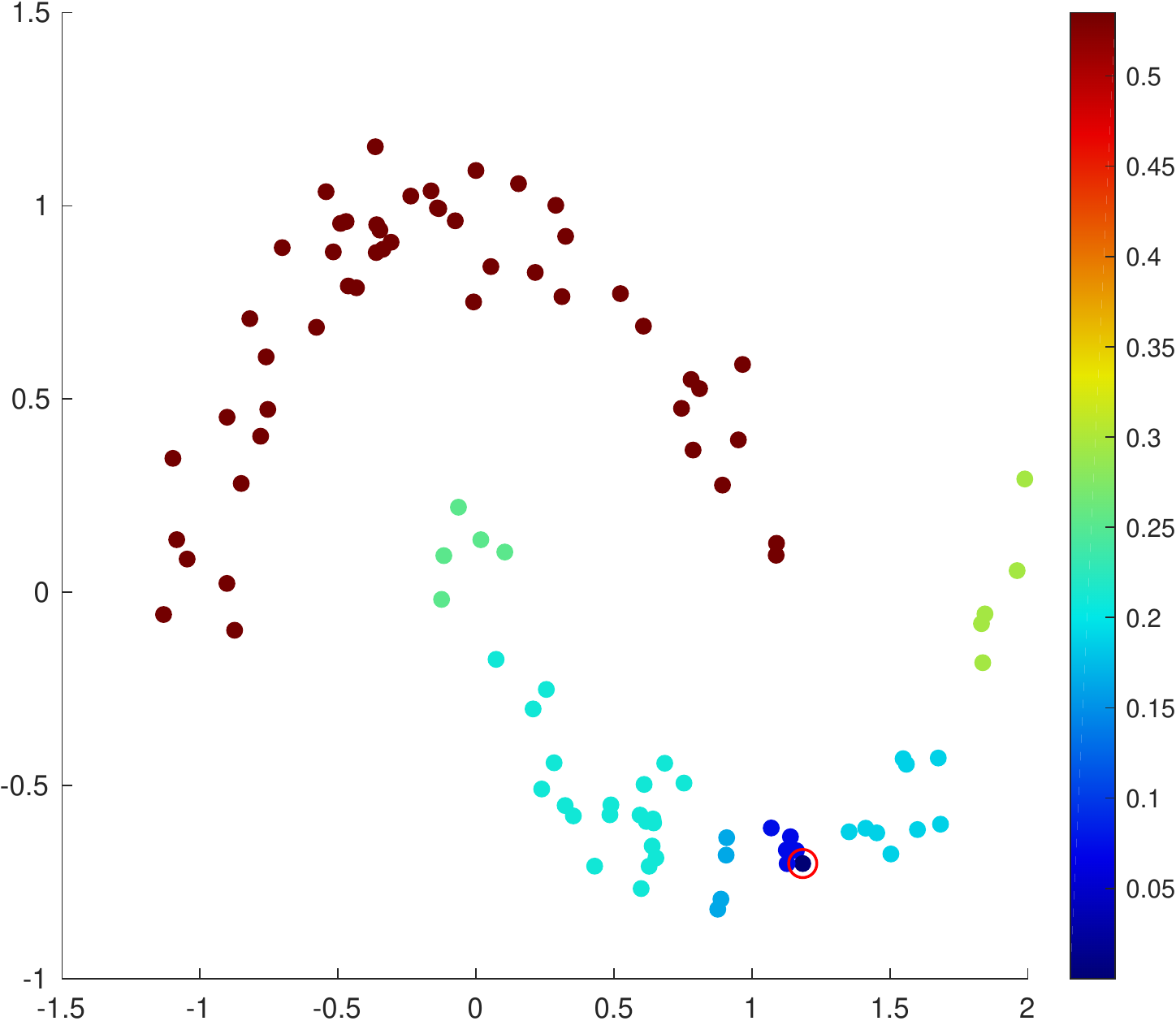}
\subcaption{LLPD from the marked point.}
\end{subfigure}
\qquad
\begin{subfigure}{.38\textwidth}
\includegraphics[width=\textwidth]{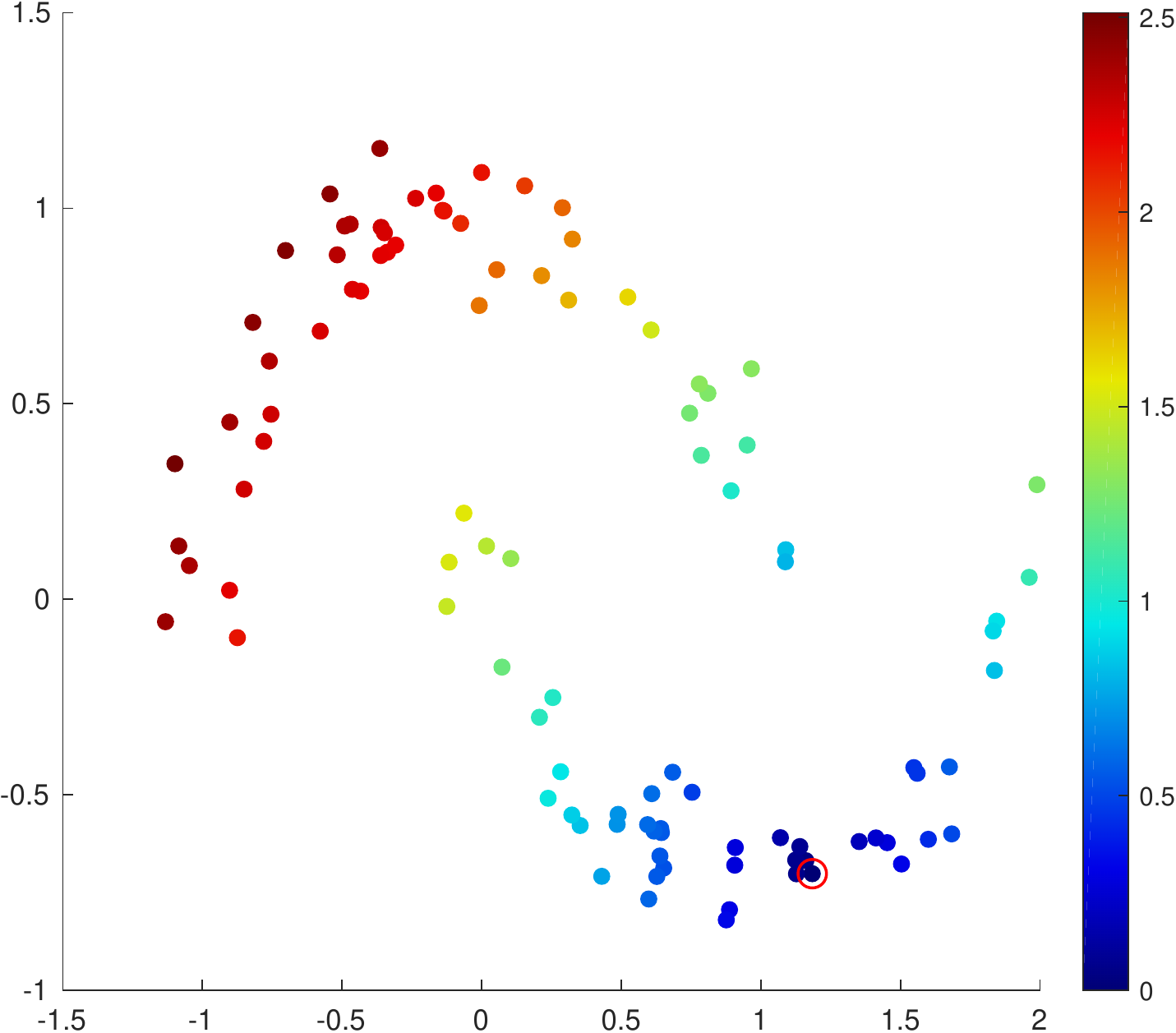}
\subcaption{Euclidean distances from the marked point.}
\end{subfigure}
\caption{\label{fig:MoonDemo}In this example, LLPD is compared with Euclidean distance.  The distance from the red circled source point is shown in each subfigure.  Notice that the LLPD has a phase transition that separates the clusters clearly, and that all distances within-cluster are comparable.
}
\end{figure}


\subsubsection{Probabilistic Analysis of LLPD}

Existing theoretical analysis of LLPD is based on studying the uniform distribution on certain geometric sets.  The degree and connectivity properties of near-neighbor graphs defined on points sampled uniformly from $[0,1]^{d}$ and their connections with percolation have been studied extensively \citep{Appel1997_max, Appel1997_min, Appel2002, Penrose1997, Penrose1999}.  Related results in the case of points drawn from low-dimensional structures were studied by \citet{Arias2011}.  These results motivate some of the ideas in this article; detailed references are given below when appropriate.  


\subsubsection{Spectral Clustering with LLPD}

Spectral clustering with LLPD has been shown to enjoy good empirical performance \citep{Fischer2001, Fischer2003, Fischer2004} and is made more robust by incorporating outlier removal \citep{Chang2008}.  The method and its variants generally perform well for non-convex and highly elongated clusters, even in the presence of noise.  However, no theoretical guarantees seem to be available.  Moreover, numerical implementation of LLPD spectral clustering appears underdeveloped, and existing methods have been evaluated mainly on small, low-dimensional datasets.  This article derives theoretical guarantees on performance of LLPD spectral clustering which confirms empirical insights, and also provides a fast implementation of the method suitable for large datasets.  

\subsubsection{Computing LLPD}
The problem of computing this distance is referred to by many names in the literature, including the maximum capacity path problem, the widest path problem, and the bottleneck edge query problem \citep{Pollack1960, Hu1961, Camerini1978, Gabow1988}.  A naive computation of LLPD distances is expensive, since the search space $\mathcal{P}(x,y)$ is potentially very large.  However, for a fixed pair of points $x,y$ connected in a graph $G=G(V,E)$, $\rho_{\ell \ell}(x,y)$ can be computed in $O(|E|)$ \citep{Punnen1991}.  There has also been significant work on the related problem of finding bottleneck spanning trees.  For a fixed root vertex $s\in V$, the \emph{minimal bottleneck spanning tree} rooted at $s$ is the spanning tree whose maximal edge length is minimal.  The bottleneck spanning tree can be computed in $O(\min\{n\log(n)+|E|, |E|\log(n)\})$ \citep{Camerini1978, Gabow1988}.

Computing all LLPDs for all points is the \emph{all points path distance (APPD)} problem.  Naively applying the bottleneck spanning tree construction to each point gives an APPD runtime of $O(\min\{n^{2}\log(n)+n|E|, n|E|\log(n)\})$.  However the APPD distance matrix can be computed in $O(n^{2})$, for example with a modified SLINK algorithm \citep{Sibson1973}, or with Cartesian trees \citep{Alon1987, Demaine2009, Demaine2014}.  We propose to approximate LLPD and implement LLPD spectral clustering with an algorithm near-linear in $n$, which enables the analysis of very large datasets (see Section \ref{sec:Algorithm}).


\section{Major Contributions}\label{sec:MajorContributions}

In this section we present a simplified version of our main theoretical result.  More general versions of these results, with detailed constants, will follow in Sections \ref{sec:FiniteSampleAnalysis} and \ref{sec:SpectralClusteringAnalysis}. We first discuss a motivating example and outline our data model and assumptions, which will be referred to throughout the article.


\subsection{Motivating Examples}
\begin{figure}[!htb]
	\begin{subfigure}[t]{.32\textwidth}
		\captionsetup{width=.95\linewidth}
		\includegraphics[width=\textwidth]{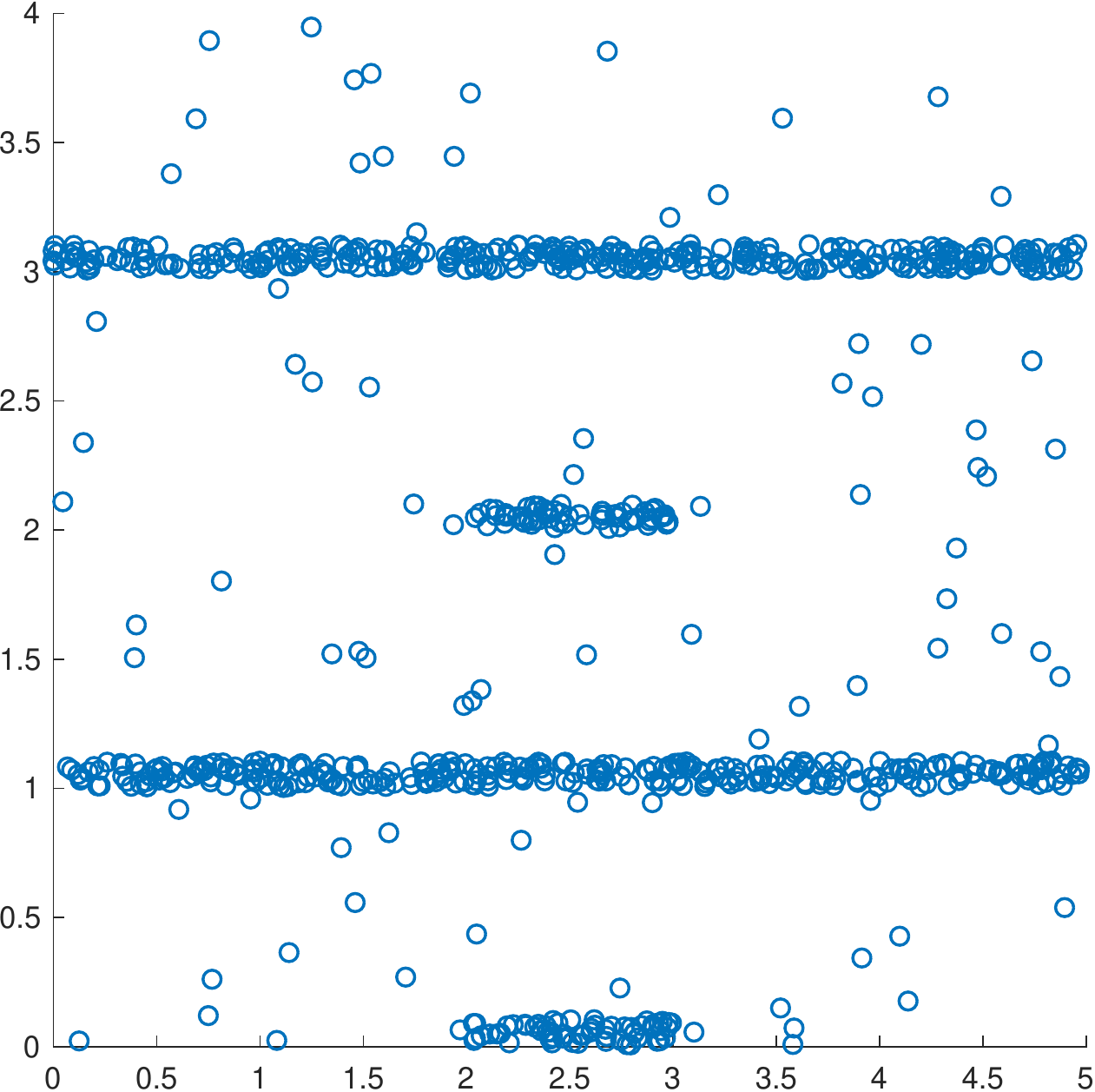}
		\subcaption{Original dataset.}
	\end{subfigure}
	\begin{subfigure}[t]{.32\textwidth}
		\captionsetup{width=.95\linewidth}
		\includegraphics[width=\textwidth]{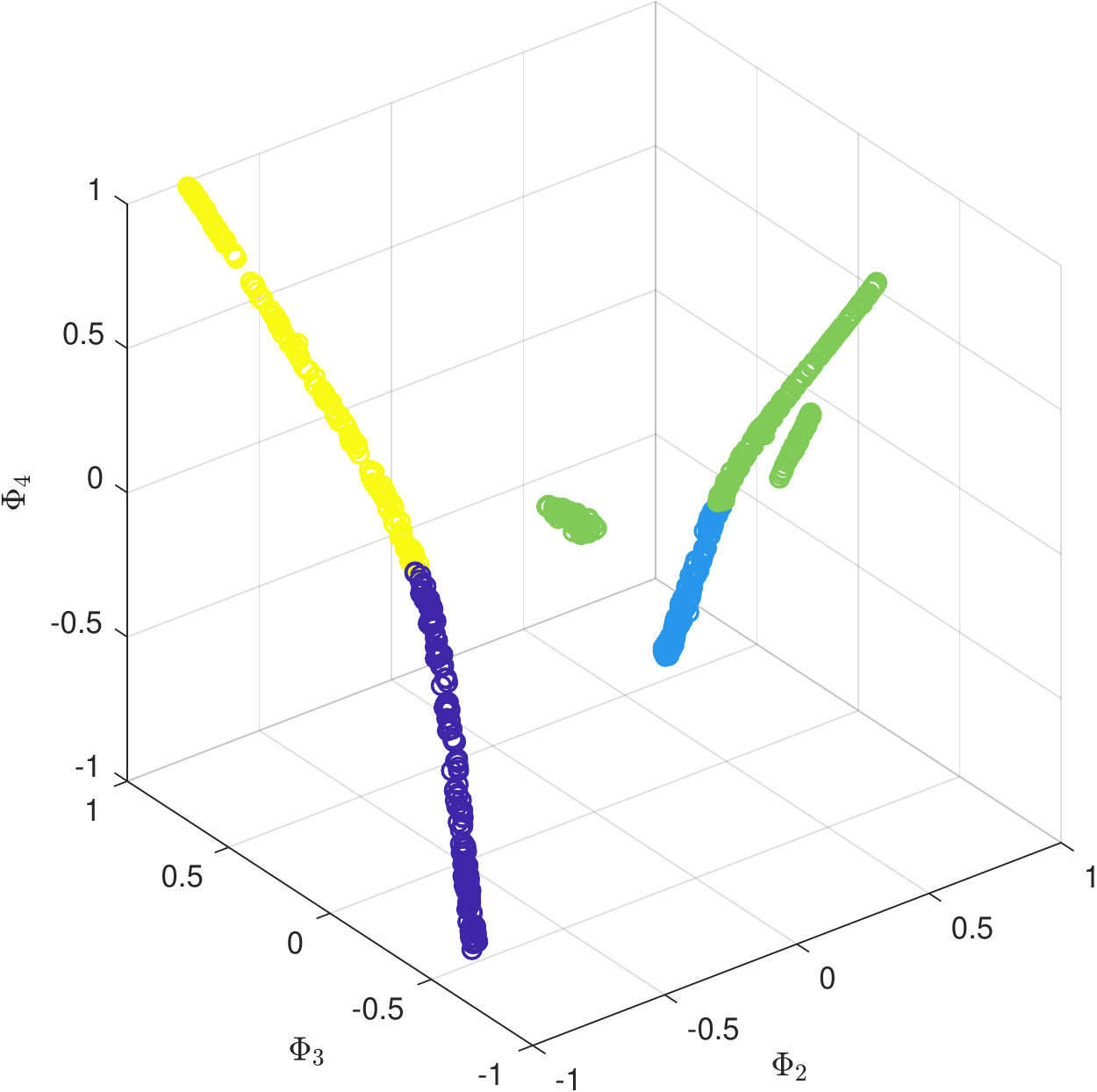}
		\subcaption{3 dimensional spectral embedding with Euclidean distances, labeled with $\numclust$-means.  The data has been denoised based on thresholding with Euclidean distances.}
	\end{subfigure}
	\begin{subfigure}[t]{.32\textwidth}
		\captionsetup{width=.95\linewidth}
		\includegraphics[width=\textwidth]{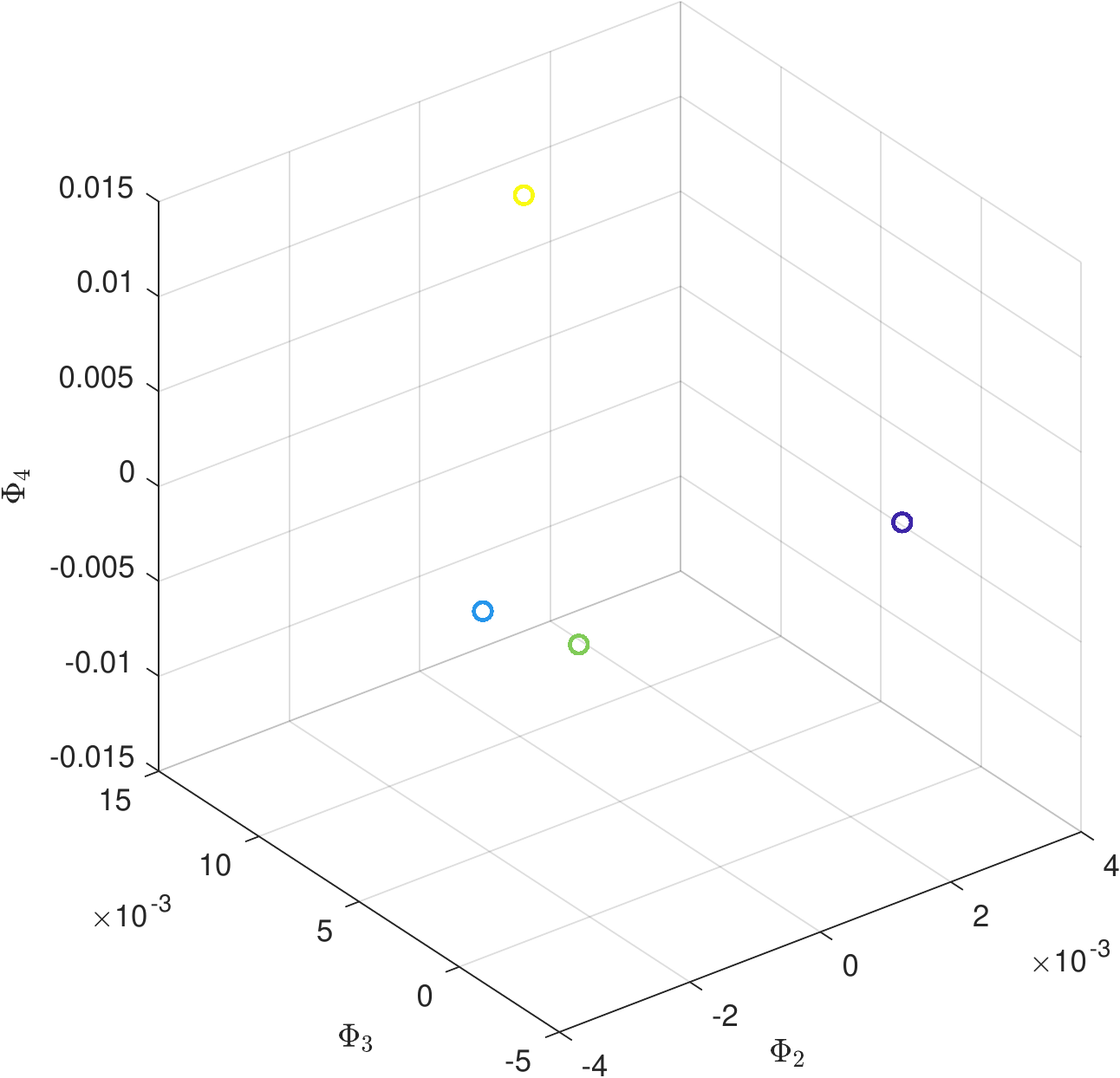}
		\subcaption{3 dimensional spectral embedding with LLPD, labeled with $\numclust$-means.   The data has been denoised based on thresholding with LLPD.}
	\end{subfigure}
	\begin{subfigure}[t]{.32\textwidth}
		\captionsetup{width=.95\linewidth}
		\includegraphics[width=\textwidth]{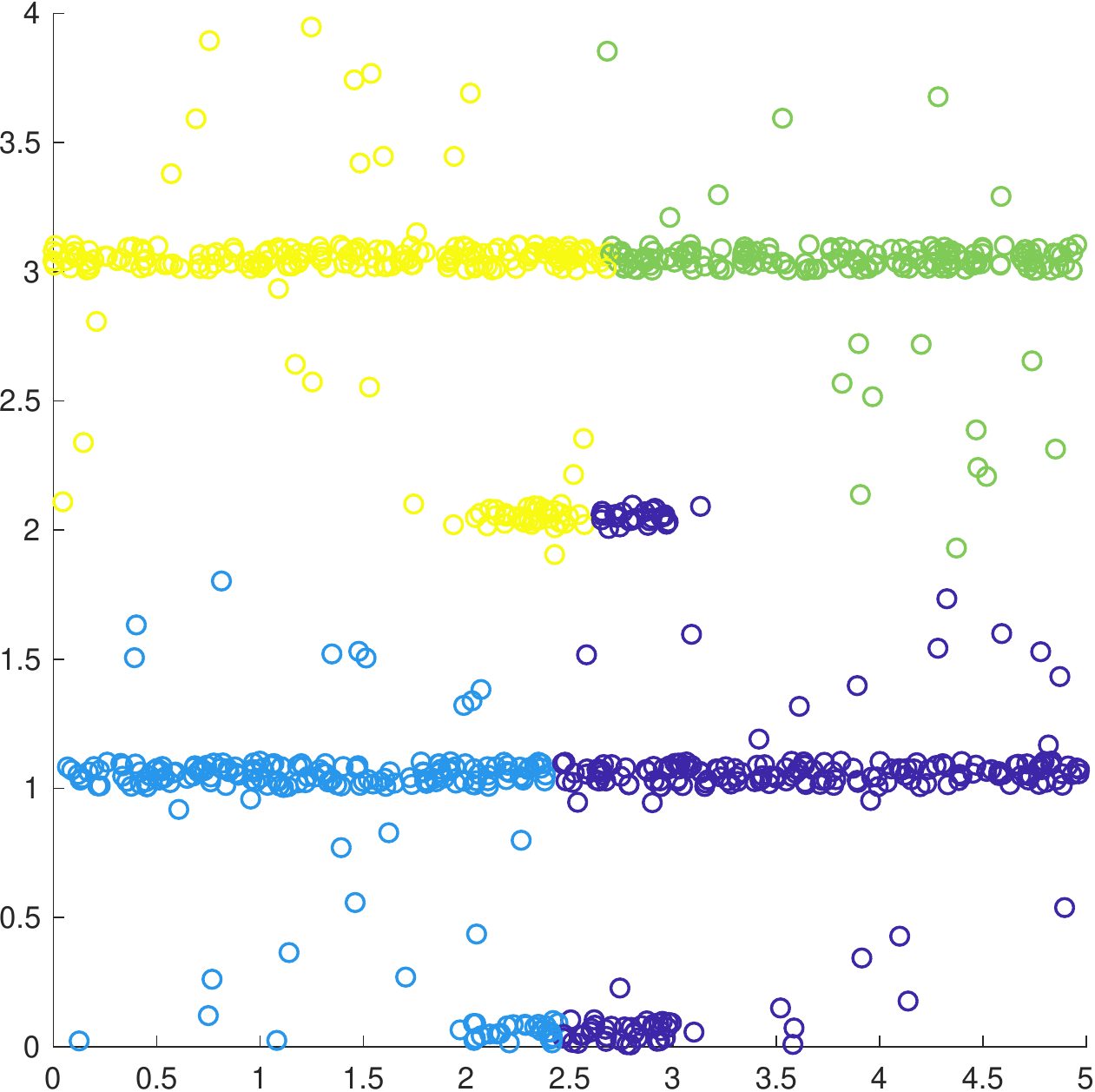}
		\subcaption{$\numclust$-means labels.}
	\end{subfigure}
	\begin{subfigure}[t]{.32\textwidth}
		\captionsetup{width=.95\linewidth}
		\includegraphics[width=\textwidth]{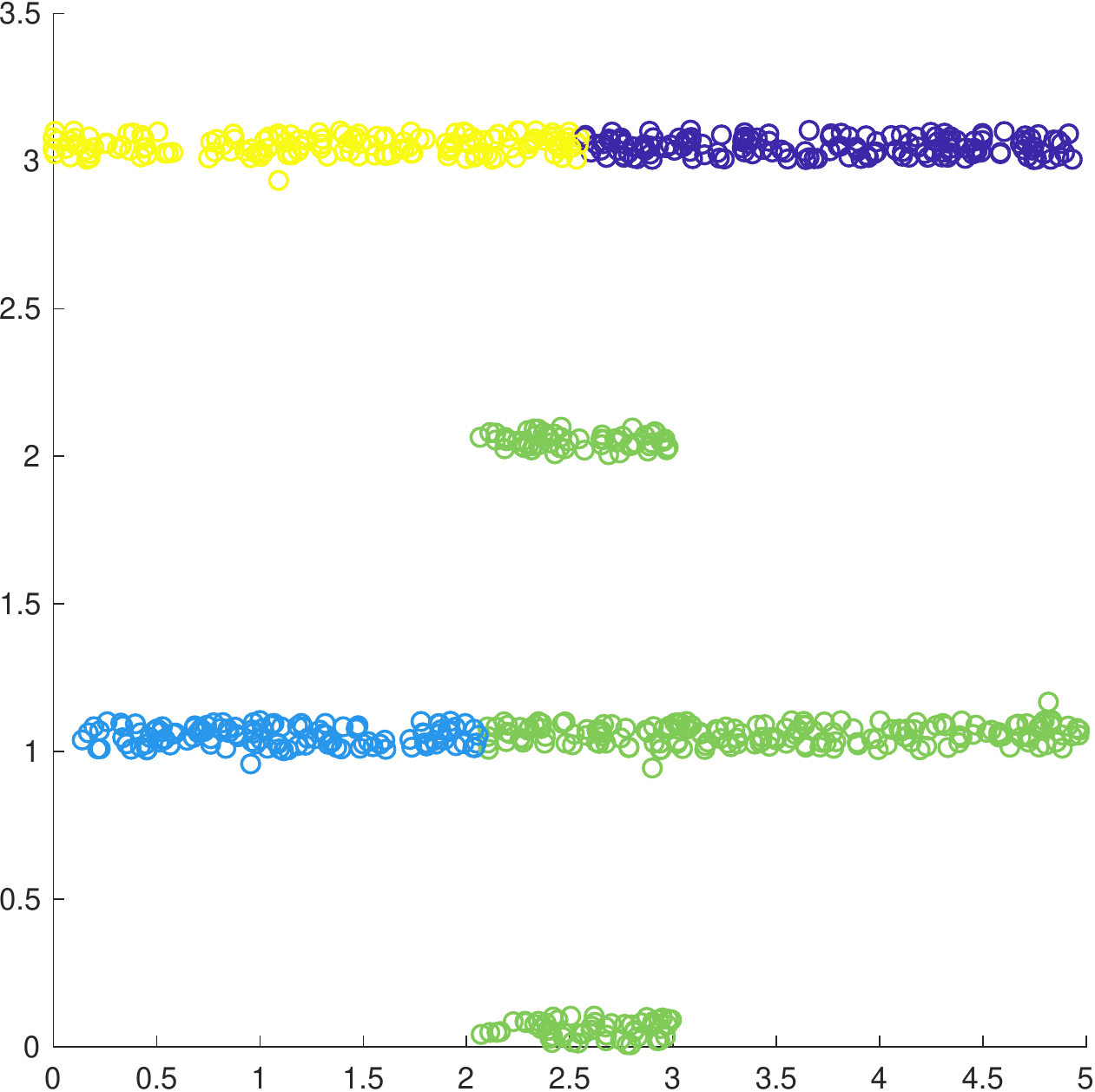}
		\subcaption{Spectral clustering results with Euclidean distances.}
	\end{subfigure}
	\begin{subfigure}[t]{.32\textwidth}
		\captionsetup{width=.95\linewidth}
		\includegraphics[width=\textwidth]{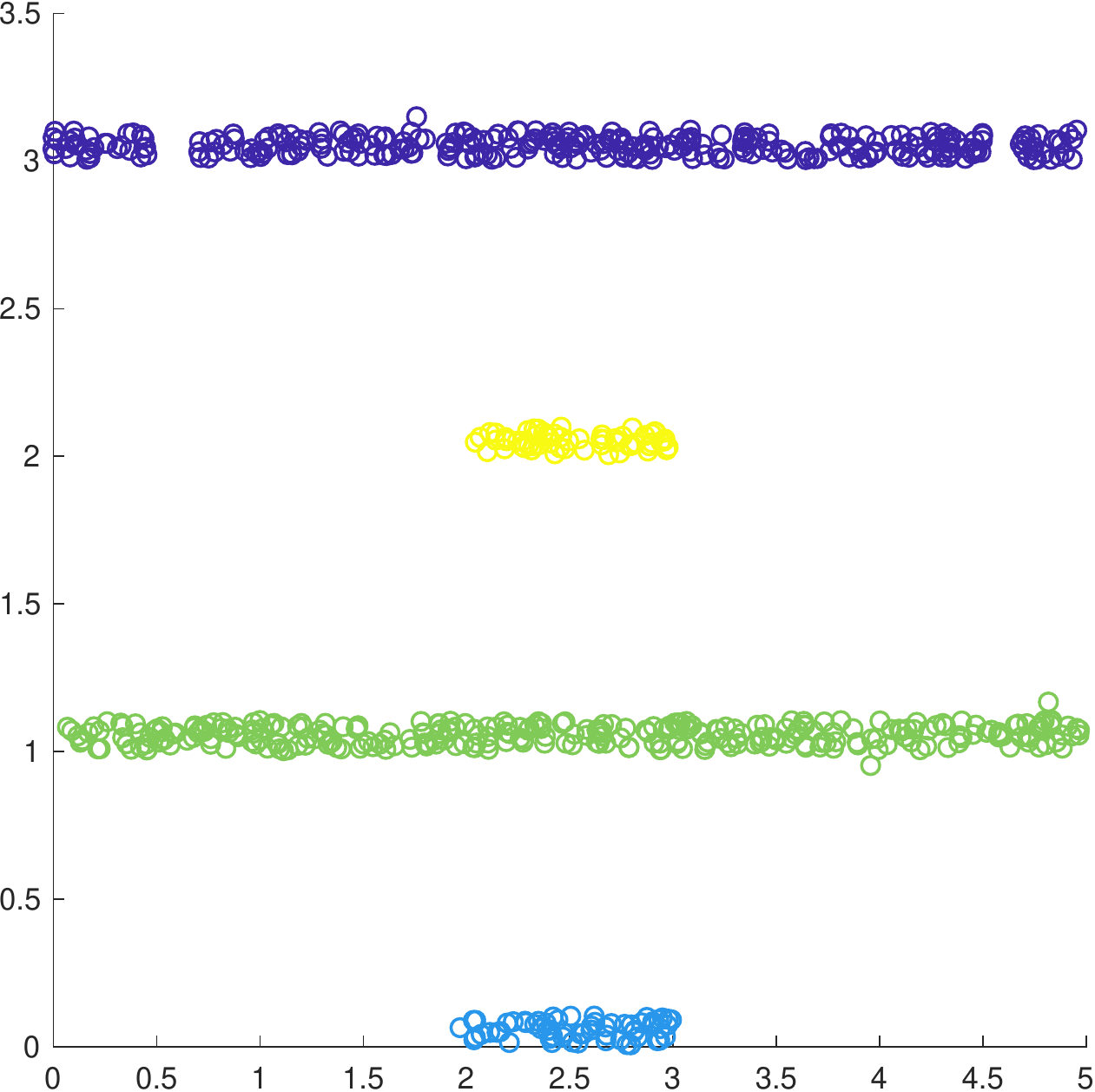}
		\subcaption{Spectral clustering results with LLPD.}
	\end{subfigure}
	\caption{The dataset consists of four elongated clusters in $\mathbb{R}^{2}$, together with ambient noise.  The labels given by $\numclust$-means are quite inaccurate, as are those given by regular spectral clustering.  The labels given by LLPD spectral clustering are perfect.  Note that $\Phi_{i}$ denotes the $i^{th}$ principal eigenvector of $\Lsym$.  For both variants of spectral clustering, the $\numclust$-means algorithm was run in the 4 dimensional embedding space given by the first 4 principal eigenvectors of $\Lsym$.}
	\label{fig:IntroductoryExample}		
\end{figure}

In this subsection we illustrate in which regimes LLPD spectral clustering advances the state-of-art for clustering. As will be explicitly described in Subsection \ref{subsec:assumptions}, we model clusters as connected, high-density regions, and we model noise as a low-density region separating the clusters. Our method easily handles highly elongated and irregularly shaped clusters, where traditional $\numclust$-means and even spectral clustering fail.  For example, consider the four elongated clusters in $\mathbb{R}^{2}$ illustrated in Figure \ref{fig:IntroductoryExample}. Both $\numclust$-means and Euclidean spectral clustering split one or more of the most elongated clusters, whereas the LLPD spectral embedding perfectly separates them.  Moreover, the eigenvalues of the LLPD Laplacian correctly infer there are 4 clusters, unlike the Euclidean Laplacian.

There are naturally situations where LLPD spectral clustering will not perform well, such as for certain types of structured noise. For example, consider the dumbbell shown in Figure \ref{fig:Dumbbell}.  When there is a high-density bridge connecting the dumbbell, LLPD will not be able to distinguish the two balls. However, it is worth noting that this property is precisely what allows for robust performance with elongated clusters, and that if the bridge has a lower density than the clusters, LLPD spectral clustering performs very well.

\begin{figure}[!htb]
	\centering
	\begin{subfigure}{.49\textwidth}
		\captionsetup{width=.95\linewidth}
		\includegraphics[width=\textwidth]{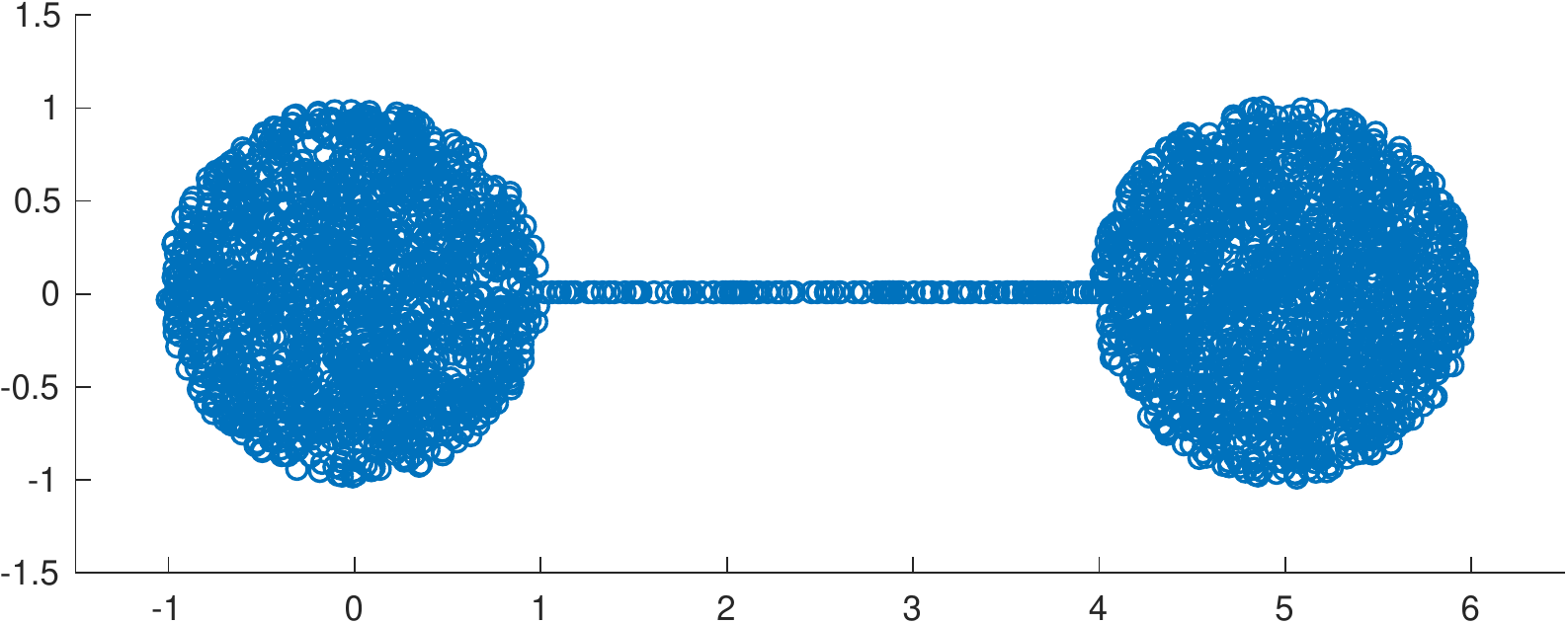}
		\subcaption{Two clusters connected by a bridge of roughly the same empirical density.}
	\end{subfigure}
	\begin{subfigure}{.49\textwidth}
		\captionsetup{width=.95\linewidth}
		\includegraphics[width=\textwidth]{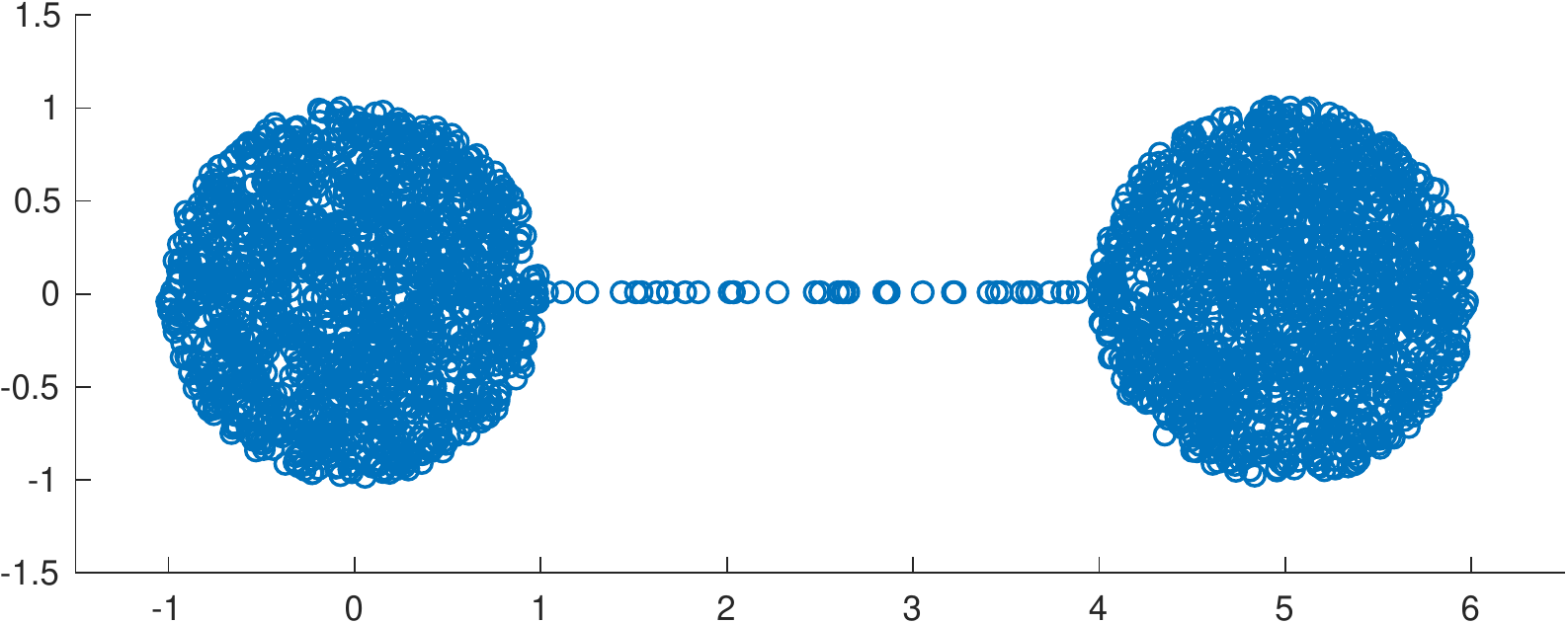}
		\subcaption{Two clusters connected by a bridge of lower empirical density.}
	\end{subfigure}
	\caption{In (a), two spherical clusters are connected with a bridge of approximately the same density; LLPD spectral clustering fails to distinguish between theses two clusters.  Despite the fact that the bridge consists of a very small number of points relative to the entire dataset, it is quite adversarial for the purposes of LLPD separation.  This is a limitation of the proposed method: it is robust to large amounts of diffuse noise, but not to a potentially small amount of concentrated, adversarial noise.  Conversely, if the bridge is of lower density, as in (b), then the proposed method will succeed.}
	\label{fig:Dumbbell}		
\end{figure} 


\subsection{Low Dimensional Large Noise (LDLN) Data Model and Assumptions}
\label{subsec:assumptions}

We first define the \emph{low dimensional, large noise (LDLN)} data model, and then establish notation and assumptions for the LLPD metric and denoising procedure on data drawn from this model.

We consider a collection of $\numclust$ disjoint, connected, approximately $d$-dimensional sets $\X_1, \ldots, \X_{\numclust}$ embedded in a measurable, $D$-dimensional ambient set $\X \subset \mathbb{R}^{D}$.  We recall the definition of \emph{$d$-dimensional Hausdorff measure} as follows \citep{Benedetto2010_Integration}.  For $A\subset\mathbb{R}^{D}$, let $\diam(A)=\sup_{x,y\in A}\|x-y\|_{2}$.  Fix $\delta>0$ and for any $A\subset\mathbb{R}^{D}$, let 
$$\vold_{\delta}(A)=\inf\left\{\sum_{i=1}^{\infty}\diam(U_{i})^{d}\ | \ A\subset \bigcup_{i=1}^{\infty}U_{i}, \ \diam(U_{i})<\delta\right\}.$$  
The $d$-dimensional Hausdorff measure of $A$ is $\vold(A)=\lim_{\delta\rightarrow 0^{+}}\vold_{\delta}(A)$.  Note that $\volD(A)$ is simply a rescaling of the Lebesgue measure in $\mathbb{R}^{D}$.  

\begin{defn}\label{defn:AdmissibleLowDimensionalSpace}A set $S\subset\mathbb{R}^{D}$ is an element of $\mathcal{S}_{d}(\kappa,\epsilon_0)$ for some $\kappa\ge 1$ and $\epsilon_{0}>0$ if it has finite $d$-dimensional Hausdorff measure, is connected, and:
$$\forall x\in S,\quad \forall \epsilon\in (0,\epsilon_{0}),\quad \ \kappa^{-1}\epsilon^{d}\le \frac{\vold(S \cap B_{\epsilon}(x))}{\vold(B_1)}\le \kappa \epsilon^{d}\,.$$\end{defn}

Note that $\mathcal{S}_{d}(\kappa,\epsilon_0)$ includes $d$-dimensional smooth compact manifolds (which have finite positive reach \citep{Federer1959_Curvature}). With some abuse of notation, we denote by $\Unif(S)$ the probability measure $\vold/\vold(S)$.  For a set $A$ and $\tau\ge0$, we define
\[ B(A,\tau) := \{ x \in \mathbb{R}^D : \exists y \in A \text{ with } \|x-y\|_2 \leq \tau\}.\]
Clearly $B(A,0)=A.$  

\begin{defn}[LDLN model]
\label{defn:LDLN}
The \emph{Low-Dimensional Large Noise (LDLN) model} consists of a $D$-dimensional ambient set $\X\subset\mathbb{R}^D$ and $\numclust$ cluster regions $\X_1,\dots,\X_\numclust\subset\X$ and noise set $\tilde{\X}\subset\mathbb{R}^D$ such that:
\begin{itemize}
\item[(i)] $0<\volD(\X)<\infty$;
\item[(ii)] $\X_{l}=B(S_{l},\tau)$ for $S_{l}\in \mathcal{S}_{d}(\kappa,\epsilon_{0})$, $l=1,\dots,\numclust$, $\tau\ge 0$ fixed;
\item[(iii)] $\tilde{\X} = \X\setminus ({\X}_1 \cup \ldots \cup {\X}_{\numclust})$;
\item[(iv)] the minimal Euclidean distance $\delta$ between two cluster regions satisfies
\[ \delta := \min_{l\ne s}\text{dist}({\X}_l,{\X}_s)=\min_{l\ne s}\min_{x\in {\X}_l, y \in {\X}_s}\|x-y\|_2>0. \] 
\end{itemize}
\end{defn}
Condition (i) says that the ambient set $\X$ is nontrivial and has bounded $D$-dimensional volume; condition (ii) says that the cluster regions behave like tubes of radius $\tau$ around well-behaved $d$-dimensional sets; condition (iii) defines the noise as consisting of the high-dimensional ambient region minus any cluster region; condition (iv) states that the cluster regions are well-separated.

\begin{defn}[LDLN data]
Given a LDLN model, LDLN data consists of sets $X_{l}$, each consisting of $n_l$ i.i.d. draws from $\Unif(\X_l),$ for $1\leq l\leq \numclust$, and $\tilde{X}$ consisting of $\tilde{n}$ i.i.d. draws from $\Unif(\tilde{\X})$.  We let $X=X_{1}\cup\dots\cup X_{\numclust}\cup\tilde{X}$, $n:=n_1+\ldots+n_{\numclust}+\tilde{n}, \nmin := \min_{1\leq l \leq \numclust} n_l$.  
\end{defn}

\begin{rems}
Although our model assumes sampling from a uniform distribution on the cluster regions, our results easily extend to any probability measure $\mu_l$ on $\X_l$ such that there exist constants $0<C_{1}\le C_{2}< \infty$ so that $C_1{\vold(S)}/{\vold(\X_l)} \leq \mu_l(S) \leq C_2{\vold(S)}/{\vold(\X_l)}$ for any measurable subset $S \subset \X_l$, and the same generalization holds for sampling from the noise set $\tilde{\X}$. The constants in our results change but nothing else; thus for ease of exposition we assume uniform sampling.   
\end{rems}

\begin{rems}
We could also consider a fully probabilistic model with the data consisting of $n$ i.i.d samples from a mixture model $\sum_{l=1}^{\numclust} q_l\Unif(\X_l)+\tilde q\Unif(\tilde \X)$, with suitable mixture weights $q_1,\dots,q_{\numclust},\tilde q$ summing to $1$. Then with high probability we would have $n_i$ (now a random variable) close to $q_i n$ and $\tilde n$ close to $\tilde q n$, falling back to the above case. We will use the model above in order to keep the notation simple.
\end{rems}

We define two cluster balance parameters for the LDLN data model:
\begin{align}\label{eqn:zetas}
\zeta_n &:= \frac{\sum_{l=1}^{\numclust}n_l}{n_{\min}}\quad,\quad \zeta_{\thetathres} := \frac{\sum_{l=1}^\numclust, p_{l,\thetathres}}{p_{\min,\thetathres}}
\end{align}
where $p_{l,\thetathres} := \volD(B(\X_l,\thetathres)\setminus\X_l)/\volD(\tilde{\X})$, $p_{\min,\thetathres} := \min_{1 \leq l \leq \numclust}p_{l,\thetathres}$, and $\thetathres$ is related to the denoising procedure (see Definition \ref{defn:DenoisedData}). The parameter $\zeta_n$ measures the balance of cluster sample size and the parameter $\zeta_{\thetathres}$ depends on the balance in surface area of the cluster sets $\X_l$. When all $n_i$ are equal and the cluster sets have the same geometry, $\zeta_n = \zeta_{\thetathres} = \numclust$.

Let $\rho_{\ell\ell}$ refer to LLPD in the full set $X$. For $A \subset X$, let $\rho_{\ell\ell}^A$ refer to LLPD when paths are restricted to being contained in the set $A$. For $x \in X$, let $\beta_{\kNoise}(x, A)$ denote the LLPD from $x$ to its $\kNoise^{\!\!\!\!\!\text{th}}$ LLPD-nearest neighbor when paths are restricted to the set $A$:

\[ \beta_{\kNoise}(x, A) := \min_{B\subset A\setminus\{x\}, |B| = \kNoise} \max_{y \in B} \rho_{\ell\ell}^{A}(x,y). \]

Let $\epsilonincluster$ be the maximal within-cluster LLPD, $\epsilonnoiseknn$ the minimal distance of noise points to their $\kNoise^{\!\!\!\!\!\text{th}}$ LLPD-nearest neighbor in the absence of cluster points, and $\epsilonbetweencluster$ the minimal between-cluster LLPD:
\begin{align}
\epsilonincluster :=  \max_{1\le l \le \numclust}\max_{x \ne y \in X_{l}} \rho_{\ell\ell}(x, y),\quad
\epsilonnoiseknn :=  \min_{x \in \tilde{X}} \beta_{\kNoise}(x, \tilde{X}), \quad 
\epsilonbetweencluster := \min_{l \ne l'}\min_{x\in \X_l, y\in \X_{l'}}\rho_{\ell\ell}(x,y)\,.
\label{e:epsilons}
\end{align}

\begin{defn}[Denoised LDLN data]\label{defn:DenoisedData}
We preprocess LDLN data (denoising) by removing any points that have a large LLPD to their $\kNoise^{\!\!\!\!\!\text{th}}$ LLPD-nearest neighbor, i.e. by removing all points $x\in X$ which satisfy $\beta_{\kNoise}(x, X) > \thetathres$ for some thresholding parameter $\thetathres$. We let $N \leq n$ denote the number of points which survive thresholding, and $X_N \subset X$ be the corresponding subset of points.
\end{defn}


\subsection{Overview of Main Results}

This article investigates geometric conditions implying $\epsilonincluster \ll \epsilonnoiseknn$ with high probability.  In this context higher density sets are separated by lower density regions; the points in these lower density regions will be referred to as noise and outliers interchangeably.  In this regime, noise points are identified and removed with high probability, leading to well-separated clusters that are internally coherent in the sense of having uniformly small within-cluster distances.  The proposed clustering method is shown to be highly robust to the choice of scale parameter in the kernel function, and to produce accurate clustering results even in the context of very large amounts of noise and highly nonlinear or elongated clusters.  Theorem \ref{thm:SimplifiedMainResult} simplifies two major results of the present article, Theorem \ref{thm:SpectralGapDataModel} and
Corollary \ref{cor:sigmarange}, which establish conditions guaranteeing two desirable properties of LLPD spectral clustering.  First, that the $\numclust^{\text{th}}$ eigengap of $\Lsym$ is the largest gap with high probability, so that the eigengap statistic correctly estimates the number of clusters.  Second, that embedding the data according to the principal eigenvectors of the LLPD Laplacian $\Lsym$ followed by a simple clustering algorithm correctly labels all points.  Throughout the theoretical portions of this article, we will define the accuracy of a clustering algorithm as follows. Let $\{y_{i}\}_{i=1}^{n}$ be ground truth labels taking values in $[\numclust] = \{1,\ldots,\numclust\}$, and  let $\{\hat{y}_{i}\}_{i=1}^{n} \in [\numclust]$ be the labels learned from running a clustering algorithm. Following \cite{abbe2018community}, we define the agreement function between $y$ and $\hat{y}$ as
	\begin{align}\label{eqn:agreement}
	A(y,\hat{y}) &= \max_{\pi\in \Pi^\numclust} \frac{1}{n}\sum_{i=1}^n \mathbbm{1}(\pi(\hat{y}_i)=y_i),
	\end{align}
	where the maximum is taken over all permutations of the label set $\Pi^\numclust$, and the \textit{accuracy} of a clustering algorithm as the value of the resulting agreement function. The agreement function can be computed numerically using the Hungarian algorithm \citep{Munkres1957_Algorithms}. If ground truth labels are only available on a data subset (as for LDLN data where noise points are unlabeled), then the accuracy is computed by restricting to the labeled data points.  In Section \ref{sec:NumericalExperiments}, additional notions of accuracy will be introduced for the empirical evaluation of LLPD spectral clustering.  
	
\begin{thm}
	\label{thm:SimplifiedMainResult}
	Under the LDLN data model and assumptions, suppose that the cardinality $\tilde n$ of the noise set and the tube radius $\tau$ are such that
 	$$\tilde{n}\leq \left(\frac{C_2}{C_1}\right)^{\frac{\kNoise D}{\kNoise+1}} \nmin^{\frac{D}{d+1}\left(\frac{\kNoise}{\kNoise+1}\right)}\quad,\quad\tau < \frac{C_1}{8} \nmin^{-(d+1)} \wedge \frac{\epsilon_0}{5}\,.$$ 
	Let $f_{\sigma}(x)= e^{-x^2/\sigma^2}$ be the Gaussian kernel and assume $\kNoise=O(1)$.  If $\nmin$ is large enough and $\thetathres, \sigma$ satisfy 
	\begin{align}
	C_1 \nmin^{-\frac{1}{d+1}} &\leq \thetathres \leq C_2\tilde{n}^{-\left(\frac{\kNoise+1}{\kNoise}\right)\frac{1}{D}} \label{equ:theta_range}\\
	C_3(\zeta_n+\zeta_\thetathres)\thetathres  &\leq \sigma \leq C_4 \delta (\log (\zeta_n+\zeta_\thetathres))^{-1/2} \label{equ:sigma_range}
	\end{align}
	then with high probability the denoised LDLN data $X_N$ satisfies:
	\begin{itemize}
			\item[(i)] the largest gap in the eigenvalues of $L_{\text{SYM}}(X_N, \rho^{X_N}_{\ell\ell}, f_{\sigma})$ is $\lambda_{\numclust+1}-\lambda_\numclust$. 
		\item[(ii)] spectral clustering with LLPD with $\numclust$ principal eigenvectors achieves perfect accuracy on  $X_N$. 
	\end{itemize}
	The constants $\{C_i\}_{i=1}^4$ depend on the geometric quantities $\numclust, d, D, \kappa, \tau, \{\vold(S_l)\}_{l=1}^{\numclust},\volD(\tilde{\X})$, but do not depend on $n_1, \ldots, n_{\numclust}, \tilde{n}, \thetathres, \sigma$.  
\end{thm}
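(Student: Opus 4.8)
The plan is to derive Theorem~\ref{thm:SimplifiedMainResult} by feeding the finite-sample LLPD estimates of Section~\ref{sec:FiniteSampleAnalysis} into the abstract ultrametric spectral results, Theorem~\ref{thm:SpectralGapDataModel} and Corollary~\ref{cor:sigmarange}. The first and decisive step is to show that, under the stated bounds on $\tilde n$, $\tau$, and $\theta$, with high probability the denoised data $X_N$ of Definition~\ref{defn:DenoisedData} meets the hypotheses of Assumption~\ref{assumption:ultrametric}: it splits into $\numclust$ nonempty LLPD-clusters, each of within-cluster LLPD diameter at most $\epsilonincluster \lesssim \theta$, with mutual LLPD separation $\epsilonsep \gtrsim \delta$, and with empirical balance parameter $\zeta_N \lesssim \zeta_n + \zeta_{\thetathres}$. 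Concretely, Section~\ref{sec:FiniteSampleAnalysis} supplies two complementary high-probability estimates. A covering/connectivity bound gives $\epsilonincluster \le C\,\nmin^{-1/(d+1)}$, which by the left inequality in (\ref{equ:theta_range}) is at most $\theta$; hence every cluster point has its $\kNoise^{\text{th}}$ LLPD-nearest neighbor within $\theta$ and survives thresholding. A percolation/branching-process estimate gives $\epsilonnoiseknn \gtrsim \tilde n^{-((\kNoise+1)/\kNoise)(1/D)}$, which by the right inequality in (\ref{equ:theta_range}) exceeds $\theta$ once $C_2$ is chosen small enough; the same subcriticality bounds the Euclidean extent of every surviving LLPD-$\theta$ component by $O(\theta\log\tilde n)\ll\delta$, which forces the denoised separation $\epsilonsep\gtrsim\delta$ through condition~(iv) of Definition~\ref{defn:LDLN} (a path between two distinct clusters in $X_N$ must contain one edge of length $\gtrsim\delta$), and simultaneously identifies the surviving boundary-noise points so that $\zeta_N$ can be bounded in terms of the surface-area balance parameter $\zeta_{\thetathres}$ of (\ref{eqn:zetas}). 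The hypotheses on $\tilde n$ and on the tube radius $\tau$ are exactly what is needed for these estimates to hold uniformly over all $\numclust$ clusters and all $\tilde n$ noise points.

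Given that $X_N$ satisfies Assumption~\ref{assumption:ultrametric}, the second step is to check that the stated range (\ref{equ:sigma_range}) for $\sigma$ lies inside the admissible range demanded by Theorem~\ref{thm:SpectralGapDataModel} and Corollary~\ref{cor:sigmarange}, which has the form $[\,c\,\zeta_N\,\epsilonincluster,\ c'\,\epsilonsep\,(\log\zeta_N)^{-1/2}\,]$ for absolute constants $c,c'$. Using the three controls from the first step --- $\epsilonincluster\lesssim\theta$, $\epsilonsep\gtrsim\delta$, $\zeta_N\lesssim\zeta_n+\zeta_{\thetathres}$ (so also $\log\zeta_N\le\log(\zeta_n+\zeta_{\thetathres})$) --- one sees that the lower endpoint $C_3(\zeta_n+\zeta_{\thetathres})\theta$ exceeds $c\,\zeta_N\,\epsilonincluster$ and the upper endpoint $C_4\,\delta\,(\log(\zeta_n+\zeta_{\thetathres}))^{-1/2}$ falls below $c'\,\epsilonsep\,(\log\zeta_N)^{-1/2}$, provided $C_3$ is large enough and $C_4$ small enough. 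The assumption that $\nmin$ is large is used here and above: it makes the intervals (\ref{equ:theta_range}) and (\ref{equ:sigma_range}) nonempty --- the stated bound on $\tilde n$ is precisely tuned to reconcile the two endpoints of (\ref{equ:theta_range}) --- and it guarantees that the finitely many high-probability events borrowed from Section~\ref{sec:FiniteSampleAnalysis} hold simultaneously with probability tending to one.

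With $\sigma$ in the admissible range and $X_N$ verified to satisfy Assumption~\ref{assumption:ultrametric}, conclusion~(i) --- that the largest gap in the spectrum of $\Lsym(X_N,\rho_{\ell\ell}^{X_N},f_\sigma)$ is $\lambda_{\numclust+1}-\lambda_{\numclust}$ --- and conclusion~(ii) --- perfect accuracy of $\numclust$-means on the $\numclust$-principal-eigenvector spectral embedding --- are exactly the conclusions of Theorem~\ref{thm:SpectralGapDataModel} and Corollary~\ref{cor:sigmarange}. Finally, the constants $C_1,\dots,C_4$ are assembled from those appearing in the Section~\ref{sec:FiniteSampleAnalysis} estimates --- which depend only on $d$, $D$, $\kappa$, $\tau$, $\{\vold(S_l)\}_{l=1}^{\numclust}$, and $\volD(\tilde{\X})$ --- and the constant in Theorem~\ref{thm:SpectralGapDataModel}, which depends only on $\numclust$; by construction none of them depends on $n_1,\dots,n_{\numclust}$, $\tilde n$, $\theta$, or $\sigma$.

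I expect the main obstacle to be the ``clean denoising'' analysis of the first step, especially the treatment of noise points sitting near the cluster boundaries. Bounding $\beta_{\kNoise}(x,X)$ from below for a noise point $x$ that nonetheless survives thresholding is delicate because $x$ may recruit nearby cluster points as cheap LLPD neighbors, so one must control at once (a) how sparse the noise is at scale $\theta$, a subcritical percolation estimate that drives the $\tilde n$-bound and the right inequality of (\ref{equ:theta_range}); (b) how thin the tubes are, the $\tau$-bound; and (c) the fact that no surviving boundary-noise component bridges two clusters or stretches a distance comparable to $\delta$ --- all uniformly over the $\tilde n$ noise points. By comparison, reconciling the $\sigma$-ranges in the second step, although notationally fiddly because of the two balance parameters $\zeta_n$ and $\zeta_{\thetathres}$, is routine once the clean-denoising picture is in hand.
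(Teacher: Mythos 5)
Your proposal is correct and follows essentially the same route as the paper: Theorem~\ref{thm:SimplifiedMainResult} is obtained by verifying that the denoised data satisfies Assumption~\ref{assumption:ultrametric} (via Lemmata~\ref{lem:thresholding} and \ref{lem:clustersep}, with $\epsilonsep=\delta/2$ and $\zeta_N\lesssim\zeta_n+\kNoise\zeta_{\thetathres}$ by a Chernoff bound), feeding in the Section~\ref{sec:FiniteSampleAnalysis} estimates $\epsilonincluster\lesssim \nmin^{-1/(d+1)}$ and $\epsilonnoiseknn\gtrsim\tilde n^{-(\kNoise+1)/(\kNoise D)}$, and then solving for the admissible $\sigma$-range as in Theorem~\ref{thm:SpectralGapDataModel} and Corollary~\ref{cor:sigmarange}. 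The only cosmetic difference is that the paper bounds the Euclidean extent of surviving boundary-noise groups by $\kNoise\thetathres$ directly from the definition of $\epsilonnoiseknn$ (at most $\kNoise$ consecutive noise points on any $\thetathres$-path), rather than by an $O(\thetathres\log\tilde n)$ subcriticality estimate, but with $\kNoise=O(1)$ both yield the required separation.
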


Section \ref{sec:FiniteSampleAnalysis} verifies that with high probability a point's distance to its $\kNoise^{\!\!\!\!\!\text{th}}$ nearest neighbor (in LLPD) scales like $\nmin^{-(d+1)}$ for cluster points and $\tilde{n}^{-\left(\frac{\kNoise+1}{\kNoise}\right)\frac{1}{D}}$ for noise points; thus when the denoising parameter $\thetathres$ satisfies (\ref{equ:theta_range}), we successfully distinguish the cluster points from the noise points, and this range is large when the number of noise points $\tilde{n}$ is small relative to $\nmin^{\frac{D}{d+1}\left(\frac{\kNoise}{\kNoise+1}\right)}$. Thus, Theorem \ref{thm:SimplifiedMainResult} illustrates that when clusters are (intrinsically) low-dimensional, a number of noise points exponentially (in $D/d$) larger than $\nmin$ may be tolerated. If the data is denoised at an appropriate threshold level, the maximal eigengap heuristic correctly identifies the number of clusters and spectral clustering achieves high accuracy for any kernel scale $\sigma$ satisfying (\ref{equ:sigma_range}). This range for $\sigma$ is large whenever the cluster separation $\delta$ is large relative to the denoising parameter $\thetathres$.  We note that the case when $\kNoise$ is not $O(1)$ is discussed in Section \ref{subsubsec:ParameterSelection}.

In the noiseless case ($\tilde{n}=0$) when clusters are approximately balanced ($\zeta_n,\zeta_\thetathres = O(1)$),
Theorem \ref{thm:SimplifiedMainResult} can be further simplified as stated in the following corollary.  Note that no denoising is necessary in this case; one simply needs the kernel scale $\sigma$ to be not small relative to the maximal within cluster distance (which is upper bounded by $\nmin^{-(d+1)}$) and not large relative to the distance between clusters $\delta$. 

\begin{cor}[Noiseless, Balanced Case]
	\label{cor:SimplifiedMainResultCorollary}
	Under the LDLN data model and assumptions, further assume the cardinality of the noise set $\tilde{n}=0$ and the tube radius $\tau$ satisfies $\tau < \frac{C_1}{8} \nmin^{-(d+1)} \wedge \frac{\epsilon_0}{5}$. 
	Let $f_{\sigma}(x)= e^{-x^2/\sigma^2}$ be the Gaussian kernel and assume $\kNoise, \numclust, \zeta_n=O(1)$.  
	If $\nmin$ is large enough and $\sigma$ satisfies $$C_1 \nmin^{-\frac{1}{d+1}} \leq \sigma \leq C_4 \delta$$ for constants $C_1, C_4$ not depending on $n_1, \ldots, n_{\numclust}, \sigma$,  then with high probability the LDLN data $X$ satisfies:
	\begin{itemize}
		\item[(i)] the largest gap in the eigenvalues of $L_{\text{SYM}}(X, \rho^{X}_{\ell\ell}, f_{\sigma})$ is $\lambda_{\numclust+1}-\lambda_\numclust$. 
		\item[(ii)] spectral clustering with LLPD with $\numclust$ principal eigenvectors achieves perfect accuracy on  $X$. 
	\end{itemize}
\end{cor}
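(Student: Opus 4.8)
The plan is to deduce Corollary~\ref{cor:SimplifiedMainResultCorollary} from Theorem~\ref{thm:SimplifiedMainResult} by specializing to the noiseless case $\tilde n = 0$. First, the two structural hypotheses of Theorem~\ref{thm:SimplifiedMainResult} are in force: the noise cardinality bound holds trivially because its left-hand side $\tilde n$ equals $0$, and the tube radius condition $\tau < \frac{C_1}{8}\nmin^{-(d+1)} \wedge \frac{\epsilon_0}{5}$ is assumed verbatim. It then remains to pick the denoising threshold $\thetathres$, to argue that the preprocessing of Definition~\ref{defn:DenoisedData} removes nothing, and to check that the admissible range~(\ref{equ:sigma_range}) for $\sigma$ collapses to the one in the statement.

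For the denoising step: since $\tilde n = 0$, the sample $X = X_1 \cup \cdots \cup X_\numclust$ contains no noise points, and by the within-cluster LLPD analysis of Section~\ref{sec:FiniteSampleAnalysis} (which is precisely what fixes the lower endpoint of~(\ref{equ:theta_range})) we have, with high probability and for $\nmin$ large, $\epsilonincluster \le C_1 \nmin^{-1/(d+1)}$. Choose $\thetathres := C_1\nmin^{-1/(d+1)}$, which attains the lower bound in~(\ref{equ:theta_range}), the upper bound there being vacuous when $\tilde n = 0$. Because $\kNoise = O(1) < \nmin$, every $x \in X_l$ has at least $\kNoise$ neighbors in $X_l\setminus\{x\}$ at LLPD at most $\epsilonincluster$, so $\beta_{\kNoise}(x, X) \le \epsilonincluster \le \thetathres$; hence no point satisfies $\beta_{\kNoise}(x,X) > \thetathres$, and $X_N = X$, $N = n$, $\rho_{\ell\ell}^{X_N} = \rho_{\ell\ell}^{X}$. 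Conclusions (i)--(ii) of Theorem~\ref{thm:SimplifiedMainResult} then become exactly the asserted statements about $\Lsym(X, \rho_{\ell\ell}^{X}, f_\sigma)$.

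For the kernel scale: in the balanced, noiseless setting $\zeta_n = O(1)$ and $\numclust = O(1)$, and the surface-area balance parameter $\zeta_\thetathres$ does not influence the spectral conclusions --- in the proofs underlying Theorem~\ref{thm:SimplifiedMainResult}, namely Theorem~\ref{thm:SpectralGapDataModel} and Corollary~\ref{cor:sigmarange}, $\zeta_\thetathres$ enters only through the analysis of which \emph{noise} samples survive thresholding, which is empty here, so one may set $\zeta_\thetathres = 0$ in~(\ref{equ:sigma_range}). That constraint then reads $C_3\zeta_n\thetathres \le \sigma \le C_4\delta(\log\zeta_n)^{-1/2}$, and substituting $\thetathres = C_1\nmin^{-1/(d+1)}$ together with $\zeta_n = O(1)$ gives $C_1\nmin^{-1/(d+1)} \le \sigma \le C_4\delta$ after absorbing constants (which, as in Theorem~\ref{thm:SimplifiedMainResult}, depend only on $\numclust, d, D, \kappa, \tau, \{\vold(S_l)\}_{l=1}^\numclust, \volD(\tilde{\X})$ and not on $n_1,\dots,n_\numclust,\sigma$). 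Applying Theorem~\ref{thm:SimplifiedMainResult} with $X_N = X$ then yields both (i) and (ii).

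The main point requiring care is precisely the treatment of $\zeta_\thetathres$: one cannot simply plug $\tilde n = 0$ into the statement of Theorem~\ref{thm:SimplifiedMainResult} and read off the bound, so one must return to Theorem~\ref{thm:SpectralGapDataModel} and Corollary~\ref{cor:sigmarange} and verify that this parameter governs only the denoising of noise samples and is otherwise inert once $X_N = X$ is established. The remaining ingredient --- the high-probability estimate $\epsilonincluster \le C_1\nmin^{-1/(d+1)}$, together with $\kNoise < \nmin$, guaranteeing that already the smallest admissible $\thetathres$ leaves $X_N = X$ --- is exactly the within-cluster LLPD bound of Section~\ref{sec:FiniteSampleAnalysis} and is routine.
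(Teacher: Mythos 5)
Your proposal is correct and is essentially the derivation the paper intends: the corollary is stated as a direct specialization of Theorem \ref{thm:SimplifiedMainResult} (via Theorem \ref{thm:SpectralGapDataModel} and Corollary \ref{cor:sigmarange}) to $\tilde n=0$, where the noise-cardinality and upper $\thetathres$ constraints become vacuous, thresholding at $\thetathres=C_1\nmin^{-1/(d+1)}$ removes nothing so $X_N=X$, and the $\sigma$-range collapses as you describe. Your explicit check that $\zeta_\thetathres$ only enters through the count of surviving noise points (so that $\zeta_N=\zeta_n$ when $\tilde n=0$, rather than literally "setting $\zeta_\thetathres=0$" in a quantity that is by definition at least $\numclust$) is exactly the point the paper leaves implicit, and you handle it correctly.
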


\begin{rems}
If one extends the LDLN model to allow the $S_l$ sets to have different dimensions $d_l$ and $\X_l$ to have different tube widths $\tau_l$, that is, $S_l \in \mathcal{S}_{d_l}(\kappa,\epsilon_0)$ and $\X_l = B(S_l, \tau_l)$,
Theorem \ref{thm:SimplifiedMainResult} still holds with $\max_l \tau_l$ replacing $\tau$ and $\max_l n_l^{-{1}/{(d_l+1)}}$ replacing $\nmin^{-{1}/{(d+1)}}$.  Alternatively, $\sigma$ can be set in a manner that adapts to local density \citep{Zelnik2004}.  
\end{rems}

\begin{rems}The constants in Theorem \ref{thm:SimplifiedMainResult} and Corollary \ref{cor:SimplifiedMainResultCorollary} have the following dimensional dependencies.  
	
	\begin{enumerate}
		
		\item $C_1 \lesssim\min_l({\kappa\vold(S_l)}/{\vold(B_1)})^{\frac{1}{d}}$ for $\tau=0$.  Letting $\text{rad}(\mathcal{M})$ denote the geodesic radius of a manifold $\mathcal{M}$, if $S_l$ is a complete Riemannian manifold with nonnegative Ricci curvature, then by the Bishop-Gromov inequality \citep{bishop2011geometry}, $({\vold(S_l)}/{\vold(B_1)})^{\frac{1}{d}} \leq \text{rad}(S_l)$; noting that $\kappa$ is at worst exponential in $d$, it follows that $C_1$ is then dimension independent for $\tau=0$.  For $\tau>0$, $C_1$ is upper bounded by an exponential in $D/d$.
		
		\item  $C_2\lesssim({\volD(\tilde{\X})}/{\volD(B_1)})^{\frac{1}{D}}$.  Assume $\volD(\tilde{\X})\gtrsim\volD(\X)$: if $\X$ is the unit $D$-dimensional ball, then $C_2$ is dimension independent; if $\X$ is the unit cube, then $C_2$ scales like $\sqrt{D}$. This illustrates that when $\X$ is not elongated in any direction, we expect $C_2$ to scale like $\text{rad}(\X)$.

		\item $C_3, C_4$ are independent of $d$ and $D$.
		
	\end{enumerate}
	
\end{rems}


\section{Finite Sample Analysis of LLPD}\label{sec:FiniteSampleAnalysis}

In this section we derive high probability bounds for the maximal within-cluster LLPD and the minimal between-cluster LLPD, and also derive a bound for the minimal $\kNoise^{\!\!\!\!\!\text{th}}$ LLPD-nearest neighbor distance. From these results we infer a sampling regime where LLPD is able to effectively differentiate between clusters and noise.

\subsection{Upper-Bounding Within-Cluster LLPD}


For bounding the within-cluster LLPD, we seek a uniform upper bound on $\rho_{\ell\ell}$ that holds with high probability.  The following two results are essentially Lemma 1 and Theorem 1 in \citet{Arias2011} with all constants explicitly computed; the proofs are in Appendix \ref{app:PDProofs}.

\begin{lem}\label{Lemma:BoundingConstants}Let $S\in \mathcal{S}_{d}(\kappa,\epsilon_0)$, and let $\epsilon,\tau>0$ with $\epsilon<\frac{2\epsilon_{0}}{5}$.  Then $\forall x\in B(S,\tau),$
\begin{align}\label{eqn:GeometricBound} C_{1}\epsilon^{d}(\tau\wedge\epsilon)^{D-d}\le \volD(B(S,\tau)\cap B_{\epsilon}(x))/\volD(B_1)\le C_{2}\epsilon^{d}(\tau\wedge\epsilon)^{D-d},\end{align}
for constants $C_{1}=\kappa^{-2}2^{-2D-d}, C_{2}=\kappa^{2}2^{2D+2d}$ independent of $\epsilon$.
\end{lem}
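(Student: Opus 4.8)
\emph{Proof proposal.} The plan is to prove matching two-sided bounds by splitting on whether $\tau$ is larger or smaller than a fixed fraction of $\epsilon$. In the ``fat'' regime ($\tau\gtrsim\epsilon$) the quantity $\epsilon^d(\tau\wedge\epsilon)^{D-d}$ is simply $\epsilon^D$ up to an absolute power of $2$, and $B(S,\tau)\cap B_\epsilon(x)$ both contains and is contained in full-dimensional balls of radius comparable to $\epsilon$; in the ``thin'' regime ($\tau\ll\epsilon$) the intersection genuinely behaves like a radius-$\tau$ tube over a radius-$\epsilon$ piece of $S$, and I would estimate its $\volD$ by packing/covering $S$ with small balls. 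Throughout, fix $x\in B(S,\tau)$ and let $y\in S$ realize $t:=\mathrm{dist}(x,S)\le\tau$. The hypothesis $\epsilon<\tfrac{2\epsilon_0}{5}$ is used precisely to guarantee that every auxiliary ball centered at a point of $S$ that appears below (the largest has radius $\approx\tfrac32\epsilon$) has radius in $(0,\epsilon_0)$, so Definition~\ref{defn:AdmissibleLowDimensionalSpace} may be invoked on it.

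Fat regime ($\tau$ at least a fixed fraction of $\epsilon$). For the upper bound, $B(S,\tau)\cap B_\epsilon(x)\subseteq B_\epsilon(x)$ gives $\volD(B(S,\tau)\cap B_\epsilon(x))\le\epsilon^D\volD(B_1)=\epsilon^d\epsilon^{D-d}\volD(B_1)$, which is $\le C_2\epsilon^d(\tau\wedge\epsilon)^{D-d}\volD(B_1)$ since $(\epsilon/\tau)^{D-d}$ is an absolute constant here. For the lower bound, slide from $x$ toward $y$ by a distance $s=\min(t,c\epsilon)$ to a point $p$, so that $\mathrm{dist}(p,S)\le t-s$; a short triangle-inequality check shows that for a suitable absolute $c$ one has $B_{c\epsilon}(p)\subseteq B(S,\tau)\cap B_\epsilon(x)$, whence $\volD(B(S,\tau)\cap B_\epsilon(x))\ge c^D\epsilon^D\volD(B_1)\ge C_1\epsilon^d(\tau\wedge\epsilon)^{D-d}\volD(B_1)$, using $\kappa\ge 1$.

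Thin regime ($\tau$ below a fixed fraction of $\epsilon$). For the lower bound, with $r$ a fixed fraction of $\epsilon$ the triangle inequality gives $B(S\cap B_r(y),\tau)\subseteq B(S,\tau)\cap B_\epsilon(x)$; taking a maximal $\tau$-separated subset $\{w_i\}\subseteq S\cap B_r(y)$, the cover $\{B_\tau(w_i)\}$ of $S\cap B_r(y)$ together with the Ahlfors lower bound on $\vold(S\cap B_r(y))$ and the Ahlfors upper bound on each $\vold(S\cap B_\tau(w_i))$ forces $\#\{w_i\}\gtrsim\kappa^{-2}(\epsilon/\tau)^d$, and the disjoint balls $B_{\tau/2}(w_i)\subseteq B(S\cap B_r(y),\tau)$ contribute $D$-volume $\gtrsim\kappa^{-2}2^{-d-D}\epsilon^d\tau^{D-d}\volD(B_1)$. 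For the upper bound, every $z\in B(S,\tau)\cap B_\epsilon(x)$ has a nearest point $w_z\in S$ with $\|w_z-y\|\le 2\tau+\epsilon=O(\epsilon)$; covering $S\cap B_{O(\epsilon)}(y)$ by $M$ balls of radius $\tau$ centered on $S$, where the same packing comparison gives $M\lesssim\kappa^2(\epsilon/\tau)^d$, shows $B(S,\tau)\cap B_\epsilon(x)\subseteq\bigcup_{i=1}^M B_{2\tau}(w_i)$ and hence $\volD\le M(2\tau)^D\volD(B_1)\lesssim\kappa^2 2^{2d+D}\epsilon^d\tau^{D-d}\volD(B_1)$. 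Carrying every factor of $2$ and $\kappa$ through the two packing estimates produces exactly $C_1=\kappa^{-2}2^{-2D-d}$ and $C_2=\kappa^2 2^{2D+2d}$.

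The only real obstacle is the thin-regime packing/covering step: one must convert the $d$-dimensional density condition on $S$ (which controls $\vold(S\cap B_\epsilon(x))$) into two-sided $D$-dimensional volume bounds for the tube over a piece of $S$, and do so with explicit constants. Everything else is triangle-inequality bookkeeping to make $\tau\wedge\epsilon$ appear with the correct exponent and to reconcile the two regimes near $\tau\asymp\epsilon$. Since this is, as noted, essentially Lemma~1 of \citet{Arias2011}, I expect to follow that argument, the only added labor being the explicit constant tracking.
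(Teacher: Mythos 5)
Your proposal is correct and follows essentially the same route as the paper's proof: the same case split at $\tau\asymp\epsilon/4$, the same trivial containment and slid-ball argument in the fat regime, and the same $\tau$-packing/covering of $S$ near $y$ combined with the two-sided Ahlfors condition to count $\asymp\kappa^{\pm2}(\epsilon/\tau)^d$ balls in the thin regime (the paper merely organizes the thin case by first taking $x=y\in S$ and then sandwiching $B_\epsilon(x)$ between $B_{3\epsilon/4}(y)$ and $B_{5\epsilon/4}(y)$, which is where the hypothesis $\epsilon<\tfrac{2\epsilon_0}{5}$ enters, exactly as you anticipated). No gaps; only the explicit constant bookkeeping remains.
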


\begin{thm}\label{thm:WithinCluster}
Let $S\in \mathcal{S}_{d}(\kappa,\epsilon_0)$ and let $\tau>0$, $\epsilon<\epsilon_{0}$.  Let $x_{1},\dots,x_{n}\distras{i.i.d.} \Unif(B(S,\tau))$ and $C=\kappa^{2}2^{2D+d}$.  Then
\begin{small} 
$$n\ge  \frac{C \volD(B(S,\tau))}{ \left(\frac{\epsilon}{4}\right)^{d}(\tau \wedge \frac{\epsilon}{4})^{D-d}\volD(B_{1})}\log\frac{C\volD(B(S,\tau))}{\left(\frac{\epsilon}{8}\right)^{d}(\tau \wedge \frac{\epsilon}{8})^{D-d}\volD(B_{1})t}\implies \Prob(\max_{i,j} \rho_{\ell \ell}(x_{i},x_{j})< \epsilon)\ge 1-t\,.$$ 
\end{small}
\end{thm}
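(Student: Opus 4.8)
The plan is to reduce the statement about the longest-leg path distance (LLPD) to a covering/connectivity argument about a $\frac{\epsilon}{2}$-net of $B(S,\tau)$, and to control the net using Lemma~\ref{Lemma:BoundingConstants} together with a coupon-collector-style union bound. First I would fix a maximal $\frac{\epsilon}{4}$-packing $\{z_1,\dots,z_M\}$ of $S$, which is automatically a $\frac{\epsilon}{4}$-net of $S$, hence the balls $\{B_{\epsilon/4}(z_m)\}$ cover $B(S,\tau)$ once $\tau$ is small relative to $\epsilon$ (the tube radius is absorbed by enlarging the radius slightly; this is where $\epsilon<\epsilon_0$ rather than $2\epsilon_0/5$ is used, after passing through the constants). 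The key claim is: if \emph{every} ball $B_{\epsilon/8}(z_m)$ (or an appropriate constant fraction of the radius) contains at least one sample point $x_i$, then $\max_{i,j}\rho_{\ell\ell}(x_i,x_j)<\epsilon$. Indeed, given two sample points, connect their enclosing net balls by a chain $z_{m_0},z_{m_1},\dots,z_{m_k}$ of net points in which consecutive centers are within, say, $\frac{3\epsilon}{4}$ of each other — such a chain exists because $S$ is connected and the net is fine, so the "net graph" with that threshold is connected. Replacing each $z_{m_j}$ by a sample point $x_{i_j}\in B_{\epsilon/8}(z_{m_j})$ gives a path in the data whose every leg has Euclidean length at most $\frac{3\epsilon}{4}+2\cdot\frac{\epsilon}{8}=\epsilon$; a small loss in the packing constants makes this strict, giving $\rho_{\ell\ell}(x_i,x_j)<\epsilon$.

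Next I would bound the number of net points. By a standard volume (packing) argument applied inside $B(S,\tau)$ and Lemma~\ref{Lemma:BoundingConstants}, the number $M$ of balls needed satisfies
$$M \;\le\; \frac{C\,\volD(B(S,\tau))}{(\epsilon/8)^d(\tau\wedge \epsilon/8)^{D-d}\volD(B_1)}$$
up to the explicit constant $C=\kappa^2 2^{2D+d}$ — this is exactly the denominator appearing in the logarithm in the theorem statement, so the bookkeeping of constants is designed to make this match. Then, by Lemma~\ref{Lemma:BoundingConstants} again, each fixed net ball $B_{\epsilon/8}(z_m)$ has probability mass at least $p:=C_1(\epsilon/8)^d(\tau\wedge \epsilon/8)^{D-d}$ (suitably normalized by $\volD(B(S,\tau))$), so the probability that a given net ball is \emph{missed} by all $n$ i.i.d.\ points is at most $(1-p)^n\le e^{-np}$. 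A union bound over the $M$ net balls gives failure probability at most $M e^{-np}$, and setting this $\le t$ and solving for $n$ yields precisely the stated sampling threshold $n\ge \frac{\text{(vol ratio with }\epsilon/4\text{)}}{1}\log\frac{\text{(vol ratio with }\epsilon/8)}{t}$ — the two different radii ($\epsilon/4$ in the prefactor, $\epsilon/8$ inside the log) arising because one uses $\epsilon/4$-scale quantities to define the net/connectivity and $\epsilon/8$-scale balls to guarantee coverage with room to spare for the path legs.

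The main obstacle I expect is getting the constants to line up exactly as written, and in particular handling the tube of radius $\tau$ cleanly: one must verify that an $\frac{\epsilon}{4}$-net of the centerline $S$ really does produce, after thickening by $\tau$ and shrinking to $\frac{\epsilon}{8}$-balls, both a genuine cover of $B(S,\tau)$ \emph{and} a connected net graph at threshold $<\epsilon$, all while the measure lower bound from Lemma~\ref{Lemma:BoundingConstants} still applies (which requires the relevant radii to stay below $\frac{2\epsilon_0}{5}$, forcing the $\epsilon<\epsilon_0$ hypothesis to be spent carefully on these rescalings). The connectivity step — showing the net graph on $S$ is connected at the chosen threshold — uses that $S$ is connected and has finite $\vold$, via a continuity/covering argument: the union of the net balls is an open cover of the connected set $S$, so its "intersection graph" is connected, and intersecting balls of radius $\frac{\epsilon}{4}$ have centers within $\frac{\epsilon}{2}<\frac{3\epsilon}{4}$. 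Everything else is a routine union bound, so modulo the constant-chasing the argument is short.
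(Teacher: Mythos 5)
Your overall strategy --- cover the support by balls at scale $\epsilon/4$, bound the number of balls via disjoint $\epsilon/8$-balls and Lemma \ref{Lemma:BoundingConstants}, demand one sample per ball, and finish with a coupon-collector union bound --- is exactly the paper's argument, and your probabilistic bookkeeping (per-ball mass $p$ from the lemma's lower bound at radius $\epsilon/4$, failure probability $Me^{-np}\le t$, the $\epsilon/8$-scale count inside the logarithm) matches the stated constants. The connectivity step you spell out (chaining overlapping cover balls and replacing centers by nearby samples) is also the right justification for the sentence the paper leaves implicit.

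The one step that fails as written is placing the packing on $S$ rather than on $B(S,\tau)$. The theorem allows arbitrary $\tau>0$, and when $\tau>\epsilon/4$ the balls $B_{\epsilon/4}(z_m)$ with $z_m\in S$ do not cover the tube: points at distance close to $\tau$ from $S$ lie in no such ball, and no ``slight enlargement'' of the radius repairs this without destroying the $\epsilon$-bound on the path legs. This is precisely the regime the factors $(\tau\wedge\epsilon/4)^{D-d}$ and $(\tau\wedge\epsilon/8)^{D-d}$ in the statement are designed for: when $\tau$ is large the number of balls must scale like $\volD(B(S,\tau))/\epsilon^{D}$, a genuinely $D$-dimensional cover, which no net of the $d$-dimensional set $S$ can produce. (Your own bound on $M$ betrays this: it is the count for a packing of $B(S,\tau)$, not of $S$.) The fix is the paper's choice: take a maximal $\epsilon/4$-packing $\{y_i\}$ of $B(S,\tau)$ itself. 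Maximality gives the cover; the pairwise-disjoint $\epsilon/8$-balls together with the lower bound of Lemma \ref{Lemma:BoundingConstants} (valid since each $y_i\in B(S,\tau)$ and $\epsilon/4<2\epsilon_0/5$) give $N\le C\volD(B(S,\tau))\left[(\epsilon/8)^{d}(\tau\wedge\epsilon/8)^{D-d}\volD(B_1)\right]^{-1}$; and the rest of your argument then goes through verbatim in both regimes of $\tau$.
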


When $\tau$ is sufficiently small and ignoring constants, the sampling complexity suggested in Theorem \ref{thm:WithinCluster} depends only on $d$. The following corollary uses the above result to bound $\epsilonincluster$ in the LDLN data model; the proof also is given in Appendix \ref{app:PDProofs}.

\begin{cor}
\label{cor:WithinCluster_tau_small}
Assume the LDLN data model and assumptions, and let $0<\tau<\frac{\epsilon}{8} \wedge \frac{\epsilon_0}{5}$, $\epsilon<\epsilon_{0}$, and $C=\kappa^52^{4D+5d}$. Then
\begin{equation}
\label{equ:tau_small_sampling_cond}
n_l\ge  \frac{C \vold(S_l)}{ \left(\frac{\epsilon}{4}\right)^{d}\vold(B_{1})}\log\frac{C\vold(S_l)\numclust}{\left(\frac{\epsilon}{8}\right)^{d}\vold(B_{1})t}\quad\forall l=1,\dots,\numclust\implies \Prob(\epsilonincluster < \epsilon)\ge 1-t\,.
\end{equation}

\end{cor}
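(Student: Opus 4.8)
The plan is to apply Theorem \ref{thm:WithinCluster} separately to each cluster, translate the sampling condition it produces---which is phrased in terms of the ambient tube volume $\volD(B(S_l,\tau))=\volD(\X_l)$---into one phrased only in terms of the intrinsic Hausdorff measure $\vold(S_l)$, and then combine the $\numclust$ resulting events by a union bound. Concretely, fix $l$ and apply Theorem \ref{thm:WithinCluster} with $S=S_l$, the given $\tau$, the samples $X_l\distras{i.i.d.}\Unif(B(S_l,\tau))$, and the parameter $t$ there replaced by $t/\numclust$; this says that once $n_l$ exceeds the stated threshold (with $\volD(B(S_l,\tau))$ in place of the generic volume), with probability at least $1-t/\numclust$ every pair $x\ne y\in X_l$ satisfies $\rho_{\ell\ell}^{X_l}(x,y)<\epsilon$. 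Since adjoining the other clusters and the noise points to the vertex set can only create additional paths, hence only decrease min--max path weights, $\rho_{\ell\ell}(x,y)\le\rho_{\ell\ell}^{X_l}(x,y)$ for $x,y$ in the same cluster, so on this event $\max_{x\ne y\in X_l}\rho_{\ell\ell}(x,y)<\epsilon$ as well, where $\rho_{\ell\ell}$ is now the LLPD in the full data set $X$ as in (\ref{e:epsilons}).

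The substantive step is to bound $\volD(B(S_l,\tau))$ by a multiple of $\vold(S_l)\,\tau^{D-d}$ with a factor depending only on $\kappa,d,D$. I would use a packing/covering argument: let $\{z_i\}_{i=1}^{M}\subset S_l$ be a maximal $\tau$-separated set; since $\tau<\epsilon_0/5$, the lower Ahlfors-regularity bound of Definition \ref{defn:AdmissibleLowDimensionalSpace} applies at radius $\tau/2$, so the disjoint sets $S_l\cap B_{\tau/2}(z_i)$ give $M\le\kappa 2^d\vold(S_l)/(\tau^d\vold(B_1))$; by maximality $B(S_l,\tau)\subseteq\bigcup_i B_{2\tau}(z_i)$, so $\volD(B(S_l,\tau))\le M(2\tau)^D\volD(B_1)$, and substituting the bound on $M$ leaves $\volD(B(S_l,\tau))\le\kappa 2^{2D+d}\vold(S_l)\,\tau^{D-d}$ (the $\tau^{D}$ from the covering balls cancelling all but $\tau^{D-d}$, and the extra $2^{D}$ absorbing the ratio $\volD(B_1)/\vold(B_1)$). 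The hypothesis $\tau<\tfrac{\epsilon}{8}\wedge\tfrac{\epsilon_0}{5}$ simultaneously forces $\tau\wedge\tfrac{\epsilon}{4}=\tau\wedge\tfrac{\epsilon}{8}=\tau$, so the factors $(\tau\wedge\tfrac{\epsilon}{4})^{D-d}$ and $(\tau\wedge\tfrac{\epsilon}{8})^{D-d}$ in Theorem \ref{thm:WithinCluster} equal $\tau^{D-d}$ and cancel against the $\tau^{D-d}$ just produced---this is exactly why the final condition sees only $\vold(S_l)$ and $\epsilon^d$. Feeding this into Theorem \ref{thm:WithinCluster} and folding together its constant $\kappa^2 2^{2D+d}$, the factor $\kappa 2^{2D+d}$ from the covering bound, and the $4^d,8^d$ coming from $(\epsilon/4)^d,(\epsilon/8)^d$, one checks that the (non-sharp) constant $C=\kappa^5 2^{4D+5d}$ stated in the corollary is large enough that (\ref{equ:tau_small_sampling_cond}) implies, cluster by cluster, the hypothesis of Theorem \ref{thm:WithinCluster}.

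A union bound over $l=1,\dots,\numclust$ then gives $\Prob(\epsilonincluster<\epsilon)=\Prob\bigl(\max_l\max_{x\ne y\in X_l}\rho_{\ell\ell}(x,y)<\epsilon\bigr)\ge 1-\numclust\cdot\tfrac{t}{\numclust}=1-t$, which is the assertion. I expect the only genuine work to be the packing/covering estimate for $\volD(B(S_l,\tau))$ and the bookkeeping keeping the accumulated powers of $\kappa$ and $2$ below $\kappa^5 2^{4D+5d}$; the rest is a substitution plus a union bound. A minor point to check in passing is that the logarithm's argument in (\ref{equ:tau_small_sampling_cond}) stays at least $1$, so the two sampling thresholds compare monotonically in the volume; this is routine given that $\vold(S_l)\gtrsim\epsilon_0^d$ by Definition \ref{defn:AdmissibleLowDimensionalSpace} and $t\le 1$.
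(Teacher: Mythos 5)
Your proposal is correct and follows essentially the same route as the paper's proof: a $\tau$-packing of $S_l$ whose cardinality is controlled by the lower Ahlfors bound in Definition \ref{defn:AdmissibleLowDimensionalSpace}, a covering of $B(S_l,\tau)$ by balls $B_{2\tau}(z_i)$ to get $\volD(B(S_l,\tau))\lesssim \kappa^{3}2^{2D+4d}\vold(S_l)\tau^{D-d}$, cancellation of the $\tau^{D-d}$ factors in Theorem \ref{thm:WithinCluster} because $\tau<\tfrac{\epsilon}{8}$, and a union bound over the $\numclust$ clusters at level $t/\numclust$. The only cosmetic difference is that you bound $\volD(B(S_l,\tau)\cap B_{2\tau}(z_i))$ by the trivial ball volume rather than via Lemma \ref{Lemma:BoundingConstants} as the paper does (yielding an even smaller constant than $\kappa^{5}2^{4D+5d}$), and you make explicit the monotonicity $\rho_{\ell\ell}\le\rho_{\ell\ell}^{X_l}$ that the paper leaves implicit.
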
	

The case $\tau=0$ corresponds to cluster regions being elements of $\mathcal{S}_{d}(\kappa,\epsilon_0)$, and is proved similarly to Theorem \ref{thm:WithinCluster} (the proof is omitted):

\begin{thm}
\label{thm:WithinLowDimCluster}
Let $S\in \mathcal{S}_{d}(\kappa,\epsilon_{0})$, $\tau=0$, and let  $\epsilon\in(0,\epsilon_{0})$.  Suppose $x_{1},\dots,x_{n}\distras{i.i.d.} \Unif(S)$.  Then 
$$n\ge  \frac{\kappa\vold(S)}{ \left(\frac{\epsilon}{4}\right)^{d}\vold(B_1)}\log\frac{\kappa\vold(S)}{ \left(\frac{\epsilon}{8}\right)^{d}\vold(B_1)t}\implies \Prob(\max_{i,j} \rho_{\ell \ell}(x_{i},x_{j})< \epsilon)\ge 1-t.$$ 
\end{thm}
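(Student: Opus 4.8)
The plan is to follow the same route as for Theorem~\ref{thm:WithinCluster}, but to invoke the Ahlfors-regularity estimate built into Definition~\ref{defn:AdmissibleLowDimensionalSpace} directly, in place of the tube estimate of Lemma~\ref{Lemma:BoundingConstants}. The first step is the observation that, for any finite point set, $\max_{i,j}\rho_{\ell\ell}(x_i,x_j)<\epsilon$ holds precisely when the graph $G_\epsilon$ on $\{x_1,\dots,x_n\}$ whose edges join pairs at Euclidean distance $<\epsilon$ is connected: a path in $G_\epsilon$ between $x_i$ and $x_j$ has all legs $<\epsilon$, and conversely $\rho_{\ell\ell}(x_i,x_j)<\epsilon$ forces such a path to exist. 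So it is enough to bound from below the probability that $G_\epsilon$ is connected.

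Next I would fix a maximal $(\epsilon/4)$-packing $z_1,\dots,z_M$ of $S$; by maximality the open balls $B_{\epsilon/4}(z_j)$ cover $S$. Form the auxiliary graph $\mathcal{G}$ on $\{1,\dots,M\}$ with an edge between $a$ and $b$ whenever $\|z_a-z_b\|<\epsilon/2$. The key (and only genuinely topological) point is that $\mathcal{G}$ is connected because $S$ is: were $\mathcal{G}$ disconnected, grouping the covering balls according to the components of $\mathcal{G}$ would write $S$ as a disjoint union of nonempty relatively open sets --- disjointness because two $(\epsilon/4)$-balls that overlap have centers within $\epsilon/2$, hence lie in one component --- contradicting connectedness of $S$. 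Now work on the event $E$ that every $B_{\epsilon/4}(z_j)$ contains a sample point; choosing $x_{i_j}\in B_{\epsilon/4}(z_j)$, any edge $\{a,b\}$ of $\mathcal{G}$ gives $\|x_{i_a}-x_{i_b}\|<\epsilon/4+\epsilon/2+\epsilon/4=\epsilon$, so the $x_{i_j}$ are connected in $G_\epsilon$, and every remaining sample lies in some $B_{\epsilon/4}(z_j)$ and so is within $\epsilon/2<\epsilon$ of the corresponding $x_{i_j}$. Thus $E$ implies $G_\epsilon$ is connected, hence $\max_{i,j}\rho_{\ell\ell}(x_i,x_j)<\epsilon$.

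It then remains to show $\Prob(E^c)\le t$ under the stated sampling bound. Setting $p=\min_j\Prob(x_1\in B_{\epsilon/4}(z_j))$, Definition~\ref{defn:AdmissibleLowDimensionalSpace} at scale $\epsilon/4<\epsilon_0$ gives $p\ge\kappa^{-1}(\epsilon/4)^{d}\vold(B_1)/\vold(S)$; since the balls $B_{\epsilon/8}(z_j)$ are pairwise disjoint subsets of $S$, the same regularity bound at scale $\epsilon/8<\epsilon_0$ gives $M\le\kappa\vold(S)/\big((\epsilon/8)^{d}\vold(B_1)\big)$. A union bound yields $\Prob(E^c)\le M(1-p)^n\le Me^{-pn}$, which is $\le t$ as soon as $n\ge p^{-1}\log(M/t)$; substituting the above bounds on $p^{-1}$ and $M$ reproduces exactly the sampling condition in the statement.

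The main obstacle is the connectedness of the net graph $\mathcal{G}$: this is the one place the hypothesis ``$S$ is connected'' is used, and it is what makes a chain of overlapping covering balls --- and hence a short Euclidean path through occupied balls --- available. The remaining work is bookkeeping: keeping all triangle-inequality estimates strict (which is why the covering radius $\epsilon/4$, rather than something closer to $\epsilon$, is used) and checking that $\epsilon<\epsilon_0$ suffices to apply Definition~\ref{defn:AdmissibleLowDimensionalSpace} at the scales $\epsilon/4$ and $\epsilon/8$ that appear.
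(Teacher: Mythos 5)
Your proof is correct and takes essentially the same route the paper intends (the paper omits this proof, noting only that it is proved similarly to Theorem~\ref{thm:WithinCluster}): an $\epsilon/4$-packing of $S$, the Ahlfors-regularity bound from Definition~\ref{defn:AdmissibleLowDimensionalSpace} in place of the tube estimate of Lemma~\ref{Lemma:BoundingConstants}, and a union bound over occupied balls, which reproduces the stated sampling condition exactly. Your net-graph argument merely makes explicit the connectivity step that the paper's proof of Theorem~\ref{thm:WithinCluster} asserts without detail.
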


Thus, up to geometric constants, for $\tau=0$ the uniform bound on LLPD depends only on the intrinsic dimension, $d$, not the ambient dimension, $D$.  When $d \ll D$, this leads to a huge gain in sampling complexity, compared to sampling in the ambient dimension.  


\subsubsection{Comparison with Existing Asymptotic Estimates}

To put Theorem \ref{thm:WithinLowDimCluster} in context, we remark on known asymptotic results for LLPD in the case $S=[0,1]^{d}$ \citep{Appel1997_max, Appel1997_min, Appel2002, Penrose1997, Penrose1999}.  Note that this assumes $\tau=0$, that is, the cluster is truly intrinsically $d$-dimensional.  Let $G_{n}^{d}$ denote a random graph with $n$ vertices, with edge weights $W_{ij}=\|x_{i}-x_{j}\|_{\infty}$, where $x_{1},\dots,x_{n}\distras{i.i.d.} \Unif([0,1]^{d})$.  For $\epsilon>0$, let $G_{n}^{d}(\epsilon)$ be the thresholded version of $G_{n}^{d}$, where edges with $W_{ij}$ greater than $\epsilon$ are deleted.  Define the random variable $c_{n,d}=\inf \{\epsilon>0 : \ G_{n}^{d}(\epsilon) \text{ is connected}\}.$  It is known \citep{Penrose1999} that $\max_{i,j}\rho_{\ell\ell}(x_{i},x_{j})=c_{n,d}$ for a fixed realization of the points $\{x_{i}\}_{i=1}^{n}$.  Moreover, \citet{Appel2002} showed $c_{n,d}$ has an almost sure limit in $n$: $$\lim_{n\rightarrow \infty} (c_{n,d})^{d}\frac{n}{\log(n)}=\begin{cases}1 & d=1,\\ \frac{1}{2d} & d\ge2.\end{cases}$$  Therefore $\max_{i,j}\rho_{\ell\ell}(x_{i},x_{j})\sim ({\log(n)}/{n})^{\frac{1}{d}}$, almost surely as $n\rightarrow\infty$.  Since the $\ell^{2}$ and $\ell^{\infty}$ norms are equivalent up to a $\sqrt{d}$ factor, a similar result holds in the case of $\ell^{2}$ norm being used for edge weights.  To compare this asymptotic limit with our results,  let $\epsilon_{*} = \max_{i,j}\rho_{\ell\ell}(x_{i},x_{j})$.  By Theorem \ref{thm:WithinCluster}, $\epsilon_{*}^{-d}\log(\epsilon_{*}^{-d}) \gtrsim n$. Since $\epsilon_{*} \sim \left({\log n}/{n}\right)^{\frac{1}{d}}$, $\epsilon_{*}^{-d}\log(\epsilon_{*}^{-d})\sim({n}/{\log n}) \log({n}/{\log n}) \sim n$ as $n \rightarrow \infty$. This shows that our lower bound for $\epsilon_{*}^{-d}\log(\epsilon_{*}^{-d})$ matches the one given by the asymptotic limit and is thus sharp.


\subsection{Lower-Bounding Between-Cluster Distances and $k$NN LLPD}
\label{BetweenClusterDis}

Having shown conditions guaranteeing that all points within a cluster are close together in the LLPD, we now derive conditions guaranteeing that points in different clusters are far apart in LLPD.  Points in the noise region may generate short paths between the clusters: we will upper-bound the number of between-clusters noise points that can be tolerated.  Our approach is related to percolation theory \citep{Gilbert1961, Roberts1968, Stauffer1994} and analysis of single linkage clustering \citep{Hartigan1981}. The following theorem is in fact inspired by Lemma 2 in \citet{Hartigan1981}.

  \begin{thm}
  \label{thm:BetweenClusters}
Under the LDLN data model and assumptions, with $\epsilonbetweencluster$ as in \eqref{e:epsilons}, for $\epsilon>0$ $$\tilde{n} \leq \frac{t^{\floor{\frac{\delta}{\epsilon}}^{-1}}\volD(\tilde{\X})}{\epsilon^D \volD(B_{1})}\implies \Prob\left(\epsilonbetweencluster>\epsilon\right)\geq 1-t\,.$$
\end{thm}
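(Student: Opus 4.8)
The plan is to reduce the event $\{\epsilonbetweencluster\le\epsilon\}$ to the existence of a long chain of noise samples crossing a separating ``moat'' around one cluster, and then to bound the probability of such a chain by a union bound in which each successive noise sample is forced into an $\epsilon$-ball determined by the previous one.

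First I would use that $\rho_{\ell\ell}$ is an ultrametric to rephrase connectivity: $\rho_{\ell\ell}(x,y)\le\epsilon$ holds precisely when $x$ and $y$ lie in the same connected component of the graph on $X$ that joins pairs of points at Euclidean distance at most $\epsilon$, i.e.\ when there is a path $x=z_0,z_1,\dots,z_r=y$ in $X$ with every leg $\|z_{k+1}-z_k\|_2\le\epsilon$. Hence $\epsilonbetweencluster\le\epsilon$ iff for some $l\ne l'$ some $z_0\in X_l$ and $z_r\in X_{l'}$ are joined by such an $\epsilon$-path.

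Next comes the geometric core. The separation hypothesis $\dist(\X_l,\X_s)\ge\delta$ forces the moat $\{x:\dist(x,\X_l)<\delta\}$ to avoid every other cluster region $\X_s$, $s\ne l$; consequently the annular shells $\Sigma^{(l)}_i:=\{x:(i-1)\epsilon\le\dist(x,\X_l)<i\epsilon\}$ lie inside the noise region $\tilde{\X}$ for each $i\le\lfloor\delta/\epsilon\rfloor$ (the $i=1$ shell after removing $\X_l$ itself). Since $x\mapsto\dist(x,\X_l)$ is $1$-Lipschitz, along an $\epsilon$-path from $X_l$ to $X_{l'}$ it changes by at most $\epsilon$ per leg and runs from $0$ (at $z_0\in\X_l$) to at least $\delta$ (at $z_r\in\X_{l'}$), so the path meets every such shell. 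Isolating the stretch of consecutive path vertices between the last vertex lying in $X_l$ and the first later vertex lying in any cluster (necessarily a \emph{different} cluster) then produces a chain $w_1,\dots,w_m$ of noise samples with $\|w_j-w_{j+1}\|_2\le\epsilon$, with $w_1$ within $\epsilon$ of $\X_l$, and with $m+1$ legs of length $\le\epsilon$ that must sum to at least $\delta$, forcing $m$ to be of order $\lfloor\delta/\epsilon\rfloor$.

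Finally, for the probability I would union-bound over the pair of clusters and over ordered tuples of $M:=\lfloor\delta/\epsilon\rfloor$ distinct noise samples; for a fixed tuple, conditioning on the samples one at a time and using that they are i.i.d.\ $\Unif(\tilde{\X})$, each sample after the first is constrained to a fixed $\epsilon$-ball, an event of probability at most $\epsilon^D\volD(B_1)/\volD(\tilde{\X})$, so the tuple forms a valid crossing chain with probability at most $\bigl(\epsilon^D\volD(B_1)/\volD(\tilde{\X})\bigr)^{M}$; multiplying by the at most $\tilde n^{M}$ tuples and rearranging $\bigl(\tilde n\,\epsilon^D\volD(B_1)/\volD(\tilde{\X})\bigr)^{M}\le t$ gives the stated bound. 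The step I expect to be the main obstacle is precisely this final count: extracting a chain that yields the \emph{full} exponent $\lfloor\delta/\epsilon\rfloor$ rather than one less, anchoring its near end to $\X_l$ in a way that also contributes an $\epsilon$-ball factor rather than a term governed by the surface area of $\X_l$, and absorbing the union over cluster pairs — this bookkeeping is exactly the role of the analogue of Hartigan's Lemma~2, whose argument I would follow to pin it down.
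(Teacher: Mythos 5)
Your overall route is the same as the paper's: this is precisely the Hartigan-style chain argument, reducing $\{\epsilonbetweencluster\le\epsilon\}$ to the existence of an $\epsilon$-chain of noise points crossing the gap of width $\delta$, followed by a union bound over ordered tuples in which each successive point is conditioned into an $\epsilon$-ball of probability mass $\epsilon^D\volD(B_1)/\volD(\tilde{\X})$. Your Lipschitz/annular-shell extraction of the chain is more explicit than the paper's one-line observation and is correct; note also that the paper sidesteps your worry about the union over cluster pairs by defining the chain event purely in terms of noise points, with no anchor to any particular $\X_l$, so no factor of $\numclust^2$ ever appears.

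The gap is exactly where you predicted it, and it is real rather than cosmetic. First, a crossing path with legs of length at most $\epsilon$ and endpoints in distinct cluster regions has at least $\ceil{\delta/\epsilon}$ legs, hence in the worst case (when $\delta/\epsilon$ is an integer) only $\floor{\delta/\epsilon}-1$ interior noise points; you cannot in general extract $M=\floor{\delta/\epsilon}$ of them. Second, for an ordered tuple of $M$ noise points your conditioning produces only $M-1$ factors of $\epsilon^D\volD(B_1)/\volD(\tilde{\X})$, since the first point is unconstrained, and the proposed fix---anchoring $w_1$ to within $\epsilon$ of $\X_l$---contributes a factor $\volD(B(\X_l,\epsilon)\setminus\X_l)/\volD(\tilde{\X})$, which by Lemma \ref{Lemma:BoundingConstants} scales like the $d$-dimensional measure of $S_l$ times a power of $\epsilon$ smaller than $D$, not like $\epsilon^D$. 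What your argument actually yields is therefore $\Prob(\epsilonbetweencluster\le\epsilon)\le\tilde{n}\bigl(\tilde{n}\epsilon^D\volD(B_1)/\volD(\tilde{\X})\bigr)^{\floor{\delta/\epsilon}-2}$, which is exactly the bound the paper's proof establishes before asserting that ``a simple calculation proves the claim.'' So you have reproduced the substantive content of the paper's argument, but the step you defer is not obtainable by ``rearranging $\bigl(\tilde{n}\epsilon^D\volD(B_1)/\volD(\tilde{\X})\bigr)^{M}\le t$'': the exponent you control is smaller by two and there is an uncancelled leading factor of $\tilde{n}$, so the passage from the bound you can prove to the stated threshold $\tilde{n}\le t^{1/\floor{\delta/\epsilon}}\volD(\tilde{\X})/(\epsilon^D\volD(B_1))$ remains open in your write-up (as, in fairness, it is left implicit in the paper).
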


\begin{proof}We say that the ordered set of points $x_{i_1},\dots,x_{i_{\kNoise}}$ forms an $\epsilon$-chain of length $\kNoise$ if $\|x_{i_j}-x_{i_{j+1}}\|_{2} \leq \epsilon$ for $1\leq j \leq \kNoise-1$.
The probability that an ordered set of $\kNoise$ points forms an $\epsilon$-chain is bounded above by $\left(\frac{\volD(B_{\epsilon})}{\volD(\tilde{\X})}\right)^{\kNoise-1}$. There are $\frac{\tilde{n}!}{(\tilde{n}-\kNoise)!}$ ordered sets of $\kNoise$ points. Letting $A_{\kNoise}$ be the event that there exist $\kNoise$ points forming an $\epsilon$-chain of length $\kNoise$, we have $$ \Prob(A_{\kNoise}) \leq \frac{\tilde{n}!}{(\tilde{n}-\kNoise)!} \left(\frac{\volD(B_{\epsilon})}{\volD(\tilde{\X})}\right)^{\kNoise-1}
\leq \tilde{n} \left(\frac{\volD(B_1)}{\volD(\tilde{\X})}\tilde{n}\epsilon^D \right)^{\kNoise-1}.$$  Note that $A_{\kNoise+1} \subset A_{\kNoise}$.  In order for there to be a path between $\X_i$ and $\X_j$ (for some $i\neq j$) with all legs bounded by $\epsilon$, there must be at least $\floor{{\delta}/{\epsilon}}-1$
points in $\tilde{\X}$ forming an $\epsilon$-chain. Thus recalling $\epsilonbetweencluster = \min_{l \ne s}\min_{x\in \X_l, y\in \X_s}\rho_{\ell\ell}(x,y)$, we have: 
\begin{small}
$$\Prob\left(\epsilonbetweencluster \leq \epsilon \right)
\le \Prob\left(\bigcup_{\kNoise =  \floor{\frac{\delta}{\epsilon}}-1}^{\infty} A_{\kNoise} \right)
= \Prob\left(A_{\floor{\frac{\delta}{\epsilon}}-1} \right)
\leq \tilde{n} \left(\frac{\volD(B_1)}{\volD(\tilde{\X})} \tilde{n}\epsilon^D \right)^{\floor{\frac{\delta}{\epsilon}}-2}
\leq t
$$ 
\end{small}as long as $\log t \ge \log \tilde{n} + (\floor{{\delta}/{\epsilon}}-2)(\log \tilde{n} + \log \epsilon^D + \log{\volD(B_1)}/{\volD(\tilde{\X})})
$. A simple calculation proves the claim.
\end{proof}
\begin{rems}
\label{rem:BetweenClusters}
The above bound is independent of the number of clusters $\numclust$, as the argument is completely based on the minimal distance that must be crossed between-clusters. 
\end{rems}

Combining Theorem \ref{thm:BetweenClusters} with Theorem \ref{thm:WithinCluster} or \ref{thm:WithinLowDimCluster} allows one to derive conditions guaranteeing the maximal within cluster LLPD is smaller than the minimal between cluster LLPD with high probability, which in turn can be used to derive performance guarantees for spectral clustering on the cluster points. Since however it is not known a priori which points are cluster points, one must robustly distinguish the clusters from the noise.  We propose removing any point whose LLPD to its $\kNoise^{\!\!\!\!\!\text{th}}$ LLPD-nearest neighbor is sufficiently large (denoised LDLN data). The following theorem guarantees that, under certain conditions, all noise points that are not close to a cluster region will be removed by this procedure. The argument is similar to that in Theorem \ref{thm:BetweenClusters}, although we replace the notion of an $\epsilon$-chain of length $\kNoise$ with that of an $\epsilon$-group of size $\kNoise$.

\begin{thm}
\label{thm:knnLLPD}
Under the LDLN data model and assumptions, with $\epsilonnoiseknn$ as in \eqref{e:epsilons}, for $\epsilon>0$
$$\tilde{n} \leq \frac{2t^{\frac{1}{{\kNoise}+1}}}{({\kNoise}+1)} \left(\frac{ \volD(\tilde{\X})}{\volD(B_1)}\right)^{\frac{{\kNoise}}{{\kNoise}+1}} \epsilon^{-D\frac{{\kNoise}}{{\kNoise}+1}}\implies \Prob\left( \epsilonnoiseknn > \epsilon \right) \geq 1 - t\,.$$
\end{thm}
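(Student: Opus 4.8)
The plan is to mimic the proof of Theorem \ref{thm:BetweenClusters}, replacing the notion of an $\epsilon$-chain (a path) by that of an $\epsilon$-group (a connected cluster of points). First I would record the key ultrametric fact: for $x,y\in\tilde{X}$, one has $\rho_{\ell\ell}^{\tilde{X}}(x,y)\le\epsilon$ if and only if $y$ lies in the connected component of $x$ in the graph $G_\epsilon$ on vertex set $\tilde{X}$ that joins $u,v$ exactly when $\|u-v\|_2\le\epsilon$. Consequently $\beta_{\kNoise}(x,\tilde{X})\le\epsilon$ precisely when the $G_\epsilon$-component of $x$ contains at least $\kNoise+1$ points, and so $\{\epsilonnoiseknn\le\epsilon\}$ holds if and only if $G_\epsilon$ has a connected component of size at least $\kNoise+1$. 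Running a breadth-first search inside such a component and truncating at the $(\kNoise+1)$-st vertex discovered, this event is equivalent to the existence of distinct indices $i_1,\dots,i_{\kNoise+1}$ such that $x_{i_j}\in\bigcup_{l<j}B_\epsilon(x_{i_l})$ for every $j=2,\dots,\kNoise+1$; I will call such a tuple an $\epsilon$-group (of $\kNoise+1$ points). It thus suffices to bound the probability that an $\epsilon$-group exists.

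Next I would run a union bound over the at most $\tilde{n}^{\kNoise+1}$ ordered $(\kNoise+1)$-tuples of indices. For a fixed tuple, I condition successively on $x_{i_1},\dots,x_{i_j}$: since $x_{i_{j+1}}$ is independent of these and distributed as $\Unif(\tilde{\X})$, and the union of $j$ balls of radius $\epsilon$ has $D$-dimensional volume at most $j\,\volD(B_\epsilon)$, the conditional probability that $x_{i_{j+1}}\in\bigcup_{l\le j}B_\epsilon(x_{i_l})$ is at most $j\,\volD(B_\epsilon)/\volD(\tilde{\X})$. Telescoping over $j=1,\dots,\kNoise$ shows the tuple is an $\epsilon$-group with probability at most $\kNoise!\,\bigl(\volD(B_\epsilon)/\volD(\tilde{\X})\bigr)^{\kNoise}$, whence, using $\volD(B_\epsilon)=\epsilon^{D}\volD(B_1)$,
$$\Prob\bigl(\epsilonnoiseknn\le\epsilon\bigr)\;\le\;\tilde{n}^{\kNoise+1}\,\kNoise!\,\Bigl(\tfrac{\volD(B_1)\,\epsilon^{D}}{\volD(\tilde{\X})}\Bigr)^{\kNoise}.$$

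Finally I would substitute the hypothesis on $\tilde{n}$, which gives $\tilde{n}^{\kNoise+1}\le\tfrac{2^{\kNoise+1}t}{(\kNoise+1)^{\kNoise+1}}\bigl(\volD(\tilde{\X})/\volD(B_1)\bigr)^{\kNoise}\epsilon^{-D\kNoise}$, so the right-hand side above collapses to $\tfrac{2^{\kNoise+1}\kNoise!}{(\kNoise+1)^{\kNoise+1}}\,t$. It then remains to verify the elementary inequality $2^{\kNoise+1}\kNoise!\le(\kNoise+1)^{\kNoise+1}$ for every $\kNoise\ge1$, which follows from AM--GM: from $\kNoise!\le\bigl(\tfrac{\kNoise+1}{2}\bigr)^{\kNoise}$ we get $2^{\kNoise+1}\kNoise!\le2\,(\kNoise+1)^{\kNoise}\le(\kNoise+1)^{\kNoise+1}$. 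Hence $\Prob(\epsilonnoiseknn\le\epsilon)\le t$, i.e. $\Prob(\epsilonnoiseknn>\epsilon)\ge1-t$, as claimed.

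The only delicate step is the first one: translating the min--max quantity $\beta_{\kNoise}$ into the purely combinatorial statement about components of $G_\epsilon$, and checking via breadth-first search that every component of size $\ge\kNoise+1$ does give rise to a sequentially constructible $\epsilon$-group, so that the union bound over ordered tuples is genuinely an upper bound. Everything afterward is the same bookkeeping as in Theorem \ref{thm:BetweenClusters}; the one new feature is the factor $\kNoise!$, which arises because a group, unlike a chain, may attach each new point to any of the points already placed, contributing $j\,\volD(B_\epsilon)$ rather than $\volD(B_\epsilon)$ at the corresponding step.
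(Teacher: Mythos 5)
Your proof is correct and follows essentially the same route as the paper's: the paper likewise reduces the event $\{\epsilonnoiseknn\le\epsilon\}$ to the existence of an ordered $\epsilon$-group (each new point landing in the union of $\epsilon$-balls around the previously placed ones), bounds a fixed tuple by successive conditioning to get the factorial factor, and union-bounds over ordered tuples before substituting the hypothesis on $\tilde{n}$. Your explicit BFS justification of the equivalence with components of $G_\epsilon$ and the AM--GM verification of $2^{\kNoise+1}\kNoise!\le(\kNoise+1)^{\kNoise+1}$ are details the paper leaves implicit, but the argument is the same.
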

\begin{proof}
Let $\{x_i\}_{i=1}^{\tilde{n}}$ denote the points in $\tilde{X}$.  Let $A_{{\kNoise},\epsilon}$ be the event that there exists an $\epsilon$-group of size ${\kNoise}$, that is, there exist ${\kNoise}$ points such that the LLPD between all pairs is at most $\epsilon$. Note that $A_{{\kNoise},\epsilon}$ can also be described as the event that there exists an ordered set of ${\kNoise}$ points $x_{\pi_1}, \ldots, x_{\pi_{\kNoise}}$ such that $x_{\pi_i} \in \bigcup_{j=1}^{i-1} B_\epsilon(x_{\pi_j})$ for all $2 \leq i \leq {\kNoise}$.  Let $C_{\pi, i}$ denote the event that $x_{\pi_i} \in \bigcup_{j=1}^{i-1} B_\epsilon(x_{\pi_j})$.  For a fixed ordered set of points associated with the ordered index set $\pi$, we have
\begin{align*}
\Prob&\left(x_{\pi_i} \in \bigcup_{j=1}^{i-1} B_\epsilon(x_{\pi_j}) \text{ for } 2 \leq i \leq {\kNoise} \right)
= \Prob(C_{\pi, 2})\Prob( C_{\pi, 3} | C_{\pi, 2})\ldots\Prob\left(C_{\pi, {\kNoise}} | \bigcap_{j=2}^{{\kNoise}-1}C_{\pi, j}\right) \\
&\leq \frac{\volD(B_{\epsilon})}{\volD(\tilde{\X})}\left(2\frac{\volD(B_{\epsilon})}{\volD(\tilde{\X})}\right)\ldots \left(({\kNoise}-1)\frac{\volD(B_{\epsilon})}{\volD(\tilde{\X})}\right)
= ({\kNoise}-1)! \left(\frac{\volD(B_{\epsilon})}{\volD(\tilde{\X})}\right)^{{\kNoise}-1}. 
\end{align*}
There are $\frac{\tilde{n}!}{(\tilde{n}-{\kNoise})!}$ ordered sets of ${\kNoise}$ points, so that
\begin{align*}
\Prob(A_{{\kNoise},\epsilon}) 
\leq \frac{\tilde{n}!}{(\tilde{n}-{\kNoise})!}({\kNoise}-1)! \left(\frac{\volD(B_{\epsilon})}{\volD(\tilde{\X})}\right)^{{\kNoise}-1}
\leq \tilde{n}({\kNoise}-1)! \left(\frac{\volD(B_{1})}{\volD(\tilde{\X})}\tilde{n}\epsilon^D\right)^{{\kNoise}-1}
\leq  t
\end{align*}
as long as $\tilde{n}({\kNoise}-1)! \left(\frac{\volD(B_{1})}{\volD(\tilde{\X})}\tilde{n}\epsilon^D\right)^{{\kNoise}-1} \leq t,$ which occurs if $\tilde{n} \leq \frac{2t^{\frac{1}{{\kNoise}}} \volD(\tilde{\X})^{\frac{{\kNoise}-1}{{\kNoise}}} }{{\kNoise} \volD(B_1)^{\frac{{\kNoise}-1}{{\kNoise}}}\epsilon^{\frac{D({\kNoise}-1)}{{\kNoise}} } }$ for $\kNoise\geq 2$.  Since $\Prob( \epsilonnoiseknn > \epsilon)=\Prob( \min_{x \in \tilde{X}} \beta_{\kNoise}(x,\tilde{X}) > \epsilon) = 1-\Prob(A_{{\kNoise}+1, \epsilon})$, the theorem holds for ${\kNoise}\geq 1$.
\end{proof}





\begin{rems}
\label{rem:kNNLLPD}
The theorem guarantees $\epsilonnoiseknn \geq \left(\frac{ 2\volD(\tilde{\X})(2t)^{\frac{1}{{\kNoise}}} }{\volD(B_1)(({\kNoise}+1)\tilde{n})^{ \frac{{\kNoise}+1}{{\kNoise}} }}\right)^{\frac{1}{D}}$ with probability at least $1-t$.  The lower bound for $\epsilonnoiseknn$ is maximized at the unique maximizer in ${\kNoise}>0$ of $f({\kNoise}) = { (2t)^{\frac{1}{{\kNoise}}} }{(({\kNoise}+1)\tilde{n})^{-\frac{{\kNoise}+1}{{\kNoise}}}},$ which occurs at the positive root ${\kNoise}_{*}$ of $\kNoise - \log({\kNoise}+1) = \log \tilde{n} - \log (2t)$. Notice that ${\kNoise}_{*}=O(\log \tilde{n})$, so we may, and will, restrict our attention to ${\kNoise} \leq {\kNoise}_{*}= O(\log \tilde{n})$.
\end{rems}


\subsection{Robust Denoising with LLPD}
Combining Corollary \ref{cor:WithinCluster_tau_small} ($\tau>0$ but small) or Theorem \ref{thm:WithinLowDimCluster} ($\tau=0$) with Theorem \ref{thm:knnLLPD} determines how many noise points can be tolerated while within-cluster LLPD remain small relative to $\kNoise^{\!\!\!\!\!\text{th}}$ nearest neighbor LLPD of noise points.  Any $C\geq 1$ in the following theorem guarantees $\epsilonincluster < \epsilonnoiseknn$; when $C\gg 1$, $\epsilonincluster \ll \epsilonnoiseknn$, and LLPD easily differentiates the clusters from the noise. The proof is given in Appendix \ref{app:PDProofs}.  A similar result for the set-up of Theorem \ref{thm:WithinCluster} is omitted for brevity.

\begin{thm}
\label{thm:CombinedResult}
Assume the LDLN data model and assumptions, and define $$\tau_*:=\max_{l=1,\dots,k}\left(\frac{ \kappa^52^{4D+5d} \vold(S_l)}{n_l\vold(B_1)}\log \left(2^d n_l\frac{2\numclust}{t}\right)\right)^{\frac{1}{d}}\,.$$
Let $0\le\tau< \frac{\tau_*}{8} \wedge \frac{\epsilon_0}{5}$ and let $\epsilonincluster, \epsilonnoiseknn$ as in \eqref{e:epsilons}.  For any $C>0$,$$\tilde{n} <  \frac{ \left(\frac{t}{2}\right)^{\frac{1}{\kNoise+1}}}{\kNoise+1} \left(\frac{\volD(\tilde{\X})}{\volD(B_1)}\right)^{\frac{\kNoise}{\kNoise+1}} \left(\frac{1}{C\tau_*}\right)^{D\frac{\kNoise}{\kNoise+1}}\!\!\!\!\!\!\!\! \implies \Prob( C\epsilonincluster < \epsilonnoiseknn ) \geq 1- t\,.$$
\end{thm}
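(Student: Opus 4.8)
The plan is to split the statement into two one-sided high-probability estimates and combine them by a union bound: an upper bound $\epsilonincluster<\tau_*$ and a lower bound $\epsilonnoiseknn>C\tau_*$, each established with failure probability at most $t/2$. On the intersection of these two events, which has probability at least $1-t$, one has $C\epsilonincluster<C\tau_*<\epsilonnoiseknn$, which is exactly the asserted inequality. So the work is to produce the two bounds with the confidence budget split evenly; the scale $\tau_*$ is engineered to be precisely where the within-cluster hypothesis and the noise hypothesis mesh.

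For the upper bound on $\epsilonincluster$ I would apply Corollary~\ref{cor:WithinCluster_tau_small} when $\tau>0$ (and Theorem~\ref{thm:WithinLowDimCluster} when $\tau=0$) with $\epsilon=\tau_*$ and confidence parameter $t/2$. The hypothesis $0\le\tau<\frac{\tau_*}{8}\wedge\frac{\epsilon_0}{5}$ matches the requirement $\tau<\frac{\epsilon}{8}\wedge\frac{\epsilon_0}{5}$ of Corollary~\ref{cor:WithinCluster_tau_small} once $\epsilon=\tau_*$, so the remaining task is to verify its sampling condition
\[
n_l \ \ge\ \frac{\kappa^5 2^{4D+5d}\,\vold(S_l)}{(\tau_*/4)^{d}\,\vold(B_1)}\ \log\frac{\kappa^5 2^{4D+5d}\,\vold(S_l)\,\numclust}{(\tau_*/8)^{d}\,\vold(B_1)\,(t/2)}\qquad \text{for every } l .
\]
The quantity $\tau_*$ is defined to be exactly the threshold for which this holds: one substitutes the defining inequality $\tau_*^{\,d}\ge \kappa^5 2^{4D+5d}\vold(S_l)\log(2^{d+1}n_l\numclust/t)/(n_l\vold(B_1))$ into both the prefactor and the argument of the outer logarithm, uses that the relevant logarithms exceed $1$ together with elementary manipulations of $\log$, and checks that the $\kappa^{5}2^{4D+5d}$ prefactor and the $\log(2^{d+1}n_l\numclust/t)$ term in the definition of $\tau_*$ are large enough to absorb the $4^{d}$ and $8^{d}$ factors and the nested logarithm that appear upon substitution. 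This gives $\Prob(\epsilonincluster<\tau_*)\ge 1-t/2$. (For $\tau=0$ the relevant constant is only $\kappa$, so the verification is strictly easier and $\tau_*$ is far more than large enough.)

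For the lower bound on $\epsilonnoiseknn$ I would invoke Theorem~\ref{thm:knnLLPD} with $\epsilon=C\tau_*$ and confidence parameter $t/2$, which yields $\Prob(\epsilonnoiseknn>C\tau_*)\ge 1-t/2$ provided
\[
\tilde n \ \le\ \frac{2\,(t/2)^{\frac{1}{\kNoise+1}}}{\kNoise+1}\left(\frac{\volD(\tilde\X)}{\volD(B_1)}\right)^{\frac{\kNoise}{\kNoise+1}}(C\tau_*)^{-\frac{D\kNoise}{\kNoise+1}} .
\]
The hypothesis on $\tilde n$ in the theorem is exactly this bound with the leading factor $2$ removed, hence strictly stronger, so the application goes through with room to spare (the spare factor of $2$ absorbs the cost of splitting $t$ into $t/2+t/2$). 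Intersecting the two events and applying the union bound then gives, with probability at least $1-t$, both $\epsilonincluster<\tau_*$ and $\epsilonnoiseknn>C\tau_*$, whence $C\epsilonincluster<C\tau_*<\epsilonnoiseknn$, which is the claim.

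The step I expect to be the main obstacle is the constant bookkeeping in the upper bound: confirming that $\tau_*$ as defined really does satisfy the sampling condition of Corollary~\ref{cor:WithinCluster_tau_small} at $\epsilon=\tau_*$, i.e. tracking the powers of $2$ and the iterated logarithm carefully enough. Everything else (matching the $\tau$-constraints, the $t/2$ confidence split, the union bound, and chaining the two strict inequalities) is routine.
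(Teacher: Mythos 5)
Your overall architecture is exactly the paper's: two one\hyphenation{sided}-sided bounds ($\epsilonincluster$ small via Corollary \ref{cor:WithinCluster_tau_small}, $\epsilonnoiseknn$ large via Theorem \ref{thm:knnLLPD}), each at confidence $t/2$, glued by a union bound at the scale $\tau_*$. Your treatment of the noise side is fine: the hypothesis on $\tilde n$ is the condition of Theorem \ref{thm:knnLLPD} at $\epsilon=C\tau_*$ and level $t/2$ with the factor $2$ dropped, so $\Prob(\epsilonnoiseknn>C\tau_*)\ge 1-t/2$ follows immediately.

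The gap is in the step you yourself flagged as the main obstacle, and the resolution you sketch does not go through. Applying Corollary \ref{cor:WithinCluster_tau_small} at $\epsilon=\tau_*$ and level $t/2$ requires
$n_l \ge \frac{C_\kappa\vold(S_l)}{(\tau_*/4)^d\vold(B_1)}\log\bigl(\tfrac{C_\kappa\vold(S_l)\numclust}{(\tau_*/8)^d\vold(B_1)(t/2)}\bigr)$ with $C_\kappa=\kappa^52^{4D+5d}$, whose prefactor is $4^d\cdot\frac{C_\kappa\vold(S_l)}{\tau_*^d\vold(B_1)}$; the definition of $\tau_*$ only supplies $n_l\ge \frac{C_\kappa\vold(S_l)}{\tau_*^d\vold(B_1)}\log(2^dn_l\tfrac{2\numclust}{t})$. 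The $C_\kappa$ prefactor appears on both sides and cancels, so it cannot absorb anything, and the logarithms are comparable (the argument of the needed log is at most $8^dn_l\cdot 2\numclust/t$), so you would need $\log(2^dn_l\tfrac{2\numclust}{t})\ge 4^d\log(8^dn_l\tfrac{2\numclust}{t})$, which is false for $d\ge1$ and large $n_l$ --- indeed it fails at the maximizing $l$, where the defining inequality is an equality. The paper's fix is to run the within-cluster bound at $\epsilon_1^*=4\tau_*$, so that $(\epsilon_1^*/4)^d=\tau_*^d$ makes the prefactors match exactly and the log argument reduces to $2^dn_l\tfrac{2\numclust}{t}$; the conclusion is then $\epsilonincluster<4\tau_*$ rather than $\epsilonincluster<\tau_*$, and the chain closes as $C\epsilonincluster<4C\tau_*<\epsilonnoiseknn$ (at the cost of a factor of $4$ in the constant $C$, a bookkeeping point the paper itself treats loosely). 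So: right strategy, but the scale at which you invoke the within-cluster result must be $4\tau_*$, not $\tau_*$, and the claimed absorption into the prefactor and the iterated logarithm is not available.
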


Ignoring $\log$ terms and geometric constants, the number of noise points $\tilde{n}$ can be taken as large as $\min_l n_l^{\frac{D}{d}\left(\frac{\kNoise}{\kNoise+1}\right)}$. Hence if  $d\ll D$, an enormous amount of noise points are tolerated while $\epsilonincluster$ is still small relative to $\epsilonnoiseknn$.  This result is deployed to prove LLPD spectral clustering is robust to large amounts of noise in Theorem \ref{thm:SpectralGapDataModel} and Corollary \ref{cor:sigmarange}, and is in particular relevant to condition (\ref{equ:thetarange}), which articulates the range of denoising parameters for which LLPD spectral clustering will perform well.


\subsection{Phase Transition in LLPD}\label{subsec:PhaseTransition}

In this section, we numerically validate our denoising scheme on simple data. The data is a mixture of five uniform distributions: four from non-adjacent edges of $[0,1]\times[0,\frac{1}{2}]\times[0,\frac{1}{2}] $, and one from the interior of $[0,1]\times[0,\frac{1}{2}]\times[0,\frac{1}{2}] $.  Each distribution contributed 3000 sample points.  Figure \ref{fig:phasetrans_data} shows the data and Figure \ref{fig:sortedLLPDs} all sorted LLPDs.  The sharp phase transition is explained mathematically by Theorems \ref{thm:WithinLowDimCluster} and \ref{thm:BetweenClusters}.  Indeed, $d=1, D=3$ in this example, so Theorem \ref{thm:WithinLowDimCluster} guarantees that with high probability, the maximum within cluster LLPD, call it $\epsilonincluster$, scales as $\epsilonincluster^{-1} \log (\epsilonincluster^{-1}) \gtrsim n$ while Theorem \ref{thm:BetweenClusters} guarantees that with high probability, the minimum between cluster LLPD, call it $\epsilonbetweencluster$, scales as $\epsilonbetweencluster\gtrsim n^{-\frac{1}{3}}$.  The empirical estimates can be compared with the theoretical guarantees, which are shown on the plot.  The guarantees require a confidence level, parametrized by $t$; this parameter was chosen to be $t=.01$ for this example.  The solid red line denotes the maximum within cluster LLPD guaranteed with probability exceeding $1-t=.99$, and the dashed red line denotes the minimum between cluster LLPD, guaranteed with probability exceeding $1-t$.  It is clear from Figure \ref{fig:PhaseTransitionData} that the theoretical lower lower bound on $\epsilonbetweencluster$ is rather sharp, while the theoretical upper bound on $\epsilonincluster$ is looser.  Despite the lack of sharpness in estimating $\epsilonincluster$, the theoretical bounds  are quite sufficient to separate the within-cluster and between cluster LLPD.  When $d\ll D$, the difference between these theoretical bounds becomes much larger.

\begin{figure}[!htb]
\begin{subfigure}{.32\textwidth}
\includegraphics[width=\textwidth]{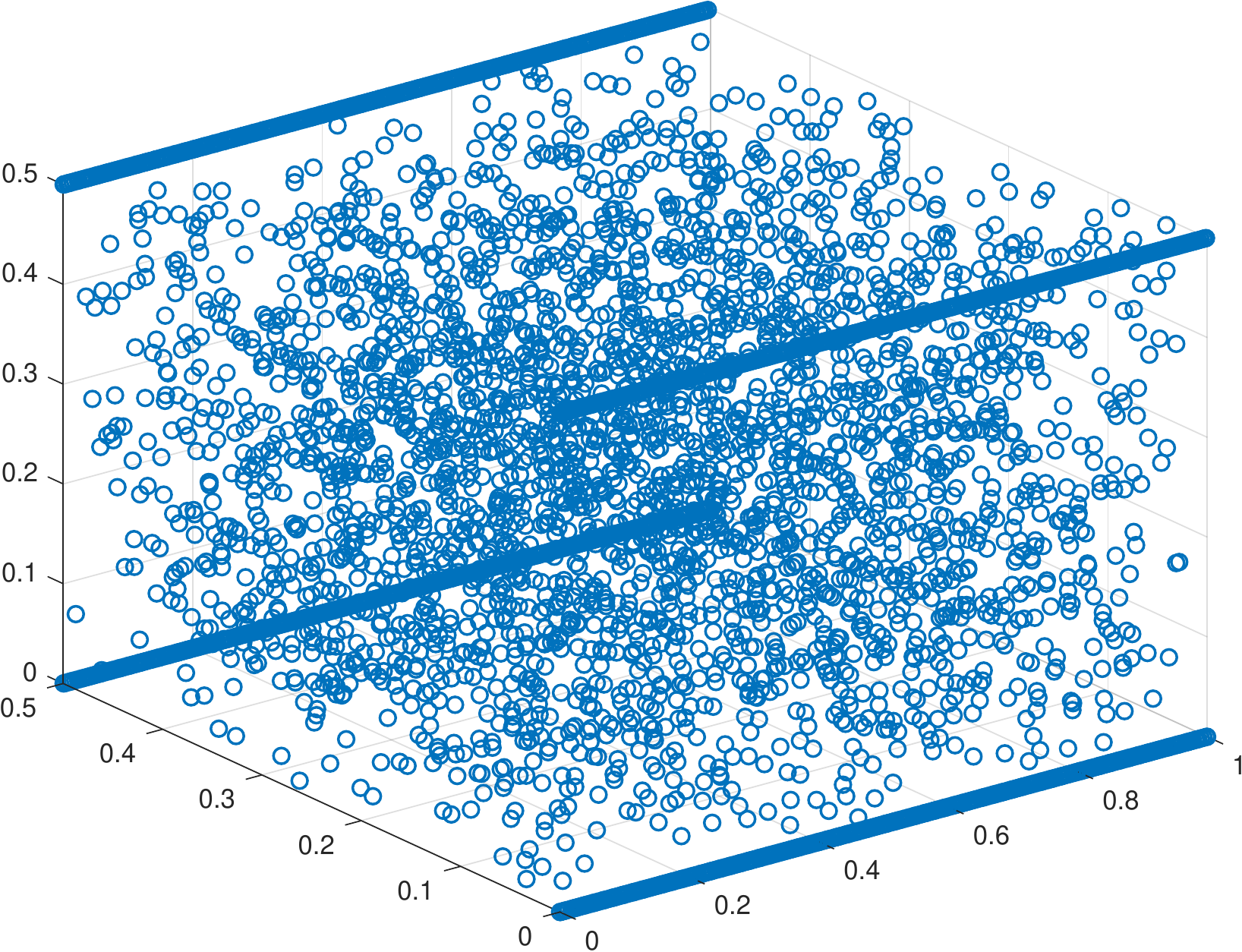}
\subcaption{Four clusters in $[0,1]\times [0,\frac{1}{2}]\times [0,\frac{1}{2}]$.}
\label{fig:phasetrans_data}
\end{subfigure}
\begin{subfigure}{.32\textwidth}
\includegraphics[width=\textwidth]{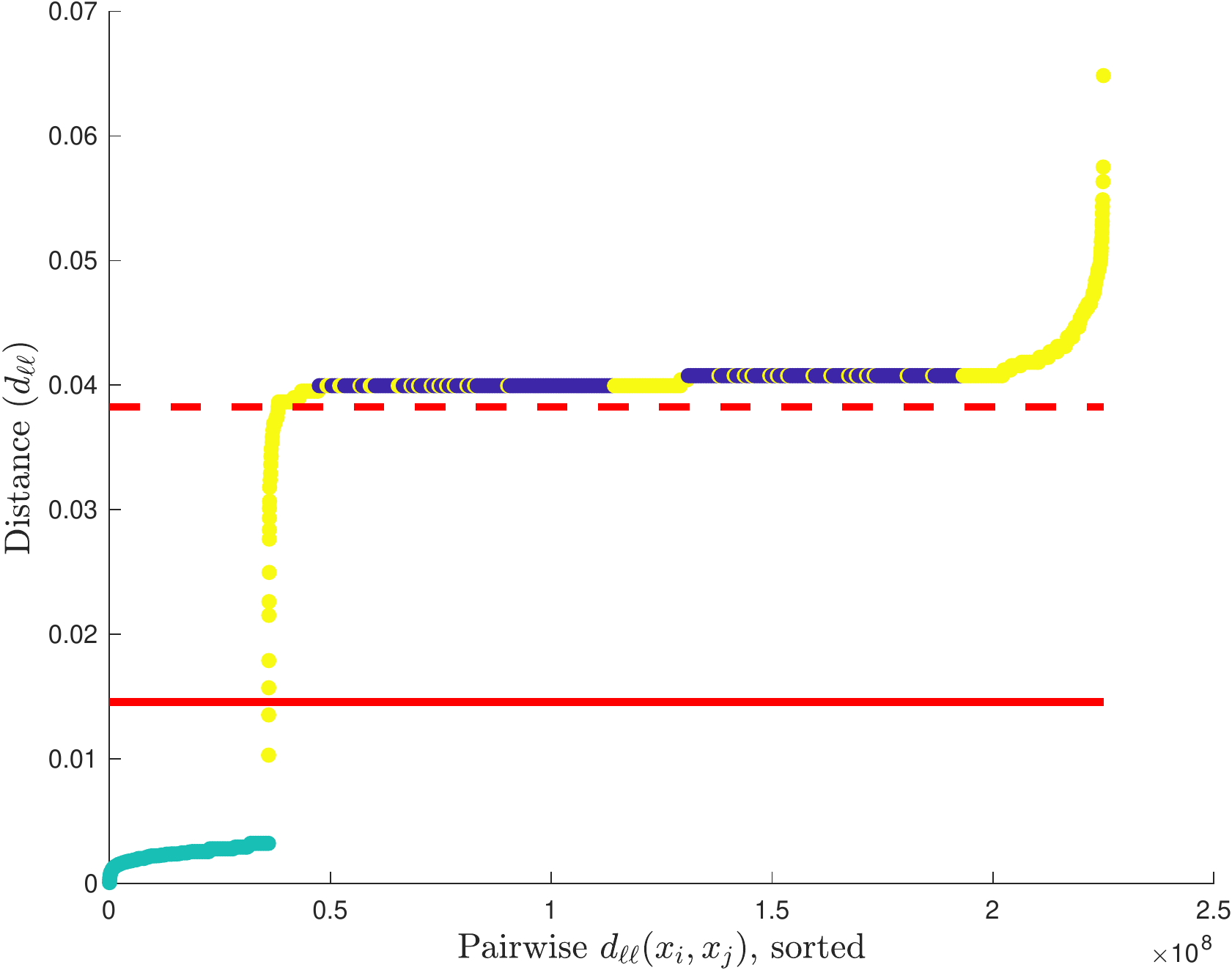}
\subcaption{Corresponding pairwise LLPD, sorted.}
\label{fig:sortedLLPDs}
\end{subfigure}
\begin{subfigure}{.32\textwidth}
\includegraphics[width=\textwidth]{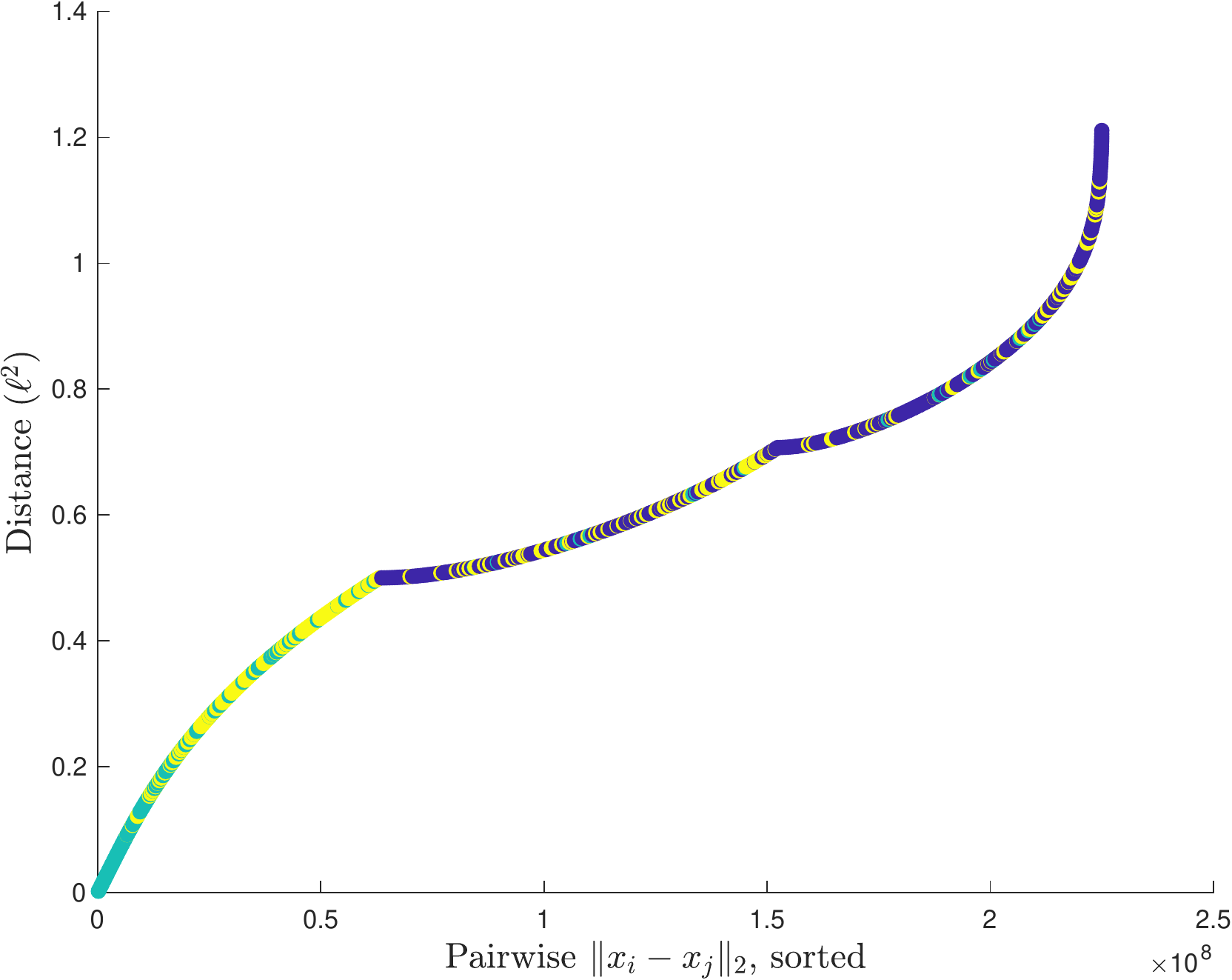}
\subcaption{Corresponding pairwise $\ell^{2}$ distances, sorted.}
\end{subfigure}
\caption{\label{fig:PhaseTransitionData} (a) The clusters are on edges of the rectangular prism so that the pairwise LLPDs between the clusters is at least 1.  The interior is filled with noise points.  Each cluster has 3000 points, as does the interior noise region.  (b) The sorted $\rho_{\ell\ell}$ plot shows within-cluster LLPDs in green, between-cluster LLPDs in blue, and LLPDs involving noise points in yellow.  There is a clear phase transition between the within-cluster and between-cluster LLPDs.  This empirical observation can be compared with the theoretical guarantees of Theorems \ref{thm:WithinLowDimCluster} and \ref{thm:BetweenClusters}.  Setting $t=.01$ in those theorems yield corresponding maximum within-cluster LLPD (shown with the solid red line) and minimum between-cluster distance (shown with the dashed red line).  The empirical results confirm our theoretical guarantees.  Notice moreover that there is no clear separation between the Euclidean distances, which are shown in (c).  This illustrates the challenges faced by classical spectral clustering, compared to LLPD spectral clustering, for this dataset.}

\end{figure}


\section{Performance Guarantees for Ultrametric and LLPD Spectral Clustering}\label{sec:SpectralClusteringAnalysis}

In this section we first derive performance guarantees for spectral clustering with any ultrametric. We show that when the data consists of cluster cores surrounded by noise, the weight matrix $W$ used in spectral clustering is, for a certain range of scales $\sigma$, approximately block diagonal with constant blocks.  In this range of $\sigma$, the number of clusters can be inferred from the maximal eigengap of $\Lsym$, and spectral clustering achieves high labeling accuracy.  On the other hand, for Euclidean spectral clustering it is hard to choose a scale parameter that is simultaneously large enough to guarantee a strong connection between every pair of points in the same cluster and small enough to produce weak connections between clusters (and even when possible the shape (e.g. elongation) of clusters affects the ability to identify the correct clusters). The resulting Euclidean weight matrix is not approximately block diagonal for any choice of $\sigma$, and the eigengap of $\Lsym$ becomes uninformative and the labeling accuracy potentially poor.  Moreover, using an ultrametric for spectral clustering leads to direct lower bounds on the degree of noise points, since if a noise point is close to any cluster point, it is close to all points in the given cluster.  It is well-known that spectral clustering is unreliable for points of low degree and in this case $\Lsym$ may have arbitrarily many small eigenvalues \citep{VonLuxburg2007}.

After proving results for general ultrametrics, we derive specific performance guarantees for LLPD spectral clustering on the LDLN data model.  We remove low density points by considering each point's LLPD-nearest neighbor distances, then derive bounds on the eigengap and labeling accuracy which hold even in the presence of noise points with weak connections to the clusters.  We prove there is a large range of values of both the thresholding and scale parameter for which we correctly recover the clusters, illustrating that LLPD spectral clustering is robust to the choice of parameters and presence of noise. In particular, when the clusters have a very low-dimensional structure and the noise is very high-dimensional, that is, when $d \ll D$, an enormous amount of noise points can be tolerated.  Throughout this section, we use the notation established in Subsection \ref{subsec:SpectralClustering}.

\subsection{Ultrametric Spectral Clustering}

Let $\rho:\mathbb{R}^{D}\times\mathbb{R}^{D}\rightarrow [0,\infty)$ be an ultrametric; see (\ref{defn:ultrametric}).  We analyze $\Lsym$ under the assumptions of the following cluster model. As will be seen in Subsection \ref{subsec:LLPDpectralClustering}, this cluster model holds for data drawn from the LDLN data model with the LLPD ultrametric, but it may be of interest in other regimes and for other ultrametrics. The model assumes there are $\numclust$ sets forming cluster cores and each cluster core has a halo of noise points surrounding it; for $1 \leq l \leq \numclust$, $A_l$ denotes the cluster core and $C_l$ the associated halo of noise points. For LDLN data, the parameter $\epsilonsep$ corresponds to the minimal between cluster distance after denoising.

\begin{assumption}[Ultrametric Cluster Model]
\label{assumption:ultrametric}
For $1 \leq l \leq \numclust$, assume $A_{l}$ and $C_{l}$ are disjoint finite sets, and let $\tilde{A_{l}}=A_{l}\cup C_{l}$.  Let $N = |\cup_{l} \tilde{A}_l|$. Assume  that for some $\epsilonincluster \leq \thetathres < \epsilonsep$:
\begin{align}
\rho(x^l_i,x^l_j) &\leq \epsilonincluster &\forall x^l_i, x^l_j \in A_l, 1 \leq l \leq \numclust, \label{equ:assump1}\\
\epsilonincluster < \rho(x^l_i,x^l_j) &\leq \thetathres &\forall x^l_i \in A_l, x^l_j \in C_l, 1 \leq l \leq \numclust, \label{equ:assump2}\\
\rho(x^l_i,x^s_j) &\geq \epsilonsep &\forall x^l_i \in \tilde{A}_l, x^s_j \in \tilde{A}_s, 1 \leq l\ne s \leq \numclust. \label{equ:assump3}
\end{align}Moreover, let $\zeta_{N} = \max_{1 \leq l \leq \numclust} \frac{N}{\strut |\tilde{A}_l|}.$
\end{assumption}

Theorem \ref{thm:SpectralGap} shows that under Assumption \ref{assumption:ultrametric}, the maximal eigengap of $\Lsym$ corresponds to the number of clusters $\numclust$ and spectral clustering with $\numclust$ principal eigenvectors achieves perfect labeling accuracy. The label accuracy result is obtained by showing the spectral embedding with $\numclust$ principal eigenvectors is a \textit{perfect representation} of the sets $\tilde{A}_l$, as defined in \cite{vu2018simple}. 

\begin{defn}[Perfect Representation] 
	\label{def:perfectrep}
	A clustering representation is perfect if there exists an $r>0$ such that
	\begin{itemize}
	\setlength\itemsep{0em}
		\item[$\cdot$] 	Vertices in the same cluster have distance at most $r$.
		\item[$\cdot$] Vertices from different clusters have distance at least $4r$ from each other.
	\end{itemize}
\end{defn} 

There are multiple clustering algorithms which are guaranteed to perfectly recover the labels of $\tilde{A}_l$ from a perfect representation, including $K$-means with furthest point initialization and single linkage clustering. Again following the terminology in \cite{vu2018simple}, we will refer to all such clustering algorithms as \textit{clustering by distances}. The proof of Theorem \ref{thm:SpectralGap} is in Appendix \ref{app:SpectralGapProof}.

\begin{thm}
\label{thm:SpectralGap}
Assume the ultrametric cluster model.  Then $\lambda_{\numclust+1}-\lambda_{\numclust}$ is the largest gap in the eigenvalues of $L_{\text{SYM}}(\cup_{l} \tilde{A}_l, \rho, f_{\sigma})$ provided 
\begin{align}
\label{equ:SpectralGap}
\frac{1}{2} \ &\geq \ \underset{\textbf{Cluster Coherence}}{\underbrace{5(1-f_{\sigma}(\epsilonincluster))}} \ \ +\ \underset{\textbf{Cluster Separation}}{\underbrace{6\zeta_{N} f_{\sigma}(\epsilonsep)}} \ +\ \ \underset{\textbf{Noise}}{\underbrace{4 (1-f_{\sigma}(\thetathres))}} \ +\ \beta\ , 
\end{align} \\
\vspace{.1cm}
where $\beta = O((1-f_{\sigma}(\epsilonincluster))^2+\zeta_{N}^2f_{\sigma}(\epsilonsep)^2+(1-f_{\sigma}(\thetathres))^2)$ denotes higher-order terms. Moreover, if 
\begin{align}
\label{equ:eigenvec_inequality_ultra} 
\frac{C}{\numclust^3\zeta_{N}^2} \geq (1-f_{\sigma}(\epsilonincluster))+\zeta_{N} f_{\sigma}(\epsilonsep)+(1-f_{\sigma}(\thetathres)) + \beta\ ,
\end{align}
where $C$ is an absolute constant, then clustering by distances on the $\numclust$ principal eigenvectors of $\Lsym(\cup_{l} \tilde{A}_l, \rho, f_{\sigma})$ perfectly recovers the cluster labels.
\end{thm}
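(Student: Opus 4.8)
The plan is to realize the true normalized Laplacian $\Lsym := \Lsym(\cup_l \tilde A_l, \rho, f_{\sigma})$ as a small perturbation of an exactly block-structured ideal Laplacian, and then transfer the (essentially trivial) spectral and geometric properties of the ideal object to the true one via eigenvalue and eigenspace perturbation bounds. Concretely, let $W^{(0)}$ be the ideal weight matrix with $W^{(0)}_{ij}=1$ when $x_i,x_j$ lie in the same cluster $\tilde A_l$ and $0$ otherwise. Since $\rho$ is an ultrametric and $f_{\sigma}$ is strictly decreasing, (\ref{equ:assump1})--(\ref{equ:assump3}) give $W_{ij}\ge f_{\sigma}(\thetathres)$ for within-cluster pairs (indeed $\ge f_{\sigma}(\epsilonincluster)$ for pairs in a common core $A_l$) and $W_{ij}\le f_{\sigma}(\epsilonsep)$ for between-cluster pairs, so every entry of $W-W^{(0)}$ is controlled by one of $1-f_{\sigma}(\epsilonincluster)$, $1-f_{\sigma}(\thetathres)$, $f_{\sigma}(\epsilonsep)$. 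For $W^{(0)}$ the degree of every node of $\tilde A_l$ equals $N_l:=|\tilde A_l|$, each block of $(D^{(0)})^{-1/2}W^{(0)}(D^{(0)})^{-1/2}$ is the rank-one projection $\tfrac1{N_l}\mathbf 1\mathbf 1^\top$, and hence $\Lsym^{(0)} = I-(D^{(0)})^{-1/2}W^{(0)}(D^{(0)})^{-1/2}$ has eigenvalue $0$ with multiplicity exactly $\numclust$ (eigenvectors $\mathbf 1_{\tilde A_l}/\sqrt{N_l}$) and eigenvalue $1$ with multiplicity $N-\numclust$. Thus $\Lsym^{(0)}$ already has its largest (indeed only nonzero) eigengap at index $\numclust$, and its spectral embedding sends every point of $\tilde A_l$ to $\tfrac1{\sqrt{N_l}}e_l$, a perfect representation of $\{\tilde A_l\}$ in the sense of Definition \ref{def:perfectrep}.

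For part (i) I would bound $\|\Lsym-\Lsym^{(0)}\|$ in operator norm. Writing the difference as $(D^{(0)})^{-1/2}W^{(0)}(D^{(0)})^{-1/2}-D^{-1/2}WD^{-1/2}$, split it into a contribution from $W-W^{(0)}$ and one from the degree mismatch $D$ versus $D^{(0)}$. The first is handled by a Schur (row-sum) test after normalizing by a degree of order $N_l$: the core entries of a within-cluster row contribute at most $1-f_{\sigma}(\epsilonincluster)$, the halo entries at most $1-f_{\sigma}(\thetathres)$, and the between-cluster entries at most $(N-N_l)f_{\sigma}(\epsilonsep)/N_l\le \zeta_N f_{\sigma}(\epsilonsep)$. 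The second is handled by observing $d_i/N_l = 1+O(1-f_{\sigma}(\thetathres))+O(\zeta_N f_{\sigma}(\epsilonsep))$, which reproduces the same quantities at first order and dumps a quadratic remainder into $\beta$. Collecting constants gives a bound of the form $5(1-f_{\sigma}(\epsilonincluster))+6\zeta_N f_{\sigma}(\epsilonsep)+4(1-f_{\sigma}(\thetathres))+\beta$. Weyl's inequality, together with $\operatorname{spec}(\Lsym^{(0)})\subset\{0,1\}$ and $\Lsym\succeq 0$, then forces $\lambda_l$ close to $0$ for $l\le\numclust$ and close to $1$ for $l\ge\numclust+1$, so every gap other than the $\numclust$-th is of order $\|\Lsym-\Lsym^{(0)}\|$ while the $\numclust$-th is $1-O(\|\Lsym-\Lsym^{(0)}\|)$; (\ref{equ:SpectralGap}) is precisely the condition making the latter the largest.

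For part (ii) I would first use Davis--Kahan with the eigengap lower bound from part (i) (the ideal gap being of order $1$) to get an orthogonal $O\in\mathbb R^{\numclust\times\numclust}$ with $\|\Phi-\Phi^{(0)}O\|_F\lesssim\|\Lsym-\Lsym^{(0)}\|$, where $\Phi$ has the $\numclust$ principal eigenvectors as columns. The goal is to show the rows of $\Phi$ form a perfect representation, after which the result of \cite{vu2018simple} gives exact recovery by clustering by distances. Since $\|\Phi_{i\cdot}-\Phi_{j\cdot}\|^2=(\Phi\Phi^\top)_{ii}+(\Phi\Phi^\top)_{jj}-2(\Phi\Phi^\top)_{ij}$ and the ideal projection $P_0=\Phi^{(0)}(\Phi^{(0)})^\top$ has $(P_0)_{ij}=\tfrac1{N_l}\mathbbm 1[i,j\in\tilde A_l]$, it suffices to bound $|(\Phi\Phi^\top-P_0)_{ij}|$ uniformly by a small fraction of $1/N_{\max}$, since then same-cluster embedded points are within $r$ and different-cluster ones are at distance at least $4r$ for a suitable $r$ of order $N_{\max}^{-1/2}$. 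A plain Frobenius/operator-norm bound is far too weak here because the embedded points live at the vanishing scale $N^{-1/2}$; the resolution is to exploit the block structure: every entry of $\Lsym-\Lsym^{(0)}$ is of size $O(1/N_{\min})$ times the small quantities $1-f_{\sigma}(\cdot),f_{\sigma}(\epsilonsep)$, and the resolvent of the block-diagonal $\Lsym^{(0)}$ likewise has off-block entries of size $O(1/N_l)$, so an entrywise (equivalently $\ell_{2,\infty}$-type) perturbation expansion for $\Phi\Phi^\top-P_0$ yields entries of order $1/N_l$ times those quantities. This is small relative to $1/N_l$ exactly when (\ref{equ:eigenvec_inequality_ultra}) holds, the factor $\numclust^3\zeta_N^2$ absorbing worst-case cluster-size imbalance.

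The main obstacle is precisely this last entrywise control of the principal eigenspace: operator-norm perturbation theory does not see that the $O(1)$-sized perturbation is spread over $N^2$ entries of size $O(1/N)$ in a way compatible with the block structure of the ideal resolvent, so one must carry out a genuinely entrywise (or leave-one-out/resolvent) analysis rather than quote Davis--Kahan directly. The remaining work — the degree-normalization bookkeeping in part (i) that pins down the explicit constants $5,6,4$ and the form of $\beta$, and verifying the separation constant $4$ in Definition \ref{def:perfectrep} from the above entrywise bounds — is routine.
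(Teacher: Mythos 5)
Your treatment of the eigengap claim is essentially the paper's own argument: the paper also compares $\Lsym$ to a block-diagonal, block-constant ideal matrix in two perturbation steps (zeroing the off-diagonal blocks, then flattening the diagonal blocks), bounds each perturbation entrywise using exactly the three quantities $1-f_{\sigma}(\epsilonincluster)$, $1-f_{\sigma}(\thetathres)$, $\zeta_{N}f_{\sigma}(\epsilonsep)$, converts to operator norm by a max-entry/row-sum bound, and concludes via Weyl/Hoffman--Wielandt that the first $\numclust$ eigenvalues sit near $0$ and the rest near $1$. The only cosmetic differences are your choice of the pure indicator matrix $W^{(0)}$ as the ideal object (the paper keeps the true degrees and the factor $f_{\sigma}(\epsilonincluster)$ in its block-constant target $B_l$, which makes the bookkeeping for the constants $5,6,4$ slightly cleaner) and your use of the ultrametric inequality only implicitly; the paper isolates it in a lemma showing halo points are equidistant from their cluster core and off-diagonal blocks are exactly constant. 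None of this affects correctness of part (i).

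The gap is in part (ii), and it is exactly where you say the ``main obstacle'' lies: you correctly diagnose that Davis--Kahan in operator or Frobenius norm is useless at the scale $N^{-1/2}$ of the embedded points and that an entrywise control of the principal eigenspace is required, but you then assert the conclusion of a ``genuinely entrywise (or leave-one-out/resolvent) analysis'' without supplying it. This step is not routine, and the paper does not carry it out from scratch either: it closes the gap by invoking the coherence-based $\ell_\infty$ eigenvector perturbation theorem of \cite{Fan2018_Eigenvector}, after observing that the ideal eigenvector matrix $\Phi$ (constant on each $\tilde{A}_l$) has coherence at most $\zeta_{N}$ and that $\|\Deg^{-1/2}W\Deg^{-1/2}-B\|_{\infty}$ is controlled by the same three first-order quantities. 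That theorem delivers $\|\tilde{\Phi}R-\Phi\|_{\max}\lesssim \numclust^{5/2}\zeta_{N}^{2}N^{-1/2}\bigl(\eta_1+\eta_\thetathres+\zeta_{N}\eta_2+\beta\bigr)$, whose comparison with the between-center separation $\sqrt{2/N}$ is precisely what produces the $\numclust^{3}\zeta_{N}^{2}$ scaling in (\ref{equ:eigenvec_inequality_ultra}) and the perfect-representation radius needed for clustering by distances. Without this (or an equivalent worked-out leave-one-out argument), your part (ii) is a plan rather than a proof; with it, your outline matches the paper's.
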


For condition (\ref{equ:SpectralGap}) to hold, the following three terms must all be small:

\begin{itemize}
\item \textbf{Cluster Coherence}: This term is minimized by choosing $\sigma$ large so that $f_{\sigma}(\epsilonincluster)\approx 1$; the larger the scale parameter, the stronger the within-cluster connections.
\item  \textbf{Cluster Separation}: This term is minimized by choosing $\sigma$ small so that $f_{\sigma}(\epsilonsep)\approx 0$; the smaller the scale parameter, the weaker the between-cluster connections. Note $\zeta_{N}$ is minimized when clusters are balanced, in which case $\zeta_{N}=\numclust$. 
\item \textbf{Noise}: This term is minimized by choosing $\sigma$ large, so that once again $f_{\sigma}(\thetathres) \approx 1$. When the scale parameter is large, noise points around the cluster will be well connected to their designated cluster. 
\item \textbf{Higher Order Terms}: This term consists of terms that are quadratic in $(1-f_{\sigma}(\epsilonincluster)), f_{\sigma}(\epsilonsep), (1-f_{\sigma}(\thetathres))$, which are small in our regime of interest. 
\end{itemize}

Solving for the scale parameter $\sigma$ will yield a range of $\sigma$ values where the eigengap statistic is informative; this is done in Corollary \ref{cor:sigmarange} for LLPD spectral clustering on the LDLN data model.

Condition (\ref{equ:eigenvec_inequality_ultra}) guarantees that clustering the LLPD spectral embedding results in perfect label accuracy, and requires a stronger scaling with respect to $\numclust$ than Condition (\ref{equ:SpectralGap}), as when $\zeta_{N}=O(\numclust)$ there is an additional factor of $\numclust^{-5}$ on the left hand side of the inequality.  Determining whether this scaling in $\numclust$ is optimal is a topic of ongoing research, though the present article is more concerned with scaling in $n, d,$ and $D$.  Condition (\ref{equ:eigenvec_inequality_ultra}) in fact guarantees perfect accuracy for clustering by distances on the spectral embedding regardless of whether the $\numclust$ principal eigenvectors of $\Lsym$ are row-normalized or not. Row normalization is proposed in \cite{Ng2002} and generally results in better clustering results when some points have small degree \citep{VonLuxburg2007}; however it is not needed here because the properties of LLPD cause all points to have similar degree. 

\begin{rems}
	One can also derive a label accuracy result by applying Theorem 2 from \cite{Ng2002}, restated in \cite{Arias2011}, to show the spectral embedding satisfies the so-called \emph{orthogonal cone property (OCP)} \citep{Schiebinger2015}.  Indeed, let $\{\phi_{k}\}_{k=1}^{\numclust}$ be the principal eigenvectors of $\Lsym$.  The OCP guarantees that in the representation $x\mapsto \{\phi_{k}(x)\}_{k=1}^{\numclust}$, distinct clusters localize in nearly orthogonal directions, not too far from the origin.  Proposition 1 from \cite{Schiebinger2015} can then be applied to conclude $K$-means on the spectral embedding achieves high accuracy. Specifically, if (\ref{equ:SpectralGap}) is satisfied, then with probability at least $1-t$, $\numclust$-means on the $\numclust$ principal eigenvectors of $\Lsym(\cup_{l} \tilde{A}_l, \rho, f_{\sigma})$ achieves accuracy at least $1 - \frac{c\numclust^9\zeta_{N}^3 (f_{\sigma}(\epsilonsep)^2+\beta)}{t}$ where $c$ is an absolute constant and $\beta$
	denotes higher order terms. This approach results in a less restrictive scaling for $1-f_\sigma(\epsilonincluster), f_\sigma(\epsilonsep)$ in terms of $\numclust, \zeta_{N}$ than given in Condition (\ref{equ:eigenvec_inequality_ultra}), but does not guarantee perfect accuracy, and also requires row normalization of the spectral embedding as proposed in \cite{Ng2002}.  The argument using this approach to proving cluster accuracy is not discussed in this article, for reasons of space and as to not introduce additional excessive notation.  
\end{rems}


\subsection{LLPD Spectral Clustering with $k$NN LLPD Thresholding}
\label{subsec:LLPDpectralClustering}

We now return to the LDLN data model defined in Subsection \ref{subsec:assumptions} and show that it gives rise to the ultrametric cluster model described in Assumption \ref{assumption:ultrametric} when combined with the LLPD metric. Theorem \ref{thm:SpectralGap} can thus be applied to derive performance guarantees for LLPD spectral clustering on the LDLN data model. All of the notation and assumptions established in Subsection \ref{subsec:assumptions} hold throughout Subsection \ref{subsec:LLPDpectralClustering}.

\subsubsection{Thresholding}
\label{subsec:thresholding}
Before applying spectral clustering, we denoise the data by removing any points having sufficiently large LLPD to their $\kNoise^{\!\!\!\!\!\text{th}}$ LLPD-nearest neighbor. Motivated by the sharp phase transition illustrated in Subsection \ref{subsec:PhaseTransition}, we choose a threshold $\thetathres$ and discard a point $x\in X$ if $\beta_{\kNoise}(x, X) > \thetathres$.  Note that the definition of $\epsilonnoiseknn$ guarantees that we can never have a group of more than $\kNoise$ noise points where all pairwise LLPD are smaller than $\epsilonnoiseknn$, because if we did there would be a point $x \in \tilde{X}$ with  $\beta_{\kNoise}(x, \tilde{X})<\epsilonnoiseknn$. Thus if $\epsilonincluster \leq \thetathres < \epsilonnoiseknn$ then, after thresholding, the data will consist of the cluster cores $X_l$ with $\thetathres$-groups of at most $\kNoise$ noise points emanating from the cluster cores, where a $\thetathres$-group denotes a set of points where the LLPD between all pairs of points in the group is at most $\thetathres$. 

We assume LLPD is re-computed on the denoised data set $X_N$, whose cardinality we define to be $N$, and let $\rho^{X_N}_{\ell\ell}$ denote the corresponding LLPD metric. The points remaining after thresholding consist of the sets $\tilde{A}_l$, where 
\begin{equation}
\begin{array}{r@{}l}
A_l{}&= \{ x_i \in X_l \} \cup \{ x_i \in \tilde{X} \ |\ \rho_{\ell\ell}(x_i,x_j) \leq \epsilonincluster \text{ for some } x_j \in X_l \}, \\[.1cm]
C_l\ {}&= \{ x_i \in \tilde{X} \ |\  \epsilonincluster < \rho_{\ell\ell}(x_i, x_j)  \leq \thetathres \text{ for some } x_j \in X_l\}, \\[.1cm] \label{equ:SetDefs}
\tilde{A}_l\ {}&= A_l \cup C_l =\{ x_i \in X_l \} \cup \{ x_i \in \tilde{X} \ |\ \rho_{\ell\ell}(x_i,x_j) \leq \thetathres \text{ for some } x_j \in X_l \}.
\end{array}
\end{equation}

The cluster core $A_l$ consists of the points $X_l$ plus any noise points in $\tilde{X}$ that are indistinguishable from $X_l$, being within the maximal within-cluster LLPD of $X_l$. The set $C_l$ consists of the noise points in $\tilde{X}$ that are $\thetathres$-close to $X_l$ in LLPD. 


\subsubsection{Supporting Lemmata}
The following two lemmata are needed to prove Theorem \ref{thm:SpectralGapDataModel}, the main result of this subsection. The first one guarantees that the sets defined in (\ref{equ:SetDefs}) describe exactly the points which survive thresholding, that is $X_N = \cup_l \tilde{A}_l$.
\begin{lem}
\label{lem:thresholding}
Assume the LDLN data model and assumptions, and let $\tilde{A}_l$ be as in (\ref{equ:SetDefs}). If $\kNoise<\nmin$, $\epsilonincluster \leq \thetathres < \epsilonnoiseknn$, then $\beta_{\kNoise}(x, X) \leq \thetathres$ if and only if $x \in \tilde{A}_l$ for some $1 \leq l \leq \numclust$.
\end{lem}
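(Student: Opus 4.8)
The plan is to prove the two implications of the equivalence directly, using only the ultrametric property (\ref{defn:ultrametric}) of LLPD together with the definitions of $\beta_{\kNoise}$, $\epsilonincluster$, $\epsilonnoiseknn$, and the sets $\tilde A_l$. It is convenient to first record the elementary reformulation that $\beta_{\kNoise}(x,X)\le\thetathres$ if and only if $|\{y\in X\setminus\{x\}:\rho_{\ell\ell}(x,y)\le\thetathres\}|\ge\kNoise$, since $\beta_{\kNoise}(x,X)$ is nothing but the $\kNoise^{\text{th}}$ smallest value among the LLPDs from $x$ to the other points of $X$ (and $\beta_{\kNoise}$ is well defined because $\nmin>\kNoise$ forces $n>\kNoise$).

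For the ``if'' direction, suppose $x\in\tilde A_l$. If $x\in X_l$, then because $\kNoise<\nmin\le n_l$ there are at least $\kNoise$ other points of $X_l$, and each of them is within LLPD $\epsilonincluster\le\thetathres$ of $x$ by the definition of $\epsilonincluster$; hence at least $\kNoise$ points lie within $\thetathres$ of $x$ and $\beta_{\kNoise}(x,X)\le\epsilonincluster\le\thetathres$. If instead $x\in\tilde X$ with $\rho_{\ell\ell}(x,x_j)\le\thetathres$ for some $x_j\in X_l$, then for every $y\in X_l$ the ultrametric inequality gives $\rho_{\ell\ell}(x,y)\le\max\{\rho_{\ell\ell}(x,x_j),\rho_{\ell\ell}(x_j,y)\}\le\max\{\thetathres,\epsilonincluster\}=\thetathres$; since $n_l>\kNoise$, again $\beta_{\kNoise}(x,X)\le\thetathres$.

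For the ``only if'' direction I would argue the contrapositive: assume $x\notin\cup_l\tilde A_l$ and deduce $\beta_{\kNoise}(x,X)>\thetathres$. First, $x$ is then a noise point and $\rho_{\ell\ell}(x,z)>\thetathres$ for every cluster point $z\in\cup_l X_l$. The key step is to show that $\{y\in X\setminus\{x\}:\rho_{\ell\ell}(x,y)\le\thetathres\}=\{y\in\tilde X\setminus\{x\}:\rho^{\tilde X}_{\ell\ell}(x,y)\le\thetathres\}$. Indeed, if $y\in X\setminus\{x\}$ with $\rho_{\ell\ell}(x,y)\le\thetathres$ and we take a path $x=z_0,z_1,\dots,z_m=y$ in $X$ whose maximal leg equals $\rho_{\ell\ell}(x,y)$, then every initial segment witnesses $\rho_{\ell\ell}(x,z_i)\le\thetathres$, so no $z_i$ can be a cluster point (else $x\in\tilde A_l$), forcing $y\in\tilde X$ and the whole path to lie in $\tilde X$, whence $\rho^{\tilde X}_{\ell\ell}(x,y)\le\thetathres$; the reverse inclusion is immediate since restricting paths cannot decrease the min–max. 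Now if this common set had at least $\kNoise$ elements we would get $\beta_{\kNoise}(x,\tilde X)\le\thetathres<\epsilonnoiseknn$, contradicting $\epsilonnoiseknn=\min_{x'\in\tilde X}\beta_{\kNoise}(x',\tilde X)\le\beta_{\kNoise}(x,\tilde X)$. Hence the set has at most $\kNoise-1$ elements, which by the reformulation above is exactly the statement $\beta_{\kNoise}(x,X)>\thetathres$.

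The only genuinely substantive point is the ``an optimal LLPD-path of length $\le\thetathres$ emanating from $x$ cannot touch a cluster point'' step; this is what converts a statement about $\rho_{\ell\ell}$ in the full set $X$ into one about $\rho^{\tilde X}_{\ell\ell}$, thereby making the hypothesis $\thetathres<\epsilonnoiseknn$ (which is phrased in terms of $\rho^{\tilde X}_{\ell\ell}$) usable. Everything else is bookkeeping with the definitions and the ultrametric inequality, and I expect the write-up to be short.
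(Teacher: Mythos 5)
Your proof is correct and follows essentially the same route as the paper's: the ``if'' direction uses the ultrametric inequality together with $\kNoise<\nmin$ to place at least $\kNoise$ points of $X_l$ within LLPD $\thetathres$ of $x$, and the ``only if'' direction (which you phrase as a contrapositive rather than the paper's direct contradiction) derives $\beta_{\kNoise}(x,\tilde X)\le\thetathres<\epsilonnoiseknn$ from the existence of $\kNoise$ nearby noise points. The one place you are more careful than the paper is the explicit argument that an optimal path of legs $\le\thetathres$ from $x$ cannot touch a cluster point, which is genuinely needed to pass from $\rho_{\ell\ell}$ to the restricted metric $\rho^{\tilde X}_{\ell\ell}$ appearing in the definition of $\epsilonnoiseknn$; the paper uses this step implicitly.
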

\begin{proof}
Assume $\beta_{\kNoise}(x, X) \leq \thetathres$. If $x \in \cup_l X_l$, then clearly $x \in\cup_{l} \tilde{A}_l$, so assume $x \in \tilde{X}$. We claim there exists some $y \in \cup_lX_l $ such that $\rho_{\ell\ell}(x,y) \leq \thetathres$. Suppose not; then there exist $\kNoise$ points $\{x_i\}_{i=1}^{\kNoise}$ in $\tilde{X}$ distinct from $x$ with $\rho_{\ell\ell}(x, x_i) \leq \thetathres$; thus $\epsilonnoiseknn \leq \thetathres$, a contradiction. Hence, there exists $y \in X_l$ such that $\rho_{\ell\ell}(x,y) \leq \thetathres$ and $x \in \tilde{A}_l$. \\
Now assume $x \in \tilde{A}_l$ for some $1 \leq l \leq \numclust$. Then clearly there exists $y \in \cup_lX_l$ with $\rho_{\ell\ell}(x,y) \leq \thetathres$. Since $\epsilonincluster < \thetathres$, $x$ is within LLPD $\thetathres$ of all points in $X_l$, and since $\kNoise < \nmin, \beta_{\kNoise}(x, X) \leq \beta_{\nmin}(x,X) \leq \thetathres$.
\end{proof}

Next we show that when there is sufficient separation between the cluster cores, the LLPD between any two points in distinct clusters is bounded by $\delta/2$, and thus the assumptions of Theorem \ref{thm:SpectralGap} will be satisfied with $\epsilonsep = \delta/2$.

\begin{lem}
\label{lem:clustersep}
Assume the LDLN data model and assumptions, and assume $\epsilonincluster \leq \thetathres < \epsilonnoiseknn \wedge \delta/(4\kNoise)$, $A_l, C_l, \tilde{A}_l$ as defined in (\ref{equ:SetDefs}), and $\kNoise < \nmin.$  Then Assumption \ref{assumption:ultrametric} is satisfied with $\rho = \rho^{X_N}_{\ell\ell}$, $\epsilonsep = \delta/2$.
\end{lem}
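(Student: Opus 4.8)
The plan is to verify the three inequalities \eqref{equ:assump1}--\eqref{equ:assump3} of Assumption \ref{assumption:ultrametric} hold for the denoised data $X_N = \cup_l \tilde{A}_l$ (which equals the thresholded set by Lemma \ref{lem:thresholding}) with $\rho = \rho^{X_N}_{\ell\ell}$, $\epsilonincluster$ as in \eqref{e:epsilons}, the given $\thetathres$, and $\epsilonsep = \delta/2$. The key point is that $\rho^{X_N}_{\ell\ell}$ (paths restricted to $X_N$) can only be larger than or equal to $\rho_{\ell\ell}$ (paths in all of $X$), since restricting the set of admissible paths can only increase a min-over-paths quantity; conversely, when a witnessing path already lies in $X_N$, the two agree. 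So for the ``$\leq$'' statements \eqref{equ:assump1}--\eqref{equ:assump2} I would exhibit short paths lying entirely inside $X_N$, and for the ``$\geq$'' statement \eqref{equ:assump3} I would argue that any $X_N$-path between distinct clusters must contain a long leg.

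First, \eqref{equ:assump1} and \eqref{equ:assump2}: for $x^l_i, x^l_j \in A_l$, by definition \eqref{equ:SetDefs} each is within $\rho_{\ell\ell}$-distance $\epsilonincluster$ of $X_l$, and all of $X_l$ has pairwise $\rho_{\ell\ell}$ at most $\epsilonincluster$; I would check that the concatenated path realizing these bounds stays within $\cup_l \tilde{A}_l = X_N$ (every intermediate vertex is either in $X_l \subset A_l$ or is one of the two endpoints), so $\rho^{X_N}_{\ell\ell}(x^l_i,x^l_j) \le \epsilonincluster$ by the ultrametric inequality. Similarly, for $x^l_i \in A_l$, $x^l_j \in C_l$, concatenating gives $\rho^{X_N}_{\ell\ell}(x^l_i, x^l_j) \le \thetathres$, and the strict lower bound $\epsilonincluster < \rho^{X_N}_{\ell\ell}(x^l_i,x^l_j)$ follows since $\rho^{X_N}_{\ell\ell} \ge \rho_{\ell\ell}$ and $x^l_j \notin A_l$ forces $\rho_{\ell\ell}(x^l_i, x^l_j) > \epsilonincluster$ (otherwise $x^l_j$ would lie in $A_l$, using that $A_l$ is ``$\epsilonincluster$-closed'' in $X$ via the ultrametric property).

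The main work, and the expected obstacle, is \eqref{equ:assump3}: showing $\rho^{X_N}_{\ell\ell}(x,y) \ge \delta/2$ for $x \in \tilde{A}_l$, $y \in \tilde{A}_s$, $l \ne s$. Here is the idea. Any point in $\tilde{A}_l$ is within Euclidean distance roughly $\thetathres \cdot (\text{something like } \kNoise)$ of the cluster region $\X_l$ --- more precisely, a $\rho_{\ell\ell}$-path of max-leg $\le \thetathres$ from a noise point in $\tilde{A}_l$ to $X_l$ uses at most $\kNoise$ legs (by the $\epsilonnoiseknn$ bound controlling how long a chain of noise points can be), so its Euclidean distance to $\X_l$ is at most $\kNoise \thetathres$. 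Now take any path $P \subset X_N$ from $x$ to $y$; since $x$ is near $\X_l$ and $y$ is near $\X_s$, and $\dist(\X_l, \X_s) = \delta$, the path must traverse a Euclidean gap of at least $\delta - 2\kNoise\thetathres \ge \delta - \delta/2 = \delta/2$ using the hypothesis $\thetathres < \delta/(4\kNoise)$. But any single leg of $P$ has Euclidean length equal to the Euclidean distance between its endpoints, and the legs of $P$ connect $x$ to $y$; if every leg were shorter than $\delta/2$, the triangle inequality would let us stay within Euclidean distance $< 2\kNoise\thetathres + (\text{accumulated short legs})$... the cleaner formulation: the portion of $P$ lying in $B(\X_l, \kNoise\thetathres)$ and the portion in $B(\X_s, \kNoise\thetathres)$ are disjoint (since $\delta > 2\kNoise\thetathres$), so $P$ contains an edge with one endpoint outside $B(\X_s, \kNoise\thetathres)$ that must bridge the remaining distance, forcing a leg of length $\ge \delta/2$. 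I would need to be careful that all points of $X_N$ are indeed within $\kNoise\thetathres$ (Euclidean) of some $\X_j$ --- this uses Lemma \ref{lem:thresholding} together with the chain-length bound implicit in the definition of $\epsilonnoiseknn$ and the hypothesis $\thetathres < \epsilonnoiseknn$ --- and that cluster regions corresponding to different $j$ are themselves $\delta$-separated, so the ``halo neighborhoods'' $B(\X_j, \kNoise\thetathres)$ are pairwise disjoint. Assembling these observations gives $\rho^{X_N}_{\ell\ell}(x,y) \ge \delta/2$, completing the verification with $\epsilonsep = \delta/2 > \thetathres$.
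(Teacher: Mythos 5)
Your plan follows the paper's proof essentially step for step: verify the three conditions of Assumption \ref{assumption:ultrametric} for $\rho^{X_N}_{\ell\ell}$, use the fact that $\thetathres<\epsilonnoiseknn$ forbids chains of more than $\kNoise$ consecutive noise points to place every surviving point within Euclidean distance $\kNoise\thetathres$ of its cluster, and conclude $\dist(\tilde{A}_l,\tilde{A}_s)\ge \delta-2\kNoise\thetathres>\delta/2$ so that any path in $X_N=\cup_l\tilde{A}_l$ between distinct clusters has a leg of length at least $\delta/2$. One small correction to your parenthetical: the intermediate vertices of a witnessing path from $x^l_i\in A_l$ to $y_i\in X_l$ need not lie in $X_l$ --- they can be noise points --- but each such vertex $z$ satisfies $\rho_{\ell\ell}(z,y_i)\le\epsilonincluster$ and hence belongs to $A_l$ (or to some $X_{l'}\subset A_{l'}$) and survives thresholding, which is exactly the observation the paper packages as a contradiction argument ("if a point on the path were removed, it would have $\beta_{\kNoise}\le\thetathres$").
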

\begin{proof}
First note that if $x \in A_l$, then $\rho_{\ell\ell}(x,y)\leq \epsilonincluster$ for all $y\in X_l$, and thus $x \notin C_l$, so $A_l$ and $C_l$ are disjoint.

Let $x^l_i, x^l_j \in A_l$. Then there exists $y_i, y_j \in X_l$ with $\rho_{\ell\ell}(x^l_i, y_i) \leq \epsilonincluster$ and $\rho_{\ell\ell}(x^l_j, y_j) \leq \epsilonincluster$, so $\rho_{\ell\ell}(x^l_i,x^l_j) \leq \rho_{\ell\ell}(x^l_i,y_i)\vee \rho_{\ell\ell}(y_i,y_j)\vee \rho_{\ell\ell}(y_j,x^l_j) \leq \epsilonincluster$. Since $x^l_i, x^l_j$ were arbitrary, $\rho_{\ell\ell}(x^l_i,x^l_j) \leq \epsilonincluster$ for all $x^l_i, x^l_j \in A_l$. We now show that in fact $\rho^{X_N}_{\ell\ell}(x^l_i,x^l_j) \leq \epsilonincluster$. Suppose not. Since $\rho_{\ell\ell}(x^l_i,x^l_j) \leq \epsilonincluster$, there exists a path in $X$ from $x^l_i$ to $x^l_j$ with all legs bounded by $\epsilonincluster$. Since $\rho^{X_N}_{\ell\ell}(x^l_i,x^l_j)>\epsilonincluster$, one of the points along this path must have been removed by thresholding, i.e. there exists $y$ on the path with $\beta_{\kNoise}(y,X)>\thetathres$. But then for all $x^l \in A_l$, $\rho_{\ell\ell}(y, x^l)\leq \rho_{\ell\ell}(y, x_i^l) \vee \rho_{\ell\ell}(x_i^l, x^l)\leq \epsilonincluster$, so $\beta_{\kNoise}(y,X) \leq \epsilonincluster$ since $\kNoise < \nmin$; contradiction.

Let $x^l_i \in A_l, x^l_j \in C_l$. Then there exist points $y_i, y_j \in X_l$ such that $\rho_{\ell\ell}(x^l_i, y_i) \leq \epsilonincluster$ and $\epsilonincluster < \rho_{\ell\ell}(x^l_j, y_j) \leq \thetathres$. Thus $\rho_{\ell\ell}(x^l_i,x^l_j) \leq \rho_{\ell\ell}(x^l_i,y_i)\vee \rho_{\ell\ell}(y_i,y_j)\vee \rho_{\ell\ell}(y_j,x^l_j)
\leq \epsilonincluster \vee \epsilonincluster \vee \thetathres=\thetathres.$

Now suppose $\rho_{\ell\ell}(x^l_i,x^l_j) \leq \epsilonincluster$. Then $\rho_{\ell\ell}(x^l_j, y_i) \leq \rho_{\ell\ell}(x^l_j, x^l_i) \vee \rho_{\ell\ell}(x^l_i, y_i)\leq \epsilonincluster$ so that $x^l_j \in A_l$ since $y_i \in X_l$; this is a contradiction since $x^l_j \in C_l$ and $A_l$ and $C_l$ are disjoint. We thus conclude $\epsilonincluster < \rho_{\ell\ell}(x^l_i,x^l_j) \leq \thetathres$. Since $x^l_i, x^l_j$ were arbitrary, $\epsilonincluster < \rho_{\ell\ell}(x^l_i,x^l_j) \leq \thetathres$ for all $x^l_i \in A_l, x^l_j \in C_l$. We now show in fact $\epsilonincluster < \rho^{X_N}_{\ell\ell}(x^l_i,x^l_j) \leq \thetathres$. Clearly, $\epsilonincluster < \rho_{\ell\ell}(x^l_i,x^l_j)\leq \rho^{X_N}_{\ell\ell}(x^l_i,x^l_j)$. Now suppose $\rho^{X_N}_{\ell\ell}(x^l_i,x^l_j)>\thetathres$. Since $\rho_{\ell\ell}(x^l_i,x^l_j)\leq\thetathres$, there exists a path in $X$ from $x^l_i$ to $x^l_j$ with all legs bounded by $\thetathres$. Since $\rho^{X_N}_{\ell\ell}(x^l_i,x^l_j)>\thetathres$, one of the points along this path must have been removed by thresholding, i.e. there exists $y$ on the path with $\beta_{\kNoise}(y,X)>\thetathres$. But then for all $x^l \in \tilde{A}_l$, $\rho_{\ell\ell}(y, x^l)\leq \rho_{\ell\ell}(y, x_i^l) \vee \rho_{\ell\ell}(x_i^l, x^l)\leq \thetathres$, so $\beta_{\kNoise}(y,X) \leq \thetathres$ since $\kNoise < \nmin$, which is a contradiction.

Finally, we show we can choose $\epsilonsep =  \delta/2$, that is,  $\rho^{X_N}_{\ell\ell}(x^l_i,x^s_j) \geq \delta/2$ for all $x^l_i \in \tilde{A}_l, x^s_j \in \tilde{A}_s, l\ne s$. We first verify that every point in $\tilde{A}_l$ is within Euclidean distance $\thetathres\kNoise$ of a point in $X_l$. Let $x \in \tilde{A}_l$ and assume $x \in \tilde{X}$ (otherwise there is nothing to show). Then there exists a point $y \in X_l$ with $\rho_{\ell\ell}(x,y)\leq \thetathres$, i.e. there exists a path of points from $x$ to $y$ with the length of all legs bounded by $\thetathres$. Note there can be at most $\kNoise$ consecutive noise points along this path, since otherwise we would have a $z\in \tilde{X}$ with $\beta_{\kNoise}(z,\tilde{X}) \leq \thetathres$ which contradicts $\epsilonnoiseknn>\thetathres$. Let $y^*$ be the last point in $X_l$ on this path. Since $\theta<\delta/(4\kNoise)<\delta/(2\kNoise+1)$, $\text{dist}({X}_l,{X}_s)\geq \delta >2\thetathres \kNoise+\thetathres$, and the path cannot contain any points in $X_s, l\ne s$; thus the path from $y^*$ to $x$ consists of at most $\kNoise$ points in $\tilde{X}$, so $\|x-y^{*}\|_2 \leq \kNoise\thetathres$.  Thus: $\min_{1\leq l\ne s\leq \numclust}\text{dist}(\tilde{A}_l,\tilde{A}_s)\geq \min_{1\leq l\ne s\leq \numclust}\text{dist}(X_l,X_s) - 2\thetathres\kNoise \geq \delta -2\thetathres\kNoise > \delta/2$ since $\theta<\delta/(4\kNoise)$.  Now by Lemma \ref{lem:thresholding}, there are no points outside of $\cup_l \tilde{A}_l$ which survive thresholding, so we conclude $\rho^{X_N}_{\ell\ell}(x^l_i,x^s_j) \geq \delta/2$ for all $x^l_i \in \tilde{A}_l, x^s_j \in \tilde{A}_s$.
\end{proof}


\subsubsection{Main Result}

We now state our main result for LLPD spectral clustering with $k$NN LLPD thresholding.

\begin{thm}
\label{thm:SpectralGapDataModel}
Assume the LDLN data model and assumptions.
For a chosen $\thetathres$ and $\kNoise$, perform thresholding at level $\thetathres$ as above to obtain $X_N$,
and assume $\kNoise < \nmin$,  $\epsilonincluster \leq \thetathres < \epsilonnoiseknn \wedge \delta/(4\kNoise)$.
Then $\lambda_{\numclust+1}-\lambda_\numclust$ is the largest gap in the eigenvalues of $L_{\text{SYM}}(X_N,\rho^{X_N}_{\ell\ell},f_{\sigma})$ provided that
\begin{align} 
\label{equ:spectral_gap_inequal_PD}
\frac{1}{2} &\geq 5(1-f_{\sigma}(\epsilonincluster)) + 6\zeta_{N} f_{\sigma}(\delta/2) + 4 (1-f_{\sigma}(\thetathres)) +\beta\ ,
\end{align}
and clustering by distances on the $\numclust$ principal eigenvectors of $L_{\text{SYM}}(X_N,\rho^{X_N}_{\ell\ell},f_{\sigma})$ perfectly recovers the cluster labels provided that
\begin{align*}
\frac{C}{\numclust^3\zeta_{N}^2} \geq (1-f_{\sigma}(\epsilonincluster))+\zeta_{N} f_{\sigma}( \delta/2)+(1-f_{\sigma}(\thetathres)) + \beta\ ,
\end{align*}
where $\beta = O\left((1-f_{\sigma}(\epsilonincluster)^2+\zeta_{N}^2f_{\sigma}( \delta/2)^2+(1-f_{\sigma}(\thetathres))^2\right)$ denotes higher-order terms and $C$ is an absolute constant. 
In addition, for $n_{\min}$ large enough with probability at least $1-O(n_{\min}^{-1})$, $\zeta_{N} \leq 2\zeta_n+3\kNoise\zeta_\thetathres$ for the LDLN data model balance parameters $\zeta_n, \zeta_\thetathres$.

\end{thm}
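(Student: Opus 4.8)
The first two conclusions carry essentially no new content once the supporting lemmata are available, so I would dispatch them first. Lemma~\ref{lem:thresholding} identifies the surviving set $X_N$ with $\cup_l\tilde A_l$, and Lemma~\ref{lem:clustersep} shows that under the standing hypotheses ($\kNoise<\nmin$, $\epsilonincluster\le\thetathres<\epsilonnoiseknn\wedge\delta/(4\kNoise)$) the sets $A_l,C_l$ together with the induced metric $\rho^{X_N}_{\ell\ell}$ satisfy Assumption~\ref{assumption:ultrametric} with $\epsilonsep=\delta/2$; here one uses that a path-restricted LLPD is again an ultrametric. Feeding this into Theorem~\ref{thm:SpectralGap} verbatim, with $f_\sigma(\epsilonsep)$ specialized to $f_\sigma(\delta/2)$, reproduces inequality~(\ref{equ:spectral_gap_inequal_PD}) as the condition for $\lambda_{\numclust+1}-\lambda_\numclust$ to be the largest eigengap, and the displayed analogue of~(\ref{equ:eigenvec_inequality_ultra}) as the condition for clustering by distances on the $\numclust$ principal eigenvectors to recover the labels perfectly.

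\textbf{The bound $\zeta_N\le 2\zeta_n+3\kNoise\zeta_\thetathres$.} This is the real work. Write $\zeta_N=N/\min_l|\tilde A_l|$ with $N=\sum_l|\tilde A_l|$ and $|\tilde A_l|=n_l+m_l$, where $m_l$ counts the surviving noise points attached to cluster $l$; I would control $m_l$ on both sides by $M_l:=|\tilde X\cap(B(\X_l,\thetathres)\setminus\X_l)|$, which is $\Bin(\tilde n,p_{l,\thetathres})$. For the upper bound, consider the graph on $\tilde X$ with edges between noise points at Euclidean distance $\le\thetathres$: by the ultrametric inequality any two vertices of a common connected component satisfy $\rho^{\tilde X}_{\ell\ell}\le\thetathres$, so each component has at most $\kNoise$ vertices, for a larger one would contain a point with $\beta_{\kNoise}(\cdot,\tilde X)\le\thetathres$, contradicting $\thetathres<\epsilonnoiseknn$. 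A surviving noise point must lie in a component touching some $X_l$ (otherwise it has fewer than $\kNoise$ LLPD-$\thetathres$ neighbours in $X$ and is thresholded away), and since a component touching two clusters would yield a Euclidean path of length $\le(\kNoise+1)\thetathres<\delta$ between them, each surviving component attaches to a unique cluster; choosing one contact point per such component (a noise point within Euclidean distance $\thetathres$ of $X_l$, hence in $B(\X_l,\thetathres)\setminus\X_l$) gives distinct points, whence $m_l\le\kNoise M_l$. For the lower bound, any noise point within Euclidean distance $\thetathres$ of the \emph{sample} set $X_l$ lies in $\tilde A_l$ and survives (it has $\ge\nmin>\kNoise$ LLPD-$\thetathres$ neighbours); combined with the within-cluster covering estimates behind Theorems~\ref{thm:WithinCluster} and~\ref{thm:WithinLowDimCluster}, which for $\nmin$ large make $X_l$ a fine Euclidean net of $\X_l$, this gives $m_l\ge|\tilde X\cap(B(\X_l,c\thetathres)\setminus\X_l)|$ for an absolute $c\in(0,1)$, a binomial whose mean is $\gtrsim\tilde n\,p_{l,\thetathres}$ up to a constant depending only on $D,d$.

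\textbf{Assembly, and the main obstacle.} A Chernoff bound and a union bound over $l=1,\dots,\numclust$ then give, with probability $1-O(\nmin^{-1})$ for $\nmin$ large, $m_l\le 2\kNoise\,\tilde n\,p_{l,\thetathres}+O(\log n)$ and $m_l\ge c_0\,\tilde n\,p_{l,\thetathres}-O(\log n)$ simultaneously, for a positive constant $c_0$ depending only on $D,d$. Summing, $N\le\sum_l n_l+2\kNoise\,\tilde n\sum_l p_{l,\thetathres}+O(\numclust\log n)$, while $\min_l|\tilde A_l|\ge\nmin$ and $\min_l|\tilde A_l|\ge c_0\,\tilde n\,p_{\min,\thetathres}-O(\log n)$; bounding the cluster part of $N$ over $\nmin$ and its noise part over the second quantity, absorbing the $O(\numclust\log n)$ remainder into an extra copy of $\zeta_n$ (valid once $\nmin$ dominates $\numclust\log n$), and tidying constants yields $\zeta_N\le 2\zeta_n+3\kNoise\zeta_\thetathres$. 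I expect the delicate point to be the lower bound on $m_l$: it forces one to quantify how densely the random sample $X_l$ fills $\X_l$ (so that noise near the region $\X_l$ is provably near the sample $X_l$ and therefore survives) and to handle the borderline case $\epsilonincluster\le\thetathres$, where the shrink constant $c$ must be chosen so the covering radius of $X_l$ does not swallow all of $\thetathres$; by contrast the connected-component argument for the upper bound and the reduction to Theorem~\ref{thm:SpectralGap} are routine.
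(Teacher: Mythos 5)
Your reduction of the two spectral conclusions to Theorem \ref{thm:SpectralGap} via Lemmas \ref{lem:thresholding} and \ref{lem:clustersep} is exactly the paper's argument, and your treatment of $\zeta_{N}$ follows the same route: the paper likewise sandwiches $m_l$ between $\kNoise\,\omega_l$ (with $\omega_l\sim\Bin(\tilde n,p_{l,\thetathres})$ counting noise points in the tube $B(\X_l,\thetathres)\setminus\X_l$) and a binomial $\omega_{l,2}\sim\Bin(\tilde n,p_{l,\thetathres}/2)$ obtained from the convergence $B(X_l,\thetathres)\to B(\X_l,\thetathres)$ for $\nmin$ large, then applies multiplicative Chernoff bounds and a union bound over $l$. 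Your connected-component justification of the upper bound $m_l\le\kNoise\omega_l$ is in fact more explicit than the paper's one-line claim, and your shrunk-tube version of the lower bound is a cosmetic variant of the paper's volume comparison at radius $\thetathres$.

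The one genuine gap is in your final assembly. Dividing the noise part of $N$ by $c_0\,\tilde n\,p_{\min,\thetathres}-O(\log n)$ only works in the high-noise regime $\tilde n\,p_{\min,\thetathres}\gtrsim\nmin$: when $\tilde n\,p_{\min,\thetathres}$ is of order $\log n$ or smaller, that denominator can vanish or go negative, and the lower-tail Chernoff bound for $\omega_{l,2}$ gives nothing useful, so the quotient does not reduce to $\zeta_\thetathres$. The paper handles this with an explicit case split: when $\tilde n\,p_{\min,\thetathres}\le\nmin$ it abandons the lower bound on $m_l$ entirely, bounds $\min_l|\tilde A_l|\ge\nmin$, controls $\omega_i\le 2\tilde n p_{i,\thetathres}+6\log(\numclust\nmin)$ with a Poisson-regime Chernoff bound, and uses $\tilde n\,p_{i,\thetathres}\le\nmin\,p_{i,\thetathres}/p_{\min,\thetathres}$ to convert the noise contribution into $2\kNoise\zeta_\thetathres$; the two cases combine to $\zeta_{N}\le 2\zeta_n+3\kNoise\zeta_\thetathres$. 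You have both denominator bounds written down, so the repair is immediate, but as stated your single-formula conclusion does not cover the low-noise case.
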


\begin{proof}
Define the sets $A_l, C_l, \tilde{A}_l$ as in (\ref{equ:SetDefs}). By Lemma \ref{lem:thresholding}, removing all points satisfying $\beta_{\kNoise}(x,X)>\thetathres$ leaves us with exactly $X_N=\cup_l \tilde{A}_l$. By Lemma \ref{lem:clustersep}, all assumptions of Theorem \ref{thm:SpectralGap} are satisfied for ultrametric $\rho^{X_N}_{\ell\ell}$ and we can apply Theorem \ref{thm:SpectralGap} with $\epsilonincluster, \epsilonnoiseknn$ as defined in Subsection \ref{subsec:assumptions} and $\epsilonsep = \delta/2$. 
All that remains is to verify the bound on $\zeta_{N}$. 

Recall $\tilde{A}_l= X_l \cup \{ x_i \in \tilde{X} \ |\ \rho_{\ell\ell}(x_i,x_j) \leq \thetathres \text{ for some } x_j \in X_l \}$; let $m_l$ denote the cardinality of $\{ x_i \in \tilde{X} \ |\ \rho_{\ell\ell}(x_i,x_j) \leq \thetathres \text{ for some } x_j \in X_l \}$ so that
$\zeta_{N} = \max_{1\leq l\leq\numclust} \frac{\sum_{i=1}^\numclust n_i+m_i}{n_l+m_l}.$
For $1 \leq l \leq \numclust$, let $\omega_l = \sum_{x \in \tilde{X}} \mathbbm{1}_{x \in B(\X_l,\thetathres)\setminus \X_l }$ 
denote the number of noise points that fall within a tube of width $\thetathres$ around the cluster region $\X_l$. Note that $\omega_l\distras{}\Bin(\tilde{n},p_{l,\thetathres})$ where $p_{l,\thetathres} = \volD(B(\X_l,\thetathres)\setminus\X_l)/\volD(\tilde{\X})$ is as defined in Section \ref{subsec:assumptions}. The assumptions of Theorem \ref{thm:SpectralGapDataModel} guarantee that $m_l \leq \kNoise\omega_l$, since $\omega_l$ is the number of groups attaching to $\X_l$, and each group consists of at most $\kNoise$ noise points. To obtain a lower bound for $m_l$, note that $m_l \geq \sum_{x \in \tilde{X}} \mathbbm{1}_{x \in B(X_l,\thetathres)\setminus \X_l }$, where  $B(X_l,\thetathres) \subset B(\X_l,\thetathres)$ is formed from the discrete sample points $X_l$. Since $B(X_l,\thetathres) \rightarrow B(\X_l,\thetathres)$ as $n_l \rightarrow \infty$, for $n_{\min}$ large enough $\volD(B(X_l,\thetathres)\setminus \X_l) \geq \frac{1}{2}\volD(B(\X_l,\thetathres)\setminus \X_l )$, and $m_l \geq \omega_{l,2}$ where $\omega_{l,2}\distras{}\Bin(\tilde{n},p_{l,\thetathres}/2)$. \\
We first consider the high noise case $\tilde{n}p_{\min,\thetathres} \geq n_{\min}$, and define $\zeta_l = \frac{\sum_{i=1}^\numclust n_i+m_i}{n_l+m_l}$.  We have
	\begin{align*}
	\zeta_l &\leq \frac{\sum_{i=1}^\numclust n_i}{n_l} + \frac{\sum_{i=1}^\numclust m_i}{m_l} 
	\leq \frac{\sum_{i=1}^\numclust n_i}{n_l}+ \kNoise\frac{\sum_{i=1}^\numclust \omega_i}{\omega_{l,2}}.
	\end{align*}
	A multiplicative Chernoff bound \citep{hagerup1990guided} gives $\Prob(\omega_i\geq(1+\delta_1)\tilde{n}p_{i,\thetathres})\leq \exp(-\delta_1^2\tilde{n}p_{i,\thetathres}/3)\leq \exp(-\delta_1^2n_{\min}/3)$ for any $0\leq \delta_1\leq 1$. Choosing $\delta_1 = \sqrt{3\log(\numclust n_{\min})/n_{\min}}$ and taking a union bound gives $\omega_i \leq (1+\delta_1)\tilde{n}p_{i,\thetathres}$ for all $1\leq i\leq \numclust$ with probability at least $1-n_{\min}^{-1}$. A lower Chernoff bound also gives $\Prob(\omega_{i,2}\leq(1-\delta_2)\tilde{n}p_{i,\thetathres}/2) \leq \exp(-\delta_2^2\tilde{n}p_{i,\thetathres}/4) \leq \exp(-\delta_2^2n_{\min}/4)$ for any $0\leq \delta_2\leq 1$ and choosing $\delta_2 = \sqrt{4\log(\numclust n_{\min})/n_{\min}}$ gives $\omega_{i,2} \geq (1-\delta_2)\tilde{n}p_{i,\thetathres}/2$ for all $1\leq i\leq \numclust$ with probability at least $1-n_{\min}^{-1}$. Thus with probability at least $1-O(n_{\min}^{-1})$, one has
	\begin{align*}
	\frac{\sum_{i=1}^\numclust \omega_i}{\omega_{2,l}} \leq \frac{\sum_{i=1}^\numclust 2(1+\delta_1)\tilde{n}p_{i,\thetathres}}{(1-\delta_2)\tilde{n}p_{l,\thetathres}} \leq 3\frac{\sum_{i=1}^\numclust p_{i,\thetathres}}{p_{l,\thetathres}}
	\end{align*}
	for all $1\leq l \leq \numclust$ for $n_{\min}$ large enough, giving $\zeta_{N} =\max_{1\leq l\leq \numclust} \zeta_l \leq \zeta_n + 3\zeta_\thetathres.$
	
	We next consider the small noise case $\tilde{n}p_{\min,\thetathres} \leq n_{\min}$. A Chernoff bound gives $\Prob(\omega_i \geq (1+(\delta_i\vee\sqrt{\delta_i})\tilde{n}p_{i,\thetathres}) \leq \exp(-\delta_i^2\tilde{n}p_{i,\thetathres}/3)$ for any $\delta_i \geq 0$. We choose $\delta_i=3\log(\numclust n_{\min})/(\tilde{n}p_{i,\thetathres})$, so that with probability at least $1-n_{\min}^{-1}$ we have
	\begin{align*}
	\omega_i &\leq (1+(\delta_i\vee\sqrt{\delta_i})\tilde{n}p_{i,\thetathres})
	\leq 2\tilde{n}p_{i,\thetathres} + 6\log(\numclust n_{\min})
	\end{align*}
	for all $1\leq i\leq \numclust$ and we obtain for $n_{\min}$ large enough
	\begin{align*}
	\zeta_{N} 
	&\leq \frac{\sum_{i=1}^\numclust n_i+\kNoise \omega_i}{n_{\min}} \\
	&\leq \frac{\sum_{i=1}^\numclust n_i+\kNoise (2\tilde{n}p_{i,\thetathres} + 6\log(\numclust n_{\min}))}{n_{\min}} \\
	&\leq \frac{\sum_{i=1}^\numclust 2n_i+2\kNoise\tilde{n}p_{i,\thetathres}}{n_{\min}} \\
	&\leq \frac{\sum_{i=1}^\numclust 2n_i+2\kNoise n_{\min} p_{i,\thetathres}/p_{\min,\thetathres}}{n_{\min}} \\
	&= 2\zeta_n + 2\kNoise\zeta_\thetathres.
	\end{align*}
	Combining the two cases $\zeta_{N}  \leq 2\zeta_n + 3\kNoise\zeta_\thetathres$ that with probability at least $1-O(n_{\min}^{-1})$.
\end{proof}

Theorem \ref{equ:spectral_gap_inequal_PD} illustrates that after thresholding, the number of clusters can be reliably estimated by the maximal eigengap for the range of $\sigma$ values where \eqref{equ:spectral_gap_inequal_PD} holds. The following corollary combines what we know about the behavior of $\epsilonincluster$ and $\epsilonnoiseknn$ for the LDLN data model (as analyzed in Section \ref{sec:FiniteSampleAnalysis}) with the derived performance guarantees for spectral clustering to give the range of $\thetathres, \sigma$ values where $\lambda_{\numclust+1} - \lambda_{\numclust}$ is the largest gap with high probability. We remind the reader that although the LDLN data model assumes uniform sampling, Theorem \ref{equ:spectral_gap_inequal_PD} and Corollary \ref{cor:sigmarange} can easily be extended to a more general sampling model.

\begin{cor}
\label{cor:sigmarange}
Assume the notation of Theorem \ref{thm:SpectralGapDataModel} holds. Then for $\nmin$ large enough, for any $\tau < \frac{C_1}{8} \nmin^{-(d+1)} \wedge \frac{\epsilon_0}{5}$ and any
\begin{align}
\label{equ:thetarange}
 C_1\nmin^{-\frac{1}{d+1}} &\leq \thetathres \leq \left[C_2\tilde{n}^{-\left(\frac{\kNoise+1}{\kNoise}\right)\frac{1}{D}}\right] \wedge \delta (4\kNoise)^{-1},
\end{align}
we have that 
$\lambda_{\numclust+1}-\lambda_\numclust$ is the largest gap in the eigenvalues of $L_{\text{SYM}}$ with high probability, provided that
\begin{align}
\label{equ:sigmarange}
 C_3\thetathres &\leq \sigma \leq \frac{C_4\delta}{f_1^{-1}(C_5(\zeta_n+\kNoise\zeta_\thetathres)^{-1})}
\end{align}
where all $C_i$ are constants independent of $n_1, \ldots, n_{\numclust}, \tilde{n}, \thetathres, \sigma$.
\end{cor}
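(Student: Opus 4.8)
The plan is to treat Corollary~\ref{cor:sigmarange} as the explicit LDLN-data instantiation of Theorem~\ref{thm:SpectralGapDataModel}: plug the finite-sample estimates of Section~\ref{sec:FiniteSampleAnalysis} into the hypotheses of that theorem to check they hold for every $\thetathres$ in the range~\eqref{equ:thetarange}, and then solve the (deterministic) kernel inequality~\eqref{equ:spectral_gap_inequal_PD} for $\sigma$ with the Gaussian $f_\sigma(x)=e^{-x^2/\sigma^2}$, reading off the window~\eqref{equ:sigmarange}.

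First I would build a single high-probability event carrying everything. By Corollary~\ref{cor:WithinCluster_tau_small} (or Theorem~\ref{thm:WithinLowDimCluster} when $\tau=0$), for $\nmin$ large enough $\epsilonincluster \le C_1\nmin^{-1/(d+1)}$ with probability $1-O(\nmin^{-1})$: taking $\epsilon=C_1\nmin^{-1/(d+1)}$ and confidence $t\asymp\nmin^{-1}$, the sampling condition~\eqref{equ:tau_small_sampling_cond} reduces to $\nmin^{1/(d+1)}\gtrsim\log\nmin$, true for $\nmin$ large, while the hypothesis $\tau<\tfrac{C_1}{8}\nmin^{-(d+1)}\wedge\tfrac{\epsilon_0}{5}$ gives the required $\tau<\epsilon/8\wedge\epsilon_0/5$. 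By Theorem~\ref{thm:knnLLPD} (cf.\ Remark~\ref{rem:kNNLLPD}) one gets $\epsilonnoiseknn > C_2\tilde n^{-\frac{\kNoise+1}{\kNoise D}}$ with high probability, where $C_2$ is chosen slightly below the geometric constant there so the inequality is strict. Finally Theorem~\ref{thm:SpectralGapDataModel} itself provides $\zeta_N\le 2\zeta_n+3\kNoise\zeta_\thetathres$ with probability $1-O(\nmin^{-1})$. On the intersection of these events (union bound), the range~\eqref{equ:thetarange} together with $\epsilonincluster\le C_1\nmin^{-1/(d+1)}$ forces $\epsilonincluster\le\thetathres\le\bigl(C_2\tilde n^{-\frac{\kNoise+1}{\kNoise D}}\bigr)\wedge\tfrac{\delta}{4\kNoise}$, hence $\epsilonincluster\le\thetathres<\epsilonnoiseknn\wedge\delta/(4\kNoise)$; since also $\kNoise<\nmin$ for $\nmin$ large (as $\kNoise=O(\log\tilde n)$), all hypotheses of Theorem~\ref{thm:SpectralGapDataModel} are met.

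It then suffices to derive~\eqref{equ:sigmarange} from~\eqref{equ:spectral_gap_inequal_PD}. Using $1-e^{-x^2/\sigma^2}\le x^2/\sigma^2$ together with $\epsilonincluster\le\thetathres$, the cluster-coherence and noise terms satisfy $5(1-f_\sigma(\epsilonincluster))+4(1-f_\sigma(\thetathres))\le 9\thetathres^2/\sigma^2$, which is below any prescribed small constant once $\sigma\ge C_3\thetathres$ with $C_3$ large. For the cluster-separation term, $f_\sigma(\delta/2)=f_1(\delta/(2\sigma))$ with $f_1(x)=e^{-x^2}$, so $\zeta_N\le 3(\zeta_n+\kNoise\zeta_\thetathres)$ gives $6\zeta_N f_\sigma(\delta/2)\lesssim(\zeta_n+\kNoise\zeta_\thetathres)f_1(\delta/(2\sigma))$, which falls below the prescribed constant exactly when $f_1(\delta/(2\sigma))\le C_5(\zeta_n+\kNoise\zeta_\thetathres)^{-1}$, i.e.\ $\delta/(2\sigma)\ge f_1^{-1}\!\bigl(C_5(\zeta_n+\kNoise\zeta_\thetathres)^{-1}\bigr)$, i.e.\ $\sigma\le C_4\delta\,/\,f_1^{-1}\!\bigl(C_5(\zeta_n+\kNoise\zeta_\thetathres)^{-1}\bigr)$. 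Choosing $C_3$ large and $C_5$ small so the three first-order terms sum to at most $1/4$ makes the remainder $\beta=O\bigl((1-f_\sigma(\epsilonincluster))^2+\zeta_N^2 f_\sigma(\delta/2)^2+(1-f_\sigma(\thetathres))^2\bigr)$ at most $1/4$ as well, so~\eqref{equ:spectral_gap_inequal_PD} holds and Theorem~\ref{thm:SpectralGapDataModel} gives that $\lambda_{\numclust+1}-\lambda_\numclust$ is the largest eigengap; one checks along the way that $C_1,\dots,C_5$ depend only on $\numclust,d,D,\kappa,\tau,\{\vold(S_l)\}_{l=1}^\numclust,\volD(\tilde\X)$ and not on $n_1,\dots,n_\numclust,\tilde n,\thetathres,\sigma$.

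I expect the main obstacle to be bookkeeping rather than any single estimate: one must fix the constants produced along the way --- $C_1$ from the within-cluster sampling bound, $C_2$ from the $k$NN-LLPD bound, $C_3,C_4,C_5$ from solving~\eqref{equ:spectral_gap_inequal_PD} --- consistently as functions of the geometric data alone, and verify that the several ``$\nmin$ large enough'' requirements (the sampling condition, $\kNoise<\nmin$, the $\zeta_N$ concentration, and that the individual failure probabilities combine to \emph{high probability}) are simultaneously met. A secondary subtlety is the interface between the probabilistic Section~\ref{sec:FiniteSampleAnalysis} bounds and the deterministic Theorem~\ref{thm:SpectralGap}/\ref{thm:SpectralGapDataModel} machinery: the event on which $\epsilonincluster,\epsilonnoiseknn$ take their claimed values must coincide with the one on which $\zeta_N$ is controlled, so that a single union bound delivers the stated conclusion.
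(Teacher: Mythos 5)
Your proposal is correct and follows essentially the same route as the paper: instantiate $\epsilonincluster \leq C_1\nmin^{-1/(d+1)}$ via Corollary \ref{cor:WithinCluster_tau_small} and $\epsilonnoiseknn \geq C_2\tilde n^{-\frac{\kNoise+1}{\kNoise D}}$ via Theorem \ref{thm:knnLLPD}, verify that \eqref{equ:thetarange} then places $\thetathres$ in the window $[\epsilonincluster,\ \epsilonnoiseknn \wedge \delta/(4\kNoise))$ required by Theorem \ref{thm:SpectralGapDataModel}, and solve \eqref{equ:spectral_gap_inequal_PD} for $\sigma$ using the bound $\zeta_N \leq 2\zeta_n + 3\kNoise\zeta_\thetathres$. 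The only cosmetic difference is that you linearize $1-f_\sigma$ explicitly to get the lower bound $\sigma \geq C_3\thetathres$, whereas the paper writes the equivalent condition $\sigma \geq \thetathres/f_1^{-1}(C_8)$ directly.
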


\begin{proof}

By Corollary \ref{cor:WithinCluster_tau_small}, for $\nmin$ large enough, $\epsilonincluster$ satisfies $ \nmin \lesssim \epsilonincluster^{-d}\log(\epsilonincluster^{-d}) \leq \epsilonincluster^{-(d+1)},$ i.e. $\epsilonincluster \leq C_1 \nmin^{-\frac{1}{d+1}}$ with high probability, as long as $\tau < \frac{C_1}{8} \nmin^{-\frac{1}{d+1}} \wedge \frac{\epsilon_0}{5}$. 
%
%
%
%
By Theorem \ref{thm:knnLLPD}, with high probability $\epsilonnoiseknn \geq C_2 \tilde{n}^{-\frac{\kNoise+1}{\kNoise D}}$.
We now apply Theorem \ref{thm:SpectralGapDataModel}. Note that for an appropriate choice of constants, the assumptions of Corollary \ref{cor:sigmarange} guarantee $\epsilonincluster \leq \thetathres < \epsilonnoiseknn \wedge \delta/(4 \kNoise)$ with high probability. 
There exist constants $C_6, C_7, C_8$ independent of $n_1, \ldots, n_{\numclust}, \tilde{n}, \thetathres, \sigma$, such that inequality (\ref{equ:spectral_gap_inequal_PD}) is guaranteed as long as $f_{\sigma}(\epsilonincluster) \geq C_6$, $\zeta_{N} f_{\sigma}(\delta/2) \leq C_7$, and $f_{\sigma}(\thetathres) \geq C_8$. Solving for $\sigma$, we obtain $\left(\frac{\epsilonincluster}{f_1^{-1}(C_6)}\vee \frac{\thetathres}{f_1^{-1}(C_8)}\right) \leq \sigma \leq \frac{\delta}{2f_1^{-1}(C_7\zeta_{N}^{-1})}.$
Combining with our bound for $\epsilonincluster$ and recalling $\zeta_{N} \leq 2\zeta_n+3\kNoise\zeta_\thetathres$ with high probability by Theorem \ref{thm:SpectralGapDataModel}, this is implied by (\ref{equ:thetarange}) and (\ref{equ:sigmarange}) and for an appropriate choice of relabelled constants.

\end{proof}

This corollary illustrates that when $\tilde{n}$ is small relative to $\nmin^{ \frac{D}{d+1}\left(\frac{\kNoise}{\kNoise+1}\right)}$, we obtain a large range of values of both the thresholding parameter $\thetathres$ and scale parameter $\sigma$ where the maximal eigengap heuristic correctly identifies the number of clusters, i.e. LLPD spectral clustering is robust with respect to both of these parameters.


\subsubsection{Parameter Selection}\label{subsubsec:ParameterSelection}

In terms of implementation, only the parameters $\kNoise$ and $\thetathres$ must be chosen, and then $\Lsym$ can be computed for a range of $\sigma$ values.  Ideally $\kNoise$ is chosen to maximize the upper bound in (\ref{equ:thetarange}), since $\tilde{n}^{-\left(\frac{\kNoise+1}{\kNoise}\right)\frac{1}{D}}$ is increasing in $\kNoise$ while $\delta/\kNoise$ is decreasing in $\kNoise$.  Numerical experiments indicate robustness with respect to this parameter choice, and $\kNoise=20$ was used for all experiments reported in Section \ref{sec:NumericalExperiments}. 

Regarding the thresholding parameter $\thetathres$, ideally $\thetathres =\epsilonincluster$, since this guarantees that all cluster points will be kept and the maximal number of noise points will be removed, i.e. we have perfectly denoised the data.  However, $\epsilonincluster$ is not known explicitly and must be estimated from the data.  In practice the thresholding can be done by computing $\beta_{\kNoise}(x,X)$ for all data points and clustering the data points into groups based on these values, or by choosing $\thetathres$ to correspond to the elbow in a graph of the sorted nearest neighbor distances as illustrated in Section \ref{sec:NumericalExperiments}. This latter approach for estimating $\thetathres$ is very similar to the proposal in \citet{Ester1996} for estimating the scale parameter in DBSCAN, although we use LLPD instead of Euclidean nearest neighbor distances. Note the thresholding procedure precedes the application of the spectral clustering kernel; it can be done once and then $\Lsym$ computed for various $\sigma$ values. 


\subsection{Comparison with Related Methods}  

We now theoretically compare the proposed method with related methods.  LLPD spectral clustering combines spectral graph methods with a notion of distance that incorporates density, so we naturally focus on comparisons to spectral clustering and density-based methods.  

\subsubsection{Comparison with Theoretical Guarantees for Spectral Clustering}

Our results on the eigengap and misclassification rate for LLPD spectral clustering are naturally comparable to existing results for Euclidean spectral clustering. \citet{Arias2011, AriasChen2011, Arias2017} made a series of contributions to the theory of spectral clustering performance guarantees.  We focus on the results in \citet{Arias2011}, where the author proves performance guarantees on spectral clustering by considering the same data model as the one proposed in the present article, and proceeds by analyzing the corresponding Euclidean weight matrix. 

Our primary result, Theorem \ref{thm:SpectralGapDataModel}, is most comparable to Proposition 4 in \citet{Arias2011}, which estimates $\lambda_{\numclust}\le Cn^{-3}, \lambda_{\numclust+1}\ge Cn^{-2}$ for some constant $C$.  From the theoretical point of view, this does not necessarily mean $\lambda_{\numclust+1}-\lambda_{\numclust}\ge \lambda_{l+1}-\lambda_{l}, l\neq \numclust$.  Compared to that result, Theorem \ref{thm:SpectralGapDataModel} enjoys a much stronger conclusion for guaranteeing the significance of the eigengap.  From a practical point of view, it is noted in \citet{Arias2011} that Proposition 4 is not a useful condition for actual data.  Our method is shown to correctly estimate the eigengap in both high-dimensional and noisy settings, where the eigengap with Euclidean distance is uninformative; see Section \ref{sec:NumericalExperiments}.

Theorem \ref{thm:SpectralGapDataModel} also provides conditions guaranteeing LLPD spectral clustering achieves perfect labeling accuracy.  The proposed conditions are sufficient to guarantee the representation of the data in the coordinates of the principal eigenvectors of the LLPD Laplacian is a perfect representation.  An alternative approach to ensuring spectral clustering accuracy is presented in \cite{Schiebinger2015}, which develops the notion of the \emph{orthogonal cone property (OCP)}.  The OCP characterizes low-dimensional embeddings that represent points in distinct clusters in nearly orthogonal directions.  Such embeddings are then easily clustered with, for example, $\numclust$-means.  The two crucial parameters in the approach of \cite{Schiebinger2015} measure how well-separated each cluster is from the others, and how internally well-connected each distinct cluster is.  The results of Section \ref{sec:FiniteSampleAnalysis} prove that under the LDLN data model, points in the same cluster are very close together in LLPD, while points in distinct clusters are far apart in LLPD.  In this sense, the results of  \cite{Schiebinger2015}  suggest that LLPD spectral clustering ought to perform well in the LDLN regime.  Indeed, the LLPD is nearly invariant to cluster geometry, unlike Euclidean distance.  As clusters become more anisotropic, the within-cluster distances stay almost the same when using LLPD, but increase when using Euclidean distances.  In particular, the framework of \cite{Schiebinger2015} implies that performance of LLPD spectral clustering will degrade slowly as clusters are stretched, while performance of Euclidean spectral clustering will degrade rapidly.  We remark that the OCP framework has been generalized to continuum setting for the analysis of mixture models \citep{Trillos2019_Geometric}.  

This observation may also be formulated in terms of the spectrum of the Laplacian.  For the (continuous) Laplacian $\Delta$ on a domain $\mathcal{M}\subset\mathbb{\mathbb{R}^{D}}$, \citet{Szego1954_Inequalities, Weinberger1956_Isoperimetric} prove that among unit-volume domains, the second Neumann eigenvalue $\lambda_{2}(\Delta)$ is minimal when the underlying $\mathcal{M}$ is the ball.  One can show that as the ball becomes more elliptical in an area-preserving way, the second eigenvalue of the Laplacian decreases.  Passing to the discrete setting \citep{Trillos2018_Error}, this implies that as clusters become more elongated and less compact, the second eigenvalues on the individual clusters (ignoring between-cluster interactions, as proposed in \cite{Maggioni2018_Learning}) decreases.  Spectral clustering performance results are highly dependent on these second eigenvalues of the Laplacian when localized on individual clusters \citep{Arias2011, Schiebinger2015}, and in particular performance guarantees weaken dramatically as they become closer to 0.  In this sense, Euclidean spectral clustering is not robust to elongating clusters.  LLPD spectral clustering, however, uses a distance that is nearly invariant to this kind of geometric distortion, so that the second eigenvalues of the LLPD Laplacian localized on distinct clusters stay far from 0 even in the case of highly elongated clusters.  In this sense, LLPD spectral clustering is more robust than Euclidean spectral clustering for elongated clusters.

The same phenomenon is observed from the perspective of graph-cuts.  It is well-known \citep{Shi2000} that spectral clustering on the graph with weight matrix $W$ approximates the minimization of the multiway normalized cut functional $$\Ncut(C_{1},C_{2},\dots,C_{\numclust})=\argmin_{(C_{1},C_{2},\dots,C_{\numclust})} \sum_{k=1}^{\numclust}\frac{W(C_{k},X\setminus C_{k})}{\vol(C_{k})},$$ where $$W(C_{k},X\setminus C_{k})=\sum_{x_{i}\in C_{k}}\sum_{x_{j}\notin C_{k}}W_{ij}, \ \vol(C_{k})=\sum_{x_{i}\in C_{k}}\sum_{x_{j}\in X}W_{ij}.$$  As clusters become more elongated, cluster-splitting cuts measured in Euclidean distance become cheaper and the optimal graph cut shifts from one that separates the clusters to one that splits them.  On the other hand, when using the LLPD a cluster-splitting cut only becomes marginally cheaper as the cluster stretches, so that the optimal graph cut preserves the clusters rather than splits them.  

A somewhat different approach to analyzing the performance of spectral clustering is developed in \cite{Balakrishnan2011_Noise}, which proposes as a model for spectral clustering \emph{noisy hierarchical block matrices (noisy HBM)} of the form $W=A+R$ for ideal $A$ and a noisy perturbation $R$.  The ideal $A$ is characterized by on and off-diagonal block values that are constrained to fall in certain ranges, which models concentration of within-cluster and between-cluster distances.  The noisy perturbation $R$ is a random, mean 0 matrix having rows with independent, subgaussian entries, characterized by a variance parameter $\sigma_{\text{noise}}$.  The authors propose a modified spectral clustering algorithm (using the $K$-centering algorithm) which, under certain assumptions on the idealized within-cluster and between-cluster distances, learns all clusters above a certain size for a range of $\sigma_{\text{noise}}$ levels.  The proposed theoretical analysis of LLPD in Section \ref{sec:FiniteSampleAnalysis} shows that under the LDLN data model and for $n$ sufficiently large, the Laplacian matrix (and weight matrix) is nearly block constant with large separation between clusters.  Our Theorems \ref{thm:WithinCluster}, \ref{thm:BetweenClusters}, \ref{thm:knnLLPD}, \ref{thm:CombinedResult} may be interpreted as showing that the (LLPD-denoised) weight matrix associated to data generated from the LDLN model may fit the idealized model suggested by \cite{Balakrishnan2011_Noise}.  In particular, when $R=0$, the results for this noisy HBM are comparable with, for example, Theorem \ref{thm:SpectralGapDataModel}.  However, the proposed method does not consider hierarchical clustering, but instead shows localization properties of the eigenvectors of $\Lsym$.  In particular, the proposed method is shown to correctly learn the number of clusters $\numclust$ through the eigengap, assuming the LDLN model, which is not considered in \cite{Balakrishnan2011_Noise}.

\subsubsection{Comparison with Density-Based Methods}

The \emph{DBSCAN} algorithm labels points as cluster or noise based on density to nearby points, then creates clusters as maximally connected regions of cluster points.  While popular, DBSCAN is extremely sensitive to the selection of parameters to distinguish between cluster and noise points, and often performs poorly in practice.  The DBSCAN parameter for noise detection is comparable to the denoising parameter $\thetathres$ used in LLPD spectral clustering, though LLPD spectral clustering is quite robust to $\thetathres$ in theory and practice.  Moreover, DBSCAN does not enjoy the robust theoretical guarantees provided in this article for LLPD spectral clustering on the LDLN data model, although some results are known for techniques related to DBSCAN \citep{Rinaldo2010, Sriperumbudur2012}.

In order to address the shortcomings of DBSCAN, the \emph{fast search and find of density peaks clustering (FSFDPC)} algorithm was proposed \citep{Rodriguez2014_Clustering}.  This method first learns modes in the data as points of high density far (in Euclidean distance) from other points of high density, then associates each point to a nearby mode in an iterative fashion.  While more robust than DBSCAN, FSFDPC cannot learn clusters that are highly nonlinear or elongated.  \cite{Maggioni2018_Learning} proposed a modification to FSFDPC called \emph{learning by unsupervised nonlinear diffusion (LUND)} which uses diffusion distances instead of Euclidean distances to learn the modes and make label assignments, allowing for the clustering of a wide range of data, both theoretically and empirically.  While LUND enjoys theoretical guarantees and strong empirical performance \citep{Murphy2018_Diffusion, Murphy2019_Unsupervised}, it does not perform as robustly on the proposed LDLN model for estimation of $K$ or for labeling accuracy.  In particular, the eigenvalues of the diffusion process which underlies diffusion distances (and thus LUND) do not exhibit the same sharp cutoff phenomenon as those of the LLPD Laplacian under the LDLN data model.  

\emph{Cluster trees} are a related density-based method that produces a multiscale hierarchy of clusterings, in a manner related to single linkage clustering.  Indeed, for data sampled from some density $\mu$ on a Euclidean domain $X$, a cluster tree is the family of clusterings $\mathcal{T}=\{C_{r}\}_{r=0}^{\infty}$, where $C_{r}$ are the connected regions of the set $\{x \ | \ \mu(x)\ge r\}$.  \cite{Chaudhuri2010_Rates} studied cluster trees where $X\subset\mathbb{R}^{D}$ is a subset of Euclidean space, showing that if sufficiently many samples are drawn from $\mu$, depending on $D$, then the clusters in the empirical cluster tree closely match the population level clusters given by thresholding $\mu$.  \cite{Balakrishnan2013_Cluster} generalized this work to the case when the underlying distribution is supported on an intrinsically $d$-dimensional set, showing that the performance guarantees depend only on $d$, not $D$.  

The cluster tree itself is related to the LLPD as $\rho_{\ell\ell}(x_{i},x_{j})$ is equal to the smallest $r$ such that $x_{i},x_{j}$ are in the same connected component of a complete Euclidean distance graph with edges of length $\ge r$ removed.  Furthermore, the model of \cite{Balakrishnan2013_Cluster} assumes the support of the density is near a low-dimensional manifold, which is comparable to the LDLN model of assuming the clusters are $\tau$-close to low-dimensional elements of $\mathcal{S}_{d}(\kappa,\epsilon_{0})$.  The notion of separation in \cite{Chaudhuri2010_Rates, Balakrishnan2013_Cluster} is also comparable to the notion of between-cluster separation in the present manuscript.  On the other hand, the proposed method considers a more narrow data model (LDLN versus arbitrary density $\mu$), and proves strong results for LLPD spectral clustering including inference of $\numclust$, labeling accuracy, and robustness to noise and choice of parameters.  The proof techniques are also rather different for the two methods, as the LDLN data model provides simplifying assumptions which do not hold for a general probability density function.  Indeed, the approach presented in this article achieves precise finite sample estimates for the LLPD using percolation theory and the chain arguments of Theorems \ref{thm:BetweenClusters} and \ref{thm:knnLLPD}, in contrast to the general consistency results on cluster trees derived from a wide class of probability distributions \citep{Chaudhuri2010_Rates, Balakrishnan2013_Cluster}.


\section{Numerical Implementations of LLPD}\label{sec:Algorithm}

We first demonstrate how LLPD can be accurately approximated from a sequence of $m$ multiscale graphs in Section \ref{sec:LLPD_scales}. Section \ref{sec:LLPD_KNN} discusses how this approach can be used for fast LLPD-nearest neighbor queries. When the data has low intrinsic dimension $d$ and $m=O(1)$, the LLPD nearest neighbors of all points can be computed in $O(DC^{d}n\log(n))$ for a constant $C$ independent of $n,d,D$.  Although there are theoretical methods for obtaining the exact LLPD \citep{Demaine2009, Demaine2014} with the same computational complexity, they are not practical for real data.  The method proposed here is an efficient alternative, whose accuracy can be controlled by choice of $m$. 

The proposed LLPD approximation procedure can also be leveraged to define a fast eigensolver for LLPD spectral clustering, which is discussed in Section \ref{subsec:fast_eigensolver}.  The ultrametric structure of the weight matrix allows for fast matrix-vector multiplication, so that $\Lsym x$ (and thus the eigenvectors of $\Lsym$) can be computed with complexity $O(mn)$ for a dense $\Lsym$ defined using LLPD.  When the number of scales $m\ll n$, this is a vast improvement over the typical $O(n^{2})$ needed for a dense Laplacian.  Once again when the data has low intrinsic dimension and $\numclust, m=O(1)$, LLPD spectral clustering can be implemented in $O(DC^{d}n\log(n))$. 

Connections with single linkage clustering are discussed in Section \ref{subsec:ApproximateSLC}, as the LLPD approximation procedure gives a pruned single linkage dendrogram. Matlab code implementing both the fast LLPD nearest neighbor searches and LLPD spectral clustering is publicly available at \url{https://bitbucket.org/annavlittle/llpd_code/branch/v2.1}. The software auto-selects both the number of clusters $\numclust$ and kernel scale $\sigma$. 

\subsection{Approximate LLPD from a Sequence of Thresholded Graphs}
\label{sec:LLPD_scales}

The notion of \emph{nearest neighbor graph} is important for the formal analysis which follows.

\begin{defn}\label{def:kNNGraph} 
Let $(X,\rho$) be a metric space.  The \emph{(symmetric) $k$-nearest neighbors graph on $X$ with respect to $\rho$} is the graph with nodes $X$ and an edge between $x_{i}, x_{j}$ of weight $\rho(x_{i},x_{j})$ if $x_{j}$ is among the $k$ points with smallest $\rho$-distance to $x_{i}$ or if $x_{i}$ is among the $k$ points with smallest $\rho$-distance to $x_{j}$.
\end{defn}

Let $X=\{x_{i}\}_{i=1}^{n}\subset\mathbb{R}^{D}$ and $G$ be some graph defined on $X$.  Let $D^{\ell \ell}_G$ denote the matrix of exact LLPDs {obtained from all paths in the graph $G$; note this is a generalization of Definition \ref{def:LLPD}, which considers $G$ to be a complete graph.  We define an approximation $\hat{D}^{\ell \ell}_G$ of $D^{\ell \ell}_G$ based on a sequence of thresholded graphs.   Let E-nearest neighbor denote a nearest neighbor in the Euclidean metric, and LLPD-nearest neighbor denote a nearest neighbor in the LLPD metric.  
	
\begin{defn}
	Let $X$ be given and let $\kEuc$ be a positive integer.  Let $G(\infty)$ denote the complete graph on $X$, with edge weights defined by Euclidean distance, and $G_{\kEuc}(\infty)$ the $\kEuc$ E-nearest neighbors graph on $X$ as in Definition \ref{def:kNNGraph}.  For a threshold $t > 0$, let $G(t), G_{\kEuc}(t)$ be the graphs obtained from $G(\infty), G_{\kEuc}(\infty)$, respectively, by discarding all edges of magnitude greater than $t$.   
\end{defn}
	
We approximate $\rho_{\ell\ell}(x_i,x_j)=(D^{\ell\ell}_{G(\infty)})_{ij}$ as follows. Given a sequence of thresholds $t_{1}<t_{2}<\dots<t_{m}$, compute  $G_{\kEuc}(\infty)$ and $\{G_{\kEuc}(t_{s})\}_{s=1}^{m}$.  Then this sequence of graphs may be used to approximate $\rho_{\ell\ell}$ by finding the smallest threshold $t_s$ for which two path-connected components $C_{1},C_{2}$ merge: for $x\in C_{1}, y\in C_{2}$, we have $\rho_{\ell \ell}(x,y)\approx t_s$.  We thus approximate $\rho_{\ell\ell}(x_i,x_j)$ by $(\hat{D}_{G_ij}^{\ell \ell})_{ij}=\inf_{s}\{t_{s}\ | \ x_{i}\sim x_{j} \text{ in } G_{ij}(t_{s})\},$ where $x_{i}\sim x_{j}$ denotes that the two points are path connected.  We let $\mathbb{D} = \{\mathbf{C}_{t_s}\}_{s=1}^m$ denote the dendrogram which arises from this procedure. More specifically, $\mathbf{C}_{t_s}=\{C_{t_s}^1, \ldots C_{t_s}^{\nu_s}\}$ are the connected components of $G_{\kEuc}(t_s)$, so that $\nu_s$ is the number of connected components at scale $t_s$. 

The error incurred in this estimation of  $\rho_{\ell\ell}$ is a result of two approximations: (a) approximating LLPD in $G(\infty)$ by LLPD in $G_{\kEuc }(\infty)$; (b) approximating LLPD in $G_{\kEuc }(\infty)$ from the sequence of thresholded graphs $\{G_{\kEuc }(t_s)\}_{s=1}^m$.  Since the optimal paths which determine $\rho_{\ell\ell}$ are always paths in a \textit{minimal spanning tree} (MST) of $G(\infty)$ \citep{Hu1961}, we do not incur any error from (a) whenever an MST of $G(\infty)$ is a subgraph of $G_{\kEuc }(\infty)$. \citet{gonzalez2003clustering} show that when sampling a compact, connected manifold with sufficiently smooth boundary, the MST is a subgraph of $G_{\kEuc }(\infty)$ with high probability for $\kEuc  = O(\log(n))$. Thus for $\kEuc  = O(\log(n))$, we do not incur any error from (a) in within-cluster LLPD, as the nearest neighbor graph for each cluster will contain the MST for the given cluster. When the clusters are well-separated, we generally will incur some error from (a) in the between-cluster LLPD, but this is precisely the regime where a high amount of error can be tolerated. The following proposition controls the error incurred by (b). 

\begin{prop}\label{prop:AlgError}
	Let $G$ be a graph on $X$ and $x_{i},x_{j}\in X$ such that $(\hat{D}_G^{\ell \ell})_{ij}=t_{s}$. Then 
	$(D_G^{\ell\ell})_{ij}\le(\hat{D}_G^{\ell \ell})_{ij}\le t_{s}/(t_{s-1})(D_G^{\ell\ell})_{ij}.$
\end{prop}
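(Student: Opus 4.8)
The plan is to establish the two inequalities separately, using only the definitions of $D_G^{\ell\ell}$, $\hat D_G^{\ell\ell}$, and the thresholded graphs $G(t)$. The structural fact driving everything is that connectivity in $G(t)$ is monotone non-decreasing in $t$: as the threshold increases, more edges survive, so if $x_i\sim x_j$ in $G(t_{s'})$ then $x_i\sim x_j$ in $G(t_s)$ for every $s\ge s'$. Consequently the hypothesis $(\hat D_G^{\ell\ell})_{ij}=t_s$ is equivalent to the conjunction of ``$x_i\sim x_j$ in $G(t_s)$'' and ``$x_i\not\sim x_j$ in $G(t_{s-1})$''; I would record this at the outset (with the convention that for $s=1$ only the first clause is asserted, so the multiplicative bound is understood for $s\ge 2$, or equivalently with $t_0:=0$).

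For the lower bound $(D_G^{\ell\ell})_{ij}\le(\hat D_G^{\ell\ell})_{ij}$: because $x_i\sim x_j$ in $G(t_s)$, there is a path $\gamma$ in $G$ from $x_i$ to $x_j$ all of whose edges have weight at most $t_s$ (the thresholded graph retains precisely the edges of weight $\le t_s$). The longest leg of $\gamma$ is then at most $t_s$, and since $(D_G^{\ell\ell})_{ij}$ is the minimum longest-leg over all paths in $G$, we conclude $(D_G^{\ell\ell})_{ij}\le t_s=(\hat D_G^{\ell\ell})_{ij}$.

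For the upper bound it suffices to show $(D_G^{\ell\ell})_{ij}\ge t_{s-1}$, since then
\[
(\hat D_G^{\ell\ell})_{ij}=t_s=\frac{t_s}{t_{s-1}}\,t_{s-1}\le\frac{t_s}{t_{s-1}}\,(D_G^{\ell\ell})_{ij}.
\]
To see $(D_G^{\ell\ell})_{ij}\ge t_{s-1}$, suppose for contradiction that $(D_G^{\ell\ell})_{ij}<t_{s-1}$. Since $G$ is finite, the min-max defining $(D_G^{\ell\ell})_{ij}$ is attained by some path $\gamma$ whose longest leg equals $(D_G^{\ell\ell})_{ij}<t_{s-1}$; hence every edge of $\gamma$ has weight strictly less than $t_{s-1}$, so $\gamma$ survives in $G(t_{s-1})$, which forces $x_i\sim x_j$ in $G(t_{s-1})$ --- contradicting the hypothesis. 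Therefore $(D_G^{\ell\ell})_{ij}\ge t_{s-1}$, and the claimed two-sided bound follows.

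I do not anticipate a genuine obstacle: the proof is short. The only points requiring care are (i) the monotonicity-in-$t$ observation that identifies $G(t_{s-1})$ as the largest thresholded graph in which $x_i,x_j$ remain disconnected, (ii) keeping strict versus non-strict inequalities straight when arguing that the attaining path persists in $G(t_{s-1})$, and (iii) a one-line remark handling the degenerate index $s=1$.
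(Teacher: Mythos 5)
Your proof is correct and follows essentially the same route as the paper's: the lower bound comes from exhibiting a path with all legs at most $t_s$, and the upper bound from the fact that no path with all legs at most $t_{s-1}$ connects $x_i$ to $x_j$, so $(D_G^{\ell\ell})_{ij}\ge t_{s-1}$ and hence $t_s\le (t_s/t_{s-1})(D_G^{\ell\ell})_{ij}$. Your extra care with strict versus non-strict inequalities and the degenerate index $s=1$ is a minor refinement the paper elides, not a different argument.
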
	
\begin{proof}		
	There is a path in $G$ connecting $x_{i},x_{j}$ with every leg of length $\le t_{s}$, since $(\hat{D}_{G}^{\ell\ell})_{ij}=t_{s}$.  Hence, $(D_G^{\ell\ell})_{ij}\le t_{s}=(\hat{D}_G^{\ell\ell})_{ij}$.  Moreover, $t_{s-1}\le (D_G^{\ell\ell})_{ij}$, since no path in $G$ with all legs $\le t_{s-1}$ connects $x_{i},x_{j}$.  It follows that $t_{s}\le\frac{t_{s}}{t_{s-1}}(D_G^{\ell\ell})_{ij}$, hence $(\hat{D}_G^{\ell\ell})_{ij}\le \frac{t_{s}}{t_{s-1}}(D_G^{\ell\ell})_{ij}$.
\end{proof}

Thus if $\{t_{s}\}_{s=1}^{m}$ grows exponentially at rate $(1+\epsilon)$, the ratio $\frac{t_{s}}{t_{s-1}}$ is bounded uniformly by $(1+\epsilon)$, and a uniform bound on the relative error is: $(D_G^{\ell\ell})_{ij}\le(\hat{D}_G^{\ell \ell})_{ij}\le (1+\epsilon)(D_G^{\ell\ell})_{ij}.$ Alternatively, one can choose the $\{t_{s}\}_{s=1}^{m}$ to be fixed percentiles in the distribution of edge magnitudes of $G$. 

Algorithm \ref{alg:CC} summarizes the multiscale construction which is used to approximate LLPD. At each scale $t_s$, the connected components of $G_{\kEuc }(t_s)$ are computed; the component identities are then stored in an $n\times m$ matrix, and the rows of the matrix are then sorted to obtain a hierarchical clustering structure. This sorted matrix of connected components (denoted $CC_{\text{sorted}}$ in Algorithm \ref{alg:CC}) can be used to quickly obtain the LLPD-nearest neighbors of each point, as discussed in Section \ref{sec:LLPD_KNN}. Note that if $G_{\kEuc }(\infty)$ is disconnected, one can add additional edges to obtain a connected graph.  

\begin{algorithm}[htb!]
	\caption{\label{alg:CC}Approximate LLPD}
	\textbf{Input:} $X,\{t_{s}\}_{s=1}^{m},\kEuc $\\
	\textbf{Output:} $\mathbb{D} = \{\mathbf{C}_{t_s}\}_{s=1}^m$, point order $\pi(i)$, $CC_{\text{sorted}}$\\
	\begin{algorithmic}[1]
		\STATE Form a $\kEuc $ E-nearest neighbors graph on $X$; call it $G_{\kEuc }(\infty)$.
		\STATE Sort the edges of $G_{\kEuc }(\infty)$ into the bins defined by the thresholds $\{t_{s}\}_{s=1}^{m}$.
		\FOR{$s=1:m$}
		\STATE Form $G_{\kEuc }(t_s)$ and compute its connected components $\mathbf{C}_{t_s}=\{C_{t_s}^1, \ldots C_{t_s}^{\nu_s}\}$.
		\ENDFOR
		\STATE Create an $n\times m$ matrix $CC$ storing each point's connected component at each scale.
		\STATE Sort the rows of $CC$ based on $\mathbf{C}_{t_m}$ (the last column).
		\FOR{$s=m:2$}
			\FOR{$i=1:\nu_s$}
				\STATE Sort the rows of $CC$ corresponding to $C^{i}_{t_s}$ according to $\mathbf{C}_{t_{s-1}}$.
			\ENDFOR
		\ENDFOR	
		\STATE Let $CC_{\text{sorted}}$ denote the $n\times m$ matrix containing the final sorted version of $CC$.
		\STATE Let $\pi(i)$ denote the point order encoded by $CC_{\text{sorted}}$.
	\end{algorithmic}
\end{algorithm}


\subsection{LLPD Nearest Neighbor Algorithm}
\label{sec:LLPD_KNN}
We next describe how to perform fast LLPD nearest neighbor queries using the multiscale graphs introduced in Section \ref{sec:LLPD_scales}.  Algorithm \ref{alg:NN_LLPD} gives pseudocode for the approximation of each point's $\kLLPD$ LLPD-nearest neighbors, with the approximation based on  the multiscale construction in Algorithm \ref{alg:CC}. 

\begin{algorithm}[tb]
	\caption{\label{alg:NN_LLPD}Fast LLPD nearest neighbor queries}
	\textbf{Input:} $X,\{t_{s}\}_{s=1}^{m},\kEuc ,\kLLPD$\\
	\textbf{Output:} $n\times n$ sparse matrix $\hat{D}^{\ell \ell}_{G_{\kEuc }}$ giving approximate $\kLLPD$ LLPD-nearest neighbors  \\
	\begin{algorithmic}[1]
		\STATE Use Algorithm \ref{alg:CC} to obtain $\pi(i)$ and $CC_{\text{sorted}}$.
		\FOR{$i=1:n$}
		\STATE $\hat{D}^{\ell \ell}_{\pi(i), \pi(i)} = t_1$
		\STATE $\textit{NN}=1$ \% Number of nearest neighbors found
		\STATE $i_{\text{up}}=1$
		\STATE $i_{\text{down}}=1$
		\FOR{$s=1:m$}
		\WHILE{$CC_{\text{sorted}}(i_{\text{up}},s)=CC_{\text{sorted}}(i_{\text{up}}-1,s)$ and $\textit{NN}<\kLLPD$ and $i_{\text{up}}>1$}
		\STATE $i_{\text{up}}=i_{\text{up}}-1$
		\STATE $\hat{D}^{\ell \ell}_{\pi(i), \pi(i_{\text{up}})} = t_s$
		\STATE $\textit{NN}=\textit{NN}+1$
		\ENDWHILE
		\WHILE{$CC_{\text{sorted}}(i_{\text{down}},s)=CC_{\text{sorted}}(i_{\text{down}}+1,s)$ and $\textit{NN}<\kLLPD$ and $i_{\text{down}}<n$}
		\STATE $i_{\text{down}}=i_{\text{down}}+1$
		\STATE $\hat{D}^{\ell \ell}_{\pi(i), \pi(i_{\text{down}})} = t_s$
		\STATE $\textit{NN}=\textit{NN}+1$
		\ENDWHILE
		\ENDFOR
		\ENDFOR	
		\RETURN  $\hat{D}^{\ell \ell}_{G_{\kEuc }}$
	\end{algorithmic}
\end{algorithm}

Figure \ref{fig:algpic} illustrates how Algorithm \ref{alg:NN_LLPD} works on a data set consisting of 11 points and 4 scales. Letting $\pi$ denote the ordering of the points in $CC_{\text{sorted}}$ as produced by Algorithm \ref{alg:CC}, $CC_{\text{sorted}}$ is queried to find $x_{\pi(6)}$'s 8 LLPD-nearest neighbors (nearest neighbors are shown in bold). 
Starting in the first column of $CC_{\text{sorted}}$ which corresponds to the finest scale ($s=1$), points in the same connected component as the base point are added to the nearest neighbor set, and the LLPD to these points is recorded as $t_1$. Assuming the nearest neighbors set does not yet contain $\kLLPD$ points, one then adds to it any points not yet in the nearest neighbor set which are in the same connected component as the base point at the second finest scale, and records the LLPD to these neighbors as $t_2$ (see the second column of Figure \ref{fig:algpic} which illustrates $s=2$ in the pseudocode). One continues in this manner until $\kLLPD$ neighbors are found.

\begin{rems}
For a fixed $x$, there might be many points of equal LLPD to $x$.  This is in contrast to the case for Euclidean distance, where such phenomena typically occur only for highly structured data, for example, for data consisting of points lying on a sphere and $x$ the center of the sphere.  In the case that $\kLLPD$ LLPD nearest neighbors for $x$ are sought and there are more than $\kLLPD$ points at the same LLPD from $x$, Algorithm \ref{alg:NN_LLPD} returns a sample of these LLPD-equidistant points in $O(m+\kLLPD)$ by simply returning the first $\kLLPD$ neighbors encountered in a fixed ordering of the data; a random sample could be returned for an additional cost.  
\end{rems}

\begin{figure}[!htb]
	\centering
	\begin{subfigure}[b]{.5\textwidth}
		\[
		\begin{bmatrix}
		t_1 & & t_2 & & t_3 & & t_4 & \\ \hline
		1 &  & 2 & & \bf{1} & & 1 & x_{\pi(1)} \\
		&  &    &  &  \uparrow & &    &       \\
		1 &  & 2 & & \bf{1} & & 1 & x_{\pi(2)} \\
		&  &    &  & \uparrow  & &    &       \\
		1 &  & 2 & & \bf{1} & & 1 & x_{\pi(3)} \\
		&  &  \uparrow  &  &  \uparrow & &    &       \\
		4 &  & \bf{3} & \rightarrow & 1 & & 1 & x_{\pi(4)} \\
		&  &  \uparrow  &  &   & &    &       \\
		4 &  & \bf{3} & & 1 & & 1 & x_{\pi(5)} \\
		\uparrow   &  &  \uparrow  &  &   & &    &       \\
		\fbox{\bf{5}} & \rightarrow & 3 & & 1 & & 1 & x_{\pi(6)} \\
		\downarrow  &  &   &  &   & &    &       \\
		\bf{5} & \rightarrow & 3 & \rightarrow & 1 & \rightarrow & 1 & x_{\pi(7)} \\
		\downarrow   &  &   \downarrow &  &  \downarrow & &  \downarrow  &       \\
		2 &  & 1 & & 2 & & \bf{1} & x_{\pi(8)} \\
		&  &    &  &   & &  \downarrow  &       \\
		2 &  & 1 & & 2 & & \bf{1} & x_{\pi(9)} \\
		&  &    &  &   & &    &       \\
		3 &  & 1 & & 2 & & 1 & x_{\pi(10)} \\
		&  &    &  &   & &    &       \\

		\end{bmatrix}
		\]
		\caption{Illustration of Algorithm \ref{alg:NN_LLPD}}
		\label{fig:algpic}
	\end{subfigure}
	\qquad
	\begin{subfigure}[b]{.36\textwidth}
		\includegraphics[width=\textwidth]{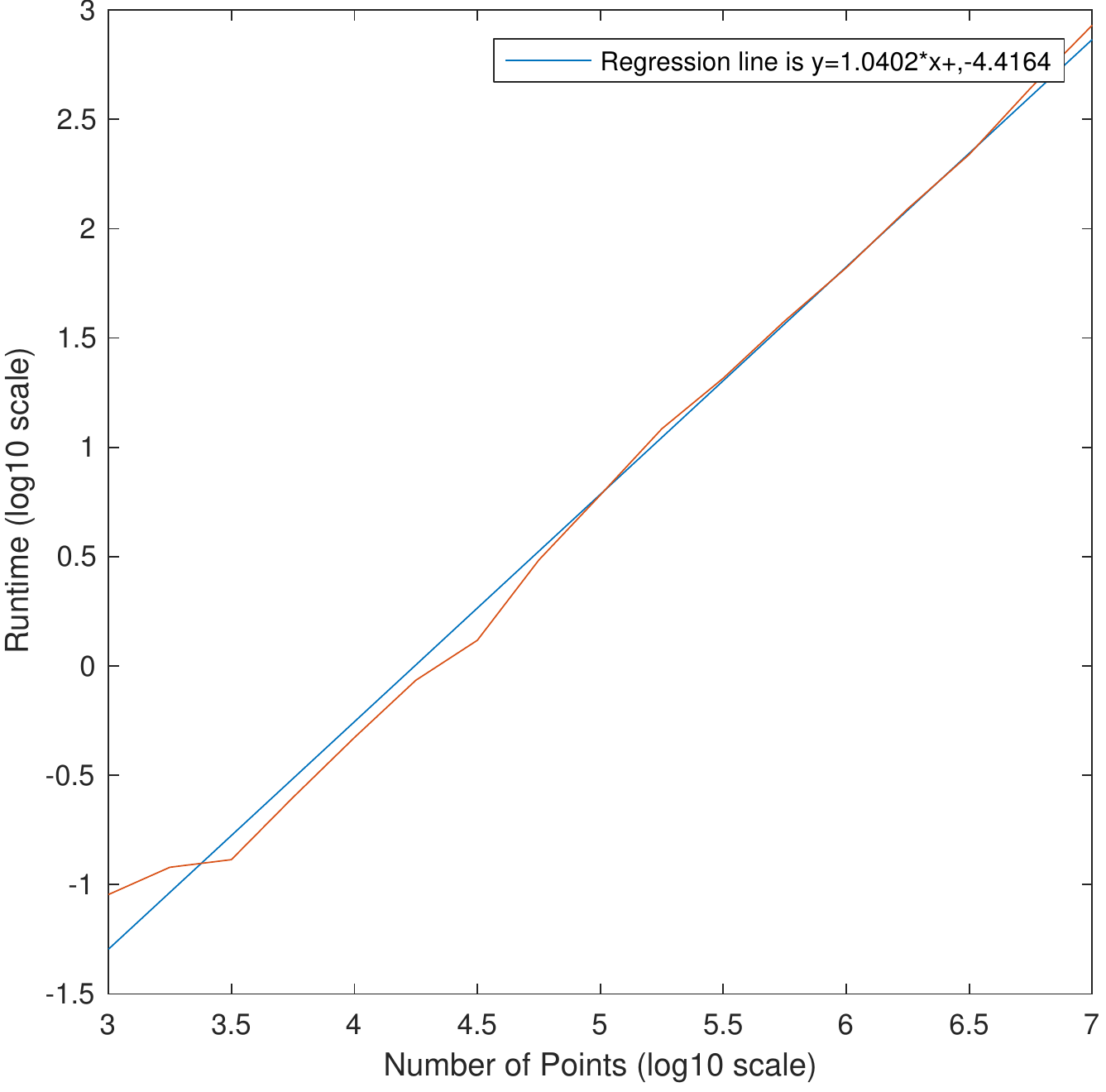}
		\includegraphics[width=\textwidth]{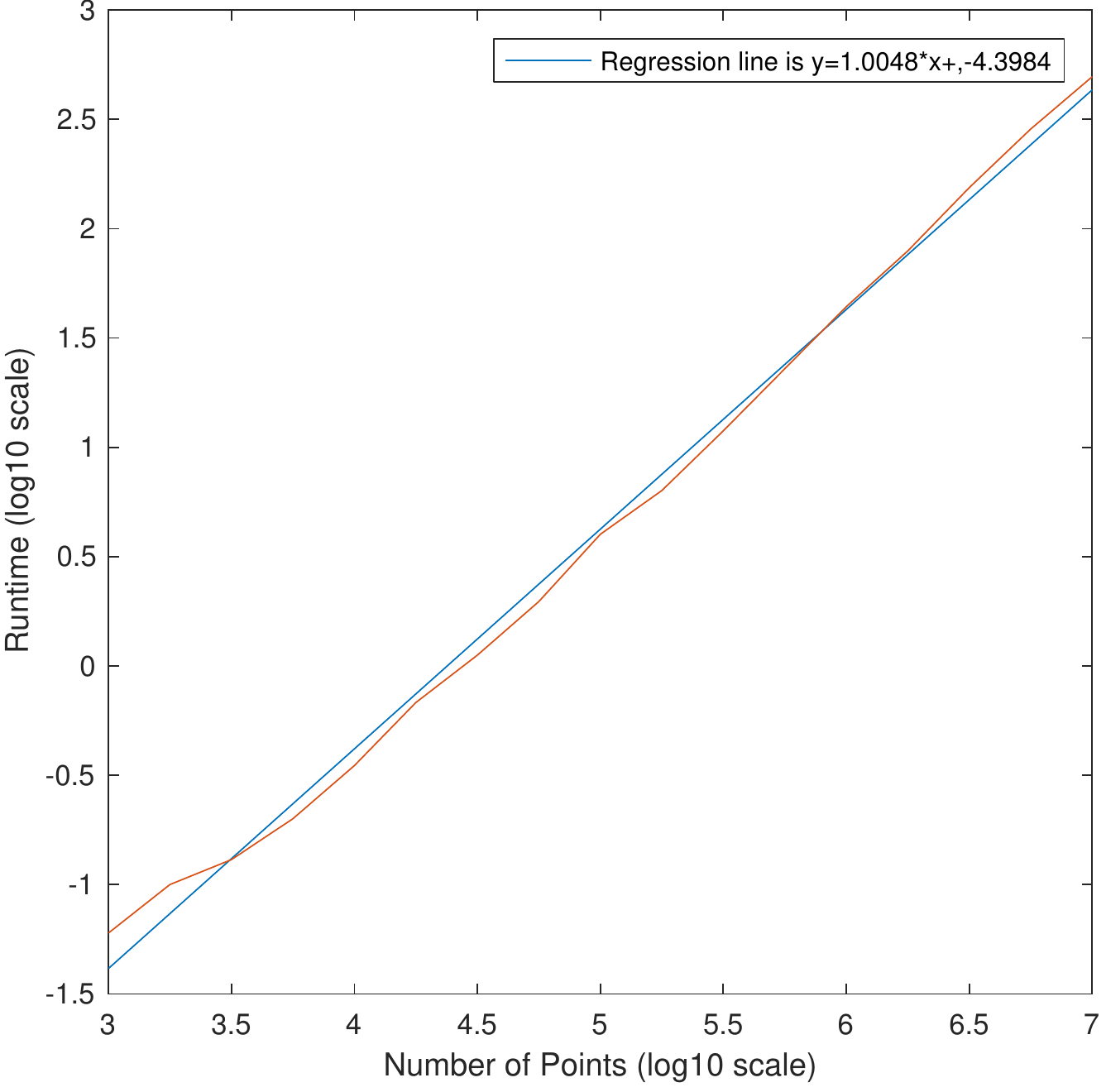}

		\caption{Complexity plots for Algorithm \ref{alg:NN_LLPD}}
		\label{fig:log(runtime)}
	\end{subfigure}
	\caption{Algorithm \ref{alg:NN_LLPD} is demonstrated on a simple example in (a). The figure illustrates how $CC_{\text{sorted}}$ is queried to return $x_{\pi(6)}$'s 8 LLPD-nearest neighbors. Nearest neighbors are shown in bold, and $\hat{\rho}_{\ell\ell}(x_{\pi(6)}, x_{\pi(7)})=t_1$, $\hat{\rho}_{\ell\ell}(x_{\pi(6)}, x_{\pi(5)})=t_2$, etc. Note each upward or downward arrow represents a comparison which checks whether two points are in the same connected component at the given scale. In (b), the runtime of Algorithm \ref{alg:NN_LLPD} on uniform data in $[0,1]^{2}$ is plotted against number of points in log scale.  The slope of the line is approximately 1, indicating that the algorithm is essentially quasilinear in the number of points.  Here, $\kEuc =20, \kLLPD=10, D=2$, and the thresholds $\{t_s\}_{s=1}^m$ correspond to fixed percentiles of edge magnitudes in $G_{\kEuc }(\infty)$.  The top plot has $m=10$ and the bottom plot $m=100$.}
	\label{fig:combinedfig}
\end{figure}

Figure \ref{fig:log(runtime)} shows a plot of the empirical runtime of the proposed algorithm against number of points in log scale, suggesting nearly linear runtime.  This is confirmed theoretically as follows.

\begin{thm} 
 Algorithm \ref{alg:NN_LLPD} has complexity $O(n(\kEuc C_{\text{NN}}+m(\kEuc \vee\log(n))+\kLLPD))$.
\end{thm}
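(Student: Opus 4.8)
The plan is to bound separately (i) the one-time preprocessing performed by the call to Algorithm~\ref{alg:CC}, and (ii) the work done by the outer loop over base points that actually reads off the approximate $\kLLPD$ LLPD-nearest neighbours, and then to add the two contributions, observing that $nm \le nm(\kEuc\vee\log n)$ so that all lower-order terms are absorbed into the claimed bound $O\big(n(\kEuc C_{\text{NN}}+m(\kEuc\vee\log n)+\kLLPD)\big)$.

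\textbf{Cost of Algorithm~\ref{alg:CC}.} Building the $\kEuc$ E-nearest neighbours graph $G_{\kEuc}(\infty)$ requires $n$ queries each returning $\kEuc$ neighbours, costing $O(n\kEuc C_{\text{NN}})$; the resulting graph has $n$ vertices and $O(n\kEuc)$ edges. Binning these edges into the $m$ threshold classes defined by $t_1<\dots<t_m$ costs $O(n\kEuc)$ when the thresholds are geometrically or percentile spaced (so the class of an edge of length $w$ is found in $O(1)$), as in the paper's construction. For each scale $s$ one forms $G_{\kEuc}(t_s)$ and computes its connected components by a single graph traversal on its $O(n\kEuc)$ edges, i.e. $O(n\kEuc)$ per scale and $O(nm\kEuc)$ over all $m$ scales; filling the $n\times m$ matrix $CC$ with the resulting labels costs $O(nm)$. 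The hierarchical re-ordering of the rows of $CC$ is an $m$-pass stable sort on labels drawn from $\{1,\dots,n\}$, costing $O(n\log n)$ per pass (or $O(n)$ with counting sort), hence $O(nm\log n)$ in total. Summing, Algorithm~\ref{alg:CC} runs in $O\big(n\kEuc C_{\text{NN}}+nm(\kEuc\vee\log n)\big)$.

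\textbf{Cost of the extraction loop.} Fix a base row and consider its iteration of the outer \texttt{for} loop in Algorithm~\ref{alg:NN_LLPD}. Because $CC_{\text{sorted}}$ is sorted hierarchically, at every scale $s$ the rows sharing the base row's component label form a contiguous interval containing it, and these intervals are nested as $s$ increases; hence the pointers $i_{\text{up}}$ and $i_{\text{down}}$ move monotonically outward and never backtrack over the entire double loop. Each iteration of either \texttt{while} loop either increments $\mathit{NN}$ — and $\mathit{NN}$ is capped at $\kLLPD$, so this occurs at most $\kLLPD-1$ times for the given base row — or is the single failed comparison that terminates that \texttt{while} loop at scale $s$, of which there are at most $2m$ across the $m$ scales. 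Together with the $O(m)$ overhead of iterating $s$ from $1$ to $m$ and the $O(\kLLPD)$ cost of writing that base row's entries of the sparse output matrix, each base row costs $O(m+\kLLPD)$, for a total of $O\big(n(m+\kLLPD)\big)$ over the outer loop.

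\textbf{Combining, and the main obstacle.} Adding the two estimates and absorbing $nm$ into $nm(\kEuc\vee\log n)$ yields $O\big(n(\kEuc C_{\text{NN}}+m(\kEuc\vee\log n)+\kLLPD)\big)$. The main obstacle is the amortized accounting for the extraction loop: one must exploit the nested-interval structure produced by Algorithm~\ref{alg:CC} to argue that the monotone pointers cause each row to be visited as a ``successful'' step at most once across \emph{all} $m$ scales, so the \texttt{while}-loop work is $O(m+\kLLPD)$ per base point rather than $\Theta(m\kLLPD)$ or worse; the remaining pieces (nearest-neighbour graph construction, per-scale connected components, and the multi-pass sort of $CC$) are standard graph-algorithm bookkeeping.
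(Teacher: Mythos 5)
Your proposal is correct and follows essentially the same decomposition as the paper's proof: graph construction at $O(n\kEuc C_{\text{NN}})$, per-scale connected components at $O(nm\kEuc)$, the hierarchical sort at $O(nm\log n)$, and the extraction loop at $O(n(m+\kLLPD))$. The only difference is that you spell out the amortized pointer argument for the \texttt{while} loops, which the paper simply asserts; this is a welcome elaboration rather than a different route.
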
	

\begin{proof} 

The major steps of Algorithm \ref{alg:NN_LLPD} (which includes running Algorithm \ref{alg:CC}) are:
\begin{itemize}
	\setlength\itemsep{0em}
	\item Generating the $\kEuc $ E-nearest neighbors graph $G_{\kEuc }(\infty)$: $O(\kEuc nC_{\text{NN}}$), where $C_{\text{NN}}$ is the cost of an E-nearest neighbor query. For high-dimensional data $C_{\text{NN}}=O(nD)$. When the data has low intrinsic dimension $d<D$ cover trees \citep{Beygelzimer2006} allows $C_{\text{NN}}=O(DC^d\log(n))$, after a pre-processing step with cost $O(C^d Dn\log(n))$. 
	
	\item Binning the edges of $G_{\kEuc }(\infty): O(\kEuc n(m\wedge\log(\kEuc n)))$. Binning without sorting is $O(\kEuc nm)$; if the edges are sorted first, the cost is $O(\kEuc n\log(\kEuc n))$.
	
	\item Forming $G_{\kEuc }(t_{s})$, for $s=1,\dots,m$, and computing its connected components: $O(\kEuc  mn)$.

	\item Sorting the connected components matrix to create $CC_{\text{sorted}}$: $O(mn\log(n))$.
	
	\item Finding each point's $\kLLPD$ LLPD-nearest neighbors by querying $CC_{\text{sorted}}$: $O(n(m+\kLLPD))$.
		
\end{itemize}

Observe that $O(C_{\text{NN}})$ always dominates $O(m\wedge\log(\kEuc n))$.  Hence, the overall  complexity is $O(n(\kEuc C_{\text{NN}}+m(\kEuc \vee\log(n))+\kLLPD)).$\end{proof}

\begin{cor}
If $\kEuc ,\kLLPD,m=O(1)$ with respect to $n$ and the data has low intrinsic dimension so that $C_{\text{NN}}=O(DC^d\log(n))$, Algorithm \ref{alg:NN_LLPD} has complexity $O(DC^{d}n \log(n))$.
\end{cor}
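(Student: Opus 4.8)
The plan is to obtain this as a direct specialization of the preceding theorem, which already gives the complexity $O(n(\kEuc C_{\text{NN}}+m(\kEuc \vee\log(n))+\kLLPD))$ for Algorithm \ref{alg:NN_LLPD} in full generality. First I would substitute the hypotheses $\kEuc ,\kLLPD,m=O(1)$ and $C_{\text{NN}}=O(DC^{d}\log(n))$ termwise into that bound. The nearest-neighbor-graph construction term $n\kEuc C_{\text{NN}}$ becomes $O(DC^{d}n\log(n))$; the binning / connected-components / sorting term $nm(\kEuc \vee\log(n))$ becomes $O(n\log(n))$, since with $\kEuc =O(1)$ we have $\kEuc \vee\log(n)=\log(n)$ for $n$ large; and the query term $n\kLLPD$ becomes $O(n)$.

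Next I would observe that $DC^{d}\ge 1$, so both $O(n\log(n))$ and $O(n)$ are absorbed into $O(DC^{d}n\log(n))$; summing the three termwise bounds then yields the stated overall complexity $O(DC^{d}n\log(n))$. If one wishes to account separately for the cover-tree pre-processing step that produces the low-dimensional query cost, its cost is $O(C^{d}Dn\log(n))$, which is likewise of this order and therefore does not change the conclusion.

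There is essentially no obstacle here: all the content is in the preceding theorem, and this corollary is just the arithmetic of plugging in constants and identifying the dominant term. The one point worth stating explicitly is that the bound $C_{\text{NN}}=O(DC^{d}\log(n))$ is precisely the cover-tree Euclidean nearest-neighbor query cost for data of low intrinsic dimension $d$ \citep{Beygelzimer2006}, which is exactly where the low-dimensionality hypothesis enters.
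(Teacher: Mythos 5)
Your proposal is correct and is exactly the intended argument: the paper states this corollary without a separate proof precisely because it follows by the termwise substitution you carry out, with the nearest-neighbor term $n\kEuc C_{\text{NN}}=O(DC^{d}n\log(n))$ dominating and absorbing the $O(n\log(n))$ and $O(n)$ contributions. Your additional remark that the cover-tree pre-processing cost $O(C^{d}Dn\log(n))$ is of the same order is a worthwhile explicit check, but it does not change the route.
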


If $\kLLPD=O(n)$ or the data has high intrinsic dimension, the complexity is $O(n^2)$. Hence, $d,m,\kEuc ,$ and $\kLLPD$ are all important parameters affecting the computational complexity. 

\begin{rems} One can also incorporate a minimal spanning tree (MST) into the construction, i.e. replace $G_{\kEuc }(\infty)$ with its MST. This will reduce the number of edges which must be binned to give a total computational complexity of $O(n(\kEuc C_{\text{NN}}+m\log(n)+\kLLPD))$.  Computing the LLPD with and without the MST has the same complexity when $\kEuc  \leq O(\log(n))$, so for simplicity we do not incorporate MSTs in our implementation.
\end{rems}


\subsection{A Fast Eigensolver for LLPD Laplacian}\label{subsec:fast_eigensolver}

In this section we describe an algorithm for computing the eigenvectors of a dense Laplacian defined using approximate LLPD with complexity $O(mn)$.  The ultrametric property of the LLPD makes $\Lsym$ highly compressible, which can be exploited for fast eigenvector computations. Assume LLPD is approximated using $m$ scales $\{t_s\}_{s=1}^m$ and the corresponding thresholded graphs $G_{\kEuc}(t_s)$ as described in Algorithm \ref{alg:CC}. Let $n_i= |C_{t_1}^i|$ for $1\leq i\leq \nu_1$ denote the cardinalities of the connected components of $G_{\kEuc}(t_1)$, and $V=\sum_{k=1}^m\nu_k$ the total number of connected components across all scales.  

In order to develop a fast algorithm for computing the eigenvectors of the LLPD Laplacian $\Lsym=I-D^{-\frac{1}{2}}WD^{-\frac{1}{2}}$, it suffices to describe a fast method for computing the matrix-vector multiplication $x\mapsto \Lsym x$, where $\Lsym$ is defined using $W_{ij} = e^{-\rho_{\ell\ell}(x_i,x_j)^2/\sigma^2}$ \citep{Trefethen1997_Numerical}.  Assume without loss of generality that we order the entries of both $x$ and $W$ according to the point order $\pi$ defined in Algorithm \ref{alg:CC}. Note because $\Lsym$ is block constant with $\nu_1^{2}$ blocks, any eigenvector will also be block constant with $\nu_1$ blocks, and it suffices to develop a fast multiplier for $x\mapsto \Lsym x$ when $x\in\mathbb{R}^n$ has the form: $x=[x_1 1_{n_1}\  x_2 1_{n_2} \ldots x_{\nu_1} 1_{\nu_1}]$ where $1_{n_i} \in \mathbb{R}^{n_i}$ is the all one's vector.  Assuming LLPD's have been precomputed using Algorithm \ref{alg:CC}, Algorithm \ref{alg:FastMatVec} gives pseudocode for computing $Wx$ with complexity $O(mn)$. Since $\Lsym x = x - D^{-1/2}WD^{-1/2}x$, $W(D^{-1/2}x)$ is computable via Algorithm \ref{alg:FastMatVec}, and $D^{-1/2}$ is diagonal, a straight forward generalization of Algorithm \ref{alg:FastMatVec} gives $\Lsym x$ in $O(mn)$.

\begin{algorithm}[htb!]
\small{
	\caption{\label{alg:FastMatVec}Fast LLPD Matrix-Vector Multiplication}
	\textbf{Input:} $\{t_s\}_{s=1}^{m}$, $\mathbb{D} = \{\mathbf{C}_{t_s}\}_{s=1}^m$, $f_{\sigma}(t)$, $x$\\
	\textbf{Output:} $Wx$
	\begin{algorithmic}[1]
	\STATE  Enumerate all connected components at all scales: $\mathbf{C} = [C_{t_1}^1 \ldots C_{t_1}^{\nu_1} \ldots C_{t_m}^1 \ldots C_{t_m}^{\nu_m}]$.
	\STATE Let $\mathcal{V}$ be the collection of $V$ nodes corresponding to the elements of $\mathbf{C}$.
	\STATE For $i=1,\dots,V$, let $\mathcal{C}(i)$ be the set of direct children of node $i$ in dendrogram $\mathbb{D}$. 
	\STATE For $i=1,\dots,V$, let $\mathcal{P}(i)$ be the direct parent of node $i$ in dendrogram $\mathbb{D}$.\\[.2cm]
	
	\FOR{$i=1:\nu_1$}
		\STATE $\Sigma(i) = n_ix_i$
	\ENDFOR \\
	\FOR{$i=(\nu_1+1):V$}
		\STATE $\Sigma(i) = \sum_{j \in \mathcal{C}(i)} \Sigma(j)$
	\ENDFOR	\\[.2cm]
	\FOR{$i=1:\nu_1$}
		\STATE $\alpha_i(1) = i$
		\FOR{$j=2:m$}
			\STATE $\alpha_i(j) =\mathcal{P}(\alpha_i(j-1)$)
		\ENDFOR
	\ENDFOR	\\[.2cm]

	\STATE Let $K=[f_{\sigma}(t_1)\ f_{\sigma}(t_2) \cdots f_{\sigma}(t_m)]$ be a vector of kernel evaluations at each scale.
	\\[.2cm]
	
	\FOR{$i=1:\nu_1$}
		\STATE $\xi_i(j) = \Sigma(\alpha_i(j))$
		\STATE $d\xi_i(1) = \xi_i(1)$
		\FOR{$j=2:m$}
			\STATE $d\xi_i(j) = \xi_i(j+1) - \xi_i(j)$ 
		\ENDFOR	
		\STATE Let $I_i$ be the index set corresponding to $C_{t_1}^i$.
		\STATE $(Wx)_{I_i} = \sum_{s=1}^{m} d\xi_i(s)K(s)$ 
	\ENDFOR	
	\end{algorithmic}
	}
\end{algorithm}

Since the matrix-vector multiplication has reduced complexity $O(mn)$, the decomposition of the principal $K$ eigenvectors can be done with complexity $O(K^{2}mn)$ \citep{Trefethen1997_Numerical}, which in the practical case $K,m=O(1)$, is essentially linear in $n$. Thus the total complexity of implementing LLPD spectral clustering including the LLPD approximation discussed in Section \ref{sec:LLPD_scales} becomes $O(n(\kEuc C_{\text{NN}}+m(\kEuc \vee\log(n)\vee \numclust^{2})))$. We defer timing studies and theoretical analysis of the fast eigensolver algorithm to a subsequent article, in the interest of space.  We remark that the strategy proposed for computing the eigenvectors of the LLPD Laplacian could in principal be used for Laplacians derived from other distances.  However, without the compressible ultrametric structure, the approximation using only $m\ll n$ scales is likely to be poor, leading to inaccurate approximate eigenvectors.   


\subsection{LLPD as Approximate Single Linkage Clustering}\label{subsec:ApproximateSLC}

The algorithmic implementation giving $\hat{D}^{\ell \ell}_{G_{\kEuc }}$ approximates the true LLPD $\rho_{\ell\ell}$ by merging path connected components at various scales.  In this sense, our approach is reminiscent of single linkage clustering \citep{Hastie2009}. Indeed, the connected component structure defined in Algorithm \ref{alg:CC} can be viewed as an approximate single linkage dendrogram.

Single linkage clustering generates, from $X=\{x_{i}\}_{i=1}^{n}$, a dendrogram $\mathbb{D}_{\text{SL}}=\{\mathbf{C}_{k}\}_{k=0}^{n-1}$, where $\mathbf{C}_{k}:\{1,2,\dots,n\}\rightarrow \{C_k^1,C_k^2,\dots,C_k^{n-k}\}$ assigns $x_{i}$ to its cluster at level $k$ of the dendrogram ($\mathbf{C}_{0}$ assigns each point to a singleton cluster).  Let $d_{k}$ be the Euclidean distance between the clusters merged at level $k$: $d_{k}=\min_{i \ne j}\min_{x\in C^{k-1}_i, y\in C^{k-1}_j}\|x-y\|_2.$  Note that $\{d_{k}\}_{k=1}^{n-1}$ is non-decreasing, and when strictly increasing, the clusters produced by single linkage clustering at the $k^{th}$ level are the path connected components of $G(d_k)$.  In the more general case, the path connected components of $G(d_k)$ may correspond to multiple levels of the single linkage hierarchy.  Let $\{t_{s}\}_{s=1}^{m}$ be the thresholds used in Algorithm \ref{alg:CC}, and assume that $G_{\kEuc }(\infty)$ contains an MST of $G(\infty)$ as a subgraph.  Let $\mathbb{D}=\{\mathbf{C}_{t_s}\}_{s=1}^{m}$ be the path-connected components with edges $\le t_{s}$. $\mathbb{D}$ is a compressed dendrogram, obtained from the full dendrogram $\mathbb{D}_{\text{SL}}$ by pruning at certain levels. Let $\tau_{s}=\inf\{k \ | \ d_{k}\ge t_{s},\ d_{k} < d_{k+1}\}$, and define the \emph{pruned dendrogram} as $P(\mathbb{D}_{\text{SL}})=\{\mathbf{C}_{\tau_{s}}\}_{s=1}^{m}$.  In this case, the dendrogram obtained from the approximate LLPD is a pruning of an exact single linkage dendrogram.  We omit the proof of the following in the interest of space.

\begin{prop}
If $G_{\kEuc }(\infty)$ contains an MST of $G(\infty)$ as a subgraph,
$P(\mathbb{D}_{\text{SL}})=\mathbb{D}$.  
\end{prop}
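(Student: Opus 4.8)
The plan is to give one common description of both $\mathbb{D}$ and $P(\mathbb{D}_{\text{SL}})$: namely, both are (level for level) the sequence of partitions of $X$ into the connected components of the threshold graph $G(t_s)$ built from the \emph{complete} Euclidean graph. Once that common description is in hand, equality is immediate. The argument splits into two independent reductions: one removing the $\kEuc$-nearest-neighbor restriction on the $\mathbb{D}$ side, the other identifying single-linkage levels with thresholded complete graphs on the $P(\mathbb{D}_{\text{SL}})$ side.

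\textbf{Step A (removing the nearest-neighbor restriction).} Fix an MST $T$ of $G(\infty)$ with $T\subseteq G_{\kEuc}(\infty)$, which exists by hypothesis. Recall the bottleneck-path property already invoked above: paths realizing $\rho_{\ell\ell}$ lie in an MST \citep{Hu1961}. Concretely, for $x_i\neq x_j$ the largest weight $M$ on the unique $x_i$--$x_j$ arc of $T$ equals $\rho_{\ell\ell}(x_i,x_j)$ — the inequality $\rho_{\ell\ell}\le M$ is clear, and if some path joined $x_i,x_j$ using only edges of weight $<M$ it would have to cross the cut obtained by deleting an $M$-weight arc edge, yielding a strictly lighter spanning tree, a contradiction. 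Hence $x_i,x_j$ lie in the same component of $G(t)$ if and only if they lie in the same component of $T(t):=\{e\in T:w(e)\le t\}$. Since $T(t)\subseteq G_{\kEuc}(t)\subseteq G(t)$ for every $t\ge 0$, all three graphs have identical connected components, so $\mathbf{C}_{t_s}\in\mathbb{D}$ (the components of $G_{\kEuc}(t_s)$) equals the partition $\Pi(t_s)$ into components of $G(t_s)$.

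\textbf{Step B (single linkage is thresholding).} Single-linkage clustering is exactly Kruskal's algorithm run on $G(\infty)$: the $k$-th merge adds the lightest edge joining two current clusters, so the $n-1$ merge edges listed in order are precisely the edges of $T$ sorted by weight, and $d_k$ is the weight of the $k$-th of them. Therefore $\mathbf{C}_k$ is the partition into connected components of the subforest of $T$ consisting of its $k$ lightest edges; this subforest is $T(t)$ exactly when $k=\max\{j:d_j\le t\}$, and the level $\tau_s$ is chosen so that $\mathbf{C}_{\tau_s}$ is this partition for $t=t_s$, the clause $d_k<d_{k+1}$ selecting the last level inside a block of equal merge distances (so that $\mathbf{C}_{\tau_s}$ really is the component partition of $G(t_s)$ rather than a strict refinement of it). Combining with Step A, $\mathbf{C}_{\tau_s}=\Pi(t_s)=\mathbf{C}_{t_s}$ for every $s$, that is $P(\mathbb{D}_{\text{SL}})=\mathbb{D}$.

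\textbf{Main obstacle.} The geometric content is essentially the standard MST/Kruskal dictionary for single linkage together with the bottleneck-path property, which is straightforward. The one point requiring care is the bookkeeping around ties among the $d_k$: when $d_k=d_{k+1}$ several dendrogram levels map to the same threshold value, and one must verify that the level singled out by $\tau_s$ is the one whose clustering equals the connected components of $G(t_s)$ and not an intermediate finer partition — this is precisely the purpose of the clause $d_k<d_{k+1}$, and pinning it down cleanly is the only delicate part of the write-up.
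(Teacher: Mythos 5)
First, a caveat: the paper explicitly omits its own proof of this proposition ("in the interest of space"), so there is no argument of record to compare yours against; I can only assess your write-up on its own terms. Your two reductions are correct and are surely the intended route. Step A — the bottleneck/minimax property of the MST, giving that $T(t)\subseteq G_{\kEuc}(t)\subseteq G(t)$ all have identical connected components for every $t$ once $T\subseteq G_{\kEuc}(\infty)$ — is complete, and the cut-exchange argument is right. Step B's identification of single linkage with Kruskal, so that $\mathbf{C}_k$ is the component partition of the forest formed by the $k$ lightest MST edges, is also standard and correct.

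The gap sits exactly at the point you yourself flag as "the only delicate part" and then do not carry out: you assert that "the level $\tau_s$ is chosen so that $\mathbf{C}_{\tau_s}$ is this partition for $t=t_s$," i.e.\ that the paper's formula $\tau_s=\inf\{k \mid d_k\ge t_s,\ d_k<d_{k+1}\}$ returns $\max\{j: d_j\le t_s\}$. It does not. The condition $d_k\ge t_s$ forces $d_{\tau_s}\ge t_s$, whereas the level you need satisfies $d_k\le t_s<d_{k+1}$; these coincide only when $t_s$ happens to equal a merge distance, which is not guaranteed (the $t_s$ are exponentially spaced, or percentiles of all edge magnitudes of $G_{\kEuc}(\infty)$, most of which are not MST edges). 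Concretely, with merge distances $d_1=0.1<d_2=0.2<d_3=0.3$ and $t_s=0.25$, the set $\{k: d_k\ge t_s,\ d_k<d_{k+1}\}$ is $\{3\}$ (taking $d_4:=\infty$), so $\tau_s=3$ and $\mathbf{C}_{\tau_s}$ is strictly coarser than the component partition of $G(0.25)$, which is $\mathbf{C}_2$; if one instead declines to set $d_4=\infty$, the set is empty and $\tau_s$ is undefined. So either you must read the definition as $\tau_s=\sup\{k: d_k\le t_s\}$ (equivalently $\inf\{k: d_{k+1}>t_s\}$ with $d_n:=\infty$), say explicitly that you are doing so, and then your Step B closes verbatim — or the identity $\mathbf{C}_{\tau_s}=\Pi(t_s)$ on which the whole proof rests fails. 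A complete proof must confront this rather than delegate it to "the clause $d_k<d_{k+1}$," which only resolves ties among equal merge distances and does nothing about thresholds falling strictly between distinct ones.
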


Note that the approximate LLPD algorithm also offers an inexpensive approximation of single linkage clustering. A naive implementation of single linkage clustering is $O(n^{3})$, while the SLINK algorithm \citep{Sibson1973} improves this to $O(n^{2})$. Thus to generate $\mathbb{D}$ by first performing exact single linkage clustering, then pruning, is $O(n^{2})$, whereas to approximate $\mathbb{D}$ directly via approximate LLPD is $O(n\log(n))$; see Figure \ref{fig:CommutativeDiagram}.

\begin{figure}[!htb]
	\centering
	\begin{minipage}{.49\textwidth}
		\centering
		\begin{tikzcd}
			& \mathbb{D} \arrow{dr}{Pruning} \\
			X \arrow{ur}{SLC} \arrow{rr}{LLPD} && \tilde{\mathbb{D}}
		\end{tikzcd}
	\end{minipage}
	\begin{minipage}{.49\textwidth}
		\centering
		\begin{tikzcd}
			& \mathbb{D} \arrow{dr}{O(mn)} \\
			X \arrow{ur}{O(n^2)} \arrow{rr}{O(n\log(n))} && \tilde{\mathbb{D}}
		\end{tikzcd}
	\end{minipage}
	\caption{\label{fig:CommutativeDiagram}The cost of constructing the full single linkage dendrogram with SLINK is $O(n^{2})$, and the cost of pruning is $O(mn)$, where $m$ is the number of pruning cuts, so that acquiring $\mathbb{D}$ in this manner has overall complexity $O(n^{2})$.  The proposed method, in contrast, computes $\mathbb{D}$ with complexity $O(n\log(n))$.}
\end{figure}
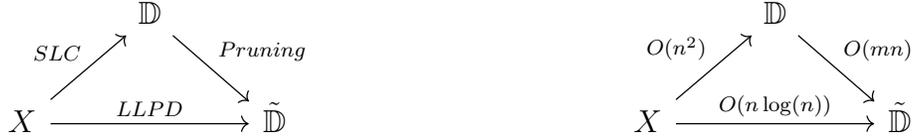


\section{Numerical Experiments}\label{sec:NumericalExperiments}

In this section we illustrate LLPD spectral clustering on four synthetic data sets and five real data sets.  LLPD was approximated using Algorithm \ref{alg:CC}, and data sets were denoised by removing all points whose $\kNoise^{\!\!\!\!\!\text{th}}$ nearest neighbor LLPD exceeded $\theta$. Algorithm \ref{alg:FastMatVec} was then used to compute approximate eigenpairs of the LLPD Laplacian for a range of $\sigma$ values. The parameters $\hat{\numclust},\hat{\sigma}$ were then estimated from the multiscale spectral decompositions via
\begin{align*}
\hat{\numclust}&=\argmax_{i}\max_\sigma (\lambda_{i+1}(\sigma)-\lambda_{i}(\sigma))\quad,\quad \hat{\sigma} = \argmax_{\sigma} \left(\lambda_{\hat{\numclust}+1}(\sigma)-\lambda_{\hat{\numclust}}(\sigma)\right)\, ,
\end{align*}
and a final clustering was obtained by running $K$-means on the spectral embedding defined by the principal $\numclust$ eigenvectors of $\Lsym(\hat{\sigma})$. For each data set, we investigate (1) whether $\hat{\numclust}=\numclust$ and (2) the labeling accuracy of LLPD spectral clustering given $\numclust$. We compare the results of (1) and (2) with those obtained from Euclidean spectral clustering, where $\hat{\numclust}, \hat{\sigma}$ are estimated using an identical procedure, and also compare the results of (2) with the labeling accuracy obtained by applying $K$-means directly. To make results as comparable as possible, Euclidean spectral clustering and $K$-means were run on the LLPD denoised data sets. All results are reported in Table \ref{tab:Results}. 

Labeling accuracy was evaluated using three statistics: \emph{overall accuracy (OA)}, \emph{average accuracy (AA)}, and \emph{Cohen's $\kappa$}.  OA is the metric used in the theoretical analysis, namely the proportion of correctly labeled points after clusters are aligned, as defined by the agreement function (\ref{eqn:agreement}).  AA computes the overall accuracy on each cluster separately, then averages the results, in order to give small clusters equal weight to large ones.  Cohen's $\kappa$ measures agreement between two labelings, corrected for random agreement \citep{Banerjee1999_Beyond}. Note that AA and $\kappa$ are computed using the alignment that is optimal for OA.  We note that accuracy is computed only on the points with ground truth labels, and in particular, any noise points remaining after denoising are ignored in the accuracy computations.  For the synthetic data, where it is known which points are noise and which are from the clusters, one can assign labels to noise points according to Euclidean distance to the nearest cluster.  For all synthetic datasets considered, the empirical results observed changed only trivially, and we do not report these results.  

Parameters were set consistently across all examples, unless otherwise noted.  The initial E-nearest neighbor graph was constructed using $\kEuc =20$. The scales $\{t_{s}\}_{s=1}^{m}$ for approximation were chosen to increase exponentially while requiring $m=20$.  Nearest neighbor denoising was performed using $\kNoise=20$. The denoising threshold $\thetathres$ was chosen by estimating the elbow in a graph of sorted nearest neighbor distances. For each data set, $\Lsym$ was computed for 20 $\sigma$ values equally spaced in an interval.  All code and scripts to reproduce the results in this article are publicly available\footnote{\url{https://bitbucket.org/annavlittle/llpd_code/branch/v2.1}}.  

\subsection{Synthetic Data}
 
 The four synthetic data sets considered are:
 
\begin{itemize}
	\setlength\itemsep{0em}

	\item \textbf{Four Lines} This data set consists of four highly elongated clusters in $\mathbb{R}^{2}$ with uniform two-dimensional noise added; see Figure \ref{fig:four_lines_data}.  The longer clusters have $n_{i}=40000$ points, the smaller ones $n_{i}=8000$, with $\tilde{n}=20000$ noise points.  This dataset is too large to cluster with a dense Euclidean Laplacian.  
		\item \textbf{Nine Gaussians} Each of the nine clusters consist of $n_i=50$ random samples from a two-dimensional Gaussian distribution; see Figure \ref{fig:nine_gaussians_data}.  All of the Gaussians have distinct means. Five have covariance matrix $0.01I$ while four have covariance matrix $0.04I$, resulting in clusters of unequal density. The noise consists of $\tilde{n}=50$ uniformly sampled points.
	\item \textbf{Concentric Spheres} Letting $\mathbb{S}^d_r \subset \mathbb{R}^{d+1}$ denote the $d$-dimensional sphere of radius $r$ centered at the origin, the clusters consist of points uniformly sampled from three concentric 2-dimensional spheres embedded in $\mathbb{R}^{1000}$: $n_1=250$ points from $\mathbb{S}^2_1$, $n_2 = 563$ points from $\mathbb{S}^2_{1.5}$, and $n_3 = 1000$ points from $\mathbb{S}^2_2$, so that the cluster densities are constant. The noise consists of an additional $\tilde{n}=2000$ points uniformly sampled from $[-2,2]^{1000}$.
	\item \textbf{Parallel Planes} Five $d=5$ dimensional planes are embedded in $[0,1]^{25}$ by setting the last $D-d = 20$ coordinates to a distinct constant for each plane; we sample uniformly $n_i=1000$, $1 \leq i \leq 5$ points from each plane and add $\tilde{n}=200000$ noise points uniformly sampled from $[0,1]^{25}$. Only 2 of the last 20 coordinates contribute to the separability of the planes, so that the Euclidean distance between consecutive parallel planes is approximately 0.35.  We note that for this dataset, it is possible to run Euclidean spectral clustering after denoising with the LLPD.  
\end{itemize}

\begin{figure}[!htb]
	\centering
	\begin{subfigure}[t]{.24\textwidth}
		\includegraphics[width=\textwidth]{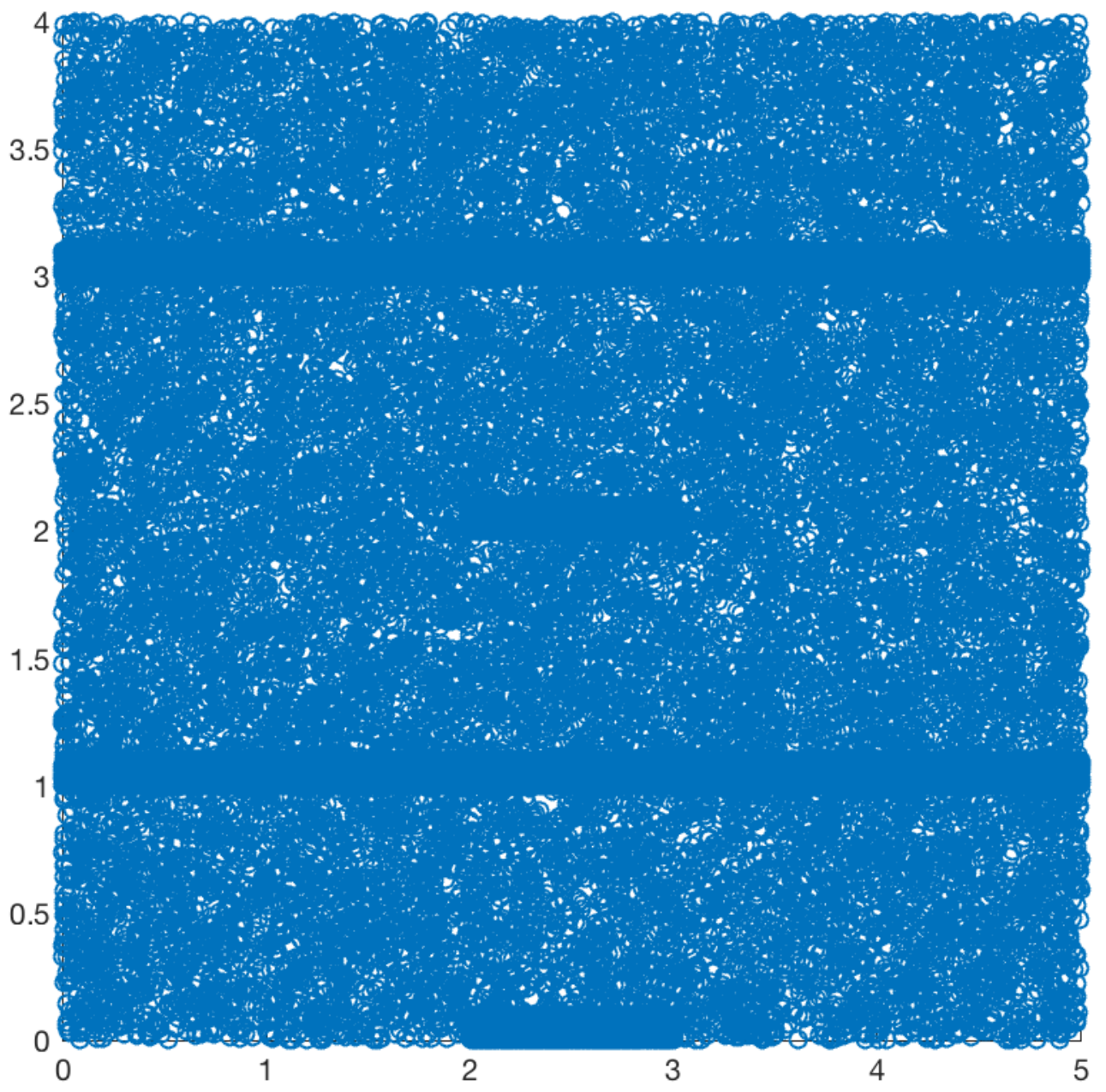}
		\subcaption{Four Lines}
		\label{fig:four_lines_data}
	\end{subfigure}
	\begin{subfigure}[t]{.24\textwidth}
		\includegraphics[width=\textwidth]{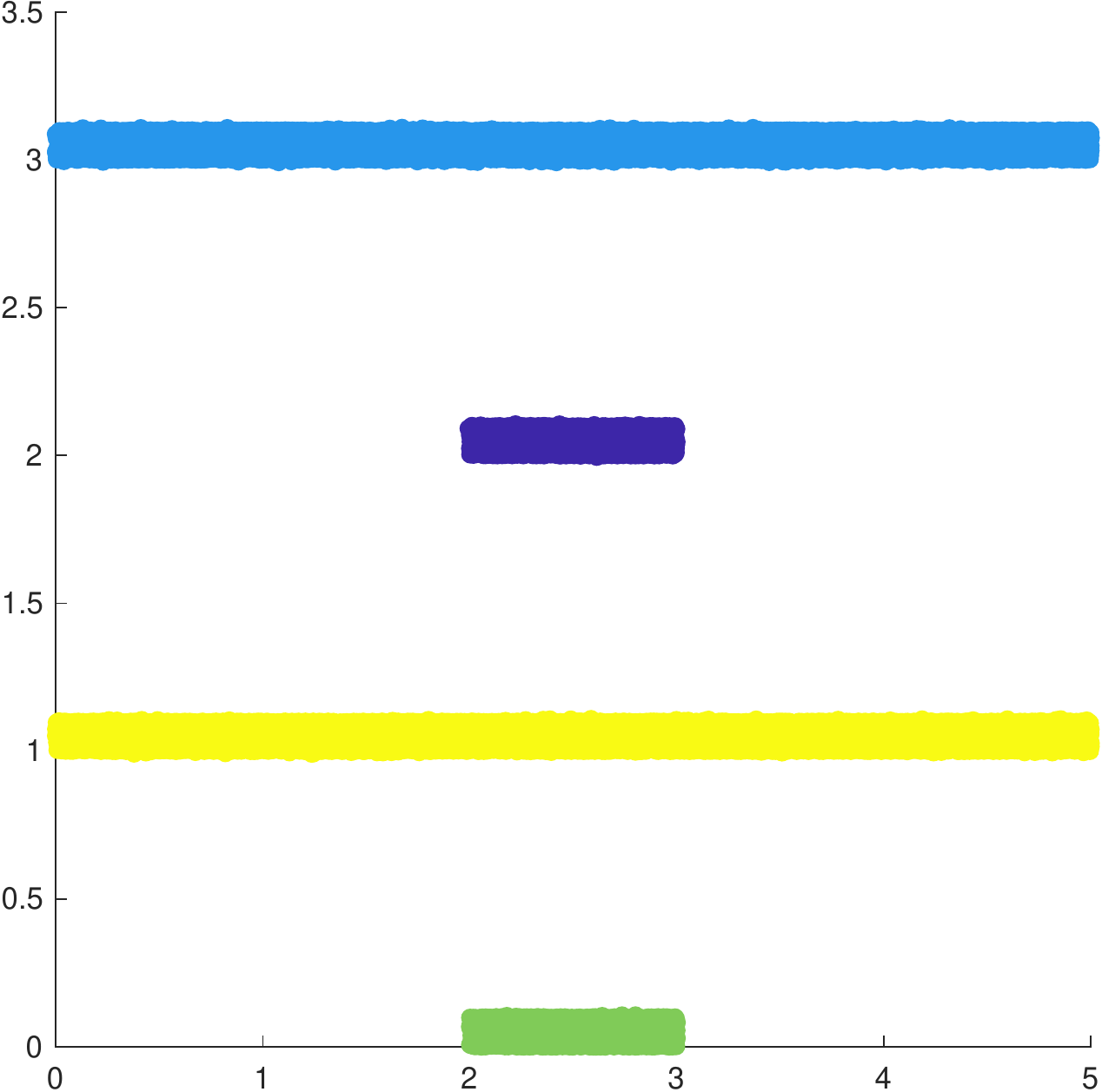}
		\subcaption{LLPD spectral clustering on denoised Four Lines}
		\label{fig:four_lines_denoised_data}
	\end{subfigure}
	\begin{subfigure}[t]{.24\textwidth}
		\includegraphics[width=\textwidth]{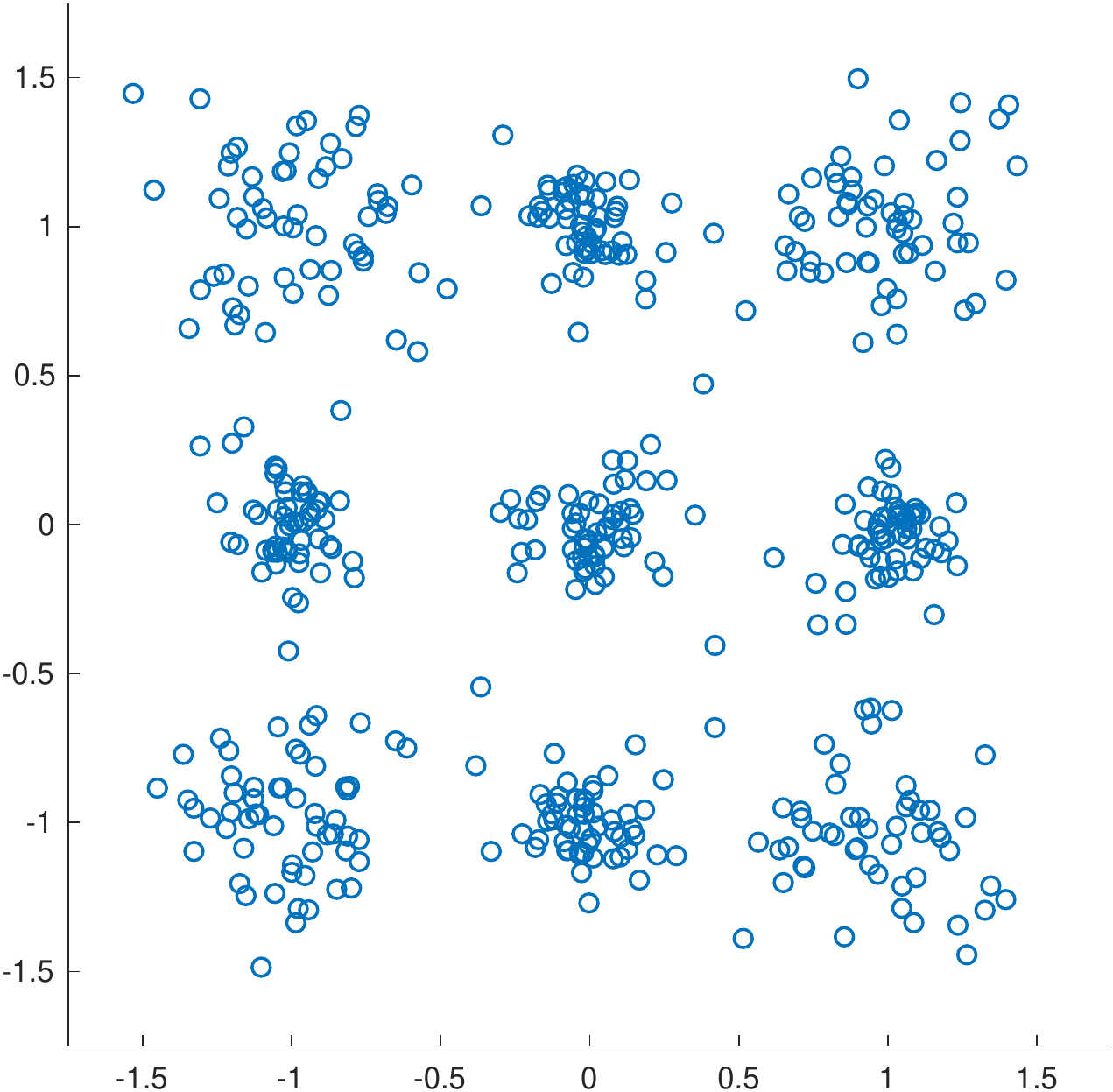}
		\subcaption{Nine Gaussians}
		\label{fig:nine_gaussians_data}
	\end{subfigure}	
	\begin{subfigure}[t]{.24\textwidth}
		\includegraphics[width=\textwidth]{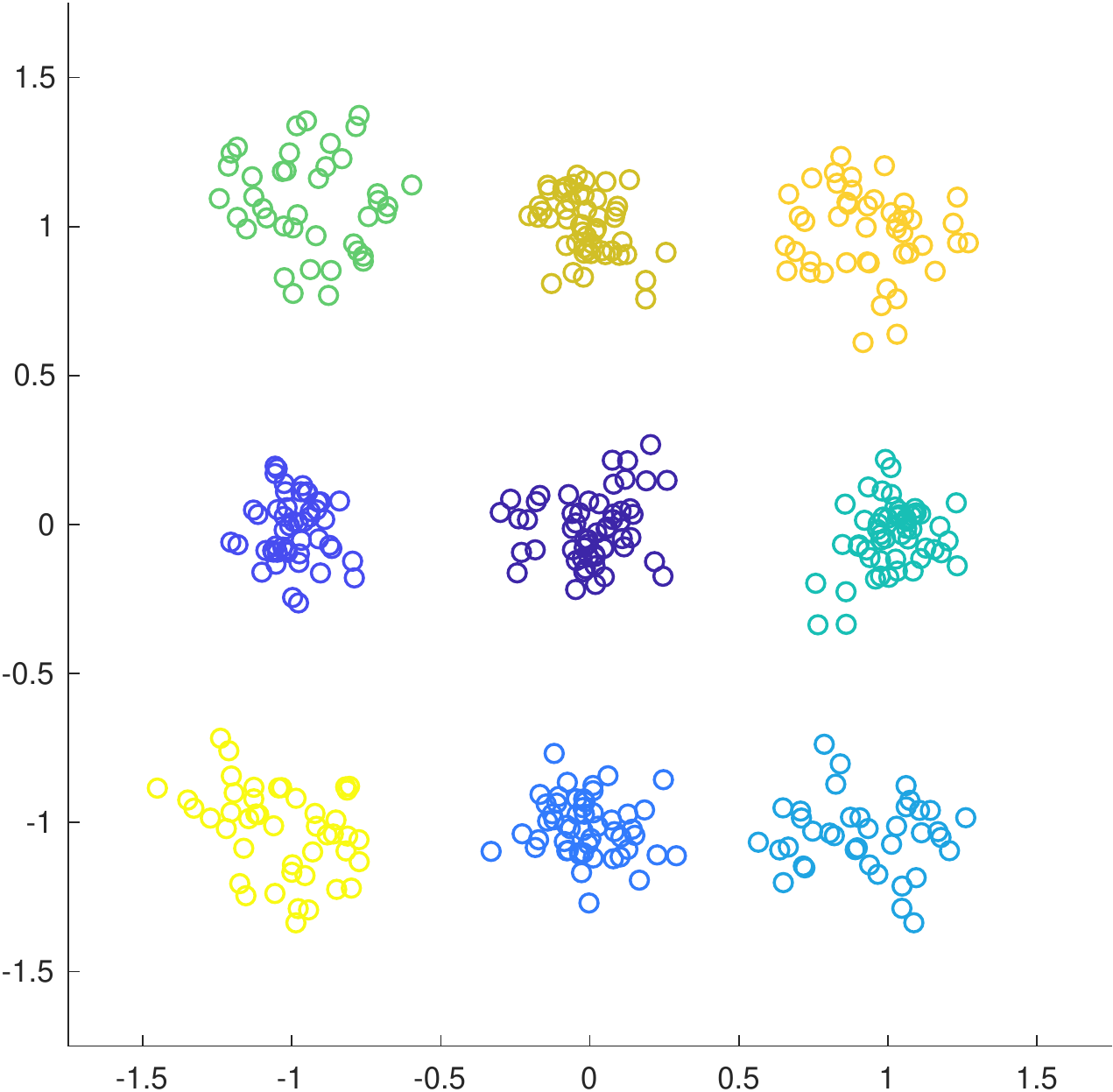}
		\subcaption{LLPD spectral clustering on denoised Nine Gaussians}
		\label{fig:nine_gaussians_denoised_data}
	\end{subfigure}
	\caption{Two dimensional synthetic data sets and LLPD spectral clustering results for the denoised data sets. In Figures \ref{fig:four_lines_denoised_data} and \ref{fig:nine_gaussians_denoised_data}, color corresponds to the label returned by LLPD spectral clustering.}
	\label{fig:SyntheticDatasets}		
\end{figure}

Figure \ref{fig:Denoising} illustrates the denoising procedure. Sorted LLPD-nearest neighbor distances are shown in blue, and the denoising threshold $\thetathres$ (selected by choosing the graph elbow) is shown in red. All plots exhibit an elbow pattern, which is shallow when $D/d$ is small (Figure \ref{fig:DenoisingNG}) and sharp when $D/d$ is large (Figure \ref{fig:DenoisingCS}; the sharpness is due to the drastic difference in nearest neighbor distances for cluster and noise points).

\begin{figure}[!htb]
	\begin{subfigure}[t]{.24\textwidth}
		\captionsetup{width=\linewidth}
		\includegraphics[width=\textwidth]{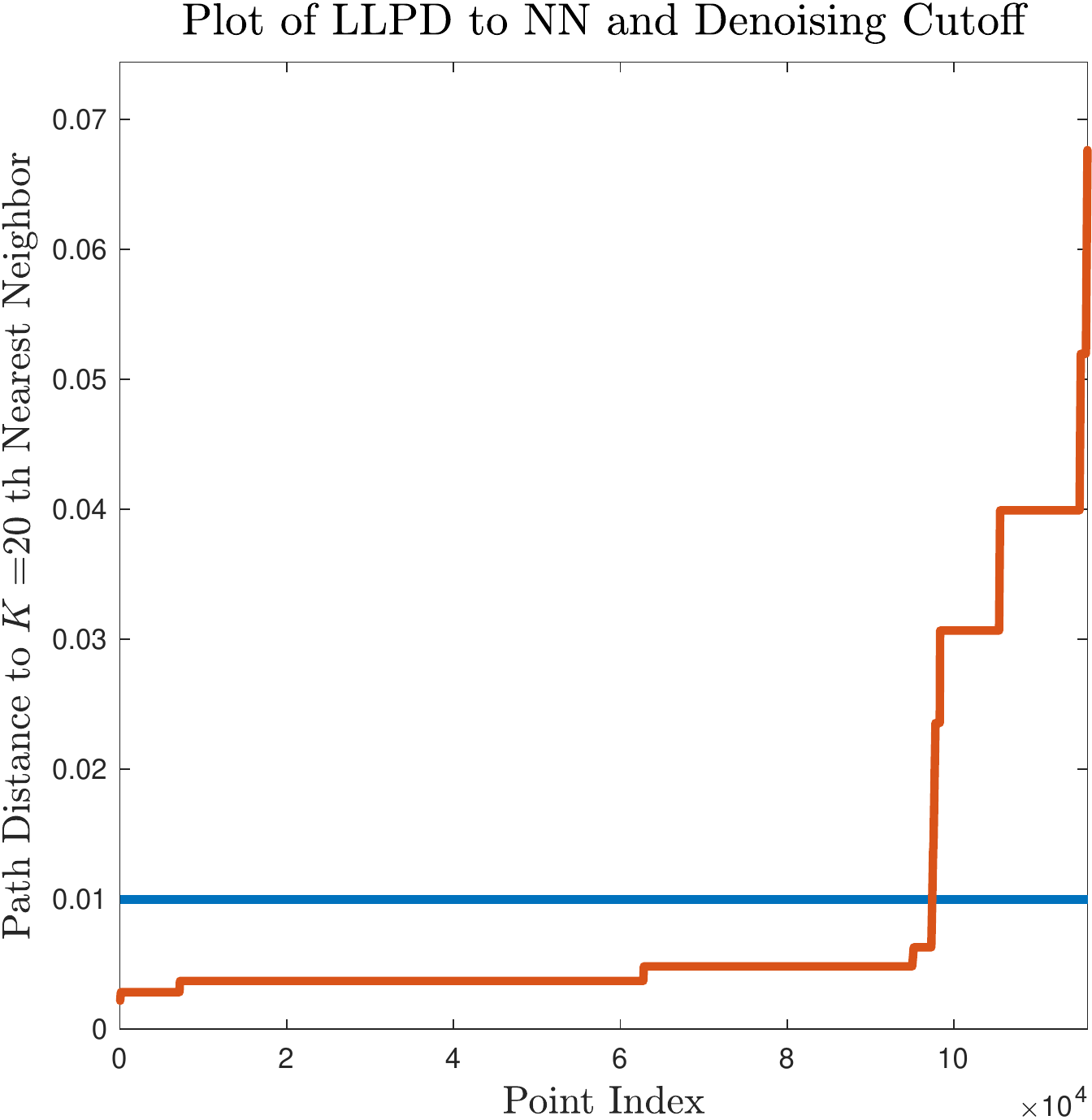}
		\subcaption{\label{fig:DenoisingFL}Four Lines}
	\end{subfigure}
	\begin{subfigure}[t]{.24\textwidth}
		\captionsetup{width=\linewidth}
		\includegraphics[width=\textwidth]{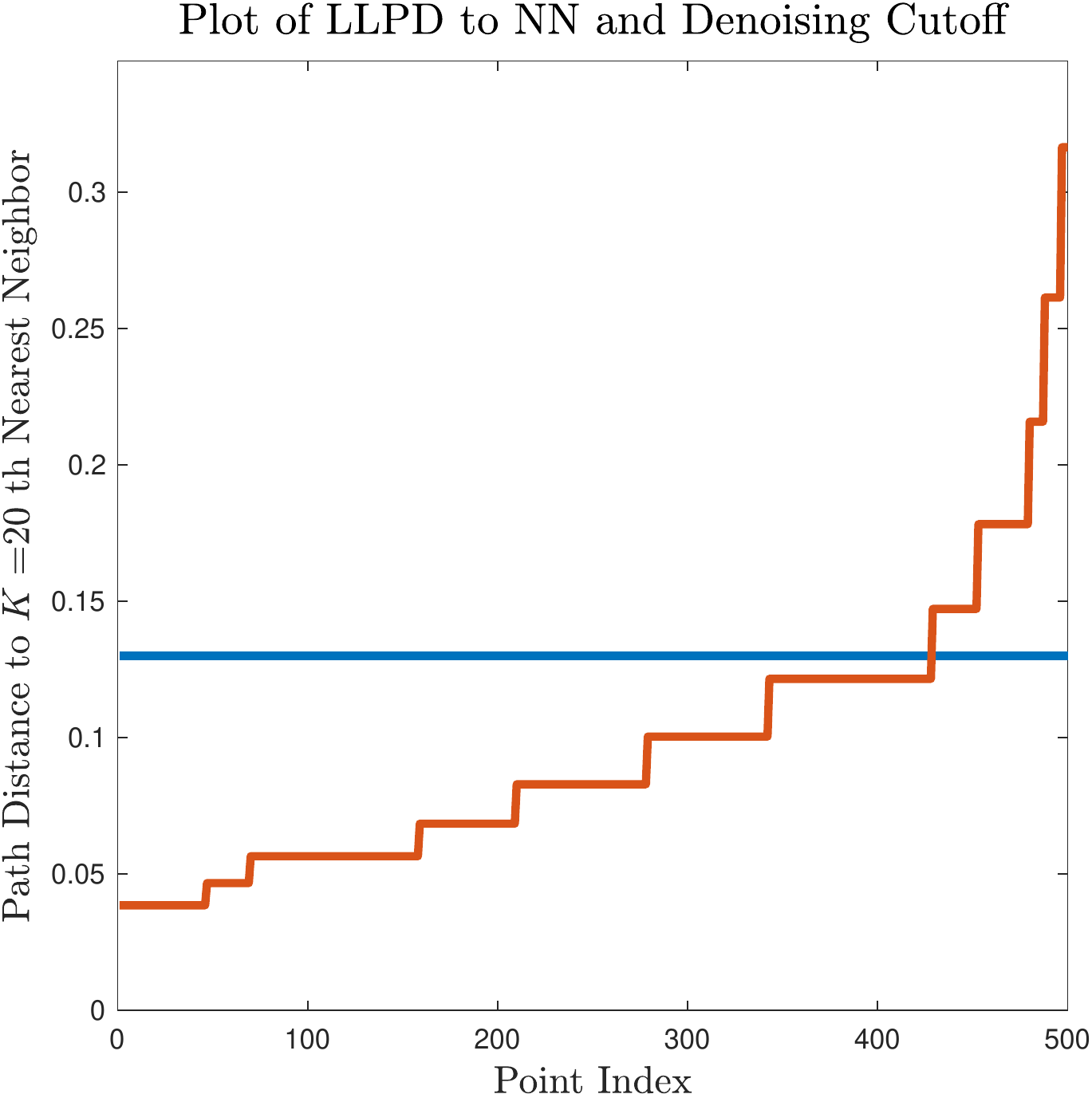}
		\subcaption{\label{fig:DenoisingNG}Nine Gaussians}
	\end{subfigure}
	\begin{subfigure}[t]{.24\textwidth}
		\captionsetup{width=\linewidth}
		\includegraphics[width=\textwidth]{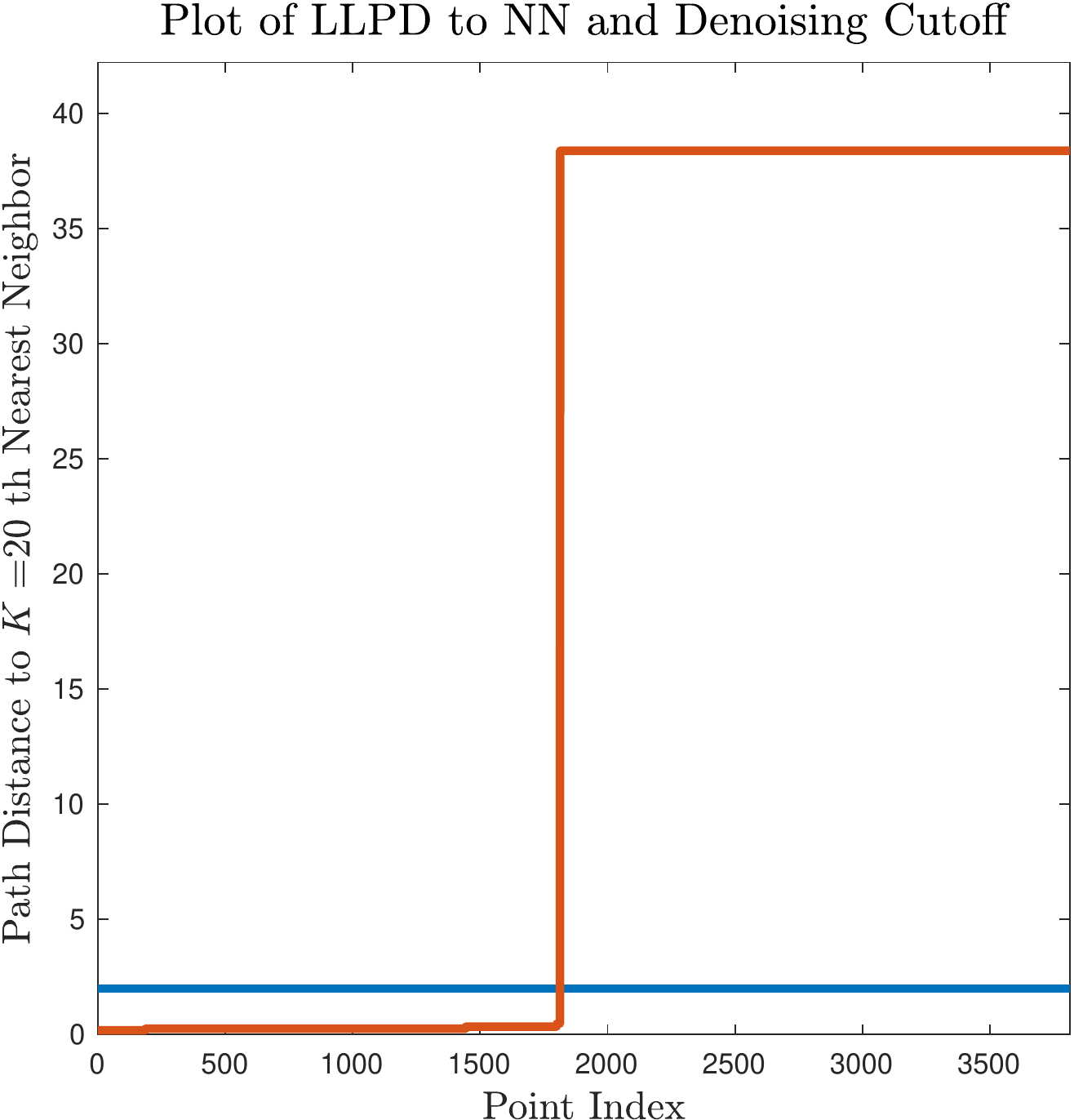}
		\subcaption{\label{fig:DenoisingCS}Concentric Spheres}
	\end{subfigure}	
	\begin{subfigure}[t]{.24\textwidth}
		\captionsetup{width=\linewidth}
		\includegraphics[width=\textwidth]{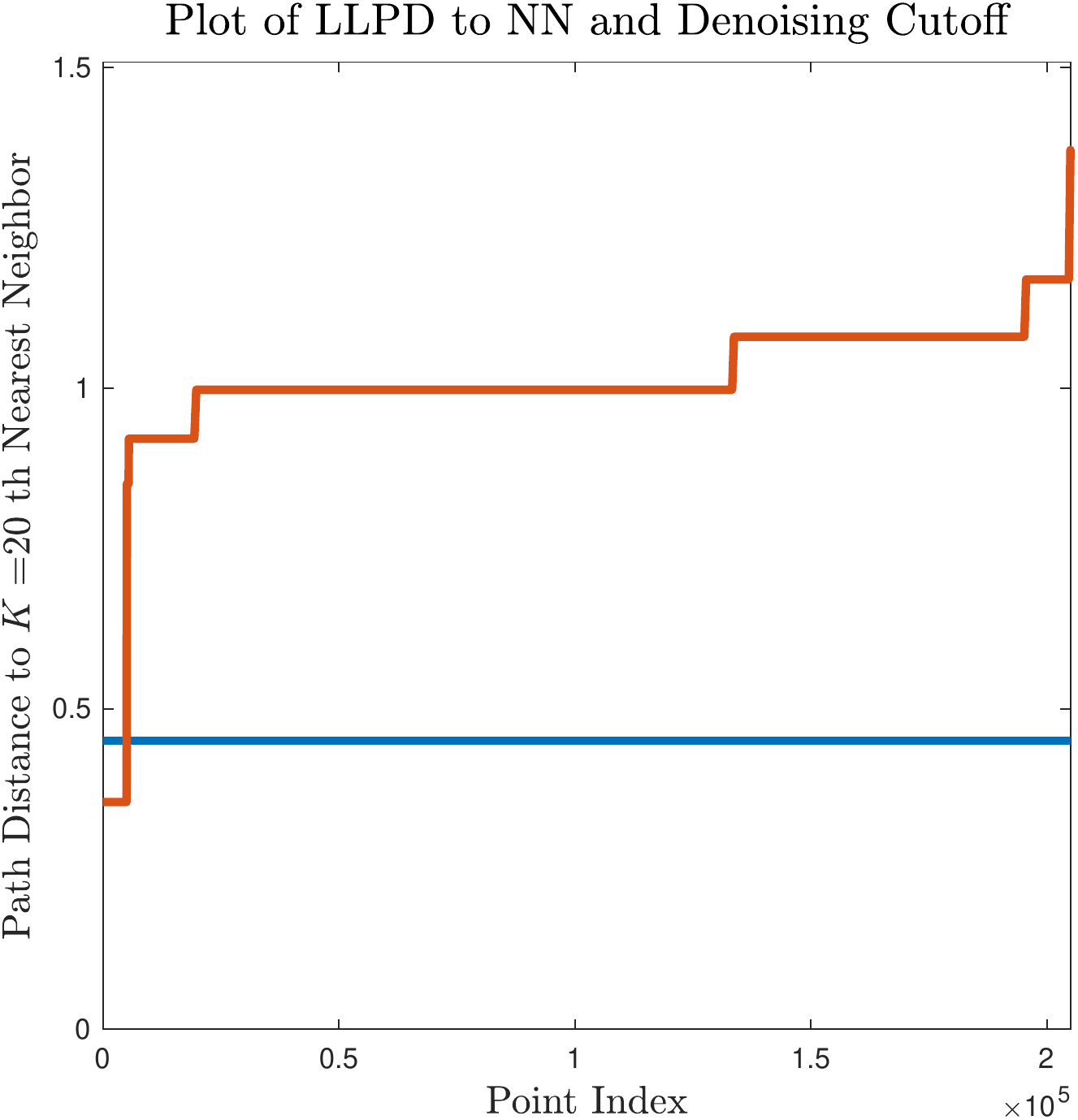}
		\subcaption{\label{fig:DenoisingPP}Parallel Planes}
	\end{subfigure}
	\caption{\label{fig:Denoising}LLPD to $\kNoise^{\!\!\!\!\!\text{th}}$ LLPD-nearest neighbor (blue) and threshold $\thetathres$ used for denoising the data (red).}
\end{figure}

Figure \ref{fig:MultiscaleEigs} shows the multiscale eigenvalue plots for the synthetic data sets. For the four lines data, Euclidean spectral clustering is run with $n=1160$ since it is prohibitively slow for $n=116000$; however all relevant proportions such as $\tilde{n}/n_{i}$ are the same.  LLPD spectral clustering correctly infers $\numclust$ for all synthetic data sets; Euclidean spectral clustering fails to correctly infers $\numclust$ except for the nine Gaussians example.  See Table \ref{tab:Results} for all $\hat{\numclust}$ values and empirical accuracies. Although accuracy is reported on the cluster points only, we remark that labels can be extended to any noise points which survive denoising by considering the label of the closest cluster set, and the empirical accuracies reported in Table \ref{tab:Results} remain essentially unchanged.

In addition to learning the number of clusters $\numclust$, the multiscale eigenvalue plots can also be used to infer a good scale $\sigma$ for LLPD spectral clustering as $\hat{\sigma} = \argmax_{\sigma} \left(\lambda_{\hat{\numclust}+1}(\sigma)-\lambda_{\hat{\numclust}}(\sigma)\right).$  For the two dimensional examples, the right panel of Figure \ref{fig:SyntheticDatasets} shows the results of LLPD spectral clustering with $\hat{\numclust}$, $\hat{\sigma}$ inferred from the maximal eigengap with LLPD.  Robustly estimating $\numclust$ and $\sigma$ makes LLPD spectral clustering essentially parameter free, and thus highly desirable for the analysis of real data.  

\begin{figure}[!htb]
\centering
\begin{subfigure}[t]{.24\textwidth}
\captionsetup{width=.95\linewidth}
\includegraphics[width=\textwidth]{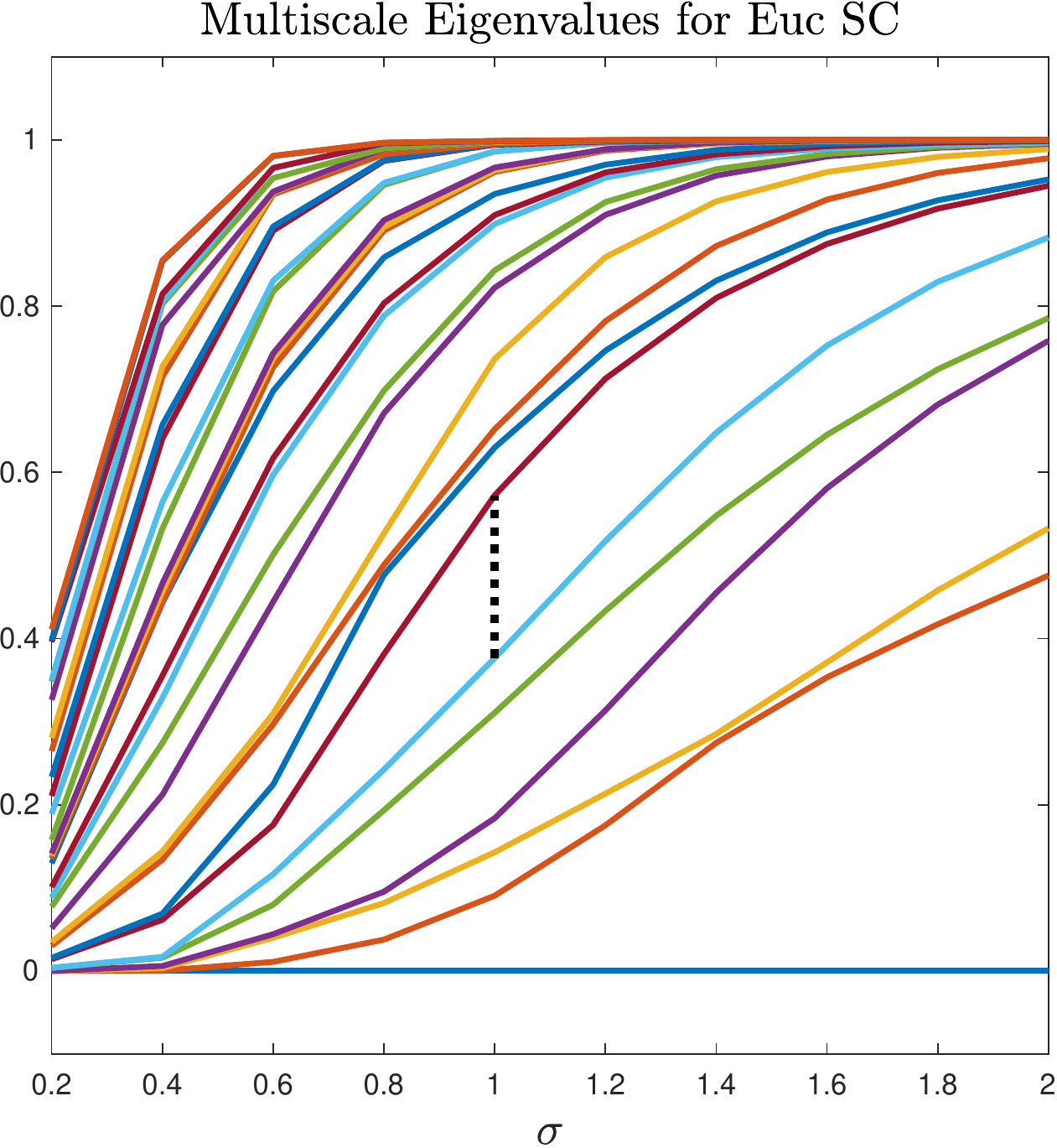}
\includegraphics[width=\textwidth]{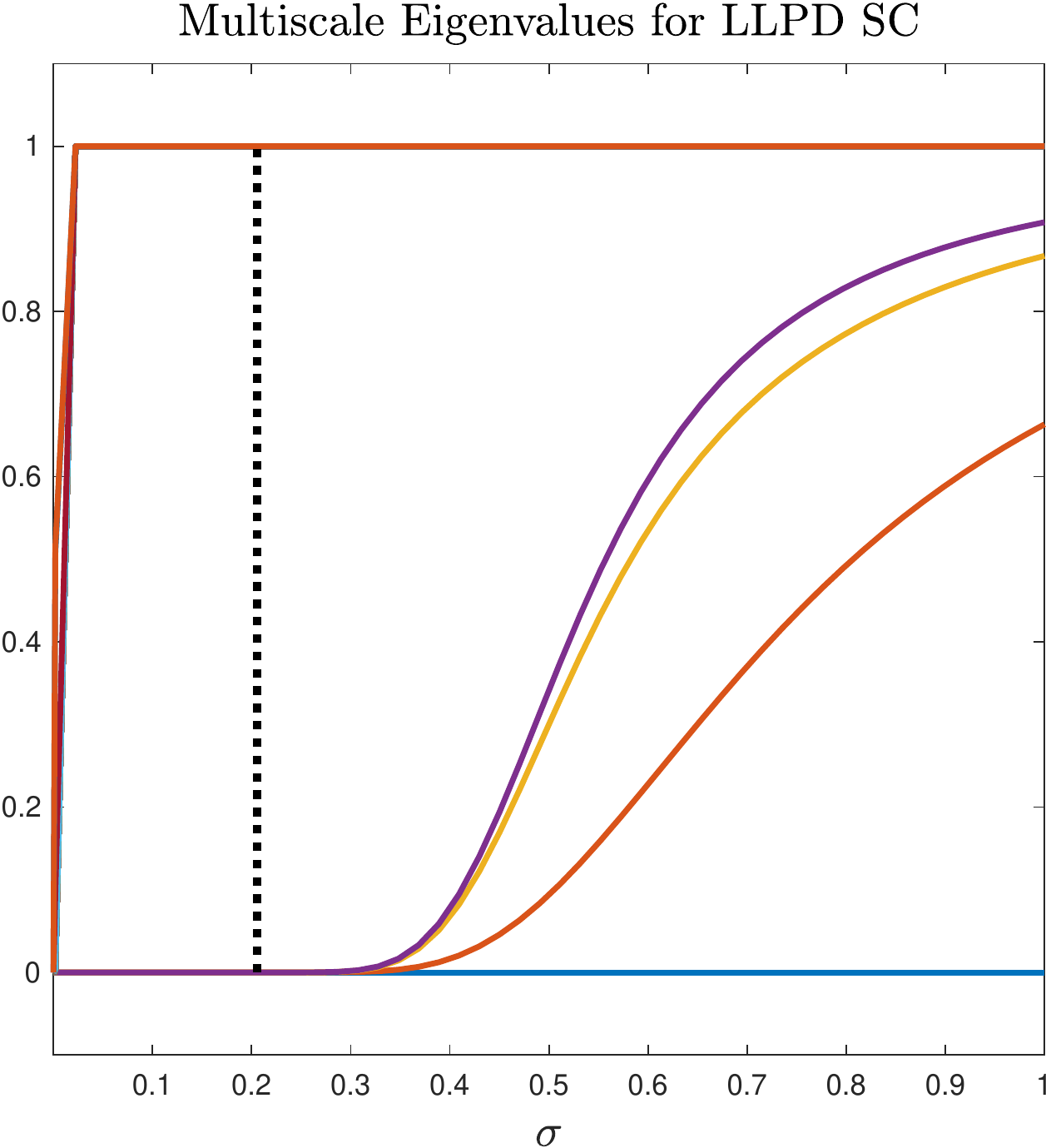}
\subcaption{Four Lines}
\end{subfigure}
\begin{subfigure}[t]{.24\textwidth}
\captionsetup{width=.95\linewidth}
\includegraphics[width=\textwidth]{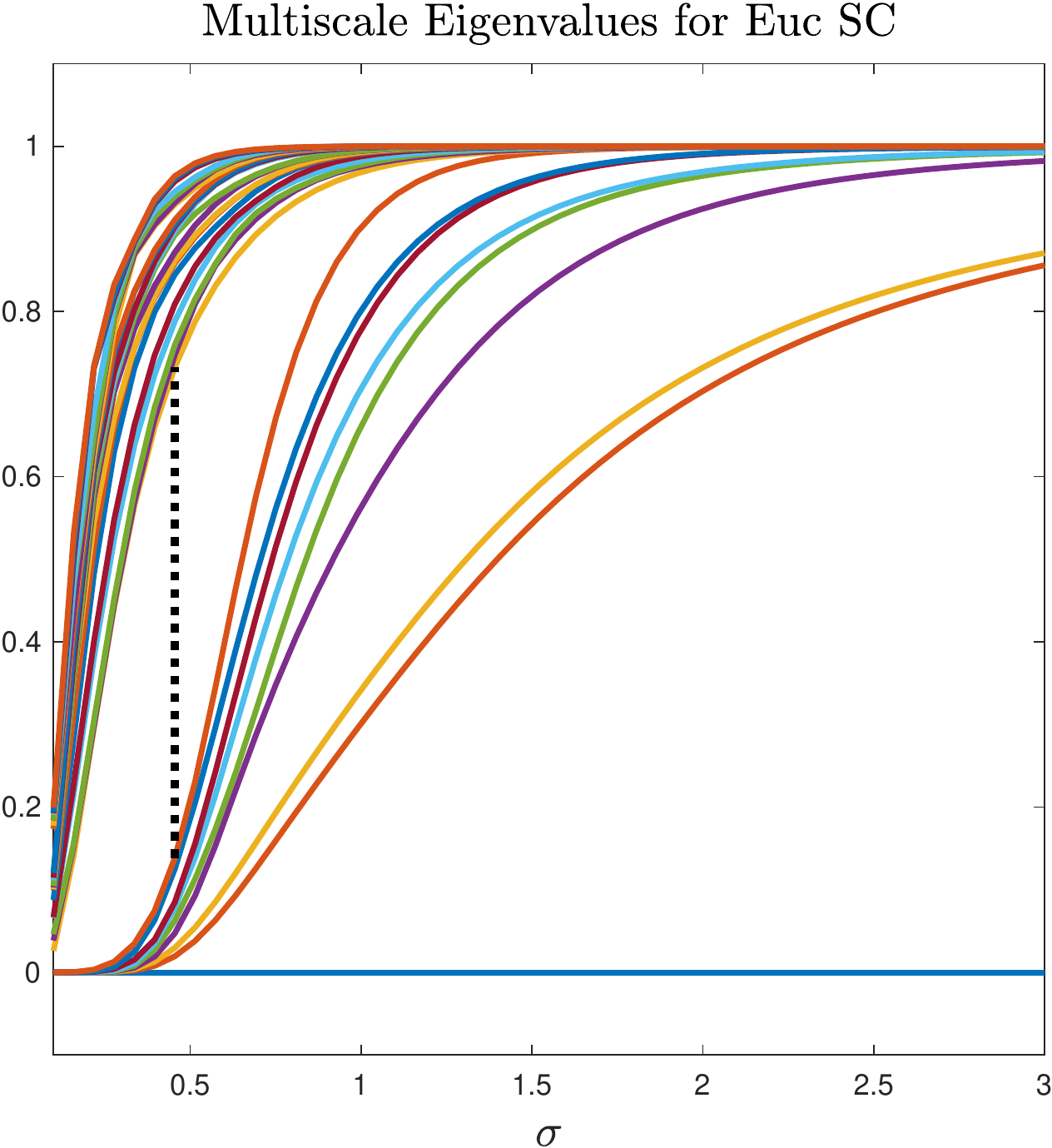}
\includegraphics[width=\textwidth]{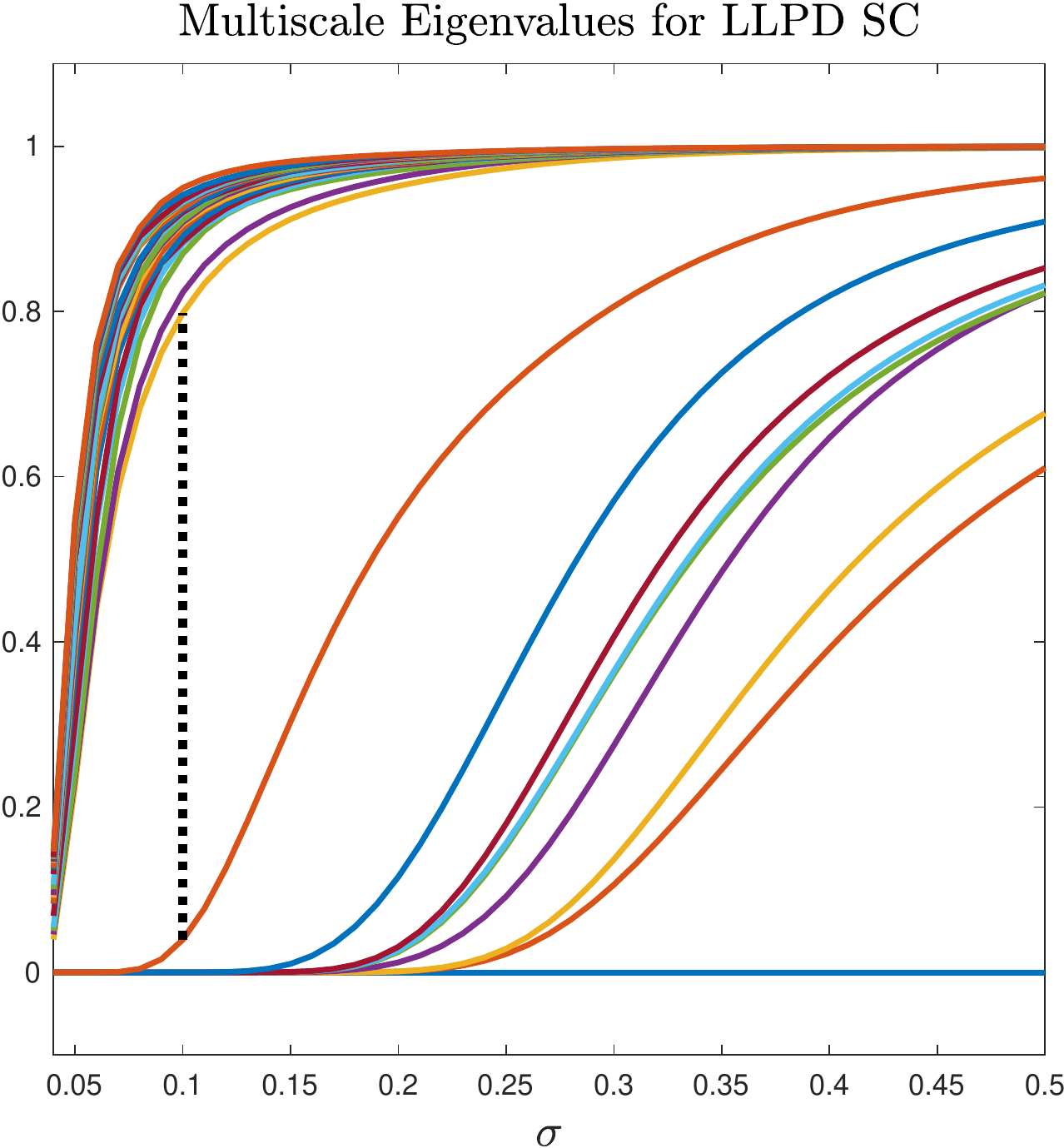}
\subcaption{Nine Gaussians}
\end{subfigure}
\begin{subfigure}[t]{.24\textwidth}
\captionsetup{width=.95\linewidth}
\includegraphics[width=\textwidth]{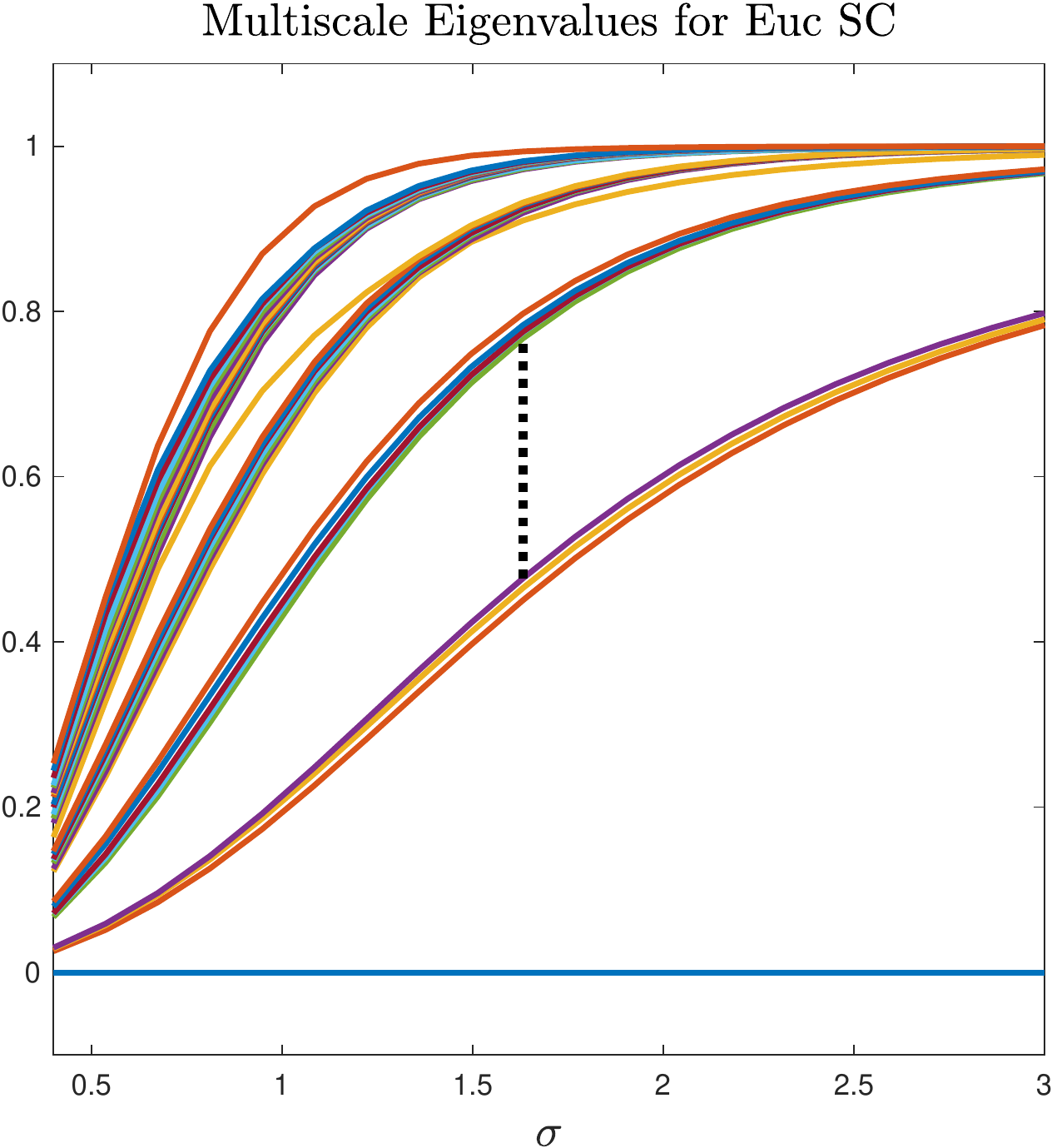}
\includegraphics[width=\textwidth]{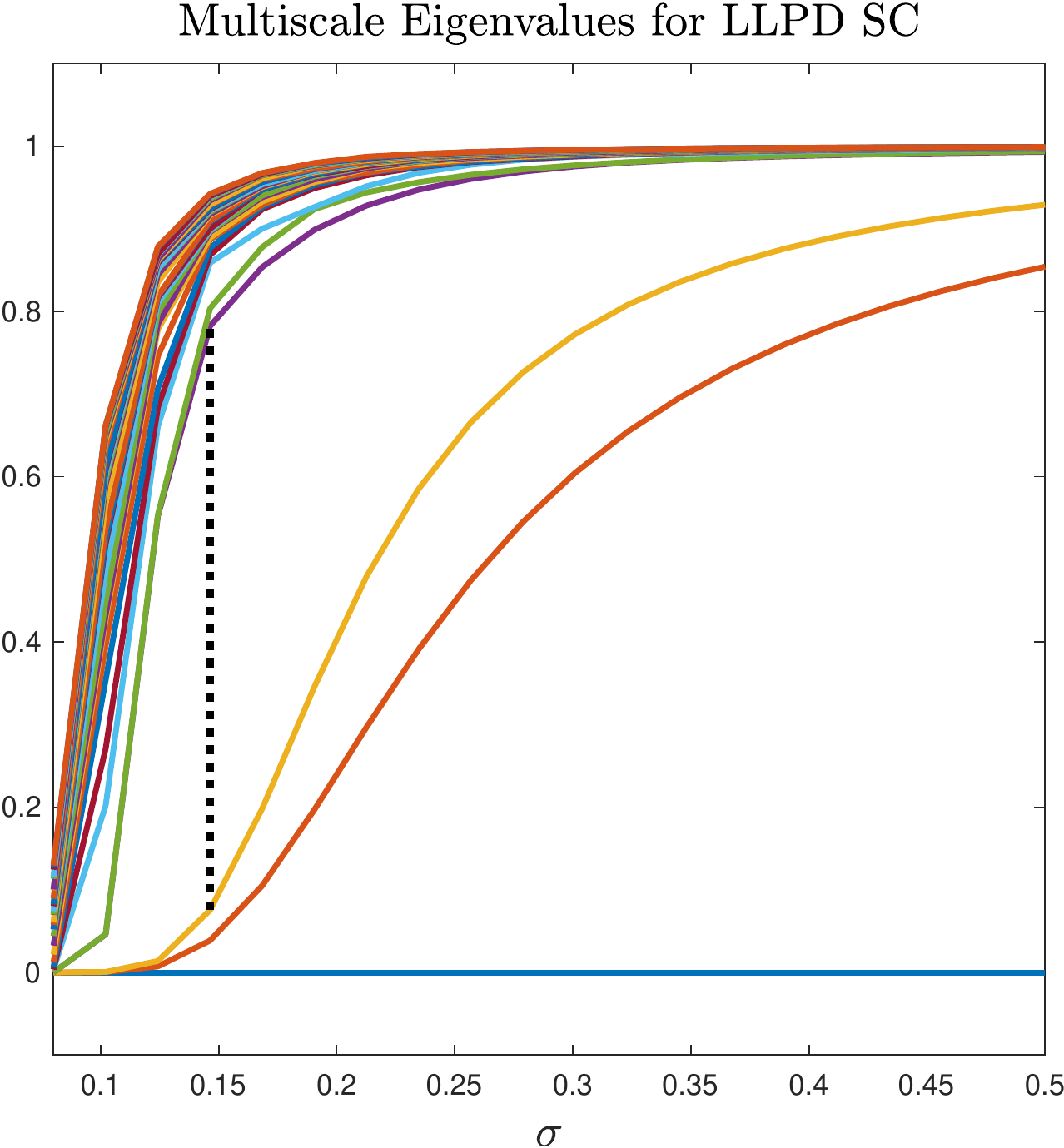}
\subcaption{Concentric Spheres}
\end{subfigure}
\begin{subfigure}[t]{.24\textwidth}
\captionsetup{width=.95\linewidth}
\includegraphics[width=\textwidth]{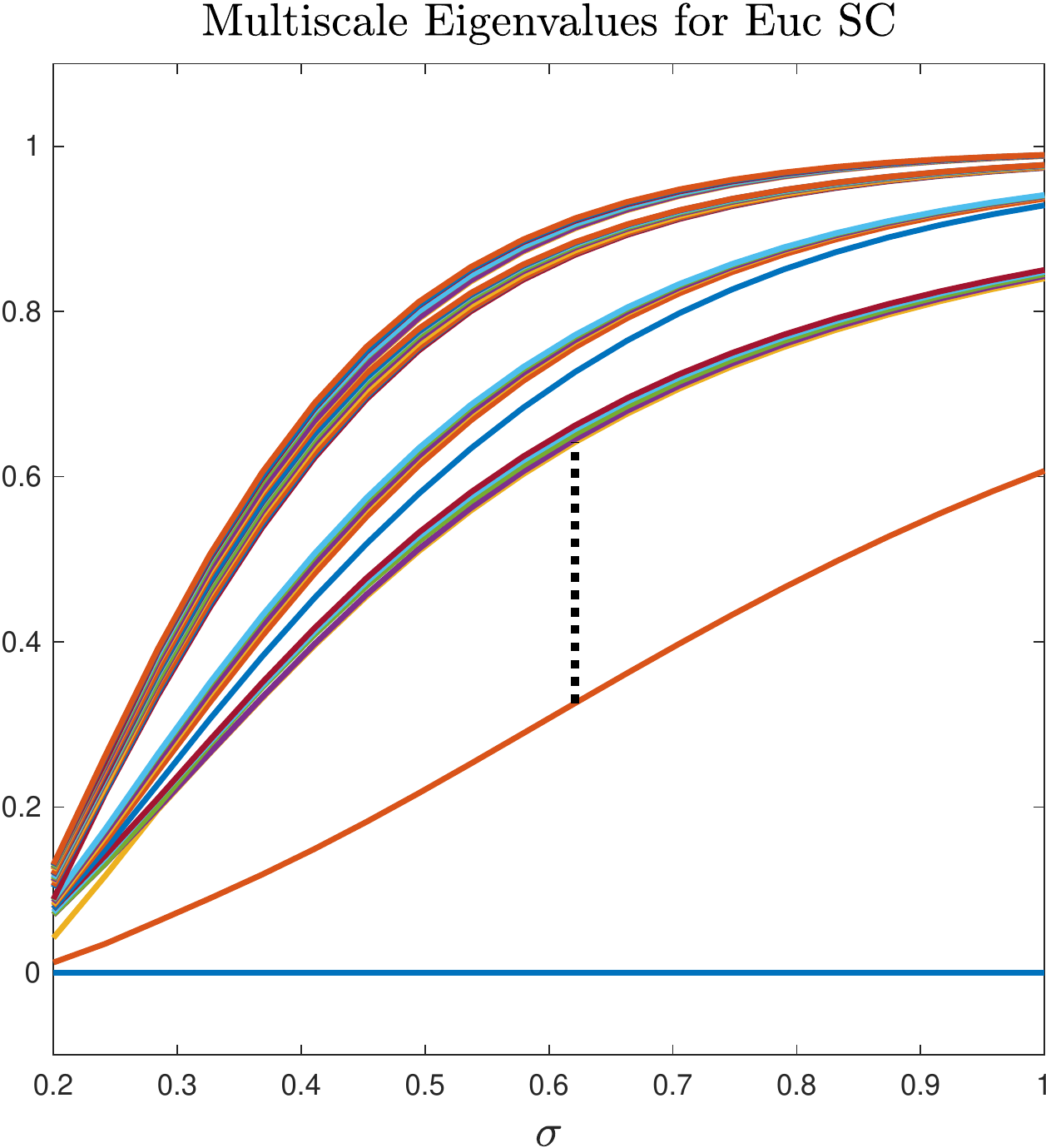}
\includegraphics[width=\textwidth]{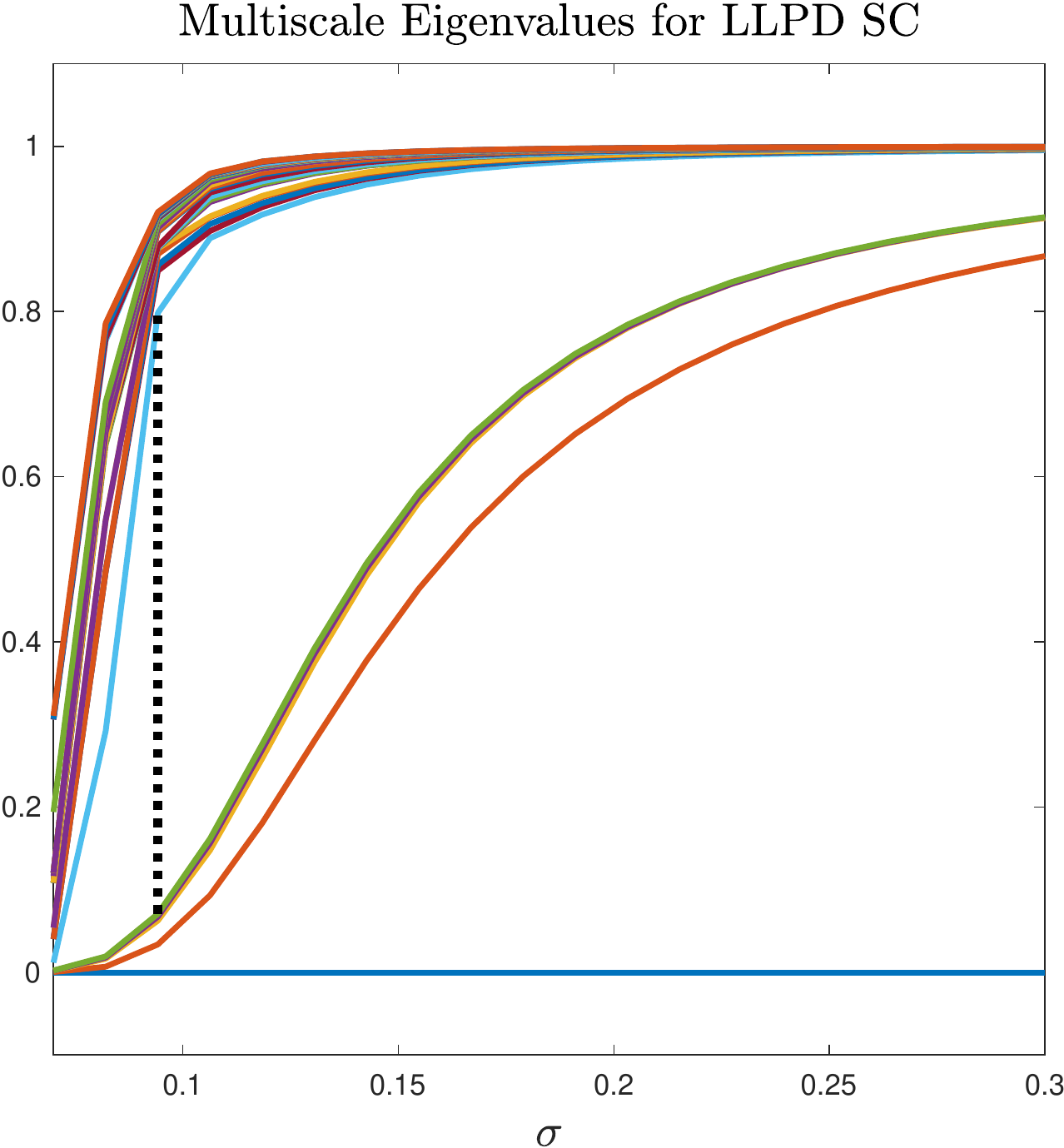}
\subcaption{Parallel Planes}
\end{subfigure}
\caption{\label{fig:MultiscaleEigs}	
Multiscale eigenvalues of $\Lsym$ for synthetic data sets using Euclidean distance (top) and LLPD (bottom).}
\end{figure}

\subsection{Real Data}

\begin{figure}[!htb]
	\begin{subfigure}[t]{.24\textwidth}
		\captionsetup{width=\linewidth}
		\includegraphics[width=\textwidth]{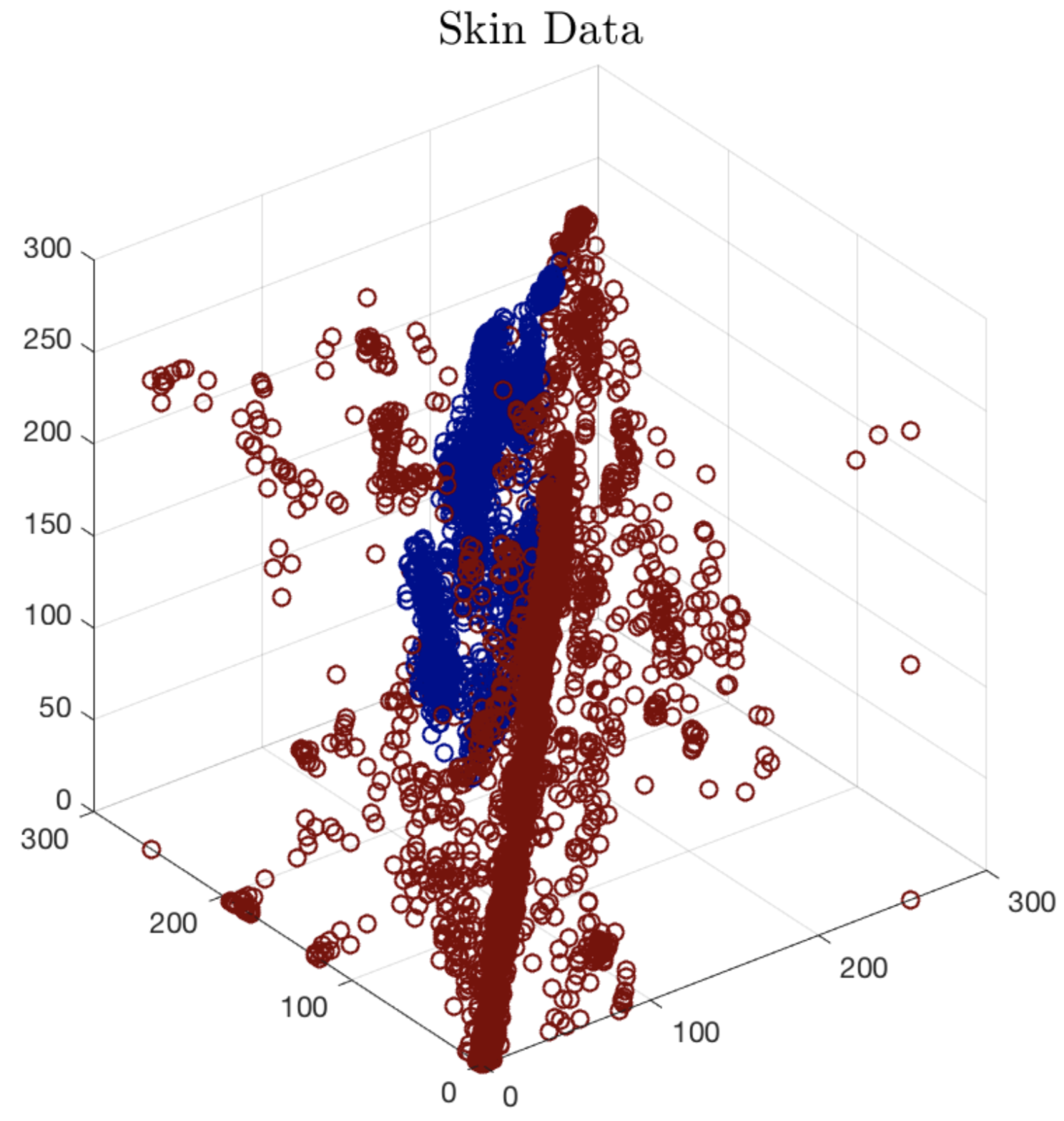}
		\subcaption{\label{fig:SkinsEmbedding}Skins data}
	\end{subfigure}	
	\begin{subfigure}[t]{.24\textwidth}
		\captionsetup{width=\linewidth}
		\includegraphics[width=\textwidth]{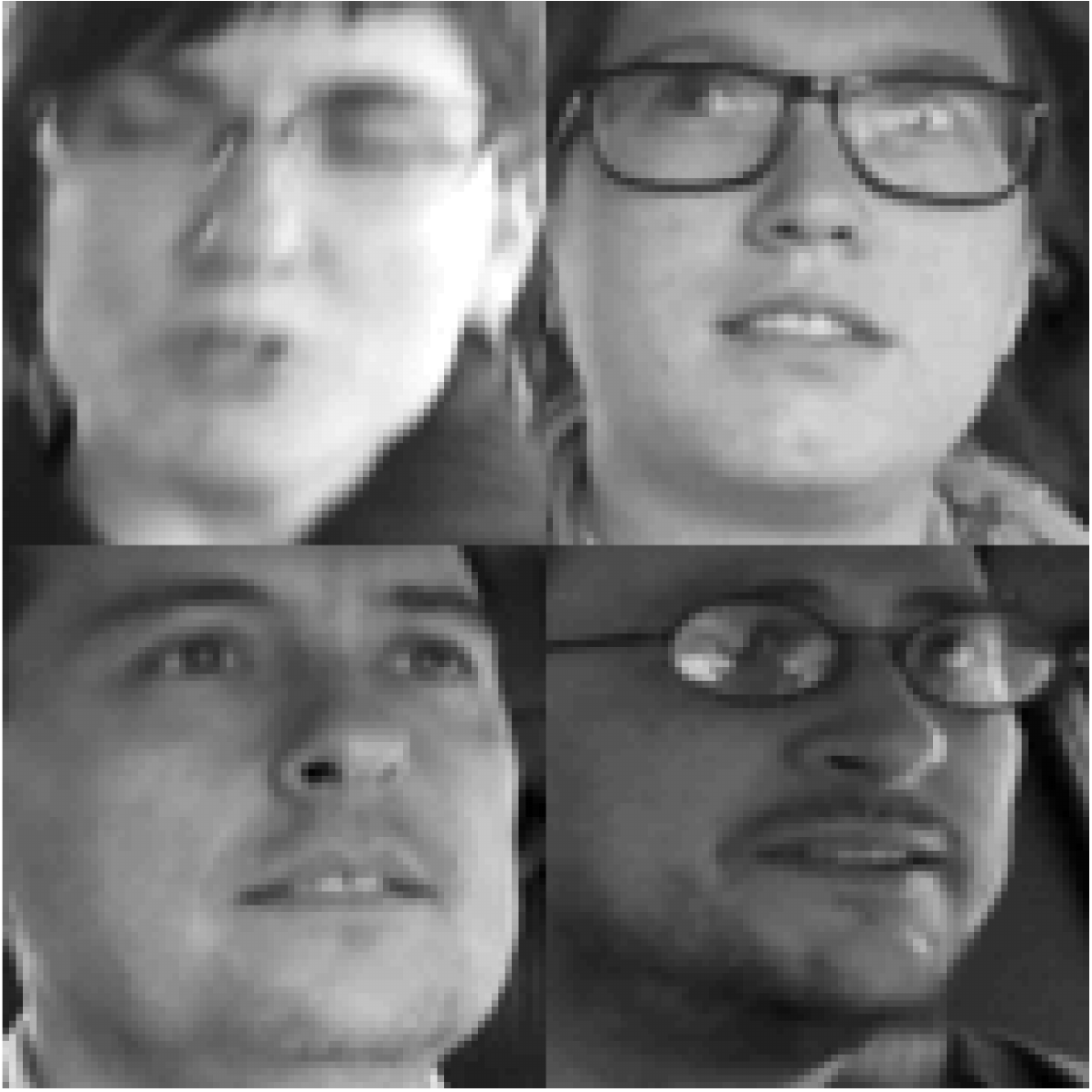}
		\subcaption{\label{fig:DrivFaceClasses}DrivFace Representative Faces}
	\end{subfigure}	
	\begin{subfigure}[t]{.24\textwidth}
		\captionsetup{width=\linewidth}
		\includegraphics[width=\textwidth]{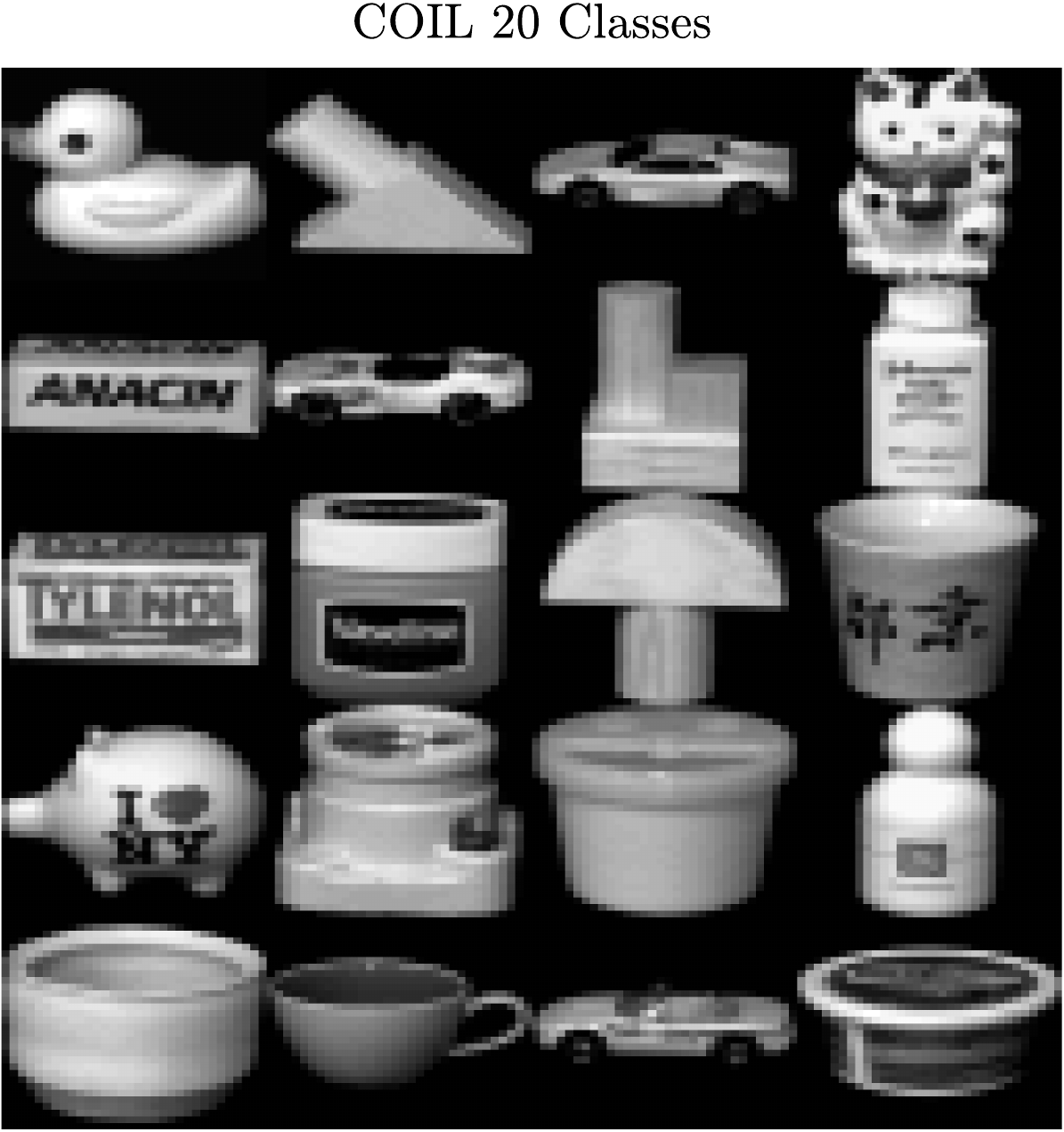}
		\subcaption{\label{fig:COILclasses}COIL objects}
	\end{subfigure}
	\begin{subfigure}[t]{.24\textwidth}
		\captionsetup{width=\linewidth}
		\includegraphics[width=\textwidth]{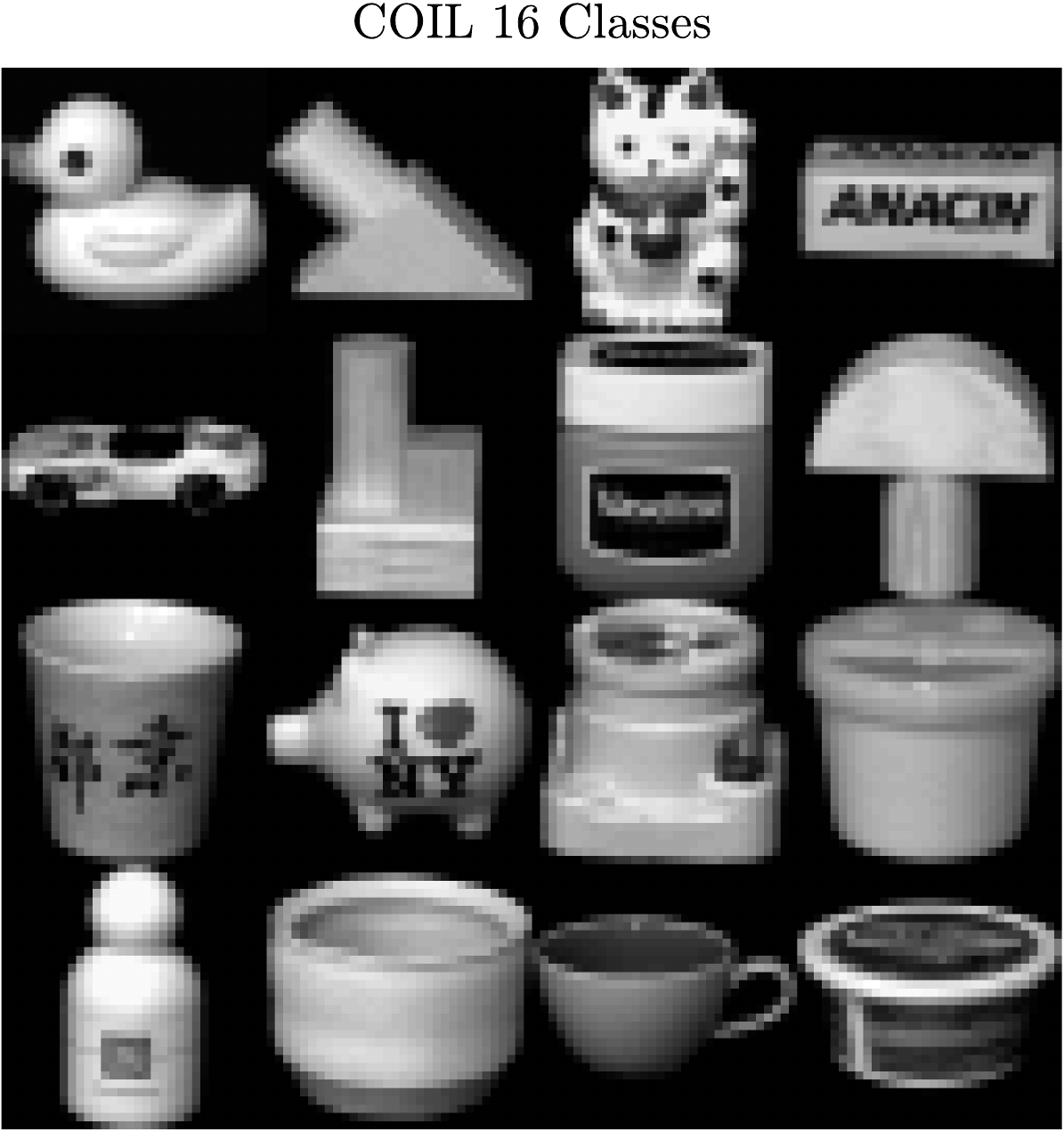}
		\subcaption{\label{fig:COIL16classes}COIL 16 objects}
	\end{subfigure}
	\caption{\label{fig:Miscellaneous}Representative objects from (a) Skins, (b) DrivFace, (c) COIL, and (d) COIL 16 data sets.}
\end{figure}

We apply our method on the following real data sets:
\begin{itemize}
	\setlength\itemsep{0em}

	\item \textbf{Skins} This large dataset consists of RGB values corresponding to pixels sampled from two classes: human skin and other\footnote{\url{https://archive.ics.uci.edu/ml/datasets/skin+segmentation}}.  The human skin samples are widely sampled with respect to age, gender, and skin color; see \cite{Bhatt2009_Efficient} for details on the construction of the dataset.  This dataset consists of 245057 data points in $D=3$ dimensions, corresponding to the RGB values. Note LLPD was approximated from scales $\{t_s\}_{s=1}^m$ defined by 10 percentiles, as opposed to the default exponential scaling.  See Figure \ref{fig:SkinsEmbedding}.
	\item \textbf{DrivFace} The DrivFace data set is publicly available\footnote{\url{https://archive.ics.uci.edu/ml/datasets/DrivFace}} from the UCI Machine Learning Repository \citep{Lichman:2013}. This data set consists of 606 $80\times 80$ pixel images of the faces of four drivers, 2 male and 2 female.  See Figure \ref{fig:DrivFaceClasses}
	\item \textbf{COIL} The COIL (Columbia University Image Library) dataset\footnote{\url{http://www.cs.columbia.edu/CAVE/software/softlib/coil-20.php}} consists of images of 20 different objects captured at varying angles \citep{Nene1996_Columbia}.  There are 1440 different data points, each of which is a $32\times 32$ image, thought of as a $D=1024$ dimensional point cloud. See Figure \ref{fig:COILclasses}.
	\item \textbf{COIL 16} To ease the problem slightly, we consider a 16 class subset of the full COIL data, shown in Figure \ref{fig:COIL16classes}. 
	\item \textbf{Pen Digits} This dataset\footnote{\url{https://archive.ics.uci.edu/ml/datasets/Pen-Based+Recognition+of+Handwritten+Digits}} consists of 3779 spatially resampled digital signals of hand-written digits in 16 dimensions \citep{Alimoglu1996_Methods}.  We consider a subset consisting of five digits: $\{0,2,3,4,6\}$.
	\item \textbf{Landsat} The landsat satellite data we consider consists of pixels in $3\times 3$ neighborhoods in a multispectral camera with four spectral bands\footnote{\url{https://archive.ics.uci.edu/ml/datasets/Statlog+(Landsat+Satellite)}}.  This leads to a total ambient dimension of $D=36$.  The data considered consists of $K=4$ classes, consisting of pixels of different physical materials: red soil, cotton, damp soil, and soil with vegetable stubble. 
\end{itemize}

\begin{figure}[!htb]
	\centering
	\begin{subfigure}[t]{.19\textwidth}
	\vspace{1pt}
		\captionsetup{width=\linewidth}
		\includegraphics[width=\textwidth]{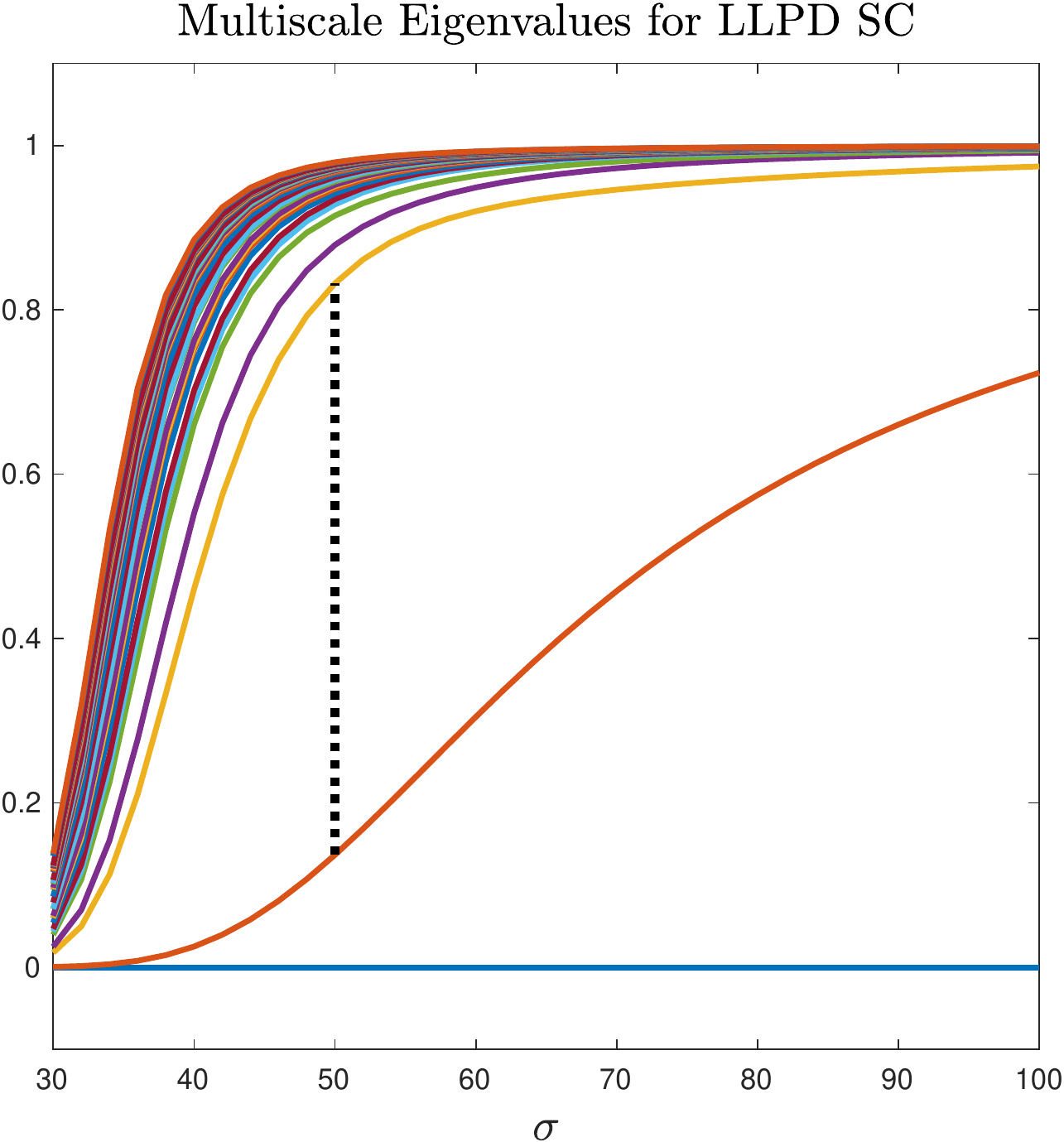}
		\subcaption{Skins: LLPD eigenvalues}
	\end{subfigure}
	\begin{subfigure}[t]{.19\textwidth}
		\captionsetup{width=.95\linewidth}
		\includegraphics[width=\textwidth]{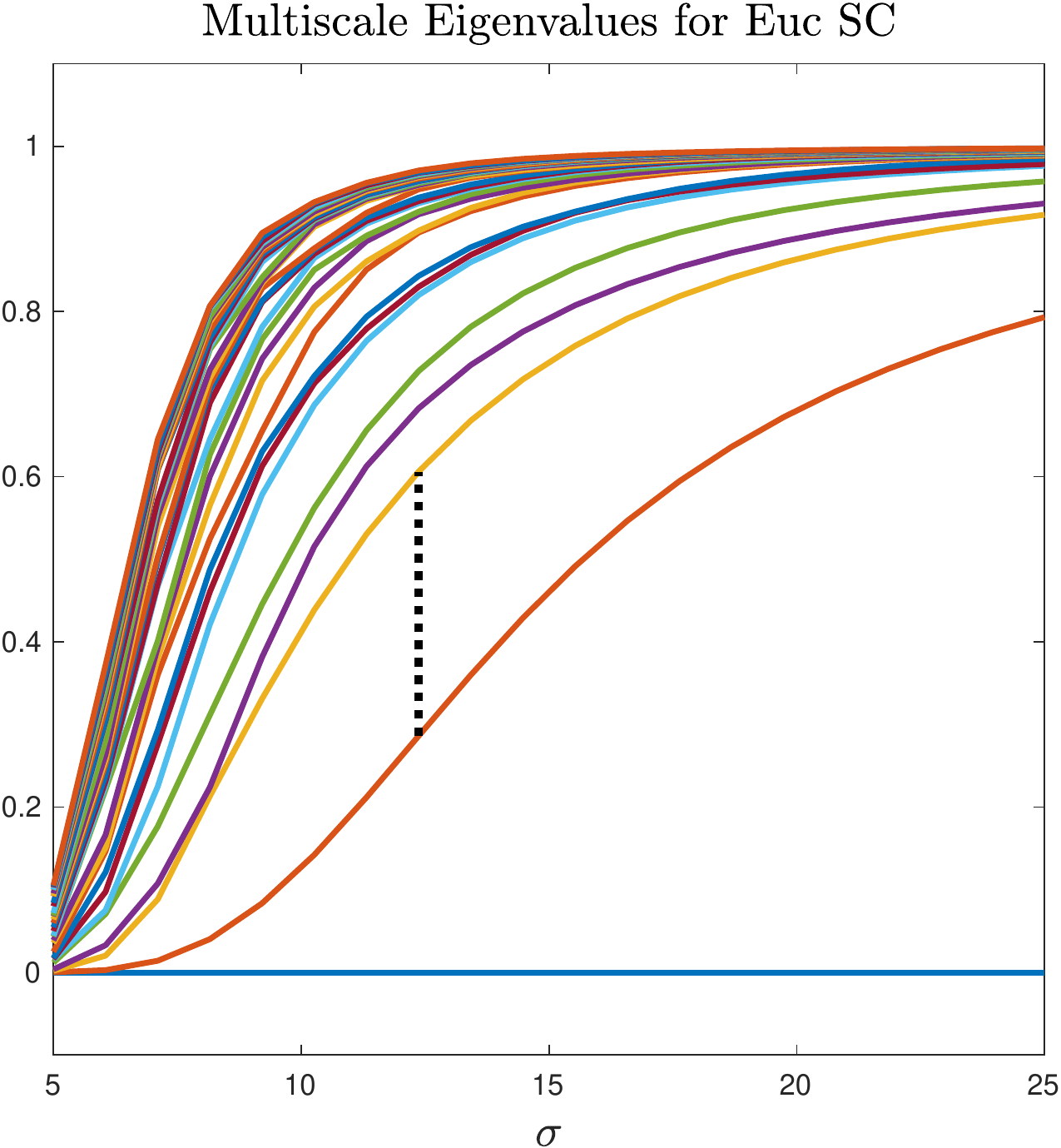}
		\includegraphics[width=\textwidth]{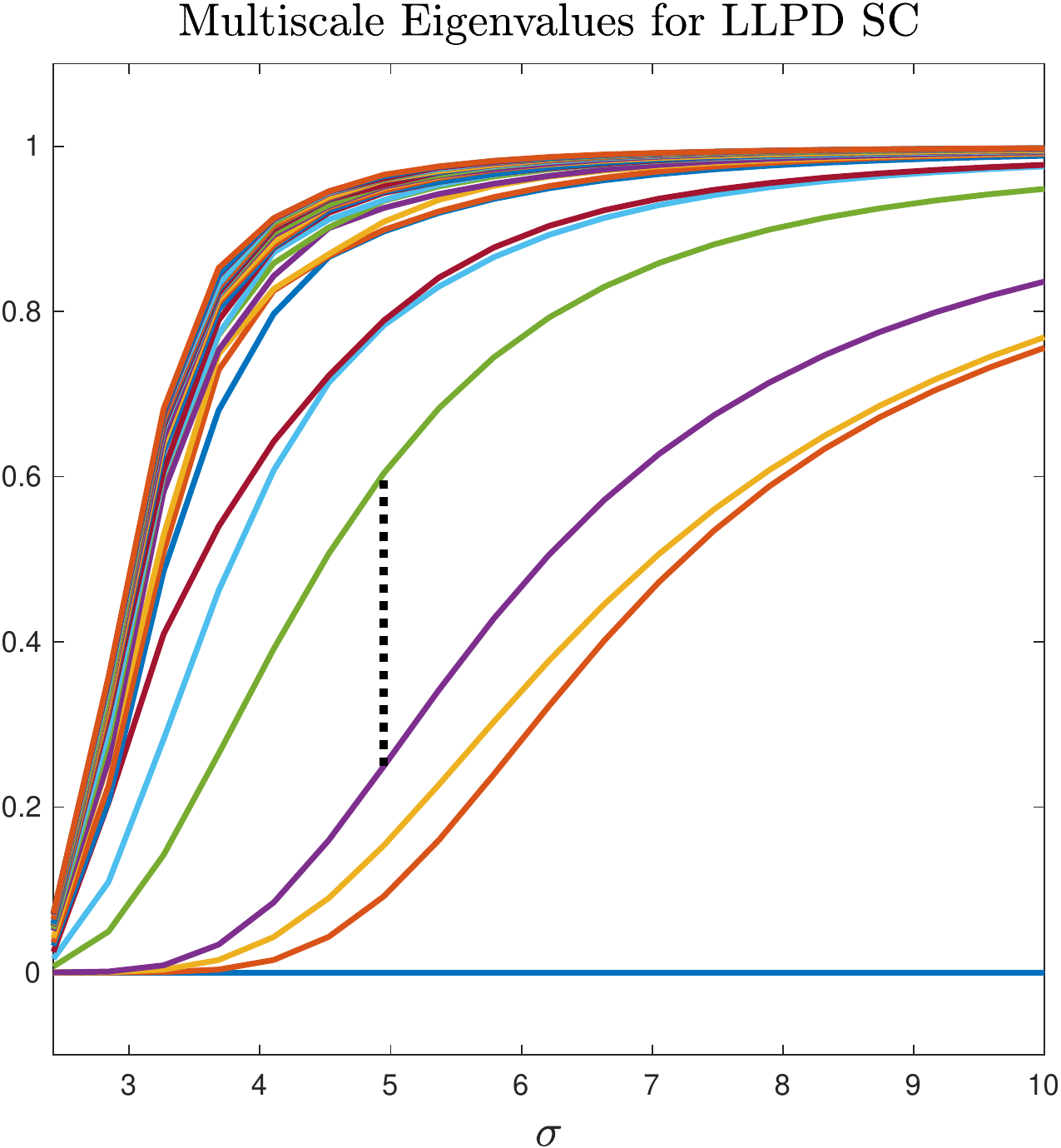}
		\subcaption{\label{fig:DrivFace}DrivFace}
	\end{subfigure}
	\begin{subfigure}[t]{.19\textwidth}
		\captionsetup{width=.95\linewidth}
		\includegraphics[width=\textwidth]{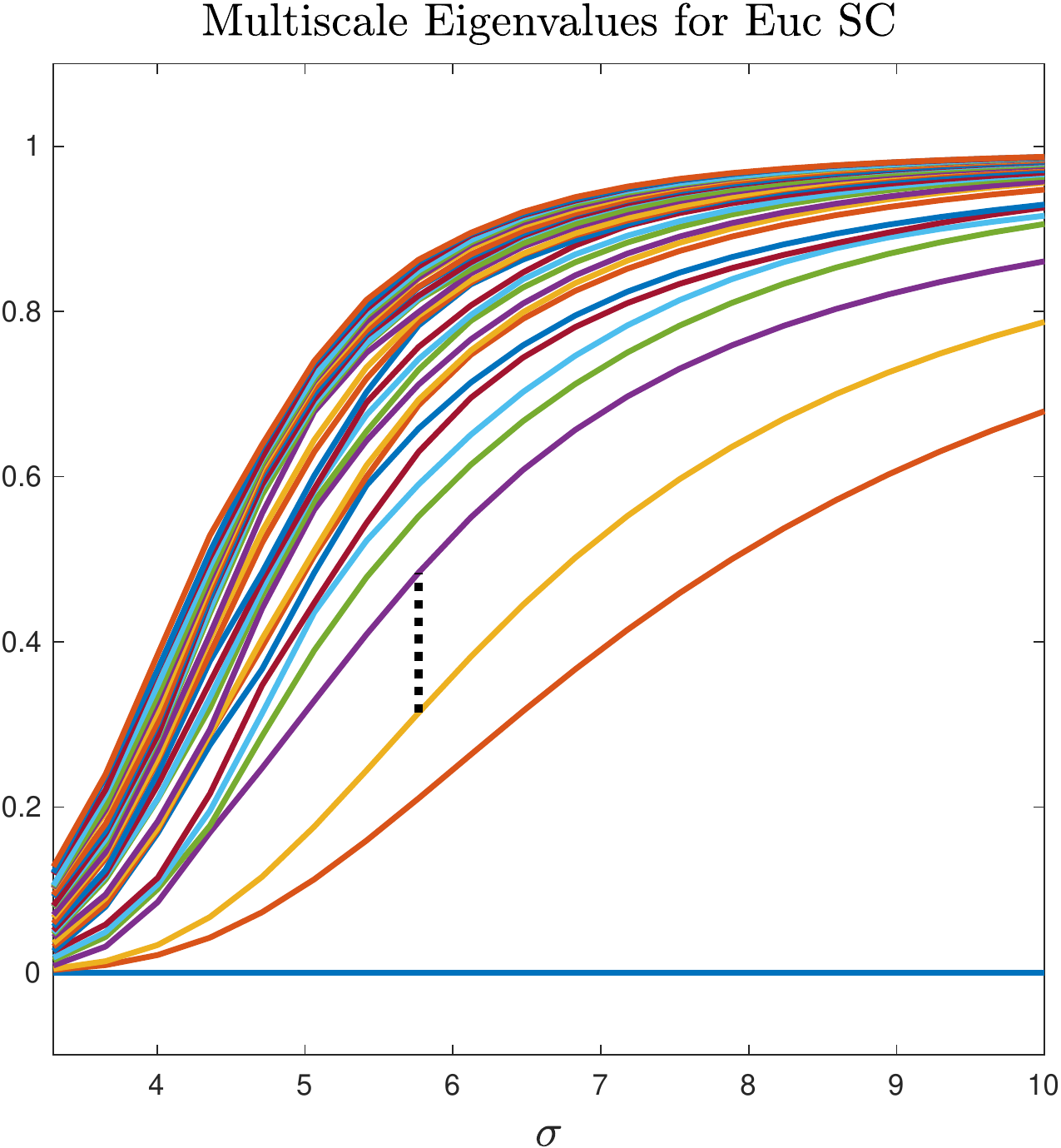}
		\includegraphics[width=\textwidth]{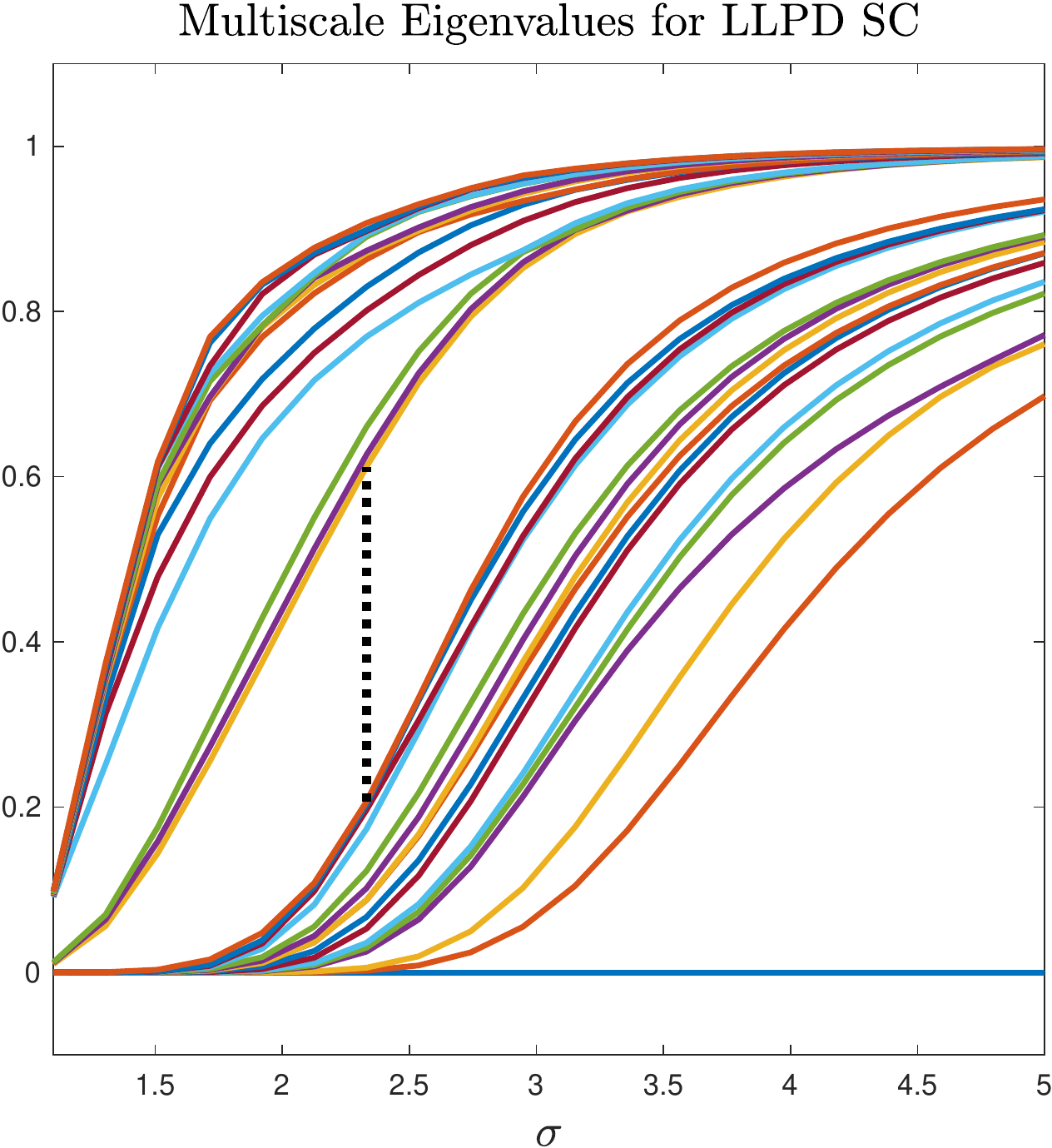}
		\subcaption{\label{fig:COIL16}COIL 16}
	\end{subfigure}
	\begin{subfigure}[t]{.19\textwidth}
		\captionsetup{width=.95\linewidth}
		\includegraphics[width=\textwidth]{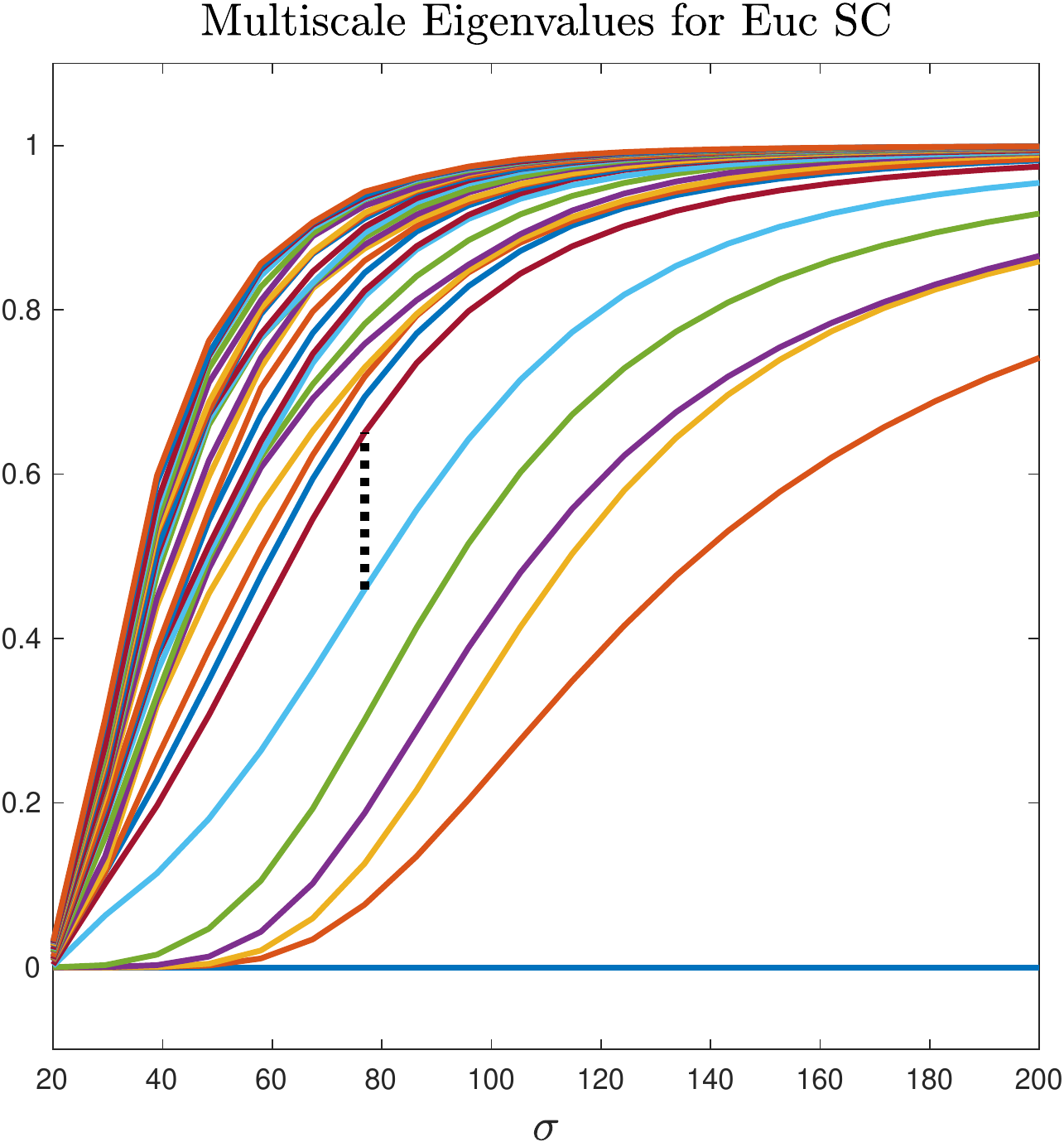}
		\includegraphics[width=\textwidth]{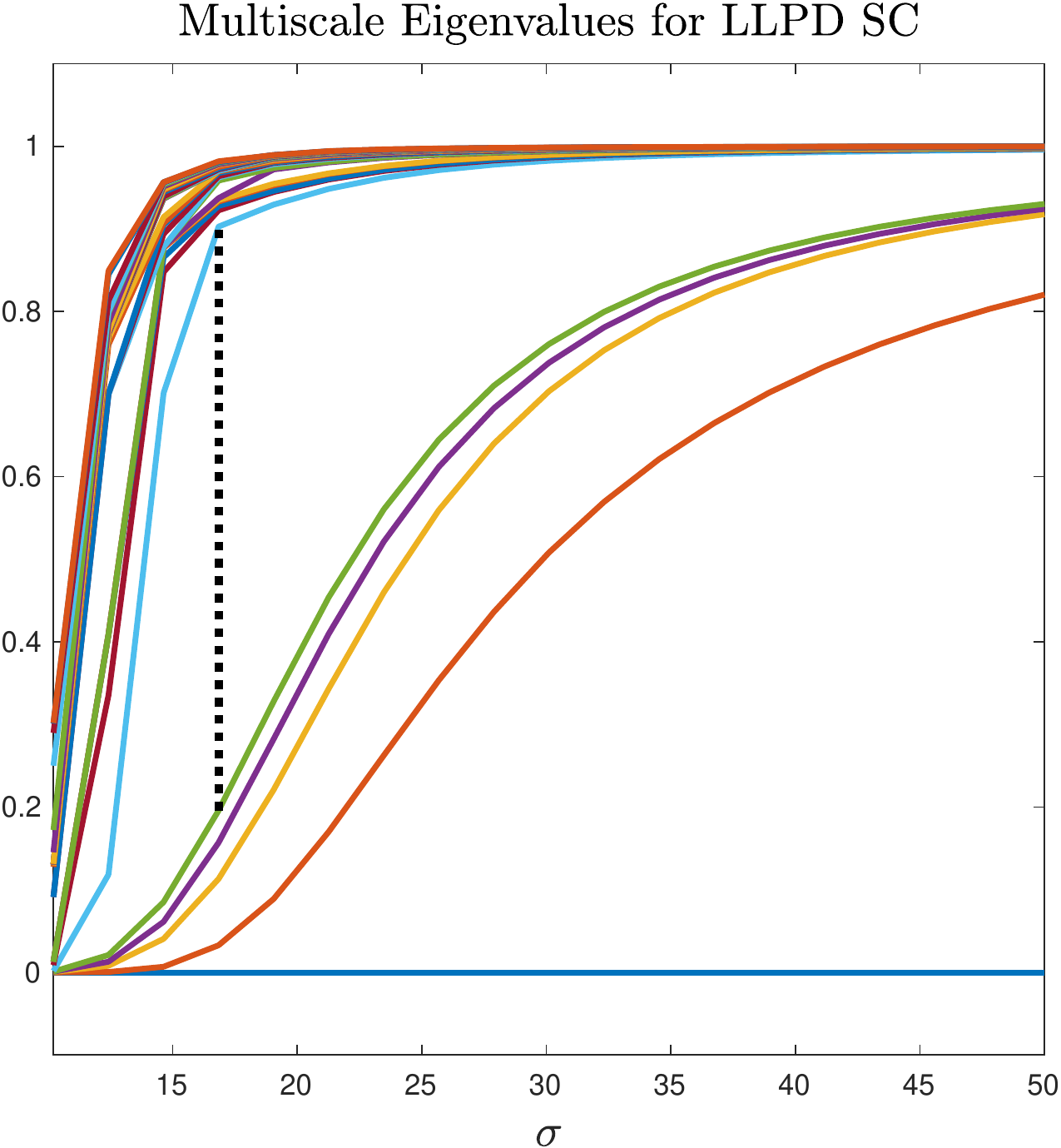}
		\subcaption[width=.8\textwidth]{\label{fig:PenDigits}Pen Digits}
	\end{subfigure}
	\begin{subfigure}[t]{.19\textwidth}
		\captionsetup{width=.95\linewidth}
		\includegraphics[width=\textwidth]{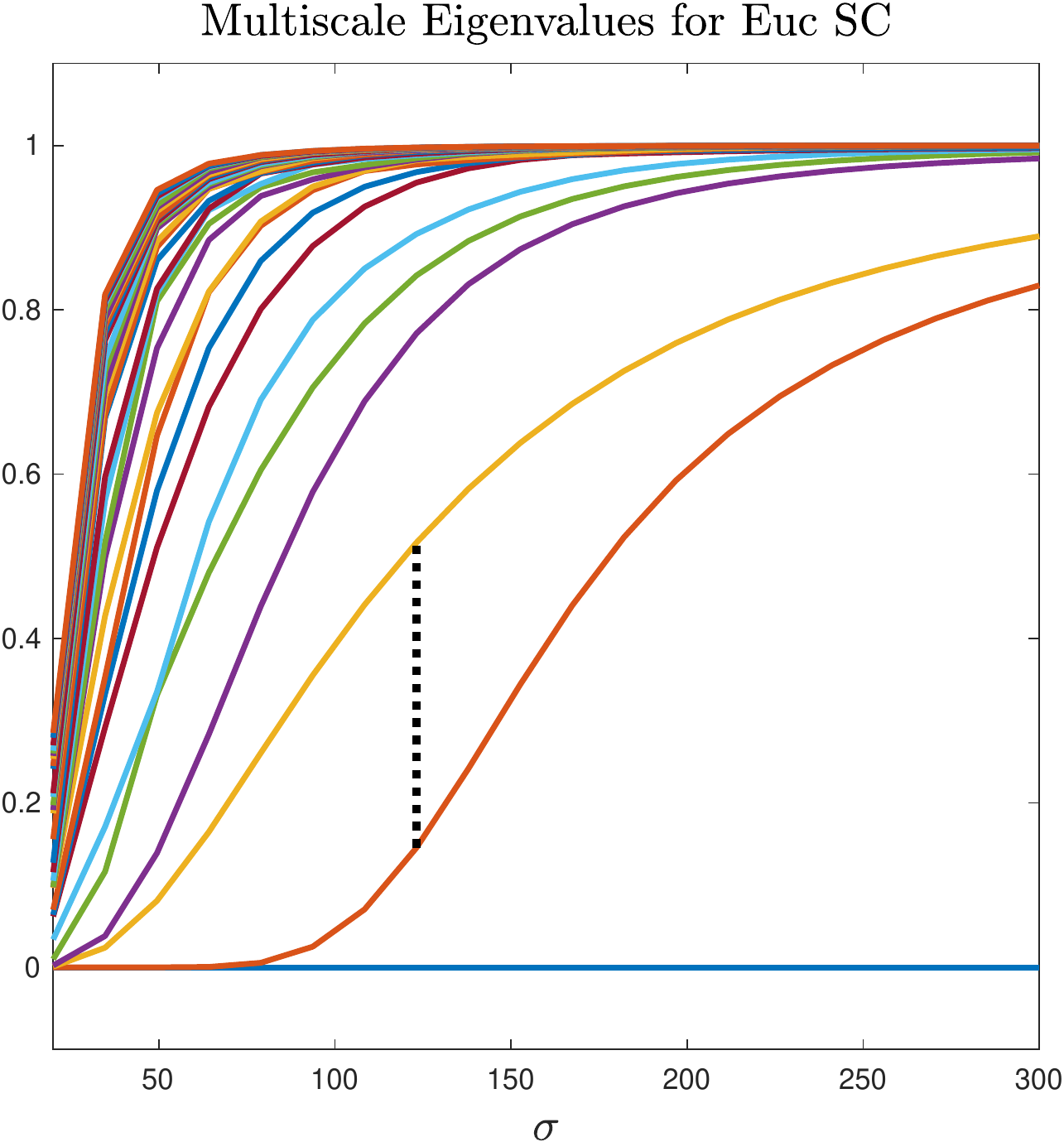}
		\includegraphics[width=\textwidth]{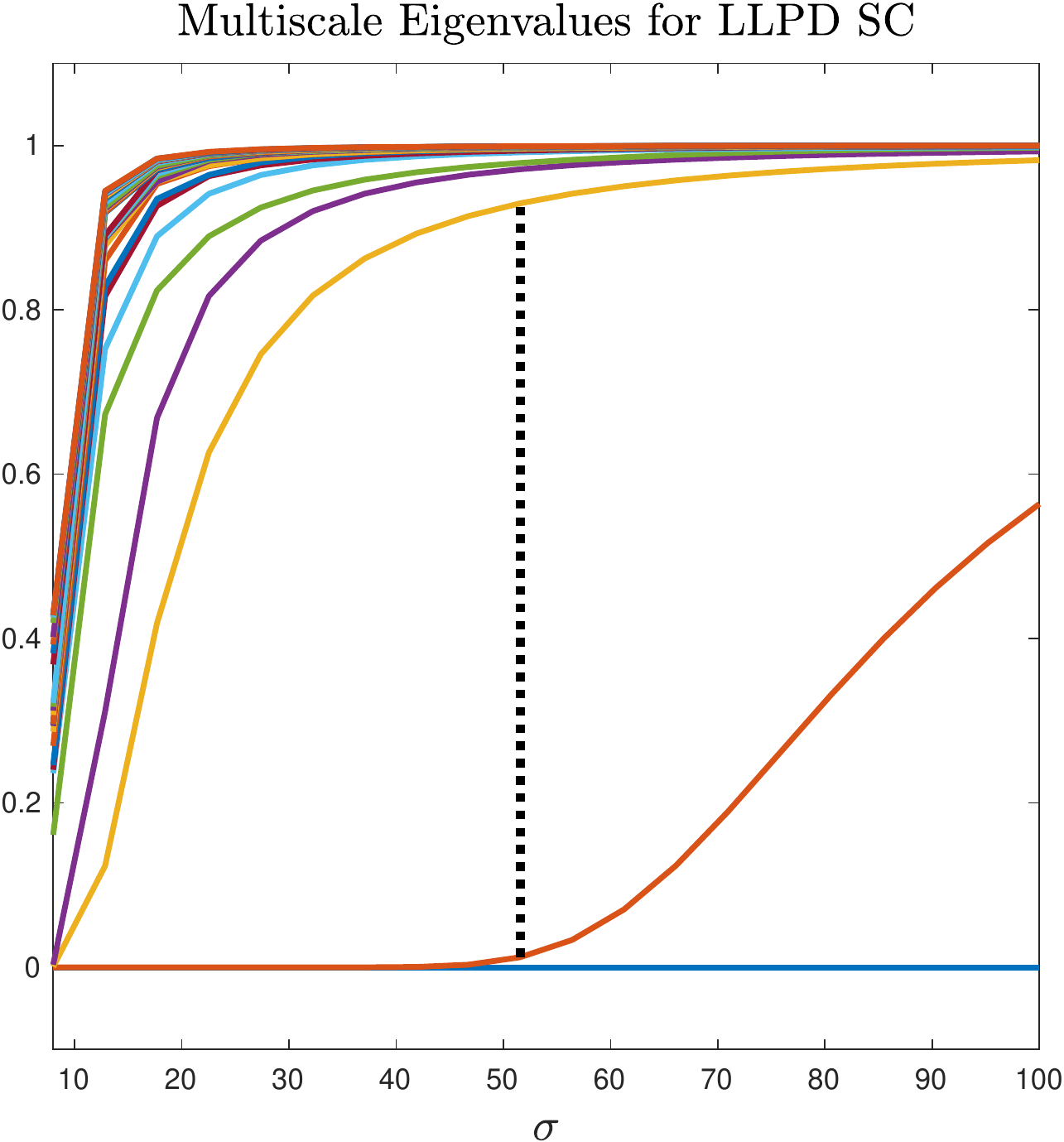}
		\subcaption{\label{fig:Landsat}Landsat}
	\end{subfigure}
	\caption{\label{fig:MultiscaleEigsRealData}	
		Multiscale eigenvalues of $\Lsym$ for real data sets using Euclidean distance (top, (b)-(e), does not appear for (a)) and LLPD (bottom (a)-(e)).}
\end{figure}

Labeling accuracy results as well as the $\hat{K}$ values returned by our algorithm are given in Table \ref{tab:Results}. LLPD spectral clustering correctly estimates $\numclust$ for all data sets except the full COIL data set and Landsat. Euclidean spectral clustering fails to correctly detect $\numclust$ on all real data sets.  Figure \ref{fig:MultiscaleEigsRealData} shows both the Euclidean and LLPD eigenvalues for Skins, DrivFace, COIL 16, Pen Digits, and Landsat. Euclidean spectral clustering results for Skins are omitted because Euclidean spectral clustering with a dense Laplacian is computationally intractable with such a large sample size.  

At least $90\%$ of data points were retained during the denoising procedure with the exception of Skins ($88.0\%$ retained) and Landsat ($67.2\%$ retained).
After denoising, LLPD spectral clustering achieved an overall accuracy exceeding $98.6\%$ on all real data sets except COIL 20 ($90.5\%$). Euclidean spectral clustering performed well on DrivFaces (OA $94.1\%$) and Pen Digits (OA $98.1\%$), but poorly on the remaining data sets, where the overall accuracy ranged from $68.9\%-76.8\%$. $K$-means also performed well on DrivFaces (OA $87.5\%$) and Pen Digits (OA $97.6\%$) but poorly on the remaining data sets, where the overall accuracy ranged from $54.7\% - 78.5\%$.
 
\begin{table}
\begin{center}
\begin{adjustbox}{max width=1.00\textwidth}

\begin{tabular}{| c | c | c | c | c |}
\hline
Data Set & Accuracy Statistic & $\numclust$-means & Euclidean SC & LLPD SC \\ \hline
\multirow{4}{*}{\parbox{9cm}{\centering \textbf{Four Lines} \\($n=116000, \tilde{n}=20000, N=97361, D=2, d=1, \numclust=4, \zeta_{N}=12.0239, \theta=.01, \hat{\sigma}=0.2057, \delta=.9001$)}} & OA & .4951 & .6838 & \textbf{1.000} \\
 \cline{2-5}
 & AA &  .4944 & .6995 & \textbf{1.000} \\ 
 \cline{2-5}
 & $\kappa$ & .3275 & .5821 & \textbf{1.000} \\ 
  \cline{2-5}
  & $\hat{\numclust}$ & - & 6 & \textbf{4} \\ 
  \hline
  
  \multirow{4}{*}{\parbox{9cm}{\centering \textbf{Nine Gaussians} \\($n=500, \tilde{n}=0, N=428, D=2, d=2, \numclust=9, \zeta_{N}=10.7, \theta=.13, \hat{\sigma}=0.1000, \delta=.1094$)}} & OA & \textbf{.9930}  & \textbf{.9930} &  \textbf{.9930} \\
 \cline{2-5}
 & AA & \textbf{.9920}  & \textbf{.9920} & \textbf{.9920} \\ 
 \cline{2-5}
 & $\kappa$ & \textbf{.9921} & \textbf{.9921} & \textbf{.9921} \\ 
  \cline{2-5}
  & $\hat{\numclust}$ & - &  \textbf{9} &  \textbf{9} \\ 
  \hline
  
    \multirow{4}{*}{\parbox{9cm}{\centering \textbf{Concentric Spheres} \\($n=3813, \tilde{n}=2000, N=1813, D=1000, d=2, \numclust=3, \zeta_{N}=7.2520, \theta=2, \hat{\sigma}=0.1463, \delta=.5$)}} & OA & .3464 & .3519 & \textbf{.9989} \\
 \cline{2-5}
 & AA & .3463 & .3438 & \textbf{.9988} \\ 
 \cline{2-5}
 & $\kappa$ & .0094 & .0155 & \textbf{.9981}\\ 
  \cline{2-5}
  & $\hat{\numclust}$ & - &  4 & \textbf{3} \\ 
  \hline

\multirow{4}{*}{\parbox{9cm}{\centering \textbf{Parallel Planes}\\ ($n=205000,  \tilde{n}=200000, N=5000,  D=30, d=10, \numclust=5, \zeta_{N}=5, \theta=.45, \hat{\sigma}=0.0942, \delta=.3553$)}} & OA & .5594 & .3964  & \textbf{.9990} \\
 \cline{2-5}
 & AA & .5594 & .3964 & \textbf{.9990} \\ 
 \cline{2-5}
 & $\kappa$ & .4493 & .2455 & \textbf{.9987} \\ 
  \cline{2-5}
  & $\hat{\numclust}$ & - & 2 & \textbf{5} \\ 
  \hline

 \multirow{4}{*}{\parbox{9cm}{\centering \textbf{Skins} \\($n=245057, N=215694, D=3, \numclust=2, \zeta_{N}=4.5343, \theta=2, \hat{\sigma}=50, \hat{\delta}=0$)}} & OA & .5473 & - & \textbf{.9962}\\
 \cline{2-5}
 & AA &  .4051 & - & \textbf{.9970} \\ 
 \cline{2-5}
 & $\kappa$ &  -.1683 & - & \textbf{.9890} \\ 
 \cline{2-5}
 & $\hat{\numclust}$ & - & - & \textbf{2} \\ 
 \hline

\multirow{4}{*}{\parbox{9cm}{\centering \textbf{DrivFaces}\\ ($n=612, N=574, D=6400, \numclust=4, \zeta_{N}=6.4494, \theta=10,  \hat{\sigma}=4.9474, \hat{\delta}=9.5976$)}} & OA &  .8746 & .9408  & \textbf{1.000} \\
 \cline{2-5}
 & AA & .8882  & .9476 & \textbf{1.000} \\ 
 \cline{2-5}
 & $\kappa$ & .9198  & .9198 & \textbf{1.000} \\ 
  \cline{2-5}
  & $\hat{\numclust}$ & - & 2 & \textbf{4} \\ 
  \hline
  
  \multirow{4}{*}{\parbox{9cm}{\centering \textbf{COIL 20}\\ ($n=1440, N=1351, D=1024, \numclust=20, \zeta_{N}=27.5714, \theta=4.5, \hat{\sigma}=1.9211, \hat{\delta}=3.3706$)}} & OA & .6555 & .6890 & \textbf{.9055}  \\
 \cline{2-5}
 & AA &.6290  & .6726 &  \textbf{.8833}\\ 
 \cline{2-5}
 & $\kappa$ & .6368 & .6724 &  \textbf{.9004}\\ 
  \cline{2-5}
  & $\hat{\numclust}$ & - & 3 & 17 \\ 
  \hline
  
    \multirow{4}{*}{\parbox{9cm}{\centering \textbf{COIL 16} \\ ($n=1152, N=1088, D=1024, \numclust=16, \zeta_{N}=22.1837, \theta=3.9, \hat{\sigma}=2.3316, \hat{\delta}=5.4350$) }} & OA & .7500 & .7330 & \textbf{1.000} \\
 \cline{2-5}
 & AA & .7311 & .6782 & \textbf{1.000} \\ 
 \cline{2-5}
 & $\kappa$ & .7330 & .6864 & \textbf{1.000} \\ 
  \cline{2-5}
  & $\hat{\numclust}$ & - & 3 & \textbf{16} \\ 
  \hline
  
    \multirow{4}{*}{\parbox{9cm}{\centering \textbf{Pen Digits} \\($n=3779, N=3750, D=16, \numclust=5, \zeta_{N}=5.2228 , \theta=60, \hat{\sigma}=16.8421, \hat{\delta}=11.3137$)}} & OA &  .9760 & .9813 &  \textbf{.9949} \\
 \cline{2-5}
 & AA &  .9764 & .9816   &  \textbf{.9949} \\ 
 \cline{2-5}
 & $\kappa$ & .9700 & .9767 & \textbf{.9937} \\ 
  \cline{2-5}
  & $\hat{\numclust}$ & - & 6 & \textbf{5} \\ 
  \hline
  
    \multirow{4}{*}{\parbox{9cm}{\centering \textbf{Landsat}\\ ($n=1136, N=763, D=36, \numclust=4, \zeta_{N}=8.4778, \theta=32, \hat{\sigma}=51.5789,  \hat{\delta}=28.2489$)}} & OA & .7851 & .7680 & \textbf{.9869}\\
 \cline{2-5}
 & AA & .8619 & .8532 & \textbf{.9722}  \\ 
 \cline{2-5}
 & $\kappa$ & .6953 &  .6722 & \textbf{.9802} \\ 
  \cline{2-5}
  & $\hat{\numclust}$ & - & 2 & 2 \\ 
  \hline
  
\end{tabular}
\end{adjustbox}

\end{center}
\caption{ \label{tab:Results}  In all examples, LLPD spectral clustering performs at least as well as $K$-means and Euclidean spectral clustering, and it typically outperforms both.  Best results for each method and performance metric are bolded.  For each dataset, we include parameters that determine the theoretical results.  For both real and synthetic datasets, $n$ (the total number of data points), $N$ (the number of data points after denoising), $D$ (the ambient dimension of the data), $\numclust$ (the number of clusters in the data), $\zeta_{N}$ (cluster balance parameter on the denoised data), $\theta$ (LLPD denoising threshold), and $\hat{\sigma}$ (learned scaling parameter in LLPD weight matrix) are given.  For the synthetic data, $\tilde{n}$ (number of noise points), $d$ (intrinsic dimension of the data), and $\delta$ (minimal Euclidean distance between clusters) are given, since these are known or can be computed exactly.  For the real data, $\hat{\delta}$ (the minimal Euclidean distanced between clusters, after denoising) is provided.  We remark that for the Skins dataset, a very small number of points (which are integer triples in $\mathbb{R}^{3}$) appear in both classes, so that $\hat{\delta}=0$.  Naturally these points are not classified correctly, which leads to a slightly imperfect accuracy for LLPD spectral clustering.}
\end{table}


\section{Conclusions and Future Directions}\label{sec:Conclusions}

This article developed finite sample estimates on the behavior of the LLPD metric, derived theoretical guarantees for spectral clustering with the LLPD, and introduced fast approximate algorithms for computing the LLPD and LLPD spectral clustering.  The theoretical guarantees on the eigengap provide mathematical rigor for the heuristic claim that the eigengap determines the number of clusters, and theoretical guarantees on labeling accuracy improve on the state of the art in the LDLN data model.  Moreover, the proposed approximation scheme enables efficient LLPD spectral clustering on large, high-dimensional datasets.  Our theoretical results are verified numerically, and it is shown that LLPD spectral clustering determines the number of clusters and labels points with high accuracy in many cases where Euclidean spectral clustering fails.  In a sense, the method proposed in this article combines two different clustering techniques: density techniques like DBSCAN and single linkage clustering, and spectral clustering.  The combination allows for improved robustness and performance guarantees compared to either set of techniques alone.

It is of interest to generalize and improve the results in this article.  Our theoretical results involved two components.  First, we proved estimates on distances between points under the LLPD metric, under the assumption that data fits the LDLN model.  Second, we proved that the weight matrix corresponding to these distances enjoys a structure which guarantees that the eigengap in the normalized graph Laplacian is informative.  The first part of this program is generalizable to other distance metrics and data drawn from different distributions.  Indeed, one can interpret the LLPD as a minimum over the $\ell^{\infty}$ norm of paths between points.  Norms other than the $\ell^{\infty}$ norm may correspond to interesting metrics for data drawn from some class of distributions, for example, the geodesic distance with respect to some metric on a manifold.  Moreover, introducing a comparison of tangent-planes into the spectral clustering distance metric has been shown to be effective in the Euclidean setting \citep{Arias2017}, and allows one to distinguish between intersecting clusters in many cases.  Introducing tangent plane comparisons into the LLPD construction would perhaps allow the results in this article to generalize to data drawn from intersecting distributions.

An additional problem not addressed in the present article is the consistency of LLPD spectral clustering.  It is of interest to consider the behavior as $n\rightarrow \infty$ and determine if LLPD spectral clustering converges in the large sample limit to a continuum partial differential equation.  This line of work has been fruitfully developed in recent years for spectral clustering with Euclidean distances \citep{Trillos2016_1, Trillos2016_2, Trillos2016_3}.


\acks{MM and JMM are grateful for partial support by NSF-IIS-1708553, NSF-DMS-1724979, NSF-CHE-1708353 and AFOSR FA9550-17-1-0280.}


\appendix

\section{Proofs from Section \ref{sec:FiniteSampleAnalysis} }
\label{app:PDProofs}

\subsection*{Proof of Lemma \ref{Lemma:BoundingConstants}}
\label{subsec:PDProofsBoundingConstants}

Let $y\in S$ satisfy $\|x-y\|_{2}\le \tau$.   Suppose $\tau\ge {\epsilon}/{4}$.  For the upper bound, we have: $$\volD(B(S,\tau)\cap B_{\epsilon}(x))\le \volD(B_{\epsilon}(x)) = \volD(B_{1})\epsilon^{D}\le \volD(B_{1})\epsilon^{d}4^{D-d}\left(\tau\wedge \epsilon\right)^{D-d}.$$For the lower bound, set $z=(1-\alpha)x+\alpha y, \ \alpha=\frac{\epsilon}{4\tau}.$  Then $\|z-x\|_{2}\le {\epsilon}/{4}$ and $\|z-y\|_{2}\le \tau -{\epsilon}/{4}$, so $B_{{\epsilon}/{4}}(z)\subset B(S,\tau)\cap B_{\epsilon}(x)$, 
and $$4^{-D}\volD(B_{1})\epsilon^{d}(\epsilon\wedge\tau)^{D-d}\le 4^{-D}\volD(B_{1})\epsilon^{D}=\volD(B_{{\epsilon}/{4}}(z))\le\volD(B(S,\tau)\cap B_{\epsilon}(x)).$$  This shows that (\ref{eqn:GeometricBound}) holds in the case $\tau\ge {\epsilon}/{4}$.

Now suppose $\tau < {\epsilon}/{4}.$  We consider two cases, with the second to be reduced to the first one.

\emph{Case 1:  $x=y\in S$.} Let $\{y_{i}\}_{i=1}^{n}$ be a $\tau$-packing of $S\cap B_{\epsilon-\tau}(y)$, i.e.: $S\cap B_{\epsilon-\tau}(y)\subset \bigcup_{i=1}^{n}B_{\tau}(y_{i})$, and $\|y_{i}-y_{j}\|_{2}>\tau, i\neq j$.
We show this implies that $B(S,\tau)\cap B_{\frac{\epsilon}{2}}(y)\subset\bigcup_{i=1}^{n}B_{2\tau}(y_{i})$.  Indeed, let $x_{0}\in B(S,\tau)\cap B_{\frac{\epsilon}{2}}(y)$.   
Then there is some $x^{*}\in S$ such that $\|x_0-x^{*}\|_{2} \leq \tau$, and so $\|x^{*} - y\|_{2} \leq \|x^{*}-x_0\|_{2}+\|x_0-y\|_{2} \leq \tau + {\epsilon}/{2}<\epsilon-\tau$ (since $\tau<{\epsilon}/{4}$), and hence $x^{*} \in S\cap B_{\epsilon-\tau}(y)$.
Thus there exists $y_{i}^{*}$ in the $\tau$-packing of $S\cap B_{\epsilon-\tau}(y)$ such that $x^{*}\in B_{\tau}(y_{i}^{*})$, so that $\|x_{0}-y_{i}^{*}\|_{2}\le \|x_{0}-x^{*}\|_{2}+\|x^{*}-y_{i}^{*}\|_{2}\le 2\tau$, and $x_{0}\in B_{2\tau}(y_{i}^{*})$.  Hence, 
\begin{equation}
\label{equ:ubvol_geolemma}
\volD(B(S,\tau)\cap B_{\frac{\epsilon}{2}}(y))\le \sum_{i=1}^{n}\volD(B_{2\tau}(y_{i}))=n\volD(B_{1})2^{D}\tau^{D}.
\end{equation}
Similarly, it is straight-forward to verify that $\bigcup_{i=1}^{n}B_{\frac{\tau}{2}}(y_{i})\subset B(S,\tau)\cap B_{\epsilon}(y),$
and since the $\{B_{\frac{\tau}{2}}(y_{i})\}_{i=1}^{n}$ are pairwise disjoint, it follows that 
\begin{equation}
\label{equ:lbvol_geolemma}
n\volD(B_{1})2^{-D}\tau^{D}=\volD\left(\cup_{i=1}^{n} B_{\frac{\tau}{2}}(y_{i})\right)\le \volD(B(S,\tau)\cap B_{\epsilon}(y)).
\end{equation}  
We now estimate $n$.  Indeed, $S\cap B_{\frac{\epsilon}{2}}(y)\subset S\cap B_{\epsilon-\tau}(y)\subset \bigcup_{i=1}^{n}S\cap B_{\tau}(y_{i}),$ so that by assumption $S\in \mathcal{S}_{d}(\kappa,\epsilon_{0})$ and $\epsilon\in (0,\frac{2\epsilon_{0}}{5})\subset (0,\epsilon_{0})$, $$2^{-d}\epsilon^{d}\kappa^{-1}\vold(B_{1})\le \vold(S\cap B_{\frac{\epsilon}{2}}(y))\le \sum_{i=1}^{n}\vold(S\cap B_{\tau}(y_{i}))\le \kappa\tau^{d}n\vold(B_{1}).$$ It follows that
\begin{equation}
\label{equ:lbn_geolemma}
2^{-d}\left({\epsilon}/{\tau}\right)^{d}\kappa^{-2}\le n.
\end{equation} 
Similarly, $\bigcup_{i=1}^{n}S\cap B_{\frac{\tau}{2}}(y_{i})\subset S\cap B_{\epsilon}(y)$ yields 
$$n\kappa^{-1}\tau^{d}2^{-d}\vold(B_{1})\le\sum_{i=1}^{n}\vold(S\cap B_{\frac{\tau}{2}}(y_{i}))\le \vold(S\cap B_{\epsilon}(y))\le \kappa \epsilon^{d}\vold(B_{1}),$$
so that
\begin{equation}
\label{equ:ubn_geolemma}
n\le 2^{d}\kappa^{2}\left({\epsilon}/{\tau}\right)^{d}.
\end{equation}
By combining (\ref{equ:lbvol_geolemma}) and (\ref{equ:lbn_geolemma}), we obtain
\begin{align}
\label{equ:LowerBound}
\volD(B_{1})2^{-(d+D)}\kappa^{-2}\tau^{D}\left({\epsilon}/{\tau}\right)^{d}\le\volD(B(S,\tau)\cap B_{\epsilon}(y))
\end{align}
and by combining (\ref{equ:ubvol_geolemma}) and (\ref{equ:ubn_geolemma}), we obtain 
\begin{align}\label{eqn:UpperBound}\volD(B(S,\tau)\cap B_{\frac{\epsilon}{2}}(y))\le \volD(B_{1})2^{d+D}\kappa^{2}\tau^{D}\left({\epsilon}/{\tau}\right)^{d},\end{align}  
which are valid for any $\epsilon<\epsilon_0, \tau < {\epsilon}/{4}$.
Replacing ${\epsilon}/{2}$ and $\tau$ with $\epsilon$ and $2\tau$, respectively, in (\ref{eqn:UpperBound}), and combining with (\ref{equ:LowerBound}), we obtain, for $\epsilon < {\epsilon_0}/{2}, \tau < {\epsilon}/{4}$, $$\volD(B_{1})2^{-(d+D)}\kappa^{-2}\tau^{D}\left({\epsilon}/{\tau}\right)^{d}\le\volD(B(S,\tau)\cap B_{\epsilon}(y))\le \volD(B_{1})2^{d+2D}\kappa^{2}\tau^{D}\left({\epsilon}/{\tau}\right)^{d}\,.$$

\emph{Case 2: $x\notin S$.}  Notice that $\|x-y\|_{2}\le\tau\le{\epsilon}/{4}$, so $B_{\frac{3\epsilon}{4}}(y)\subset B_{\epsilon}(x)\subset B_{\frac{5\epsilon}{4}}(y)$. Thus:
$\volD(B(S,\tau) \cap B_{\epsilon}(x)) \leq \volD(B(S,\tau) \cap B_{\frac{5\epsilon}{4}}(y)) \leq \volD(B_{1})2^{2D+2d}\kappa^{2}\tau^{D}\left({\epsilon}/{\tau}\right)^{d},$
so as long as $\epsilon < \frac{2\epsilon_0}{5}$ we have
\begin{align*}
\volD(B(S,\tau) \cap B_{\epsilon}(x)) 
&\geq \volD(B(S,3\tau/4) \cap B_{{3\epsilon}/{4}}(y)) \geq \volD(B_{1})2^{-(2D+d)}\kappa^{-2}\tau^{D}\left({\epsilon}/{\tau}\right)^{d}.
\end{align*} 

We thus obtain the statement in Lemma \ref{Lemma:BoundingConstants}.


 \subsection*{Proof of Theorem \ref{thm:WithinCluster}}
	
	Cover $B(S,\tau)$ with an ${\epsilon}/{4}$-packing $\{y_{i}\}_{i=1}^{N}$,
 such that  $B(S,\tau)\subset \bigcup_{i=1}^{N} B_{{\epsilon}/{4}}(y_{i})$, and $\|y_{i}-y_{j}\|_{2}>{\epsilon}/{4}, \forall i\neq j$.
		$\{B_{{\epsilon}/{8}}(y_{i})\}_{i=1}^{N}$ are thus pairwise disjoint, so that $\sum_{i=1}^{N}\volD(B_{{\epsilon}/{8}}(y_{i})\cap B(S,\tau))\le \volD(B(S,\tau)).$  By Lemma \ref{Lemma:BoundingConstants}, we may bound $C_{1}\left({\epsilon}/{8}\right)^{d}\left({\epsilon}/{8}\wedge \tau\right)^{D-d}\le \volD(B_{{\epsilon}/{8}}(y_{i})\cap B(S,\tau)),$ where $C_{1}=\kappa^{-2}2^{-(2D+d)}\volD(B_{1}).$ 
	Hence, $NC_{1}\left({\epsilon}/{8}\right)^{d}\left({\epsilon}/{8}\wedge \tau\right)^{D-d}\le \volD(B(S,\tau)),$ so that 	
	\begin{align*}N\le C\volD(B(S,\tau))\left({\epsilon}/{8}\right)^{-d}\left({\epsilon}/{8}\wedge \tau\right)^{-(D-d)},\ C=\kappa^{2}2^{2D+d}(\volD(B_{1}))^{-1}.\end{align*}So, $C\volD(B(S,\tau))\left ({\epsilon}/{8}\right)^{-d}\left({\epsilon}/{8}\wedge \tau\right)^{-(D-d)}$ balls of radius ${\epsilon}/{4}$ are needed to cover $B(S,\tau)$.  We now determine how many samples $n$ must be taken so that each ball contains at least one sample with probability exceeding $1-t$.  If this occurs, then each pair of points is connected by a path with all edges of length at most $\epsilon$.  Notice that the distribution of the number of points $\omega_{i}$ in the set $B_{{\epsilon}/{4}}(y_{i})\cap B(S,\tau)$ is $\omega_{i}\sim \Bin(n,p_{i})$, where
	\begin{align*}p_{i}=\frac{\volD(B(S,\tau)\cap B_{{\epsilon}/{4}}(y_{i}))}{\volD(B(S,\tau))}\ge  \frac{C^{-1}\left({\epsilon}/{4}\right)^{d}\left({\epsilon}/{4}\wedge \tau\right)^{D-d}}{\volD(B(S,\tau))} := p.\end{align*}	
	Since $\Prob( \exists i : \omega_i = 0)\leq N (1-p)^n \leq N e^{-pn} \leq t$ as long as $n \geq 1/p\log N/t$, it suffices for $n$ to satisfy $n\ge  \frac{C \volD(B(S,\tau))}{ \left({\epsilon}/{4}\right)^{d}(\tau \wedge {\epsilon}/{4})^{D-d}}\log\frac{C \volD(B(S,\tau))}{ \left({\epsilon}/{8}\right)^{d}(\tau \wedge {\epsilon}/{8})^{D-d}t}$.
	
\subsection*{Proof of Corollary \ref{cor:WithinCluster_tau_small}}

	For a fixed $l$, choose a $\tau$ packing of $S_l$, i.e. let $y_1, \ldots, y_m \in S_l$ such that $S_l \subset\cup_i B_{\tau}(y_i)$ and $\|y_i-y_j\|_{2}>\tau$ for $i\ne j$.
	Then $B(S_l, \tau) \subset B_{2\tau}(y_i)$. 
	Now we control the size of $m$. Since the $B_{\frac{\tau}{2}}(y_i)$ are disjoint and $S_l \in S(\kappa, \epsilon_0)$,
	$\vold(S_l) \geq \sum_{i=1}^m \vold(S_l \cap B_{\frac{\tau}{2}}(y_i))
	\geq m \kappa^{-1}\vold(B_1)\left(\frac{\tau}{2}\right)^d$,
	so that $m \leq \kappa \frac{\vold(S_l)}{\vold(B_1)}\left(\frac{\tau}{2}\right)^{-d}$. 
	Furthermore, since $\frac{2}{5}\epsilon_0 > 2\tau$ we have by Lemma \ref{Lemma:BoundingConstants}:
	\begin{align*}
	\frac{\volD(B(S_l,\tau))}{\volD(B_1)} &\leq \sum_{i=1}^m \frac{\volD(B(S_l,\tau)\cap B_{2\tau}(y_i))}{\volD(B_1)}
	\leq m\kappa^{2}2^{(2D+2d)}\left(2\tau\right)^d \tau^{D-d}
	\leq \kappa^{3}2^{(2D+4d)}\frac{\vold(S_l)}{\vold(B_1)} \tau^{D-d}.
	\end{align*}
	Combining the above with (\ref{equ:tau_small_sampling_cond}) implies
	$$n_l\ge  \frac{C \volD(B(S_l,\tau))}{ \left({\epsilon}/{4}\right)^{d}\tau^{D-d}\volD(B_{1})}\log\frac{C\volD(B(S_l,\tau))}{\left({\epsilon}/{8}\right)^{d}\tau^{D-d}\volD(B_{1})t}$$
	for $C=\kappa^{2}2^{2D+d}$. Thus by Theorem \ref{thm:WithinCluster}, $\Prob(\max_{x\ne y \in X_l} \rho_{\ell \ell}(x,y)< \epsilon)\ge 1-\frac{t}{\numclust}.$ Repeating the above argument for each $S_l$ and letting $E_l$ denote the event that $\max_{x\ne y \in X_l} \rho_{\ell \ell}(x,y) \geq \epsilon$, we obtain
	$\Prob( \epsilonincluster \geq \epsilon ) = \Prob( \max_{1\le l\le \numclust}\max_{x \ne y \in X_{l}} \rho_{\ell\ell}(x, y) \geq \epsilon )=\Prob(\cup_l E_l) \leq \sum_l \Prob(E_l) \leq \numclust(\frac{t}{\numclust})=t.$
	 
\subsection*{Proof of Theorem \ref{thm:CombinedResult}}

	Re-writing the inequality assumed in the theorem, we are guaranteed that
	\[ C\left( \max_{l =1, \ldots, \numclust} \frac{4^d \kappa^52^{4D+5d} \vold(S_l)}{n_l \vold(B_1)}\log\left(2^d n_l\frac{2\numclust}{t}\right)\right)^{\frac{1}{d}} < \left( \frac{ \left(\frac{t}{2}\right)^{\frac{1}{\kNoise}}}{((\kNoise+1)\tilde{n})^{\frac{\kNoise+1}{\kNoise}}} \frac{\volD(\tilde{X})}{\volD(B_1)}\right)^{\frac{1}{D}}.\]
	Let $C\epsilon^*_1$ denote the left hand side of the above inequality and $\epsilon^*_2$ the right hand side. Then for all $1\leq l\leq\numclust$,
	$
	n_l \geq \frac{\kappa^52^{4D+5d} \vold(S_l)}{\left({\epsilon^*_{1}}/{4}\right)^d\vold(B_1)}\left(\log\left(2^d n_l\frac{2\numclust}{t}\right)\right)
	$,
	and since $\log\left(2^d n_l\frac{2\numclust}{t}\right) \geq 1$, clearly $n_l \geq \frac{\kappa^52^{4D+5d} \vold(S_l)}{\left({\epsilon^*_{1}}/{4}\right)^d\vold(B_1)}$, and we obtain
	$n_l \geq \left( \frac{\kappa^52^{4D+5d} \vold(S_l)}{\left({\epsilon^*_{1}}/{4}\right)^d\vold(B_1)}\log \left(\frac{\kappa^52^{4D+5d}\vold(S_l)}{\left({\epsilon^*_{1}}/{8}\right)^d\vold(B_1)}\frac{2\numclust}{t}\right)\right)$. 
	Since $\tau < \frac{\epsilon^*_{1}}{8} \wedge \frac{\epsilon_0}{5}$ by assumption, Corollary \ref{cor:WithinCluster_tau_small} yields $\Prob\left( \epsilonincluster < \epsilon^*_{1} \right) \geq 1 - \frac{t}{2}.$
	Also by Theorem \ref{thm:knnLLPD}, $\Prob( \epsilonnoiseknn > \epsilon^*_2) \geq 1 - \frac{t}{2}.$  Since we are assuming $C\epsilon^*_1<\epsilon^*_2$,
$
	\Prob ( C\epsilonincluster <  \epsilonnoiseknn)
	 \ge \Prob((\epsilonincluster < \epsilon^*_1) \cap(\epsilonnoiseknn > \epsilon^*_2))
	 \geq 1-t
	$.

\section{Proof of Theorem \ref{thm:SpectralGap}}
\label{app:SpectralGapProof}

Let $n_l =|A_l|$ and $m_l=|C_l|$ denote the cardinality of the sets in Assumption \ref{assumption:ultrametric}, and let $\eta_1 := 1 - f_{\sigma}(\epsilonincluster), \eta_{\thetathres} := 1 - f_{\sigma}(\thetathres), \eta_2 := f_{\sigma}(\epsilonnoiseknn).$

\subsection{Bounding Entries of Weight Matrix $W$ and Degrees}

The following Lemma guarantees that the weight matrix will have a convenient structure.  

\begin{lem}
\label{lem:ultrametric}
For $1 \leq l \leq \numclust$, let $A_l, C_l, \tilde{A}_l$ be as in Assumption \ref{assumption:ultrametric}.  Then:
\begin{enumerate}
\item For each fixed $x_{i}^{l}\in C_l$, $x_{i}^{l}$ is equidistant from all points in $A_l$; more precisely:
\[ \rho(x^l_i,x^l_j) =\min_{x^l \in A_l} \rho(x^l_i, x^l)=:\rho^l_i, \qquad \forall  x^l_i \in C_l, x^l_j \in A_l, 1 \leq l \leq \numclust.\]
\item The distance between any point in $\tilde{A}_l$ and $\tilde{A}_s$ is constant for $l \ne s$, that is:
\[ \rho(x^l_i,x^s_j) =\min_{x^l \in \tilde{A}_l, x^s \in \tilde{A}_s } \rho(x^l, x^s) =: \rho^{l,s} \qquad \forall x^l_i \in \tilde{A}_l, x^s_j \in \tilde{A}_s, 1 \leq l\ne s \leq \numclust. \] 
\end{enumerate} 
\end{lem}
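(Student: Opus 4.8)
The entire lemma is a direct consequence of the ultrametric inequality \eqref{defn:ultrametric} together with the quantitative separations posited in Assumption \ref{assumption:ultrametric}. The plan is to prove each of the two claims by showing that a certain minimum is attained simultaneously at every relevant pair, using the standard ultrametric fact that in any ultrametric space the two largest of the three pairwise distances among any three points are equal (the ``isoceles triangle'' property).

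\emph{Claim 1.} Fix $l$ and $x^l_i \in C_l$. Let $\rho^l_i := \min_{x^l \in A_l}\rho(x^l_i,x^l)$, and pick $x^l_* \in A_l$ attaining this minimum. Let $x^l_j \in A_l$ be arbitrary. By the ultrametric inequality, $\rho(x^l_i,x^l_j) \leq \max\{\rho(x^l_i,x^l_*),\rho(x^l_*,x^l_j)\}$. Now $\rho(x^l_i,x^l_*) = \rho^l_i$, while $\rho(x^l_*,x^l_j) \leq \epsilonincluster$ by \eqref{equ:assump1}, and $\rho^l_i > \epsilonincluster$ by \eqref{equ:assump2} (since $x^l_i \in C_l$, $x^l_* \in A_l$). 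Hence the maximum equals $\rho^l_i$, so $\rho(x^l_i,x^l_j) \leq \rho^l_i$. The reverse inequality $\rho(x^l_i,x^l_j) \geq \rho^l_i$ is immediate from the definition of $\rho^l_i$ as a minimum over $A_l$. Therefore $\rho(x^l_i,x^l_j) = \rho^l_i$ for every $x^l_j \in A_l$, which is the assertion.

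\emph{Claim 2.} Fix $l \neq s$ and set $\rho^{l,s} := \min_{x^l \in \tilde{A}_l,\, x^s \in \tilde{A}_s}\rho(x^l,x^s)$, attained at some pair $(x^l_*, x^s_*)$. Let $x^l_i \in \tilde{A}_l$ and $x^s_j \in \tilde{A}_s$ be arbitrary. The key observation is that the within-cluster distance (diameter) of $\tilde{A}_l$ is at most $\thetathres$: indeed for any two points of $\tilde{A}_l$, applying the ultrametric inequality through a point of $A_l$ and using \eqref{equ:assump1} and \eqref{equ:assump2} gives a bound of $\max\{\epsilonincluster,\thetathres\} = \thetathres$ (this is exactly the computation already carried out in the proof of Lemma \ref{lem:clustersep}); the same holds for $\tilde{A}_s$. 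Now apply the ultrametric inequality twice:
\begin{align*}
\rho(x^l_i,x^s_j) &\leq \max\{\rho(x^l_i,x^l_*),\,\rho(x^l_*,x^s_*),\,\rho(x^s_*,x^s_j)\} \leq \max\{\thetathres,\,\rho^{l,s},\,\thetathres\}.
\end{align*}
Since $\rho^{l,s} \geq \epsilonsep > \thetathres$ by \eqref{equ:assump3}, the right-hand side equals $\rho^{l,s}$, so $\rho(x^l_i,x^s_j) \leq \rho^{l,s}$; the reverse inequality is again trivial from the definition of $\rho^{l,s}$. Hence $\rho(x^l_i,x^s_j) = \rho^{l,s}$, as claimed.

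\emph{Main obstacle.} There is no real obstacle here; the only point requiring a small amount of care is the bookkeeping in Claim 2 that the diameter of each $\tilde{A}_l$ is bounded by $\thetathres$ (so that the ``triple-max'' collapses to $\rho^{l,s}$), which follows from chaining the ultrametric inequality through an anchor point in $A_l$ exactly as in Lemma \ref{lem:clustersep}. One should also note the edge cases $C_l = \emptyset$ (Claim 1 is vacuous) and the fact that all sets are finite so the minima are attained.
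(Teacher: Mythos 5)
Your proof is correct and follows essentially the same route as the paper's: for Claim 1, chain through the minimizing point $x^l_*\in A_l$ and use \eqref{equ:assump1}--\eqref{equ:assump2} to collapse the max to $\rho^l_i$; for Claim 2, bound the diameter of each $\tilde{A}_l$ by $\thetathres$ (chaining through an anchor in $A_l$ when needed) and use $\rho^{l,s}\geq\epsilonsep>\thetathres$ to collapse the triple max. The only cosmetic difference is that you make the inequality $\rho^l_i>\epsilonincluster$ and the edge cases explicit, which the paper leaves implicit.
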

\begin{proof}
To prove (1), let $x^l_i \in C_l$ and $x^l_j \in A_l$. Since $\rho^l_{i}$ is the minimum distance between $x^l_i$ and a point in $A_l$, clearly $\rho(x^l_i,x^l_j)\geq \rho^l_i$. Let $x^l_*$ denote the point in $A_l$ such that $\rho^l_{i} = \rho(x^l_{i}, x^l_*)$. Then $\rho(x^l_i, x^l_j)  \leq \rho(x^l_i, x^l_*) \vee \rho(x^l_*, x^l_j) \leq \rho(x^l_i, x^l_*) \vee \epsilonincluster =\rho^l_i \vee \epsilonincluster =\rho^l_i.$  Thus $\rho(x^l_i,x^l_j)= \rho^l_i$. 

To prove (2), let $x^l_i \in \tilde{A}_l$ and $x^s_j\in \tilde{A}_s$ for $l\ne s$. Clearly, $\rho(x^l_i,x^s_j) \geq \rho^{l,s}$. Now let $x^l_* \in \tilde{A}_l, x^s_* \in \tilde{A}_s$ be the points that achieve the minimum, i.e. $\rho^{l,s} = \rho(x^l_*, x^s_*)$. Note that $\rho(x^l_i,x^l_*) \leq \thetathres$ and similarly for $\rho(x^s_j,x^s_*)$ (if $x^l_i, x^l_*$ are both in $C_l$, pick any point $z \in A_l$ to obtain $\rho(x^l_i,x^l_*)\leq \rho(x^l_i,z) \vee \rho(z,x^l_*) \leq \thetathres$). Thus:
\begin{align*}
\rho(x^l_i,x^s_j) &\leq \rho(x^l_i, x^l_*) \vee \rho(x^l_*,x^s_*) \vee \rho(x^s_*,x^s_j) 
	 \leq \thetathres \vee \rho^{l,s} \vee \thetathres 
	=\rho^{l,s}, 
\end{align*}
since $\rho^{l,s}\geq \epsilonnoiseknn>\thetathres$. Thus $\rho(x^l_i,x^s_j) = \rho^{l,s}$.
\end{proof}

We now proceed with the proof of Theorem \ref{thm:SpectralGap}.
By Lemma \ref{lem:ultrametric}, the off-diagonal blocks of $W$  are constant, and so letting $w^{l,s}=W_{\tilde{A_l},\tilde{A_s}}$ denote this constant, $W$ has form
\begin{equation*}
W = 
\begin{bmatrix}
W_{\tilde{A_1},\tilde{A_1}} & w^{1,2} & \ldots & w^{1,\numclust} \\
w^{2,1} & W_{\tilde{A_2},\tilde{A_2}} & \ldots & w^{2,\numclust} \\
\vdots & & & \vdots \\
w^{\numclust,1} & w^{\numclust,2} & \ldots & W_{\tilde{A_\numclust},\tilde{A_\numclust}} \\
\end{bmatrix},
\end{equation*}

and $w^{l,s}\leq f_{\sigma}(\epsilonnoiseknn)$ for $1 \leq l \ne s \leq \numclust$ by (\ref{equ:assump3}).

We now consider an arbitrary diagonal block $W_{\tilde{A_l},\tilde{A_l}}$. For convenience let $x^l_i$, $1\leq i \leq n_l+m_l$ denote the points in $\tilde{A}_l$, ordered so that $x^l_i \in A_{l}$ for $i=1,\ldots, n_l$ and $x^l_i\in C_{l}$ for $i=n_l+1, \ldots, n_l+m_l$. For every $x^l_{i+n_l} \in C_l$, let $\rho^l_{i+n_l}$ denote the minimal distance to $A_l$, i.e. $\rho^l_{i+n_l} = \min_{x^l \in A_l} \rho(x^l_{i+n_l}, x^l), w^l_i = f_{\sigma}(\rho^l_{i+n_l})$ for all $1 \leq l \leq \numclust, 1\leq i\leq m_l$.
Then by Lemma \ref{lem:ultrametric}, any point in $C_{l}$ is equidistant from all points in $A_{l}$, so that $(W_{\tilde{A}_l,\tilde{A}_l})_{ij}  = w^l_{i-n_l} \text{ for all } x^l_{i} \in C_{l}, x^l_j \in A_{l},$
and by (\ref{equ:assump2}), $f_{\sigma}(\epsilonincluster) > w^l_{i-n_l} \geq f_{\sigma}(\thetathres)$ for $n_l+1 \leq i \leq n_l+m_l$.  Note if $x^l_i, x^l_j \in C_{l}$, then pick any $x^l_* \in A_l$, and one has $\rho(x^l_i, x^l_j) \leq \rho(x^l_i, x^l_*) \vee \rho(x^l_*,x^l_j) \leq \thetathres$ by (\ref{equ:assump2}). 

Thus the diagonal blocks of $W$ have the following form:
\[ W_{\tilde{A_l},\tilde{A_l}}  = 
\begin{bmatrix}
[f_{\sigma}(\epsilonincluster),1] & [f_{\sigma}(\epsilonincluster),1] & \ldots & [f_{\sigma}(\epsilonincluster),1] & w^l_1 & w^l_2 & \ldots & w^l_{m_l} \\
[f_{\sigma}(\epsilonincluster),1] & [f_{\sigma}(\epsilonincluster),1] & \ldots & [f_{\sigma}(\epsilonincluster),1] & w^l_1 & w^l_2 & \ldots & w^l_{m_l} \\
\vdots & & & \vdots & \vdots & \vdots & & \vdots \\
[f_{\sigma}(\epsilonincluster),1] & [f_{\sigma}(\epsilonincluster),1] & \cdots & [f_{\sigma}(\epsilonincluster),1] & w^l_1 & w^l_2 & & w^l_{m_l}  \\
w^l_1 & w^l_1 & \cdots & w^l_1 & [f_{\sigma}(\thetathres),1] & [f_{\sigma}(\thetathres),1] & \cdots & [f_{\sigma}(\thetathres),1] \\
w^l_2 & w^l_2 & \cdots & w^l_2 & [f_{\sigma}(\thetathres),1] & [f_{\sigma}(\thetathres),1] & \cdots & [f_{\sigma}(\thetathres),1] \\
\vdots & & & \vdots & \vdots & & & \vdots \\
w^l_{m_l} & w^l_{m_l} & \cdots & w^l_{m_l} &[f_{\sigma}(\thetathres),1] & [f_{\sigma}(\thetathres),1] & \cdots & [f_{\sigma}(\thetathres),1]
\end{bmatrix}
\]
The entries labeled $[f_{\sigma}(\epsilonincluster),1]$ or $[f_{\sigma}(\thetathres),1]$ indicate entries falling in the interval.  So we have the following bounds on the entries of $W$:
\begin{align*}
f_{\sigma}(\epsilonincluster) &\leq (W_{\tilde{A}_l,\tilde{A}_l})_{ij} \leq 1 &\text{ for } x^l_i, x^l_j \in A_l,\\
f_{\sigma}(\thetathres) &\leq (W_{\tilde{A}_l,\tilde{A}_l})_{ij} < f_{\sigma}(\epsilonincluster) &\text{ for } x^l_i \in A_l, x^l_j \in C_l, \\
f_{\sigma}(\thetathres) &\leq (W_{\tilde{A}_l,\tilde{A}_l})_{ij} \leq 1 &\text{ for } x^l_i, x^l_j \in C_l, \\
0 &\leq (W_{\tilde{A}_l,\tilde{A}_s})_{ij} \leq f_{\sigma}(\epsilonnoiseknn) &\text{ for } x^l_i \in \tilde{A}_l, x^s_j \in \tilde{A}_s, l\ne s.
\end{align*}

Let $\text{deg}^l_i$ denote the degree of $x^l_i$, and let $w^l = \sum_{j=1}^{m_l}w^l_j$, $o^l=\sum_{s\ne l}(n_s+m_s)w^{l,s}$. Note that regarding the degrees:  
\begin{align*}
n_lf_{\sigma}(\epsilonincluster)+w^l+o^l &\leq \text{deg}^l_i \leq n_l+w^l+o^l \quad\text{ for } x^l_i \in A_l,\\
(n_l+m_l)f_{\sigma}(\thetathres) + o^l &\leq \text{deg}^l_i \leq n_l+m_l+o^l \quad\text{ for } x^l_i \in C_l.
\end{align*}
where $m_lf_{\sigma}(\thetathres) \leq w^l \leq m_lf_{\sigma}(\epsilonincluster) \leq m_l.$

\subsection{Bounding Entries of Normalized Weight Matrix $\Deg^{-\frac{1}{2}}W\Deg^{-\frac{1}{2}}$}
\label{subsec:BoundingNormW}

We thus obtain the following entrywise bounds for the diagonal block $\Deg_{\tilde{A}_l}^{-\frac{1}{2}}W_{\tilde{A}_l,\tilde{A}_l}\Deg_{\tilde{A}_l}^{-\frac{1}{2}}$:
\begin{align*} 
\frac{f_{\sigma}(\epsilonincluster) }{n_l+w^l+o^l} &\leq (\Deg_{\tilde{A}_l}^{-\frac{1}{2}}W_{\tilde{A}_l,\tilde{A}_l}\Deg_{\tilde{A}_l}^{-\frac{1}{2}})_{ij} \leq \frac{1}{n_lf_{\sigma}(\epsilonincluster)+w^l+o^l} \quad &\text{ for } x^l_i, x^l_j \in A_l \\
\frac{f_{\sigma}(\thetathres)}{n_l+m_l+o^l} &\leq  (\Deg_{\tilde{A}_l}^{-\frac{1}{2}}W_{\tilde{A}_l,\tilde{A}_l}\Deg_{\tilde{A}_l}^{-\frac{1}{2}})_{ij}  \leq \frac{1}{(n_l+m_l)f_{\sigma}(\thetathres) + o^l} \quad &\text{ for } x^l_i, x^l_j \in C_l
\end{align*}
For $x^l_i \in A_l, x^l_j \in C_l$, we have:
\begin{footnotesize}
\begin{align*} \frac{f_{\sigma}(\thetathres)}{\sqrt{n_l+w^l+o^l}\sqrt{n_l+m_l+o^l}} &\leq (\Deg_{\tilde{A}_l}^{-\frac{1}{2}}W_{\tilde{A}_l,\tilde{A}_l}\Deg_{\tilde{A}_l}^{-\frac{1}{2}})_{ij} 
	< \frac{f_{\sigma}(\epsilonincluster)}{\sqrt{n_lf_{\sigma}(\epsilonincluster)+w^l+o^l}\sqrt{(n_l+m_l)f_{\sigma}(\thetathres) + o^l}}. 
\end{align*}
\end{footnotesize}

Now consider the off-diagonal block $\Deg^{-\frac{1}{2}}_{\tilde{A}_l}W_{\tilde{A}_l,\tilde{A}_s}\Deg^{-\frac{1}{2}}_{\tilde{A}_s}$ for some $l \ne s$. 
Since $\text{deg}^l_i \geq f_\sigma(\thetathres)\min_l(n_l+m_l) = f_\sigma(\thetathres) \zeta_{N}^{-1}N$ for all data points, we have:
\begin{align*}
\left| \Deg^{-\frac{1}{2}}_{\tilde{A}_l}W_{\tilde{A}_l,\tilde{A}_s}\Deg^{-\frac{1}{2}}_{\tilde{A}_s} \right| &\leq \frac{\zeta_{N} f_\sigma(\epsilonnoiseknn)}{f_\sigma(\thetathres) N}
\end{align*}

\subsection{Perturbation to Obtain a Block Diagonal and Block Constant Matrix}
\label{subsec:pert_desc}

Consider the normalized weight matrix $\Deg^{-\frac{1}{2}}W \Deg^{-\frac{1}{2}}$. This matrix consists of diagonal blocks of the form $\Deg_{\tilde{A}_l}^{-\frac{1}{2}}W_{\tilde{A}_l,\tilde{A}_l}\Deg_{\tilde{A}_l}^{-\frac{1}{2}}$ and off diagonal blocks of the form $\Deg_{\tilde{A}_l}^{-\frac{1}{2}}W_{\tilde{A}_l,\tilde{A}_s}\Deg_{\tilde{A}_s}^{-\frac{1}{2}}$, some $l\ne s$. We will consider the spectral perturbations associated with (1) setting off-diagonal blocks to 0 and (2) making the diagonal blocks essentially block constant.  More precisely, we consider the spectral perturbations associated with the matrix perturbations $\bf{P}_{1}, P_{2}$ given as:
\begin{align*}
\Deg^{-\frac{1}{2}}W \Deg^{-\frac{1}{2}} = 
&\begin{bmatrix}
\Deg_{\tilde{A_1}}^{-\frac{1}{2}}W_{\tilde{A_1},\tilde{A_1}}\Deg_{\tilde{A_1}}^{-\frac{1}{2}} & \Deg_{\tilde{A_1}}^{-\frac{1}{2}}W_{\tilde{A_1},\tilde{A_2}}\Deg_{\tilde{A_2}}^{-\frac{1}{2}} & \ldots & \Deg_{\tilde{A_1}}^{-\frac{1}{2}}W_{\tilde{A_1},\tilde{A_\numclust}}\Deg_{\tilde{A_\numclust}}^{-\frac{1}{2}} \\
\Deg_{\tilde{A_2}}^{-\frac{1}{2}}W_{\tilde{A_2},\tilde{A_1}}\Deg_{\tilde{A_1}}^{-\frac{1}{2}} & \Deg_{\tilde{A_2}}^{-\frac{1}{2}}W_{\tilde{A_2},\tilde{A_2}}\Deg_{\tilde{A_2}}^{-\frac{1}{2}} & \ldots & \Deg_{\tilde{A_2}}^{-\frac{1}{2}}W_{\tilde{A_2},\tilde{A_\numclust}}\Deg_{\tilde{A_\numclust}}^{-\frac{1}{2}} \\
\vdots & & & \vdots \\
\Deg_{\tilde{A_\numclust}}^{-\frac{1}{2}}W_{\tilde{A_\numclust},\tilde{A_1}}\Deg_{\tilde{A_1}}^{-\frac{1}{2}} & \Deg_{\tilde{A_\numclust}}^{-\frac{1}{2}}W_{\tilde{A_\numclust},\tilde{A_2}}\Deg_{\tilde{A_2}}^{-\frac{1}{2}} & \ldots & \Deg_{\tilde{A_\numclust}}^{-\frac{1}{2}}W_{\tilde{A_\numclust},\tilde{A_\numclust}}\Deg_{\tilde{A_\numclust}}^{-\frac{1}{2}} \\
\end{bmatrix} \\
\overset{\bf{P}_1}{\longrightarrow} 
&\begin{bmatrix}
\Deg_{\tilde{A_1}}^{-\frac{1}{2}}W_{\tilde{A_1},\tilde{A_1}}\Deg_{\tilde{A_1}}^{-\frac{1}{2}} & 0 & \ldots & 0 \\
0 & \Deg_{\tilde{A_2}}^{-\frac{1}{2}}W_{\tilde{A_2},\tilde{A_2}}\Deg_{\tilde{A_2}}^{-\frac{1}{2}} & \ldots & 0\\
\vdots & & & \vdots \\
0 & 0 & \ldots & \Deg_{\tilde{A_\numclust}}^{-\frac{1}{2}}W_{\tilde{A_\numclust},\tilde{A_\numclust}}\Deg_{\tilde{A_\numclust}}^{-\frac{1}{2}} \\
\end{bmatrix} := C\\
\overset{\bf{P}_2}{\longrightarrow} 
&\begin{bmatrix}
B_1 & 0 & \ldots & 0 \\
0 & B_2 & \ldots & 0\\
\vdots & & & \vdots \\
0 & 0 & \ldots & B_\numclust \\
\end{bmatrix} :=B, 
\end{align*}
where $B_l$ is defined by
$  B_l=\frac{f_{\sigma}(\epsilonincluster)}{n_l+w^l+o^l } \textbf{1}_{n_l+m_l}. $
Throughout the proof $\textbf{1}_{N}$ denotes the $N \times N$ matrix of all 1's, $I_N$ the $N\times N$ identity matrix, and $\|\cdot\|_{2}$ the spectral norm.

\subsection{Bounding $\bf{P_1}$ (Diagonalization)}
\label{subsec:diagonalization}

We first consider the spectral perturbation due to $\bf{P}_{1}$. Using the bounds from \ref{subsec:BoundingNormW} for an off-diagonal block, the perturbation in the eigenvalues due to $\bf{P_1}$ is bounded by:
\begin{align*}
\|\Deg^{-\frac{1}{2}}W \Deg^{-\frac{1}{2}} - C\|_2 &\leq N \|\Deg^{-\frac{1}{2}}W \Deg^{-\frac{1}{2}} - C\|_{\text{max}} \leq \frac{\zeta_{N} f_\sigma(\epsilonnoiseknn)}{f_\sigma(\thetathres)} = \zeta_{N}\eta_2 + O(\zeta_{N}\eta_2\eta_\thetathres) := P_1
\end{align*}	

\subsection{Bounding $\bf{P_2}$ (Constant Blocks)}
\label{subsec:P2}

We now consider the spectral perturbation due to $\bf{P_2}$. Because $\bf{P_2}$ acts on the blocks of a block diagonal matrix, it is sufficient to bound the perturbation of each block. For the $l^{\text{th}}$ block, let
\[ \Deg_{\tilde{A_l}}^{-\frac{1}{2}}W_{\tilde{A_l},\tilde{A_l}}\Deg_{\tilde{A_l}}^{-\frac{1}{2}}  - 
B_l
:= \begin{bmatrix}   Q_l & R_l \\ R^T_l & S_l \end{bmatrix}  \]
where $Q$ is $n_l \times n_l$, $R$ is $n_l \times m_l$, and $S$ is $m_l \times m_l$, and $R^T_l$ denotes the transpose of $R_l$. We will control the magnitude of each entry in $Q_l,R_l,S_l$ using the bounds computed in  \ref{subsec:BoundingNormW}.

\textit{Bounding $Q_l$:} For $x^l_i,x^l_j \in A_l$, we have
\[ (B_l)_{ij}=\frac{f_{\sigma}(\epsilonincluster)}{n_l+w^l+o^l}\leq \left(\Deg_{\tilde{A_l}}^{-\frac{1}{2}}W_{\tilde{A_l},\tilde{A_l}}\Deg_{\tilde{A_l}}^{-\frac{1}{2}} \right)_{ij} \leq \frac{1}{n_lf_{\sigma}(\epsilonincluster)+w^l+o^l}, \]
so that
$
(Q_l)_{ij} 
	\leq \frac{1}{n_lf_{\sigma}(\epsilonincluster)+w^l+o^l} - \frac{f_{\sigma}(\epsilonincluster)}{n_l+w^l+o^l}  
	\leq \frac{ \frac{1}{f_{\sigma}(\epsilonincluster)} }{n_l+w^l+o^l} - \frac{f_{\sigma}(\epsilonincluster)}{n_l+w^l+o^l} 
	= \frac{2\eta_1+O(\eta_1^2)}{n_l+w^l+o^l}
$.
Since $(Q_l)_{ij} \geq 0$, the above is in fact a bound for $|(Q_l)_{ij}|$, and we obtain:
\begin{align*}
|(Q_l)_{ij}| &\leq \frac{2\eta_1+O(\eta_1^2)}{(1-\eta_\thetathres)(n_l+m_l)} = \frac{2\eta_1+O(\eta_1^2+\eta_\thetathres^2)}{(n_l+m_l)} 
\end{align*}

\textit{Bounding $R_l$:} For $x^l_i \in A_{l}$ and $x^l_{j}\in C_{l}$, note that
\begin{align*} 
\left(\Deg_{\tilde{A_l}}^{-\frac{1}{2}}W_{\tilde{A_l},\tilde{A_l}}\Deg_{\tilde{A_l}}^{-\frac{1}{2}} \right)_{ij} -  (B_l)_{ij} 
&\leq \frac{ f_{\sigma}(\epsilonincluster)}{\sqrt{n_lf_{\sigma}(\epsilonincluster)+w^l+o^l}\sqrt{(n_l+m_l)f_{\sigma}(\thetathres)+o^l}} 
- \frac{f_{\sigma}(\epsilonincluster)}{n_l+w^l+o^l}\\
&=\frac{ \frac{\sqrt{f_{\sigma}(\epsilonincluster)}}{\sqrt{f_{\sigma}(\thetathres)}} - f_{\sigma}(\epsilonincluster) }{\sqrt{n_l+w^l+o^l}\sqrt{n_l+m_l+o^l}}.
\end{align*}
Similarly:
\begin{align*} 
\left(\Deg_{\tilde{A_l}}^{-\frac{1}{2}}W_{\tilde{A_l},\tilde{A_l}}\Deg_{\tilde{A_l}}^{-\frac{1}{2}} \right)_{ij} -  (B_l)_{ij} 
&\geq \frac{ f_{\sigma}(\thetathres)}{\sqrt{n_l+w^l+o^l}\sqrt{n_l+m_l+o^l}} - \frac{ f_{\sigma}(\epsilonincluster)}{\sqrt{n_l+w^l+o^l}\sqrt{n_l+m_lf_{\sigma}(\thetathres)+o^l}} \\
&= \frac{ f_{\sigma}(\thetathres) - \frac{f_{\sigma}(\epsilonincluster)}{\sqrt{f_{\sigma}(\thetathres)}} }{\sqrt{n_l+w^l+o^l}\sqrt{n_l+m_l+o^l}}.
\end{align*}
so that $ |R_{ij}| \leq \frac{ \left(\frac{\sqrt{f_{\sigma}(\epsilonincluster)}}{\sqrt{f_{\sigma}(\thetathres)}} - f_{\sigma}(\epsilonincluster)\right) \vee \left(\frac{f_{\sigma}(\epsilonincluster)}{\sqrt{f_{\sigma}(\thetathres)}} - f_{\sigma}(\thetathres)\right) }{\sqrt{n_l+w^l+o^l}\sqrt{n_l+m_l+o^l}}.$  Thus we obtain:

\begin{align*}
|R_{ij}| &\leq  \left[\left(\frac{\sqrt{f_{\sigma}(\epsilonincluster)}}{f_{\sigma}(\thetathres)} -\frac{ f_{\sigma}(\epsilonincluster)}{\sqrt{f_\sigma(\thetathres)}}\right) \vee \left(\frac{f_{\sigma}(\epsilonincluster)}{f_{\sigma}(\thetathres)} - \sqrt{f_{\sigma}(\thetathres)}\right)\right] (n_l+m_l)^{-1} \\
&\leq \left[\left(\frac{\sqrt{1-\eta_1}}{1-\eta_\thetathres} - \frac{1-\eta_1}{\sqrt{1-\eta_\thetathres}}\right) \vee \left( \frac{1-\eta_1}{1-\eta_\thetathres} - \sqrt{1-\eta_\thetathres}\right)\right] (n_l+m_l)^{-1}  \\
&\leq \left[\left(\frac{\eta_1}{2}+\frac{\eta_\thetathres}{2}+ O(\eta_1^2+\eta_\thetathres^2)\right) \vee \left(\frac{3\eta_{\thetathres}}{2}-\eta_1+O(\eta_1^2+\eta_{\thetathres}^2)\right)\right](n_l+m_l)^{-1}  \\
&\leq \left[\frac{3\eta_\thetathres}{2} + O(\eta_1^2+\eta_\thetathres^2)\right](n_l+m_l)^{-1} 
\end{align*}

\textit{Bounding $S_l$:} For $x^l_{i},x^l_{j} \in C_{l}$, note that
\begin{align*} 
\left(\Deg_{\tilde{A_l}}^{-\frac{1}{2}}W_{\tilde{A_l},\tilde{A_l}}\Deg_{\tilde{A_l}}^{-\frac{1}{2}} \right)_{ij} - (B^l)_{ij} \leq  \frac{1}{(n_l+m_l)f_{\sigma}(\thetathres) + o^l} - \frac{f_{\sigma}(\epsilonincluster)}{n_l+w^l+o^l }
\leq \frac{ f_{\sigma}(\thetathres)^{-1} - f_{\sigma}(\epsilonincluster)}{n_l+m_l+o^l}.
\end{align*}
Similarly:
\begin{align*} 
\left(\Deg_{\tilde{A_l}}^{-\frac{1}{2}}W_{\tilde{A_l},\tilde{A_l}}\Deg_{\tilde{A_l}}^{-\frac{1}{2}} \right)_{ij} - (B^l)_{ij} &\geq  \frac{f_{\sigma}(\thetathres)}{n_lf_{\sigma}(\epsilonincluster)+m_l+o^l} -  \frac{f_{\sigma}(\epsilonincluster)}{n_l+w^l+o^l } \\
&\geq \frac{f_{\sigma}(\thetathres)}{n_l+m_l+o^l} -  \frac{f_{\sigma}(\epsilonincluster)}{n_l+m_lf_{\sigma}(\thetathres)+o^l }
=\frac{f_{\sigma}(\thetathres)-\frac{f_{\sigma}(\epsilonincluster)}{f_{\sigma}(\thetathres)}}{n_l+m_l+o^l}.
\end{align*}
Thus we have: $|S_{ij}| \leq \frac{ \left(\frac{1}{f_{\sigma}(\thetathres)} - f_{\sigma}(\epsilonincluster)\right) \vee \left(\frac{f_{\sigma}(\epsilonincluster)}{f_{\sigma}(\thetathres)}-f_{\sigma}(\thetathres)\right)}{n_l+m_l+o^l},$ so that 

\begin{align*}
|S_{ij}| &\leq  \left[\left(\frac{1}{f_{\sigma}(\thetathres)} - f_{\sigma}(\epsilonincluster)\right) \vee \left(\frac{f_{\sigma}(\epsilonincluster)}{f_{\sigma}(\thetathres)}-f_{\sigma}(\thetathres)\right)\right](n_l+m_l)^{-1} \\
&= \left[\left( \eta_1+\eta_{\thetathres}+O(\eta_{\thetathres}^2) \right)  \vee \left( 2\eta_{\thetathres} -\eta_1 +O(\eta_1^2+\eta_{\thetathres}^2) \right)\right](n_l+m_l)^{-1}  \\
&\leq \left[2\eta_\thetathres + O(\eta_1^2+\eta_{\thetathres}^2)\right](n_l+m_l)^{-1} 
\end{align*}

Thus the norm of the spectral perturbation of  $\Deg_{\tilde{A}_l}^{-\frac{1}{2}}W_{\tilde{A}_l,\tilde{A}_l}\Deg_{\tilde{A}_l}^{-\frac{1}{2}} \overset{\bf{P_2}}{\longrightarrow} B_l$
is bounded by
\begin{align*}
\|\Deg_{\tilde{A}_l}^{-\frac{1}{2}}W_{\tilde{A}_l,\tilde{A}_l}\Deg_{\tilde{A}_l}^{-\frac{1}{2}} - B_l\|_2 
&\leq (n_l+m_l)\|\Deg_{\tilde{A}_l}^{-\frac{1}{2}}W_{\tilde{A}_l,\tilde{A}_l}\Deg_{\tilde{A}_l}^{-\frac{1}{2}} - B_l\|_{\text{max}} \\
&\leq (n_l+m_l)\left( \|Q\|_{\text{max}} \vee \|R\|_{\text{max}} \vee \|S\|_{\text{max}} \right) \\
&\leq 2\eta_1+2\eta_\thetathres + O(\eta_1^2+\eta_\thetathres^2) := P^l_2.
\end{align*}

Defining $P_2 := \max_l P^l_2,$ the perturbation of all eigenvalues due to $\bf{P_2}$ is bounded by $P_2$. 


\subsection{Bounding the Eigenvalues of $\Lsym$}
\label{subsec:BoundingLSYMEigs}
Since the eigenvalues of $ \textbf{1}_{n_l+m_l} $ are
\[
\lambda_i(\textbf{1}_{n_l+m_l})  =
\begin{cases}
 0 & i=1, \ldots, n_l+m_l-1 \\
 n_l+m_l & i=n_l+m_l
\end{cases}\,,
\]
we have
\[
\lambda_i(B_l)  =
\begin{cases}
 0 & i=1, \ldots, n_l+m_l-1 \\
 \frac{(n_l+m_l)f_{\sigma}(\epsilonincluster)}{n_l+w^l+o^l}& i=n_l+m_l.
\end{cases}
\]	
Note that since the blocks $B^l$ are orthogonal, the eigenvalues of $B$ are simply the union of the eigenvalues of the blocks, and the eigenvalues of $I-B$ are obtained by subtracting the eigenvalues of $B$ from 1.  
Thus:
\[
\lambda^l_i(I-B) =
\begin{cases}1-\frac{ (n_l+m_l)f_{\sigma}(\epsilonincluster)}{n_l+w^l+o^l} & i=1, \ 1\leq l \leq \numclust, \\
 1 & i=2, \ldots, n_l+m_l, \ 1\leq l \leq \numclust.
\end{cases}
\]
Since $| \lambda_i(\Lsym) - \lambda_i(I-B) | \leq \| B - \Deg^{-\frac{1}{2}}W\Deg^{-\frac{1}{2}} \|_2 
 \leq P_1 + P_2,$
and $\Lsym$ is positive semi-definite, by the Hoffman-Wielandt Theorem \citep{stewart1990matrix}, the eigenvalues of $L_{\text{SYM}}$ are:
\begin{align}
\lambda^l_i(L_{\text{SYM}}) &=\left(1-\frac{ (n_l+m_l)f_{\sigma}(\epsilonincluster)}{n_l+w^l+o^l} \pm (P_1+P_2)\right)\vee 0, \qquad i=1, \ 1\leq l \leq \numclust  \label{equ:SmallEigs},\\
\lambda^l_i(L_{\text{SYM}}) &=1 \pm (P_1+P_2), \qquad i=2, \ldots, n_l+m_l, \ 1\leq l \leq \numclust \label{equ:LargeEigs}\nonumber.
\end{align}
where:
\begin{align*}
P_1 &=\zeta_{N} \eta_2 + O(\zeta_{N}^2\eta_2^2 + \eta_\thetathres^2) \quad,\quad
P_2 = 2\eta_1+2\eta_\thetathres + O(\eta_1^2+\eta_\thetathres^2) ,\\
P_1+P_2 &= 2\eta_1 +2\eta_\thetathres +\zeta_{N} \eta_2 + O(\eta_1^2+\eta_\thetathres^2+\zeta_{N}^2\eta_2^2).
\end{align*}

\subsection{The Largest Spectral Gap of $\Lsym$}

For the remainder of the proof let $\{\lambda_i\}_{i=1}^N$ denote the eigenvalues of $\Lsym$ sorted in increasing order, and $\Delta_i = \lambda_{i+1}-\lambda_i$ for $1 \leq i \leq N-1$. Note the condition we will derive to guarantee $\Delta_{\numclust}$ is the largest eigengap also ensures that the $\numclust$ smallest eigenvalues are given by (\ref{equ:SmallEigs}).


Recalling the definition of $\zeta_{N}$ from Theorem \ref{thm:SpectralGap}, for $1\leq i\leq \numclust$, we have 
\begin{align*}
0 \leq \lambda_i &\leq  1-\frac{ (n_l+m_l)f_{\sigma}(\epsilonincluster)}{n_l+w^l+o^l} + (P_1+P_2) \\
&\leq 1-\frac{ (n_l+m_l)f_{\sigma}(\epsilonincluster)}{(n_l+m_l)+\sum_{s\ne l} (n_s+m_s)f_{\sigma}(\epsilonnoiseknn) } + (P_1+P_2) \\
&\leq 1 - \frac{(1-\eta_1)}{1+\zeta_{N}\eta_2}+(P_1+P_2) \\
&= 1 - (1-\eta_1)(1-\zeta_{N}\eta_2+O(\zeta_{N}^2\eta_2^2))+(P_1+P_2) \\
&= \eta_1+\zeta_{N}\eta_2+(P_1+P_2) +O(\eta_1^2+\zeta_{N}^2\eta_2^2)).
\end{align*}
Thus for $i < \numclust$, the gap is bounded by: $\Delta_i 
\leq \lambda_{i+1}  \leq \eta_1+\zeta_{N}\eta_2+(P_1+P_2) +O(\eta_1^2+\zeta_{N}^2\eta_2^2)).$
For $i > \numclust$, we have $\Delta_i 
\leq 1 + (P_1+P_2) - \left(1 - (P_1+P_2)\right)  \leq  2(P_1+P_2).$  Finally for $i=\numclust$:
\begin{align*}
\Delta_\numclust 
&\geq 1 - (P_1+P_2) - ( \eta_1+\zeta_{N}\eta_2+P_1+P_2) +O(\eta_1^2+\zeta_{N}^2\eta_2^2)) \\
&\geq 1 -\eta_1 -\zeta_{N}\eta_2-2(P_1+P_2)+O(\eta_1^2+\zeta_{N}^2\eta_2^2)).
\end{align*}
Thus $\Delta_{\numclust}$ is the largest gap if $\frac{1}{2} \geq \eta_1+\zeta_{N}\eta_2+2(P_1+P_2) + O(\eta_1^2+\zeta_{N}^2\eta_2^2) = 5\eta_1+4\eta_\thetathres +6\zeta_{N}\eta_2+ O(\eta_1^2+\eta_\thetathres^2+\zeta_{N}^2\eta_2^2)$,  which is the condition of Theorem \ref{thm:SpectralGap}. 
 

\subsection{Bounding the Spectral Embedding and Labeling Accuracy}

We apply Theorem 2 from \cite{Fan2018_Eigenvector} to bound the eigenvector perturbation. We let $\Phi=(\phi_1 \ldots \phi_\numclust)$ denote the $N$ by $\numclust$ matrix whose columns are the top $\numclust$ eigenvectors of $B$ (defined in \ref{subsec:pert_desc}), ordered so that $\phi_l$ corresponds to the block $B_l$. We let $\tilde{\Phi}$ be the equivalent quantity for $\Deg^{-\frac{1}{2}}WD^{-\frac{1}{2}}$. Defining the coherence of $\Phi$ as $\text{coh}(\Phi) = (N/\numclust) \max_i \sum_{j=1}^\numclust \Phi^2_{ij}$, we note that
\begin{align*}
\text{coh}(\Phi) &\leq \frac{N}{\numclust}\left(\|\phi_1\|^2_\infty + \cdots +\|\phi_\numclust\|^2_\infty\right) \leq \frac{N}{\numclust} \cdot \frac{\numclust\zeta_{N}}{N} = \zeta_{N},
\end{align*}
 i.e. $\Phi$ has low coherence since each eigenvector is constant on a cluster. Thus by Theorem 2 from \cite{Fan2018_Eigenvector}, there exists a rotation $R$ such that
\begin{align*}
\|\tilde{\Phi}R - \Phi\|_{\text{max}} &= O\left( \frac{\numclust^{\frac{5}{2}}\zeta_{N}^2 \|\Deg^{-\frac{1}{2}}WD^{-\frac{1}{2}}-B\|_{\infty}}{\lambda_\numclust(B) \sqrt{N}}\right)
\end{align*}
We recall from Section \ref{subsec:BoundingLSYMEigs} that
\[\lambda_\numclust(B) 
\geq \min_{1\leq l \leq \numclust}\frac{(n_l+m_l)(1-\eta_1)}{n_l+m_l+N\eta_2} = \frac{1-\eta_1}{1+\zeta_{N}\eta_2} = 1-\eta_1-\zeta_{N}\eta_2+O(\eta_1^2+\zeta_{N}^2\eta_2^2)\]
Letting $C_l$ denote the diagonal blocks of $C$ and using the bounds computed in Sections \ref{subsec:diagonalization} and \ref{subsec:P2}, we have:
\begin{align*}
\|\Deg^{-\frac{1}{2}}WD^{-\frac{1}{2}}-B\|_{\infty} &\leq \|\Deg^{-\frac{1}{2}}WD^{-\frac{1}{2}}-C\|_{\infty} + \max_l \|C_l-B_l\|_{\infty} \\
&\leq N\|\Deg^{-\frac{1}{2}}WD^{-\frac{1}{2}}-C\|_{\max} + \max_l\ (n_l+m_l)\|C_l-B_l\|_{\max} \\
&\leq 2\eta_1+2\eta_\thetathres + \zeta_{N}\eta_2+O(\eta_1^2+\eta_\thetathres^2+\zeta_{N}^2\eta_2^2)
\end{align*}
We conclude that
\begin{align*}
\|\tilde{\Phi}R - \Phi\|_{\text{max}} &\leq \frac{c\numclust^{\frac{5}{2}}\zeta_{N}^2}{\sqrt{N}}\left[\eta_1+\eta_\thetathres + \zeta_{N}\eta_2+O(\eta_1^2+\eta_\thetathres^2+\zeta_{N}^2\eta_2^2)\right] := P_3.
\end{align*}
for some absolute constant $c$. Letting $\{r_i\}_{i=1}^N$ denote the rows of $\Phi$ and $\{\tilde{r}_i\}_{i=1}^N$ denote the rows of $\tilde{\Phi}R$, we have $\|r_i - \tilde{r}_i\|_2 \leq \sqrt{\numclust}\|\tilde{\Phi}R - \Phi\|_{\text{max}}  \leq \sqrt{\numclust}P_3$ for all $i$. Letting $\pi(i) \in \{1,\ldots,\numclust\}$ denote the index of the set $\tilde{A}_l$ which contains the point corresponding to the $i^{th}$ row, we have $r_i = [0 \ldots (n_{\pi(i)}+m_{\pi(i)})^{-1/2} \ldots 0]$ for all $i$, where the non-zero element occurs in the $\pi(i)^{\text{th}}$ column. Thus the spectral embedding maps all points in $\tilde{A}_l$ inside a sphere in $\mathbb{R}^\numclust$ centered at $z_l = [0 \ldots (n_l+m_l)^{-1/2}\ldots 0]$ with radius $\sqrt{\numclust}P_3$. When $l\ne s$, we have $\|z_l - z_s\|_2 \geq \sqrt{\frac{2}{N}}$. Thus $\sqrt{\frac{2}{N}} > 10\sqrt{\numclust}P_3$
is sufficient to ensure that these spheres are well separated, i.e. the embedding is a perfect representation (see Definition \ref{def:perfectrep}) of the clusters $\tilde{A}_l$ with $r=2\sqrt{\numclust}P_3$.  Simplifying this condition, we thus obtain perfect label accuracy by clustering by distances on the spectral embedding whenever
\begin{align*}
\frac{1}{\numclust^3\zeta_{N}^2} \gtrsim \eta_1+\eta_\thetathres +\zeta_{N}\eta_2+ O(\eta_1^2+\eta_\thetathres^2+\zeta_{N}^2\eta_2^2).
\end{align*}

\bibliography{LLPD_JMLR_revised.bib}

\end{document}